\documentclass{article}
\usepackage{microtype}
\usepackage{graphicx}
\usepackage{subfigure}
\usepackage{booktabs} %
\usepackage{enumitem}

\usepackage{hyperref}

\usepackage[accepted]{icml2025}

\usepackage{amsmath}
\usepackage{amssymb}
\usepackage{mathtools}
\usepackage{amsthm}
\usepackage{enumitem}

\usepackage[capitalize,noabbrev]{cleveref}

\usepackage{xcolor}

\newcommand{\hprod}{\mathop{\prod^{\odot}}}

\newcommand{\Coptimiz}{\hat{\mathbf{C}}}
\newcommand{\Woptimiz}{\hat{\mathbf{W}}^{V}}
\newcommand{\wkoptimiz}{\hat{\mathbf{w}}^{k}}
\newcommand{\entryCoptimiz}{\hat{C}}

\newcommand{\entrywkoptimiz}{\hat{w}^{k}}

\newcommand{\Ctestgen}{\mathbf{C}^{\dagger}}
\newcommand{\Wtestgen}{\mathbf{W}^{V\dagger}}
\newcommand{\wktestgen}{\mathbf{w}^{k\dagger}}
\newcommand{\entryCtestgen}{C^{\dagger}}

\newcommand{\entrywktestgen}{w^{k\dagger}}

\newcommand{\Clengen}{\mathbf{C}^{\mathrm{\forall L}}}
\newcommand{\Wlengen}{\mathbf{W}^{V,\mathrm{\forall L}}}
\newcommand{\wklengen}{\mathbf{w}^{k, \mathrm{\forall L}}}
\newcommand{\entryClengen}{C^{\mathrm{\forall L}}}

\newcommand{\entrywklengen}{w^{k, \mathrm{\forall L}}}

\newcommand{\Xn}{\mathbf{X}^{(n)}}
\newcommand{\XnT}{\mathbf{X}^{(n)\,\top}}

\newcommand\numberthis{\addtocounter{equation}{1}\tag{\theequation}}

\newcommand{\HAXn}{\mathbf{HA}^{\mathrm{lin}}\left(\Xn\right)}
\newcommand{\hai}{\mathrm{HA}_i^{\mathrm{lin}}\left(\Xn\right)}
\newcommand{\Cthree}{C_{\mu \nu \sigma}}
\newcommand{\hypermse}{L^{\mathrm{MSE}}\left(\mathbf{C}, \mathbf{w}\right)}
\newcommand{\p}{\partial}
\newcommand{\gammm}{ \sum_k \sum_{j\leq k} X^{(n)}_{j \nu} X^{(n)}_{j \gamma} X^{(n)}_{k \sigma}X^{(n)}_{j \theta}}

\newcommand{\Dn}{\mathbf{D}^{(n)}}
\newcommand{\wthree}{w_{\gamma \theta}}

\newcommand{\Saxn}{\mathbf{SA}^{\mathrm{lin}}\left(\mathbf{X}^{(n)}\right)}

\newcommand{\Saxm}{\mathbf{SA}^{\mathrm{lin}}\left(\mathbf{X}^{(m)}\right)}

\newcommand{\Lmse}{L^{\mathrm{MSE}}\left(\mathbf{C}, \mathbf{w}\right)}

\newcommand{\uu}{\mathbf{u}}

\newcommand{\zz}{\mathbf{z}}
\newcommand{\mtm}{\mathbf{M}^\top \mathbf{M}}
\newcommand{\ww}{\mathbf{w}}
\newcommand{\ee}{\mathbf{\mathbf{e}}}

\DeclareRobustCommand{\calX}[1]{\mathcal{X}\left(#1\right)}
\DeclareRobustCommand{\calXn}[1]{\mathcal{X}^{(n)}\left(#1\right)}

\newcommand{\Xwww}{\XnT \Xn \ww \ww^\top \XnT \Xn}

\newcommand{\Gn}{\Gamma^{(n)}_{\nu \sigma}}
\newcommand{\diag}[1]{\mathrm{diag}\left(#1\right)}

\newcommand{\domainsize}{{|\mathcal{S}|}}
\newcommand{\sn}{\mathbf{s}^{(n)}}
\newcommand{\snt}{\mathbf{s}^{(n)\top}}
\newcommand{\entrysn}[1]{s^{(n)}_{#1}}
\newcommand{\Smu}{\mathbf{S}_{\mathcal{B}_\mu}}

\theoremstyle{plain}
\newtheorem{theorem}{Theorem}[section]

\newtheorem{lemma}[theorem]{Lemma}
\newtheorem{corollary}[theorem]{Corollary}
\theoremstyle{definition}
\newtheorem{definition}[theorem]{Definition}
\newtheorem{assumption}[theorem]{Assumption}
\theoremstyle{remark}
\newtheorem{remark}[theorem]{Remark}

\usepackage[textsize=tiny]{todonotes}

\icmltitlerunning{A Theoretical Study of (Hyper) Self-Attention through the Lens of Interactions}

\begin{document}

\twocolumn[
\icmltitle{A Theoretical Study of (Hyper) Self-Attention through the Lens of Interactions: Representation, Training, Generalization}

\icmlsetsymbol{equal}{*}

\begin{icmlauthorlist}
\icmlauthor{Muhammed Ustaomeroglu}{cmu}
\icmlauthor{Guannan Qu}{cmu}
\end{icmlauthorlist}

\icmlaffiliation{cmu}{Department of Electrical and Computer Engineering, Carnegie Mellon University, Pittsburgh, USA}

\icmlcorrespondingauthor{Muhammed Ustaomeroglu}{mustaome@andrew.cmu.edu}

\icmlkeywords{Machine Learning, ICML}

\vskip 0.3in
]

\allowdisplaybreaks

\printAffiliationsAndNotice{} %

\begin{abstract}
Self-attention has emerged as a core component of modern neural architectures, yet its theoretical underpinnings remain elusive. In this paper, we study self-attention through the lens of \emph{interacting entities}, ranging from agents in multi-agent reinforcement learning to alleles in genetic sequences, and show that a single layer linear self-attention can \emph{efficiently} represent, learn, and generalize functions capturing pairwise interactions, including out-of-distribution scenarios. Our analysis reveals that self-attention acts as a \emph{mutual interaction learner} under minimal assumptions on the diversity of interaction patterns observed during training, thereby encompassing a wide variety of real-world domains. In addition, we validate our theoretical insights through experiments demonstrating that self-attention learns interaction functions and generalizes across both population distributions and out-of-distribution scenarios. Building on our theories, we introduce \emph{HyperFeatureAttention}, a novel neural network module designed to learn couplings of different feature-level interactions between entities. Furthermore, we propose \emph{HyperAttention}, a new module that extends beyond pairwise interactions to capture multi-entity dependencies, such as three-way, four-way, or general \(n\)-way interactions.
\end{abstract}

\section{Introduction}
\label{section:introduction}
Ever since the invention of Transformers \cite{vaswani_attention_2023}, \emph{attention} is the building block for many domains, spanning natural language processing \cite{brown_language_2020, devlin_bert_2019}, computer vision \cite{dosovitskiy_image_2021}, protein structure prediction \cite{jumper_highly_2021}, reinforcement learning \cite{chen_decision_2021}. Despite the success of attention, our formal understanding of its  representation, optimization, and generalization abilities is in its early stages. 

Recent theoretical investigations illuminated Transformers’ representational abilities/limitations \cite{liu_transformers_2023_rep9, sanford_representational_2023_rep8} and training dynamics \cite{ahn_transformers_2023_conv8, jelassi_vision_2022_conv7, gao_global_nodate_conv4} from different perspectives (e.g., language modeling, image patch classification, in context learning, etc.). Despite this progress, current theoretical frameworks exhibit critical limitations: 
\begin{enumerate}[label=(\roman*),itemsep=0pt, topsep=0pt]
  \item Existing analyses often target isolated problems, lacking a unified perspective for characterizing Transformers’ capabilities across diverse domains. In contrast, our theory makes an attempt to provide a unified perspective by assuming that the data comes from a mutual interaction model, which we show captures broad applications. %
  \item Most mathematically rigorous theories overlook test-time generalization, particularly robustness to out-of-distribution (OOD) shifts. Our analysis addresses OOD in terms of length generalization.
  \item Mathematically rigorous theories typically offer interpretations only for the predetermined parameters. In contrast, our approach explains a broader set of parameters—many of which may initially appear unintuitive. 
  \item Generally, rigorous theories rely on restrictive assumptions about model parameters. In contrast, our framework does not impose such assumptions on the parameters. Our approach only requires mild and possibly inevitable conditions on the data distribution -such as training data versatility.
\end{enumerate}

 In this work, we adopt a \emph{interacting entities} viewpoint to study self-attention, where each token represents an interacting entity (e.g., agents in multi-agent reinforcement learning, particles in physical simulations, amino acids in protein sequences, or words in natural language). Specifically, we introduce a function that models interactions among these entities and demonstrate its applicability across diverse domains, including the {colliding agents environment}, {genotype-phenotype mapping task}, {vision task}, and {time series prediction}. Under this viewpoint, we prove that a single-layer linear self-attention can \emph{efficiently} represent such functions and show that gradient-descent training converges to the parameters realizing these interactions, under mild assumptions on the data.\footnote{If linear self-attention requires \(\mathcal{O}(k)\) parameters to represent the interaction function, dense layer requires \(\mathcal{O}(k \cdot L^2)\) parameters.}\footnote{We motivate this choice further in subsequent sections, but briefly: linear self-attention preserves essential optimization properties of full Transformers while offering reduced computational complexity and simplified theoretical analysis \cite{ahn_linearatttnmaybe_nodate}, making it ideal for theoretical investigations. A detailed discussion of linear self-attention appears in Section~\ref{section:preliminaries}.} In addition, we demonstrate versatility requirements on the train data such that the learned parameters generalize both to the test distribution and to out-of-distribution (length generalization). By neither imposing restrictive constraints on model parameters nor limiting ourselves to particular domains, our framework unifies some diverse application scenarios and offers a novel theoretical lens on how Transformers learn dependencies among multiple interacting entities. 
 
 We further validate our theoretical insights on representation, convergence, and generalization through controlled experiments, demonstrating that the learned model parameters closely align with theoretical predictions. Beyond analyzing attention patterns, we highlight how the parameters themselves can be directly interpreted to uncover meaningful interactions among entities.

 Building on these insights, investigations confirm that self-attention excels at capturing mutual interactions between entities. Motivated by this, we introduce two novel generalizations named (i) \emph{HyperFeatureAttention}, for capturing couplings of different interactions between features of the entities, and (ii) \emph{HyperAttention}, for capturing higher-order interactions (e.g. three-way or four-way) between entities. Extending our single-layer analysis, we show that HyperFeatureAttention can efficiently represent the couplings of feature interactions. In addition, we show that HyperAttention can represent and learn higher order interactions, with the corresponding theories.

We summarize our main contributions as follows:
\begin{itemize}[nosep]
    \item In Section~\ref{sec:represent-mutual-interactions}, we present a unified perspective where each token (e.g., agent, amino acid, pixel patch) is treated as an \emph{interacting entity}, seamlessly bridging several experimental settings.
    \item In Section~\ref{section:train-and-generalization}, we show that gradient flow, on standard mean squared error loss, converges to a solution that captures how the entities interact. Furthermore, the learned parameters \emph{generalize} to both unseen examples from the task distribution and out of distribution (varying sequence lengths), under suitable versatility conditions in the training data. 
    \item In Section~\ref{section:experiments}, we provide experiments that validate our theoretical predictions with clear interpretation of the learned parameters.
    \item In Section~\ref{section:hyperfeature-main}, we introduce \emph{HyperFeatureAttention}, a novel mechanism designed to capture couplings of feature interactions. In Section~\ref{section:hyper-main}, we present \emph{HyperAttention}, which models \emph{higher-order} dependencies between entities, such as three-way and four-way interactions. Also, we provide accompanying theoretical analyses of these models' capabilities. We also provide some preliminary experiments on these novel models in Section~\ref{section:experiments}.
\end{itemize}

\noindent \textbf{Related Works.} 

\textbf{Transformer Representation Theory.}
A large body of work has illuminated the representational abilities of self-attention from various angles \cite{ 22yao_self-attention_2023, 33bhattamishra_ability_2020, 44wei_statistically_2023, Kajitsuka2024, Nath2024, Luo2022, LiShuai2024}. For instance, Transformers have been shown to be Turing-complete and capable of simulating intricate sequential computations \cite{bhattamishra_computational_2020_rep7}, act as provably efficient “compilers” for domain-specific languages \cite{zhai_transformers_2024_rep6}, and approximate key operators such as sparse or local averaging with sublinear complexity \cite{likhosherstov_expressive_2021_rep5, sanford_representational_2023_rep8, edelman_inductive_2022_rep2}. Their abilities and limitations have also been explored in POMDP settings \cite{lu_rethinking_2024_rep4}, automata-theoretic perspectives \cite{liu_transformers_2023_rep9}, sequence-to-sequence tasks \cite{yun_are_2020_rep1}, and hidden Markov model learning scenarios \cite{hu_limitation_2024_rep3}. 

\textbf{Transformer Convergence Analysis.}
Parallel to the progress on representation, another line of research has investigated the convergence properties of training Transformers \cite{ahn_transformers_2023_conv8,tarzanagh_transformers_2024, li_theoretical_2023, tian_scan_2023, Song2024, Huang2024, Chen2024}. These studies analyze training via gradient flow in simplified yet insightful settings \cite{yang_training_2024_conv2}, establish conditions under which one can efficiently learn multi-head attention layers \cite{chen_provably_2024_conv6, deora_optimization_2023_conv5}, or employ mean-field methods to show global convergence in large-scale regimes \cite{gao_global_nodate_conv4}. Additional works examine specialized domains such as masked visual pretraining \cite{huang_how_nodate_conv3} and spatial structure learning \cite{jelassi_vision_2022_conv7}, or investigate sparse token selection tasks \cite{wang_transformers_2024_conv1}. 

Despite the valuable insights offered by these studies, the majority have all or some of the limitations, i.e. (i), (ii), (iii) we listed in the second paragraph of introduction.
 
\section{Preliminaries}\label{section:preliminaries}

\textbf{Self-Attention.} The self-attention mechanism is a core component of Transformers \cite{vaswani_attention_2023}, enabling models to learn dependencies between input tokens effectively. For a sequence of $L$ input tokens represented as a matrix $\mathbf{X} \in \mathbb{R}^{L \times d}$, the self-attention is defined as:
\begin{equation*}
    \mathbf{SA}^{\sigma}(\mathbf{X}) = \sigma\left(\frac{\mathbf{X}\mathbf{W}^Q (\mathbf{X}\mathbf{W}^K)^\top}{\sqrt{d_k}}\right)\mathbf{X}\mathbf{W}^V,
\end{equation*}
where $\mathbf{W}^Q, \mathbf{W}^K, \mathbf{W}^V \in \mathbb{R}^{d \times d_k}$ are the learnable projection matrices. Defining \(\mathbf{C} = \mathbf{W}^Q(\mathbf{W}^K)^\top/\sqrt{d_k}\), we can write the same equation as
\begin{equation*}
    \mathbf{SA}^{\sigma}(\mathbf{X}) = \sigma\left(\mathbf{X}\mathbf{C} \mathbf{X}^\top\right)\mathbf{X}\mathbf{W}^V.
\end{equation*}
Since the introduction of \emph{attention} mechanisms \cite{bahdanau_neural_2016}, the function \(\sigma:\mathbb{R}^{L\times L}\to \mathbb{R}^{L\times L}\) has been predominantly implemented as a row-wise softmax operation, where the input matrix to \(\sigma\), commonly referred to as the \emph{attention scores}, determines the relative importance of different tokens in the sequence.

\textbf{Alternative Attention Functions.} Recent advancements have explored alternative \(\sigma\) functions beyond the traditional softmax. One prominent direction is using linear self-attention mechanisms, which reduce the computational complexity of self-attention from \(\mathcal{O}(L^2)\) to \(\mathcal{O}(L)\). Linear attention methods approximate the softmax function while maintaining comparable performance in many tasks \cite{katharopoulos_transformers_2020, choromanski_rethinking_2022}. For instance, the Performer model introduces kernel-based methods to approximate the softmax operation efficiently, achieving scalability without significant loss in accuracy \cite{choromanski_rethinking_2022}. Moreover, alternative activation functions, such as ReLU, cosine, polynomial and sigmoid based transformations, have been explored, showing competitive or even superior performance compared to softmax in some tasks \cite{koohpayegani_sima_2024, kacham_polysketchformer_2024, ramapuram_theory_2024}.

To simplify theoretical analysis while shedding light on a diverse range of self-attention implementations, we adopt a linear variant of self-attention, which preserves the core characteristics of self-attention through its reliance on \emph{attention scores}.

\textbf{Linear Self-Attention.} Linear self-attention simplifies the attention by omitting the \(\sigma\) operation, resulting in:
\begin{equation}\label{eq:lin-sa}
    \mathbf{SA}^\mathrm{lin}(\mathbf{X}) = \left(\mathbf{X}\mathbf{C}\mathbf{X}^\top\right)\mathbf{X}\mathbf{W}^V.
\end{equation}
Although omitting the \(\sigma\) operation may appear to be oversimplification, extensive theoretical and empirical studies confirm the power of linear self-attention. Notably, layers of linear self-attention can implement gradient descent and preconditioned gradient descent \cite{von_oswald_transformers_2023, ahn_transformers_2023_conv8}. In addition, variants of softmax, ReLU, and linear transformers (including the exact version used here) can perform functional gradient descent to learn nonlinear functions in context \cite{cheng_transformers_2024}. Furthermore, \cite{ahn_linearatttnmaybe_nodate} demonstrates that linear self-attention replicates key optimization dynamics of Transformers -such as heavy-tailed gradient noise and ill-conditioned loss landscapes-without softmax or feedforward layers. This simplification retains the computational advantages of linearity while enabling rigorous analysis of phenomena like adaptive optimizer superiority (e.g., Adam over SGD) and gradient convergence. Critically, insights from this abstraction extend to softmax-based Transformers, particularly in understanding optimization stability and generalization under varying data distributions or model depths \cite{ahn_linearatttnmaybe_nodate}.  

In the following sections, we explore the capabilities of linear self-attention across diverse tasks to illustrate its practical and theoretical value. By striking a balance between tractability and expressiveness, linear self-attention offers a powerful framework for investigating and enhancing attention-based architectures.

\section{Representing Mutual Interactions with Attention}
\label{sec:represent-mutual-interactions}
Consider a discrete finite domain (or ``vocabulary'') 
\(\mathcal{S}=\{\alpha, \beta, \gamma, \omega, \dots\}\) 
with cardinality \(|\mathcal{S}|\).
In our setting we have tuples of $L$ elements (or sequences of length $L$) from the domain, denoted by \(\mathcal{X}\) and entries of which are uniquely indexed by the integers in \([L]\). Here, \([i]\) denotes the set \(\{0, 1, \dots, i-1\}\) for any \(i \in  \mathbb{Z}^+\). For each index \(i\), we denote  the corresponding element \(\calX{i} \in \mathcal{S}\). Thus, we can write \(\mathcal{X}= \left(\calX{0}, \calX{1},\dots, \calX{L-1} \right)\), which is distributed according to \(\mathcal{X}\sim\mathcal{D}\). We also have tuples \(\mathcal{Y}\) with elements from a corresponding relatively small set \(\mathcal{S}_{\mathcal{Y}}\).
The tuples are jointly distributed according to a task distribution \(\left(\mathcal{X}, \mathcal{Y}\right)\sim\mathcal{D}_{\mathcal{X}\times\mathcal{Y}}\). In order to train a neural network, we map each element of \(\mathcal{S}\) to a \(d\)-dimensional embedding space via a function \(\mathbf{x}: \mathcal{S} \to \mathbb{R}^d\) and each element of \(\mathcal{S}_{\mathcal{Y}}\) to a corresponding vector or a scalar depending on the task, via the function \(\mathbf{y}:\mathcal{S}_{\mathcal{Y}} \to \mathbb{R}^{d_2}\).
Additionally, we stack the embeddings of the elements in \(\mathcal{X}\) and \(\mathcal{Y}\) as rows of \(\mathbf{X} \in \mathbb{R}^{L\times d}\) and \(\mathbf{Y} \in \mathbb{R}^{L\times d_2}\) matrices. We denote their distribution as \((\mathbf{X}, \mathbf{Y})\sim \mathcal{P}_{\mathbf{X}\times \mathbf{Y}}\)

Our first result concerns self-attention' the representation for \emph{mutual interactions}, which we define below. 
We introduce a pairwise effect function. For \(\alpha, \beta \in \mathcal{S}\), let \(f(\alpha, \beta)\in \mathbb{R}\) measure {how strongly} entity~\(\beta\) affects entity~\(\alpha\), and let \(\mathbf{w}_\beta \in \mathbb{R}^{d_2}\) represent {how} that influence is expressed. The \emph{aggregated} effect on the \(i\)-th entity, from all other entities, is
\begin{equation}
    \mathbf{y}_{\calX{i}} \;=\; \sum_{j \in [L]} f\bigl(\calX{i},\calX{j}\bigr)\,\mathbf{w}_{\calX{j}},
    \label{eq:the-aggregate-mutual-interaction-function}
\end{equation}
capturing mutual interactions: each entity’s behavior or state depends on every other entity in the sequence. %
\begin{theorem}[Representation Ability of Linear Self-Attention]\label{theorem:single-layer-rep-main}  
\(d=\domainsize\) is sufficient for a single-layer linear self-attention to \textbf{exactly} represent any aggregate pairwise interaction functions \(\{\mathbf{y}_{\calX{i}}\}_{i=1}^L\) in Eq.~\ref{eq:the-aggregate-mutual-interaction-function} for all entities simultaneously. Also, \(d\ge \domainsize\) is necessary for a single layer linear self-attention to \textbf{exactly} represent any such functions.
\end{theorem}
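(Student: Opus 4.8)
The plan is to reduce ``exactly representing'' the aggregate interaction function to a clean pointwise identity between the parameters and the interaction data, and then settle sufficiency by an explicit embedding and necessity by a rank bound. Writing the rows of $\mathbf{X}$ as the embeddings $\mathbf{x}_{\calX{j}}^\top$, the $i$-th row of $\mathbf{SA}^{\mathrm{lin}}(\mathbf{X})$ is $\sum_{j\in[L]}\bigl(\mathbf{x}_{\calX{i}}^\top\mathbf{C}\,\mathbf{x}_{\calX{j}}\bigr)\,\mathbf{x}_{\calX{j}}^\top\mathbf{W}^V$, and grouping equal tokens this equals $\sum_{\gamma\in\mathcal S}n_\gamma\bigl(\mathbf{x}_{\calX{i}}^\top\mathbf{C}\,\mathbf{x}_\gamma\bigr)\,\mathbf{x}_\gamma^\top\mathbf{W}^V$, where $n_\gamma$ counts occurrences of $\gamma$ in $\mathcal X$, while the target row is $\mathbf{y}_{\calX{i}}^\top=\sum_{\gamma}n_\gamma\,f(\calX{i},\gamma)\,\mathbf{w}_\gamma^\top$. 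Evaluating on the length-one sequences $(\alpha)$ and the length-two sequences $(\alpha,\gamma)$ isolates each coefficient of $n_\gamma$, so exact representation for all sequences (and all entities simultaneously, since the parameters are shared across rows) is equivalent to the system
\begin{equation}
\bigl(\mathbf{x}_\alpha^\top\mathbf{C}\,\mathbf{x}_\gamma\bigr)\bigl((\mathbf{W}^V)^\top\mathbf{x}_\gamma\bigr)=f(\alpha,\gamma)\,\mathbf{w}_\gamma\qquad\text{for all }\alpha,\gamma\in\mathcal S.\tag{$\star$}
\end{equation}

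For sufficiency, take $d=\domainsize$, let $\mathbf{x}$ map the $k$-th element $\gamma_k$ of $\mathcal S$ to the standard basis vector $\mathbf{e}_k$, let $\mathbf{C}\in\mathbb{R}^{\domainsize\times\domainsize}$ have entries $\mathbf{C}_{k\ell}=f(\gamma_k,\gamma_\ell)$, and let $\mathbf{W}^V$ be the matrix whose $k$-th row is $\mathbf{w}_{\gamma_k}^\top$. Then $\mathbf{x}_{\gamma_k}^\top\mathbf{C}\,\mathbf{x}_{\gamma_\ell}=f(\gamma_k,\gamma_\ell)$ and $(\mathbf{W}^V)^\top\mathbf{x}_{\gamma_k}=\mathbf{w}_{\gamma_k}$, so $(\star)$ holds identically and a single layer realizes all $\{\mathbf{y}_{\calX{i}}\}_i$ exactly on every sequence.

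For necessity, fix any embedding with $d<\domainsize$ and stack the embeddings as the rows of $X\in\mathbb{R}^{\domainsize\times d}$, so $\operatorname{rank} X\le d<\domainsize$. Consider the interaction data $\mathbf{w}_\gamma\equiv\mathbf{w}$ for one fixed nonzero $\mathbf{w}$ and $f\equiv F$, where $F$ is a fixed $\domainsize\times\domainsize$ real matrix of full rank with all entries nonzero (e.g.\ the Cauchy matrix $[1/(i+j)]_{i,j}$). If $(\star)$ admitted a solution $(\mathbf{C},\mathbf{W}^V)$, then for each $\gamma$ the vector $(\mathbf{W}^V)^\top\mathbf{x}_\gamma$ would be a scalar multiple $v_\gamma\mathbf{w}$ of $\mathbf{w}$ (forced because $F_{\alpha\gamma}\mathbf{w}\ne0$), and $(\star)$ would reduce to $F_{\alpha\gamma}=v_\gamma\,\mathbf{x}_\alpha^\top\mathbf{C}\,\mathbf{x}_\gamma$, i.e.\ $F=X\mathbf{C}X^\top D$ with $D$ the diagonal matrix carrying the scalars $v_\gamma$. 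Hence $\domainsize=\operatorname{rank} F\le\operatorname{rank} X\le d$, a contradiction; thus no embedding of dimension $d<\domainsize$ can represent every interaction function, i.e.\ $d\ge\domainsize$ is necessary.

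The routine parts are the coefficient matching of the first step and the one-hot construction; the crux is the necessity direction, and its key idea is to take the value data $\mathbf{w}_\gamma$ constant in $\gamma$, which collapses the value projection $(\mathbf{W}^V)^\top\mathbf{x}_\gamma$ onto a single line and therefore forces the \emph{entire} rank of the target interaction matrix to be carried by the bilinear form $\mathbf{x}_\alpha^\top\mathbf{C}\,\mathbf{x}_\gamma$ --- which factors through the $d$-dimensional embedding and so cannot exceed rank $d$. The only delicate point is ruling out degenerate vanishing of the scalar factors in $(\star)$, which is exactly why $F$ is chosen with no zero entries.
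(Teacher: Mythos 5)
Your proof is correct, and the sufficiency half (orthonormal/one-hot embedding, $\mathbf{C}$ realizing $f$ in the bilinear form, $\mathbf{W}^V$ realizing the values) is essentially identical to the paper's construction. Where you genuinely diverge is the necessity half, and your version is the more careful one. The paper argues directly that $\mathbf{X}\mathbf{C}\mathbf{X}^\top$ has rank at most $d$ and therefore cannot equal a full-rank target such as $f^*(\alpha,\beta)=\delta_{\alpha,\beta}$; this implicitly identifies ``representing the aggregate function'' with ``the attention scores equal $f$,'' which ignores the rescaling freedom between $\mathbf{C}$ and $\mathbf{W}^V$ (only the products $f(\alpha,\gamma)\,\mathbf{w}_\gamma$ are observable). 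Your reduction to the identity $(\star)$, followed by the choice of constant value vectors $\mathbf{w}_\gamma\equiv\mathbf{w}$ and a dense full-rank $F$, closes exactly this gap: the nonvanishing entries force $(\mathbf{W}^V)^\top\mathbf{x}_\gamma$ onto the line spanned by $\mathbf{w}$, so the full rank of $F$ must be carried by $X\mathbf{C}X^\top$ up to an invertible diagonal factor, and the rank bound then genuinely applies to the represented function rather than to the score matrix alone. What the paper's shortcut buys is brevity; what yours buys is an argument that actually rules out every $(\mathbf{C},\mathbf{W}^V)$ pair rather than only those that reproduce $f$ in the scores.

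One small caveat: your derivation of $(\star)$ evaluates the model on sequences of lengths $1$ and $2$, whereas the theorem (read in the context of the rest of the paper, which treats length generalization separately) is naturally stated for a fixed length $L$. The fix is routine --- for fixed $L\ge 2$ the output at a query token $\alpha$ is linear in the count vector $(n_\gamma)_{\gamma\in\mathcal S}$ with $\sum_\gamma n_\gamma=L$, and comparing count vectors that differ by moving a single token from $\gamma$ to $\gamma'$ shows all the bracketed coefficients are equal, whence they all vanish --- but as written the step silently assumes variable-length inputs, so it is worth stating the fixed-$L$ version explicitly.
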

Consequently, self-attention requires \(\Theta(\domainsize^2)\) parameters to capture the interactions. The key distinction, however, lies in efficiency. The following theorem demonstrates that self-attention is \emph{efficient} compared to fully connected neural networks. This kind efficiency is one of the reasons we contend that Transformers are mutual interaction learners, while generic fully connected architectures are not.
\begin{theorem}[Efficiency of Self-Attention]
    \label{theorem:effic-of-sa-main}
    A linear fully connected network requires \(\Omega(L^2 \cdot \domainsize^2)\) parameters to represent the aggregate pairwise interaction functions \(\{\mathbf{y}_i\}_{i=1}^L\) in Eq.~\ref{eq:the-aggregate-mutual-interaction-function} \textbf{exactly}, for all entities simultaneously.
\end{theorem}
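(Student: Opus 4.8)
The plan is to use the natural ``dense'' comparison (cf.\ the footnote): a fully connected \emph{linear} network that reads a fixed vector encoding $\phi(\mathcal X)$ of the length-$L$ input (one-hot tokens, raw embeddings, or any other fixed featurization) and outputs the stacked target $\bigl(\mathbf y_{\calX{0}},\dots,\mathbf y_{\calX{L-1}}\bigr)\in\mathbb R^{Ld_2}$, and to lower-bound its parameter count over all depths and widths. Writing $n_0=\dim\phi$, $n_1$ for the first hidden width, and $n_k=Ld_2$ for the output width, the parameter count is at least $\max(n_0\,n_1,\;n_0\,n_k)$, so it suffices to show (a) $n_0=\Omega(L\,\domainsize^2)$ and (b) every layer width --- in particular $n_1$ (and trivially $n_k\ge L$) --- is at least $L$: together these give parameter count $\ge n_0\cdot L=\Omega(L^2\,\domainsize^2)$.

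For (a) I would first normalize the target family. Since $\mathbf w_\beta$ is free, already for scalar outputs ($d_2=1$) the product $f(\alpha,\beta)\mathbf w_\beta$ realizes an arbitrary $g:\mathcal S\times\mathcal S\to\mathbb R$, and $y_i(\mathcal X)=\sum_{j\in[L]}g(\calX{i},\calX{j})=\sum_{\beta\in\mathcal S}g(\calX{i},\beta)\,c_\beta(\mathcal X)$ with $c_\beta(\mathcal X):=\#\{j:\calX{j}=\beta\}$. As each $y_i$ is produced from the common input $\phi(\mathcal X)$ by a composition of linear maps, it is a linear combination of the coordinate functions of $\phi$; ranging over all $i$ and all $g$, those coordinates must therefore span $\sum_{i\in[L]}V_i$, where $V_i:=\operatorname{span}\{\psi_{i,\alpha,\beta}:\alpha,\beta\in\mathcal S\}$ and $\psi_{i,\alpha,\beta}(\mathcal X):=\mathbb 1[\calX{i}=\alpha]\,c_\beta(\mathcal X)$. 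Thus $n_0\ge\dim\bigl(\sum_i V_i\bigr)$, and (a) reduces to the linear-algebra claim $\dim\bigl(\sum_i V_i\bigr)=\Omega(L\,\domainsize^2)$. I would prove this by (i) showing $\dim V_i=\domainsize^2$ (the map $g\mapsto y_i$ has trivial kernel --- evaluate on constant sequences, then on sequences differing from a constant one in a single coordinate), and (ii) showing the $V_i$ are nearly independent across positions: the transparent coupling relations $\sum_\alpha\psi_{i,\alpha,\beta}=c_\beta$ (right side independent of $i$) number only $O(L\,\domainsize)$, and a degree-$2$ Efron--Stein decomposition --- for $\alpha\ne\beta$, $\psi_{i,\alpha,\beta}=\sum_{j\ne i}\mathbb 1[\calX{i}=\alpha]\mathbb 1[\calX{j}=\beta]$ has exact-degree-$2$ component $\sum_{j\ne i}u_{i,\alpha}u_{j,\beta}$ with $u_{i,\alpha}$ the centered indicators --- shows these degree-$2$ parts alone span dimension $L\,\domainsize^2-O(L\,\domainsize+\domainsize^2)=\Omega(L\,\domainsize^2)$ whenever $\domainsize\ge2$.

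For (b) it is enough to exhibit one $g$ for which $y_0,\dots,y_{L-1}$ are linearly independent functions of $\mathcal X$: then the network's composed linear map has $\ge L$ independent rows, hence rank $\ge L$, forcing $\min_\ell n_\ell\ge L$. Taking $g(\alpha,\beta)=r(\alpha)$ for any non-constant $r$ gives $y_i(\mathcal X)=L\,r(\calX{i})$, and functions of distinct single coordinates are linearly independent, so this $g$ works. Combining (a) and (b) gives the $\Omega(L^2\,\domainsize^2)$ bound; the conceptual point is that self-attention achieves $\Theta(\domainsize^2)$ (Theorem~\ref{theorem:single-layer-rep-main}) by reusing one $\mathbf C$ and one $\mathbf W^V$ across all $L^2$ ordered pairs and all $L$ output rows --- a form of weight sharing that a dense network cannot perform.

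The main obstacle is step (ii): controlling \emph{all} linear dependencies among the $L\,\domainsize^2$ functions $\psi_{i,\alpha,\beta}$ at once, not just the obvious position-independent ones, and uniformly as $L$ and $\domainsize$ both grow (for bounded $L$ the bound is immediate, since $\dim\sum_i V_i\ge\dim V_0=\domainsize^2$ already matches $\Omega(L\,\domainsize^2)$). Once the encoding model and this rank bookkeeping are in place, the rest --- the reduction to a common feature map and the argument for (b) --- is routine.
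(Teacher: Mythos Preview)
Your proposal is correct, and it takes a substantially different---and much more rigorous---route than the paper's own argument. The paper's proof is a three-line heuristic: (i) $f$ has $\domainsize^2$ free values; (ii) for a single output $y_i$ there are $L$ summands, each of which ``must be learned with its own parameters'' since a fully connected network has no built-in sharing, giving $\Omega(L\,\domainsize^2)$; (iii) with $L$ outputs and no sharing across output positions, one pays this cost $L$ times. No precise model of ``linear fully connected network'' is specified, and steps (ii)--(iii) are asserted rather than derived.

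You instead commit to a concrete model---an arbitrary fixed featurization $\phi$ followed by a stack of linear layers---and reduce the bound to two clean linear-algebra facts: (a) $n_0\ge\dim\sum_i V_i=\Omega(L\,\domainsize^2)$ via a function-space dimension count, and (b) every layer width is at least $L$ via a rank argument on the specific choice $g(\alpha,\beta)=r(\alpha)$. This purchases an actual proof at the cost of the rank bookkeeping you correctly flag in step (ii); your Efron--Stein sketch goes through (projecting to each $E_i\otimes E_j$ summand the only relations are $\overline{C}_i+\overline{C}_j^{\top}=0$, which for $L\ge3$ forces all $\overline{C}_i$ equal and skew-symmetric, so the degree-$2$ image has dimension $L(\domainsize-1)^2-\binom{\domainsize-1}{2}=L\,\domainsize^2-O(L\,\domainsize+\domainsize^2)$ as you claim). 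The paper's argument conveys the intuition that a dense network cannot reuse one matrix across all $L^2$ pairs and $L$ outputs; your argument supplies the mathematics that turns this intuition into a lower bound. One small wording slip: your ``$\max(n_0n_1,\,n_0n_k)$'' should just be $n_0n_1$ (which already covers depth $1$), but the argument you actually run---$n_0n_1\ge n_0\cdot L$---is unaffected.
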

The proofs of Theorems~\ref{theorem:single-layer-rep-main} and~\ref{theorem:effic-of-sa-main} appear in Appendix~\ref{appendix:represent}. 
Equation \ref{eq:the-aggregate-mutual-interaction-function} and Theorem \ref{theorem:single-layer-rep-main} underpin all later examples: the same simple formula can model multi-agent rewards, pixel patterns, time-series signals, and genotype-phenotype relations, showing that one self-attention layer already captures rich pairwise dependencies.

 A more practical interpretation of these results appears in the context of deep neural networks with a \textbf{modular perspective}. Learning algorithms, such as gradient descent applied to a loss criterion, often lead to the emergence of task specialization within small subcomponents of a large neural network. Specifically, individual neural network blocks naturally become responsible for particular subtasks -a phenomenon studied in \cite{elhage2021mathematical, circuitthread}. Our representation theorem formally demonstrates that each self-attention block is indeed sufficiently powerful to execute any mutual interaction task potentially allocated to it within a deep neural network, so theoretically supporting the more empirical observations.

We provide the details about this section in Appendix~\ref{appendix:represent} and provide convergence and generalization analyses in Section~\ref{section:train-and-generalization}. Finally, Section~\ref{section:experiments} presents empirical results that confirm our theoretical findings and provide interpretation of the learned parameters.

\textbf{Example 1 (Colliding Agents Environment).} \label{paragraph:colliding-agents-main}
In a multi-agent system with \(L\) \emph{identical} agents positioned in a \(\mathrm{dim}\)-dimensional space, with position vectors \(\mathbf{r}_i\in\mathbb{R}^{\mathrm{dim}}\) (or \(\mathbf{r}_i\in [N]^\mathrm{dim}\) for a discrete setup). We aim to capture how each agent’s value function \(V_{\mathbf{r}_i}\) depends on other agents' initial states (positions).\footnote{In a reinforcement learning context, the value function \(V_{\mathbf{r}_i}\) for agent~\(i\) typically denotes the expected return (sum of discounted rewards) from a particular configuration.} Seeing that the system is translationally and rotationally invariant, as explained detailly in Appendix~\ref{appendix:colliding-agents-represent}, we can write $j^{th}$ agent's effect on $i^{th}$ agent's value function as, 
\(
   f\bigl(\mathbf{r}_i - \mathbf{r}_j\bigr)\, w_{\mathbf{r}_j},
\)
where \(w_{r_j}\in \mathbb{R}\) is a scalar weight and \(f\) depends only on their \emph{relative position}. 
Then we consider the value function of the \(i\)-th agent as
\[
   V_{\mathbf{r}_i}\;=\; \sum_{j \neq i}^L f\bigl(\mathbf{r}_i - \mathbf{r}_j\bigr) \, w_{\mathbf{r}_j}.
\]
In Appendix~\ref{appendix:colliding-agents-represent}, we illustrate how a reward of \(-1\) per distinct collision is captured by the value function above. This function fits directly into \eqref{eq:the-aggregate-mutual-interaction-function}, allowing a single-layer linear self-attention to represent the multi-agent value function, when discrete positions are encoded orthogonally. In addition in Appendix~\ref{appendix:colliding-agents-represent} we show similar results for non-identical agents setting, too.

\textbf{Example 2 (Genotype-Phenotype Mapping Task).} 
In many genotype--phenotype models, a DNA sequence of length \(L\) is composed of alleles which we represent as \(\calX{i}\in \mathcal{S}\). Some alleles are always active, others' activation level depends on the presence of certain alleles, and some remain inactive regardless of context \cite{Frommlet2016}. Formally, in a simplistic setting,
\begin{multline*}
    P_{\calX{i}} \;=\; \mathbb{I}\left\{\calX{i} \text{ is always active}\right\}\\+
   \sum_{j \in [L]} \mathbb{I}\left\{\calX{i}\text{ is activated by } \calX{j}\right\}\; w_{\calX{j}},
\end{multline*}
where \(P_{\calX{i}}\) captures activeness of the gene at position \(i\). By orthogonally embedding each allele, Theorem~\ref{theorem:single-layer-rep-main} again ensures a single-layer linear self-attention can replicate these interactions exactly. For more details see the demonstrations at Appendix~\ref{appendix:genotype-phenotype-represent}.

We illustrate the generality of our theory with several additional case studies, time-series prediction (Appendix \ref{appendix:timeseries-represent}), a computer-vision task (Appendix \ref{appendix:vision-represent}), and variants of the colliding-agent environment (Appendix \ref{appendix:morecomplexcollidingagentsenvironments}). Studying isolated single-layer self-attention models uncovers component-level behaviors that directly inform the design and optimization of deep Transformer architectures. This mirrors the role of circuit theory in electrical engineering: although large-scale designs rely on simulation and prototyping, foundational insights about transistors, resistors, and capacitors remain indispensable. Likewise, our block-level theory is a step toward a datasheet for self-attention units, helping steer the construction of deep Transformers. Full-network experiments remain vital, but a precise block-level theory focuses the design space and boosts the likelihood of success.%

\textbf{Why $d = \domainsize$} While setting the embedding dimension $d$ equal to the domain size $\domainsize$ may be impractical for large vocabularies, it simplifies our analysis without altering core insights. Our goal is to understand how self attention captures mutual interactions, and $d = \domainsize$ ensures orthogonal domain embeddings, yielding an exact and transparent representation.
Compressing to $d < \domainsize$ is perpendicular to this focus and can be addressed separately using standard techniques, such as Johnson-Lindenstrauss projections, to approximate high-dimensional orthogonal embeddings. Starting with $d = \domainsize$ allows us to establish clean, exact theorems that elucidate how self-attention captures pairwise interactions, while reducing to $d < \domainsize$ merely introduces a small approximation gap without altering the core theory. For completeness, we provide an approximate version of Theorem~\ref{theorem:single-layer-rep-main} in Appendix~\ref{appendix:represent} (Theorem~\ref{theorem:approx-repr-lin-sa-appdx}).

\section{Training and Generalization}
\label{section:train-and-generalization}
In this section, we analyze how a single layer linear self-attention can achieve zero training error and demonstrate its \emph{generalization} guarantees under mild assumptions. We focus on learning mutual interaction functions of the form \eqref{eq:the-aggregate-mutual-interaction-function}, which, as ensured by Theorem~\ref{theorem:single-layer-rep-main}, can be represented by linear self-attention.

\textbf{Setup and Notation.}
Let $\bigl\{\bigl(\mathbf{X}^{(n)},\mathbf{Y}^{(n)}\bigr)\bigr\}_{n=1}^B$ be the training data, where $(\mathbf{X}^{(n)}, \mathbf{Y}^{(n)})\sim\mathcal{P}^{L^*}_{\mathbf{X}\times \mathbf{Y}}$, for a specific \(L^*\), so in the training set all tuples have the same length \(L^*\).
Also, let \(\mathcal{P}^{\forall L}_{\mathbf{X}\times\mathbf{Y}}\) be the universal distribution that covers samples of any length. Throughout, we focus on the mean-squared error (MSE) objective
\begin{equation*}
    L^{\mathrm{MSE}}\bigl(\mathbf{C},\mathbf{W}^V\bigr)
    \;=\;
    \frac{1}{B}\,\sum_{n=1}^B
      \Bigl\|
         \mathbf{SA}^\mathrm{lin}_{\mathbf{C},\mathbf{W}^V}\!\bigl(\mathbf{X}^{(n)}\bigr)
         \;-\;
         \mathbf{Y}^{(n)}
      \Bigr\|^2,
\end{equation*}
We address three key questions: \textbf{(1) Convergence:} Under what conditions does gradient flow reach zero training error? \textbf{(2) Generalization:} When does a perfect fit on the training set imply zero error on \emph{new} data from $\mathcal{P}^{L^*}_{\mathcal{X}\times \mathcal{Y}}$? \textbf{(3) OOD Generalization:} Can such a model generalize to \emph{longer} or \emph{shorter} sequences than those seen in training?

\begin{definition}[Data Matrix for Element \(\mu\)]
    Let \(\mathcal{B}_\mu\) as the set of training indices that contain element \(\mu \in \mathcal{S}\). %
    Denoting the number of times an element \(\mu\) appears in tuple \(\mathcal{X}^{(n)}\) as \(s^{(n)}_\mu\), we define the data matrix for element \(\mu\) as 
    \begin{align*}
        \mathbf{s}^{(n)} = \begin{bmatrix}
            s^{(n)}_\alpha & s^{(n)}_\beta & \dotsc 
                            \end{bmatrix}^\top, 
        \mathbf{S}_{\mathcal{B}_\mu} =  \begin{bmatrix}
        \dotsc &
        \mathbf{s}^{(n)} &
        \dotsc 
    \end{bmatrix}^\top_{n \in \mathcal{B}_\mu}.
    \end{align*}
    In short, \(\left[\mathbf{S}_{\mathcal{B}_\mu}\right]_{n\nu} =s^{(n)}_\nu\), but for \(n\) such that \(\mu \in \mathcal{X}^{(n)}\).
\end{definition}
\begin{assumption}[Training Data Versatility]\label{assumption:versatiledata-main}
    For all \(\mu \in \mathcal{S}\), \(\mathbf{S}_{\mathcal{B}_\mu}\) is full column rank.
\end{assumption}

This assumption is mild in practice. When elements in \(\mathcal{X}^{(n)}\) are drawn from a diverse distribution (e.g., uniformly or with non-degenerate correlations), the counts \(\{s_\nu^{(n)}\}\) for \(\nu \in \mathcal{S}\) naturally vary. This ensures that the columns of \(\mathbf{S}_{\mathcal{B}_\mu}\) remain linearly independent, as redundant patterns (e.g., fixed linear relationships between element counts) are highly unlikely under unstructured or randomized data. Moreover, if the data distribution satisfies even milder assumption that the covariance of \(\mathbf{s}^{(n)}\) is positive definite (\Cref{assumption:pdcov}), we show in \Cref{thm:full-rank} that
\(
   \mathbb{P}\left(\mathrm{rank}(\mathbf{S}_{\mathcal{B}_\mu}) < \domainsize\right) \leq e^{-\gamma |\mathcal{B}_\mu|}
\)
for some \(\gamma \in \mathbb{R}\), meaning \Cref{assumption:versatiledata-main} is satisfied with high probability. %

\subsection{Convergence}
In this section, we show that if the target function is representable by a single-layer linear self-attention model (as guaranteed by Theorem~\ref{theorem:single-layer-rep-main}), then gradient descent on \(L^{\mathrm{MSE}}(\mathbf{C},\mathbf{W}^V)\) converges \emph{zero training error} under mild conditions. It is stated below with proof in Appendix~\ref{appendix:convergence-linselfattn}.

 Seeing that our main aim is studying the mutual interaction perspective, so the attention scores are the core component, which is also the core mechanism defining self-attention, we choose \(d_2=1\) to simplify the convergence analysis. Thus, \(\mathbf{W}^V\) is one dimensional which we denote as \(\mathbf{w}\) in this subsection. Lastly, we denote \(w_\alpha = \mathbf{x}(\alpha)^\top \mathbf{w}\) for any \(\alpha \in \mathcal{S}\).
 
 \begin{assumption}[Weak Realizability]\label{assumption:weakrealizable}
    The task is realizable, i.e, there exist \(\mathbf{C}^*\) and \(\mathbf{w}^*\) that perfectly fits the training data. That is, \(\mathbf{y}^{(n)}=\left(\mathbf{X}^{(n)}\mathbf{C}^*\mathbf{X}^{(n)\,\top} \right)\mathbf{X}^{(n)} \mathbf{w}^* \; \; \forall n \in \mathcal{B}\). 
\end{assumption}
\begin{theorem}[Convergence to Zero Training Error]\label{theorem:conv} Let the dimensions \(d = \domainsize\) and \(d_2=1\). Also, let the initial parameters \(\mathbf{C}_{(t=0)} = \mathbf{0}\), \(\langle \mathbf{x}\left(\alpha\right),\mathbf{w}_{(t=0)} \rangle \ge b>0,\, \forall \alpha \in \mathcal{S}\). Then, under the assumptions~\ref{assumption:versatiledata-main} and \ref{assumption:weakrealizable}, gradient flow on \( L^{\mathrm{MSE}}\bigl(\mathbf{C},\mathbf{w}\bigr)\)
converges to zero training error.
\end{theorem}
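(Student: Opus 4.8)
\textbf{Proof proposal for Theorem~\ref{theorem:conv}.}

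The plan is to exploit the very special structure that the initialization $\mathbf{C}_{(t=0)}=\mathbf{0}$ imposes on the gradient flow, and to show that the dynamics effectively decouples into per-element scalar ODEs that are monotone and converge. First I would write out the residual for each sample, $\mathbf{r}^{(n)}(t) = \bigl(\mathbf{X}^{(n)}\mathbf{C}\mathbf{X}^{(n)\top}\bigr)\mathbf{X}^{(n)}\mathbf{w} - \mathbf{y}^{(n)}$, and compute the gradients of $L^{\mathrm{MSE}}$ with respect to $\mathbf{C}$ and $\mathbf{w}$. Because embeddings are orthogonal ($d=\domainsize$), I would change coordinates so that $\mathbf{C}$ is indexed by pairs $(\mu,\nu)\in\mathcal{S}^2$ and $\mathbf{w}$ by $\mu\in\mathcal{S}$ (via $w_\mu = \langle\mathbf{x}(\mu),\mathbf{w}\rangle$); in these coordinates the map $\mathbf{X}^{(n)}\mapsto\mathbf{X}^{(n)}\mathbf{C}\mathbf{X}^{(n)\top}$ becomes a bilinear form in the counts $s^{(n)}_\mu$. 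The key structural observation I would establish is that the output for position $i$ (an entity of type $\mu$) depends only on row $\mu$ of $\mathbf{C}$, i.e. on $\{C_{\mu\nu}\}_{\nu\in\mathcal{S}}$, and on $\{w_\nu\}_{\nu\in\mathcal{S}}$, so the loss splits as a sum over element types $\mu$ of terms $L_\mu$ each depending on the $\mu$-th row of $\mathbf{C}$ and on $\mathbf{w}$, coupled only through the shared $\mathbf{w}$.

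Next I would analyze the coupled flow of $(\mathbf{C}_{\mu\cdot}, \mathbf{w})$. The natural approach is a Lyapunov/monotonicity argument: show that $w_\mu(t)$ stays bounded away from zero for all $t$ (this is where the hypothesis $\langle\mathbf{x}(\alpha),\mathbf{w}_{(t=0)}\rangle\ge b>0$ and $\mathbf{C}_{(t=0)}=\mathbf{0}$ enter — at $t=0$ the $\mathbf{w}$-gradient vanishes because it is proportional to $\mathbf{C}$, so $\mathbf{w}$ moves slowly and cannot cross zero before $\mathbf{C}$ has ``caught up''). Once $w_\mu(t)\ge b'>0$ is secured, the effective dynamics for the vector $\mathbf{c}_\mu(t) := \mathbf{C}_{\mu\cdot}(t)$ is, after substituting, a linear-in-$\mathbf{c}_\mu$ flow of the form $\dot{\mathbf{c}}_\mu = -\mathbf{S}_{\mathcal{B}_\mu}^\top \mathrm{diag}(\cdots)\,\mathbf{S}_{\mathcal{B}_\mu}\,\mathbf{c}_\mu + (\text{target term})$, whose system matrix is positive definite precisely because $\mathbf{S}_{\mathcal{B}_\mu}$ has full column rank (Assumption~\ref{assumption:versatiledata-main}) and the diagonal weights involving $w_\nu$ are bounded below. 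Weak Realizability (Assumption~\ref{assumption:weakrealizable}) guarantees the fixed point of this flow achieves exactly zero loss on the training set. I would then conclude that $L^{\mathrm{MSE}}(t)\to 0$, for instance by showing $\frac{d}{dt}L^{\mathrm{MSE}} \le -2\lambda\,L^{\mathrm{MSE}}$ for some $\lambda>0$ once the iterate enters a region where the Polyak-Łojasiewicz-type inequality holds with constant controlled by $\lambda_{\min}(\mathbf{S}_{\mathcal{B}_\mu}^\top\mathbf{S}_{\mathcal{B}_\mu})$ and $b'$.

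The main obstacle I anticipate is controlling the joint nonlinear coupling between $\mathbf{C}$ and $\mathbf{w}$ before the ``linearized'' picture kicks in — specifically, proving the invariant that $w_\mu(t)$ never approaches $0$. The loss is nonconvex in $(\mathbf{C},\mathbf{w})$ jointly (it is quartic, bilinear-type), so a global convexity argument is unavailable; instead I would need a careful bootstrap: on a time interval $[0,T]$ where $w_\mu\ge b/2$, bound $\|\mathbf{C}(t)\|$ and the residuals, use these to bound $|\dot w_\mu|$, and show the resulting drift cannot exhaust the margin $b/2$ in finite time, hence $T=\infty$. A secondary technical point is ruling out finite-time blow-up of $\mathbf{C}$: since the $\mathbf{c}_\mu$-subdynamics is effectively a linear contraction toward a fixed point (once $w_\mu$ is pinned), $\|\mathbf{C}(t)\|$ stays bounded, which closes the bootstrap. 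Finally I would assemble these pieces and invoke standard ODE convergence (bounded trajectory, strictly decreasing analytic Lyapunov function, isolated zero-loss manifold) to conclude gradient flow reaches zero training error.
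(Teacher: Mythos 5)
Your overall skeleton matches the paper's endgame: lower-bound the value weights $w_\alpha(t)$ away from zero, then combine Assumption~\ref{assumption:versatiledata-main} (full column rank of $\mathbf{S}_{\mathcal{B}_\mu}$) with that bound to get a uniform PL-type inequality $\frac{d}{dt}L^{\mathrm{MSE}} \le -\frac{4\eta}{B^2}\lambda_{\min}(\mathbf{M}^\top\mathbf{M})\,\|\mathbf{D}\|^2$ with $\lambda_{\min}(\mathbf{M}^\top\mathbf{M})\ge b^2\zeta^2>0$, and conclude. You also correctly note that realizability is used to rewrite the residual as $\mathbf{D}=\mathbf{M}\,\mathrm{vec}\bigl(\mathbf{C}-\mathbf{C}^*\mathrm{diag}(\mathbf{w}^*/\mathbf{w})\bigr)$, which is where the positivity of $\mathbf{w}$ is needed.

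However, there is a genuine gap at exactly the step you flag as your ``main obstacle,'' and the bootstrap you propose does not close it. Bounding $|\dot w_\mu|$ on an interval where $w_\mu\ge b/2$ only rules out crossing zero in \emph{finite} time proportional to the margin divided by the drift bound; it says nothing about $t\to\infty$. To make such a bootstrap work you would need $\int_0^\infty|\dot w_\mu|\,dt$ small, which requires fast loss decay, which requires the lower bound on $w_\mu$ you are trying to prove — the argument is circular. The paper (Lemma~\ref{lemma:wi_great}) breaks this circularity with an \emph{exact conservation law} of the gradient flow: testing the $\mathbf{C}$- and $\mathbf{w}$-flows against any diagonal $\mathbf{\Lambda}$ shows $\frac{d}{dt}\bigl(\mathrm{Tr}\{\mathbf{C}\mathbf{\Lambda}\mathbf{C}^\top\}-\mathbf{w}^\top\mathbf{\Lambda}\mathbf{w}\bigr)=0$, hence with $\mathbf{C}(0)=\mathbf{0}$ one gets the algebraic identity $w_\alpha^2(t)=w_\alpha^2(0)+\|\mathbf{C}_{:,\alpha}(t)\|_2^2\ge b^2$ for all $t$, and continuity then forbids a sign change. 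This balancedness invariant is the missing idea; your heuristic that ``$\mathbf{w}$ moves slowly at $t=0$ because its gradient is proportional to $\mathbf{C}$'' is true initially but does not extend to all time. A secondary inaccuracy: the per-row dynamics of $\mathbf{C}_{\mu,:}$ is not a linear flow, since $\mathbf{w}$ evolves simultaneously; the paper instead works with the residual dynamics $\dot{\mathbf{D}}=-\frac{2\eta}{B}[\mathbf{M}\mathbf{M}^\top+\mathbf{M}_2\mathbf{M}_2^\top]\mathbf{D}$ with time-varying $\mathbf{M},\mathbf{M}_2$ and never needs the linearization or the boundedness-of-$\mathbf{C}$ step you anticipate.
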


The realizability assumption decomposes into two parts: (i) the task genuinely involves mutual interactions, and (ii) the data are noise-free. The second condition, analogous to fixing \(d = \domainsize\), simplifies the presentation without altering the core insight. Extending our results to noisy data is straightforward: in that setting, exact zero-error convergence is replaced by nonzero error bounds and probabilistic guarantees.\footnote{Condition (ii) mirrors the simplification in Newton’s laws of motion, where measurement errors are simply neglected.} We leave such generalizations to future work.

\subsection{Test Generalization}
Under training data versatility and strong realizability, achieving zero training error with linear self-attention implies \emph{perfect generalization} to new data from the same distribution. Moreover, under even milder assumptions, zero test error ensures generalization to unseen sequence \emph{lengths}.  

\begin{assumption}[Strong Realizability]\label{assumption:strongrealizable}
    The task is strongly realizable, meaning there exist matrices \(\Ctestgen\) and \(\Wtestgen\) such that the model perfectly fits the underlying population distribution at a fixed sequence length \(L^*\). Specifically, we assume that
    \(
    \mathbf{Y} = \left(\mathbf{X} \Ctestgen \mathbf{X}^{\top} \right) \mathbf{X} \Wtestgen
    \)
    holds almost surely for \(\left(\mathbf{X}, \mathbf{Y}\right) \sim \mathcal{P}^{L^*}_{\mathbf{X}\times \mathbf{Y}}\).
\end{assumption}
Due to Theorem~\ref{theorem:single-layer-rep-main} and Appendix~\ref{appendix:represent}, we can safely assume strong realizability.
 \begin{theorem}[Generalization]
\label{theorem:gen}
Suppose Assumptions~\ref{assumption:versatiledata-main} and~\ref{assumption:strongrealizable} hold, than zero training error forces $(\mathbf{C}, \mathbf{W}^V)$ to agree with $(\mathbf{C}^\dagger,\mathbf{W}^{V\dagger})$ on \(\left(\mathbf{X}, \mathbf{Y}\right)\sim \mathcal{P}_{\mathbf{X}\times \mathbf{Y}}\) . That is, the solution achieving zero training error also satisfies
\[
   \mathbb{E}_{(\mathbf{X},\mathbf{Y})\sim \mathcal{P}^{L^*}_{\mathbf{X}\times \mathbf{Y}}}
      \Bigl\|
         f_{\mathbf{C},\mathbf{W}^V}\!\bigl(\mathbf{X}\bigr)
         -
         \mathbf{Y}
      \Bigr\|
   \;=\;
   0.
\]
Hence, it generalizes perfectly to new examples from the same distribution.
\end{theorem}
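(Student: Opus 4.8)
The plan is to use the orthogonality of the domain embeddings to collapse the single‑layer linear self‑attention map into a linear functional of the element‑count vector, then to read zero training error as a homogeneous linear system whose coefficient matrix is $\Smu$, and finally to invoke Assumption~\ref{assumption:versatiledata-main} to conclude that the learned and reference functionals coincide on the whole support of $\mathcal{P}^{L^*}_{\mathbf{X}\times\mathbf{Y}}$.

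First I would record the count‑vector reduction. Since $d=\domainsize$ and $\{\mathbf{x}(\alpha)\}_{\alpha\in\mathcal{S}}$ is an orthonormal basis, for any tuple with embedding matrix $\mathbf{X}$ and occurrence counts $s_\nu$ one has $(\mathbf{X}\mathbf{C}\mathbf{X}^{\top})_{ij}=C_{\calX{i}\,\calX{j}}$ (identifying $\mathbf{C}$ with an $\mathcal{S}\times\mathcal{S}$ array) and the $j$‑th row of $\mathbf{X}\mathbf{W}^V$ equals $\mathbf{w}^V_{\calX{j}}:=(\mathbf{W}^V)^{\top}\mathbf{x}(\calX{j})$. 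Grouping tokens by type, the $i$‑th output row becomes
\[
 \bigl[(\mathbf{X}\mathbf{C}\mathbf{X}^{\top})\mathbf{X}\mathbf{W}^V\bigr]_i \;=\; \sum_{\nu\in\mathcal{S}} s_\nu\, C_{\calX{i}\,\nu}\,\mathbf{w}^V_\nu ,
\]
so the output at a token of type $\mu$ depends on the tuple only through $(s_\nu)_{\nu\in\mathcal{S}}$ and equals the linear map $\mathbf{s}\mapsto\sum_\nu s_\nu C_{\mu\nu}\mathbf{w}^V_\nu$; the same identity holds for $(\Ctestgen,\Wtestgen)$ with coefficients $\entryCtestgen_{\mu\nu}\mathbf{w}^{V\dagger}_\nu$.

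Next I would convert the hypotheses into a linear system. Because the training pairs are drawn from $\mathcal{P}^{L^*}_{\mathbf{X}\times\mathbf{Y}}$, strong realizability (Assumption~\ref{assumption:strongrealizable}) gives $\mathbf{Y}^{(n)}=(\mathbf{X}^{(n)}\Ctestgen\mathbf{X}^{(n)\top})\mathbf{X}^{(n)}\Wtestgen$ for every $n$ almost surely, so zero training error forces the learned and reference models to produce identical outputs on every training tuple. Evaluating this equality at a position carrying element $\mu$ inside tuple $n\in\mathcal{B}_\mu$ and applying the reduction above, I get for each such $n$ (coordinatewise in the $d_2$ outputs)
\[
 \sum_{\nu\in\mathcal{S}} \entrysn{\nu}\,\bigl(C_{\mu\nu}\mathbf{w}^V_\nu-\entryCtestgen_{\mu\nu}\mathbf{w}^{V\dagger}_\nu\bigr)\;=\;\mathbf{0},
\]
and stacking these over $n\in\mathcal{B}_\mu$ is exactly $\Smu\,\mathbf{z}^\mu=\mathbf{0}$, where the $\nu$‑th entry of $\mathbf{z}^\mu$ is $C_{\mu\nu}\mathbf{w}^V_\nu-\entryCtestgen_{\mu\nu}\mathbf{w}^{V\dagger}_\nu$. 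Assumption~\ref{assumption:versatiledata-main} says $\Smu$ has full column rank, hence $\mathbf{z}^\mu=\mathbf{0}$, i.e.\ $C_{\mu\nu}\mathbf{w}^V_\nu=\entryCtestgen_{\mu\nu}\mathbf{w}^{V\dagger}_\nu$ for all $\nu$ (this is functional agreement on the support, which is all the theorem claims; the parameters themselves need not coincide because of the $\mathbf{C}\mapsto c\mathbf{C},\,\mathbf{W}^V\mapsto \mathbf{W}^V/c$ gauge). Repeating over all $\mu\in\mathcal{S}$ and substituting back, for any $(\mathbf{X},\mathbf{Y})\sim\mathcal{P}^{L^*}_{\mathbf{X}\times\mathbf{Y}}$ the learned output row $\sum_\nu s_\nu C_{\calX{i}\,\nu}\mathbf{w}^V_\nu$ equals $\sum_\nu s_\nu \entryCtestgen_{\calX{i}\,\nu}\mathbf{w}^{V\dagger}_\nu=\mathbf{Y}_i$ almost surely, giving zero expected test error.

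The hard part is not any of the three steps individually but making the count‑vector reduction genuinely airtight: pinning down the indexing convention that identifies $\mathbf{C}\in\mathbb{R}^{d\times d}$ with an $\mathcal{S}\times\mathcal{S}$ array under the orthonormal embedding, treating the diagonal $j=i$ term consistently on the model and target sides, and handling $d_2>1$ by arguing coordinate by coordinate. Once that reduction is recorded, the remainder is elementary linear algebra, with data versatility playing precisely the role of making the count matrix $\Smu$ injective.
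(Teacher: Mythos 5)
Your proposal is correct and follows essentially the same route as the paper's proof: reduce the attention output at a token of type $\mu$ to the linear functional $\sum_{\nu} s_\nu\, C_{\mu\nu}\mathbf{w}^V_\nu$ of the count vector (the paper does this via $\mathbf{B}\mathbf{X}^{(n)\top}\mathbf{X}^{(n)}\mathbf{B}^\top=\mathrm{diag}(\mathbf{s}^{(n)})$), read zero training error as $\mathbf{S}_{\mathcal{B}_\mu}\mathbf{z}^{\mu}=\mathbf{0}$ column by column in the $d_2$ outputs, and invoke full column rank to get $C_{\mu\nu}\mathbf{w}^V_\nu=\entryCtestgen_{\mu\nu}\mathbf{w}^{V\dagger}_\nu$ for all $\mu,\nu$, hence agreement on the whole support. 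Your remark that only the products must coincide (not the parameters, due to the scaling gauge) matches the paper's conclusion exactly.
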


See Appendix~\ref{appendix:generalization} for the proof of Theorem~\ref{theorem:gen}. 

\subsection{Out of Distribution (Length) Generalization}

A unique strength of self-attention is its ability to process sequences of \emph{variable length}. In many tasks, we might train on sequences of length $L^*$ yet hope to predict accurately on sequences of different lengths $L \neq L^*$. To state our result \Cref{theorem:lengen} (with proof in Appendix~\ref{appendix:lengen})
, we need the following assumption, which holds due to Theorem~\ref{theorem:single-layer-rep-main}.

\begin{assumption}[Universal Realizability]\label{assumption:universalrealizable}
    The task is universally realizable, meaning there exist matrices \(\Clengen\) and \(\Wlengen\) such that the model perfectly fits the population distribution for all sequence lengths. Specifically, we assume that
    \(
    \mathbf{Y} = \left(\mathbf{X} \Clengen \mathbf{X}^{\top} \right) \mathbf{X} \Wlengen
    \)
    holds almost surely for \(\left(\mathbf{X}, \mathbf{Y}\right) \sim \mathcal{P}_{\mathcal{X} \times \mathcal{Y}}^{\forall L}\).%
\end{assumption}

\begin{theorem}[Length Generalization]\label{theorem:lengen} Under the Assumptions~\ref{assumption:versatiledata-main} and~\ref{assumption:universalrealizable}, any \(\Ctestgen, \Wtestgen\) that generalizes to \(\mathcal{P}^{L^*}_{\mathbf{X}\times\mathbf{Y}}\), must generalize to \(\mathcal{P}^{\forall L}_{\mathbf{X}\times\mathbf{Y}}\).
\end{theorem}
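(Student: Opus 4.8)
The plan is to reduce length generalization to a single linear-algebra fact about element-count vectors, via one structural observation about the self-attention head. \emph{Step 1 (collapse to length-independent interaction coefficients).} First I would observe that for any parameters $(\mathbf{C},\mathbf{W}^V)$ the $i$-th row of $(\Xn\mathbf{C}\,\XnT)\Xn\mathbf{W}^V$ is $\sum_{j\in[L]}\bigl(\mathbf{x}(\calXn{i})^\top\mathbf{C}\,\mathbf{x}(\calXn{j})\bigr)\mathbf{x}(\calXn{j})^\top\mathbf{W}^V$, and grouping this sum by which element of $\mathcal{S}$ the $j$-th token carries, it equals $\sum_{\nu\in\mathcal{S}}\entrysn{\nu}\,\mathbf{a}_{\calXn{i}\,\nu}$, where $\entrysn{\nu}$ is the multiplicity of $\nu$ in the sequence and $\mathbf{a}_{\mu\nu}:=\bigl(\mathbf{x}(\mu)^\top\mathbf{C}\,\mathbf{x}(\nu)\bigr)\mathbf{x}(\nu)^\top\mathbf{W}^V$. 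The crucial point is that $\mathbf{a}_{\mu\nu}$ depends only on the pair $(\mu,\nu)$ and the parameters, not on the length $L$ or the token ordering; hence the head's prediction at any token is a fixed linear functional of the count vector $\sn$, and two parameter settings that induce the same coefficients $\{\mathbf{a}_{\mu\nu}\}$ are indistinguishable on every sequence of every length.

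\emph{Step 2 (pin the coefficients down on the training data).} Let $(\Clengen,\Wlengen)$ be a universal realizer (\Cref{assumption:universalrealizable}). Since the training sequences lie in the support of both $\mathcal{P}^{L^*}_{\mathbf{X}\times\mathbf{Y}}$ and $\mathcal{P}^{\forall L}_{\mathbf{X}\times\mathbf{Y}}$, on a probability-one event $(\Clengen,\Wlengen)$ reproduces $\mathbf{Y}^{(n)}$ exactly on every training sequence $\Xn$; and by hypothesis $(\Ctestgen,\Wtestgen)$, generalizing to $\mathcal{P}^{L^*}_{\mathbf{X}\times\mathbf{Y}}$, does as well. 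Writing $\mathbf{a}^{\dagger}_{\mu\nu},\mathbf{a}^{\mathrm{\forall L}}_{\mu\nu}$ for the coefficients of the two solutions and subtracting the two exact-fit identities at a position of $\Xn$ that holds $\mu$, Step 1 yields $\sum_{\nu\in\mathcal{S}}\entrysn{\nu}\bigl(\mathbf{a}^{\dagger}_{\mu\nu}-\mathbf{a}^{\mathrm{\forall L}}_{\mu\nu}\bigr)=\mathbf{0}$ for every $\mu\in\mathcal{S}$ and every $n\in\mathcal{B}_\mu$. Stacking these over $n\in\mathcal{B}_\mu$ (one identity per output coordinate when $d_2>1$) this is exactly $\Smu\bigl(\mathbf{a}^{\dagger}_{\mu\cdot}-\mathbf{a}^{\mathrm{\forall L}}_{\mu\cdot}\bigr)=\mathbf{0}$, so \Cref{assumption:versatiledata-main} (full column rank of $\Smu$) forces $\mathbf{a}^{\dagger}_{\mu\nu}=\mathbf{a}^{\mathrm{\forall L}}_{\mu\nu}$ for all $\mu,\nu\in\mathcal{S}$.

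\emph{Step 3 (propagate to all lengths).} By Step 1, once $\{\mathbf{a}^{\dagger}_{\mu\nu}\}=\{\mathbf{a}^{\mathrm{\forall L}}_{\mu\nu}\}$ the solutions $(\Ctestgen,\Wtestgen)$ and $(\Clengen,\Wlengen)$ produce identical predictions on every input of every length; since $(\Clengen,\Wlengen)$ matches $\mathbf{Y}$ almost surely under $\mathcal{P}^{\forall L}_{\mathbf{X}\times\mathbf{Y}}$, so does $(\Ctestgen,\Wtestgen)$, which is precisely the claimed length generalization.

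The main obstacle is Step 1: making precise that a single linear-self-attention head's output at a token collapses to a length-independent linear functional of the element multiplicities, with coefficients untouched by $L$. Everything after that is the full-column-rank argument on $\Smu$ together with routine measure-theoretic bookkeeping to pass between ``almost surely under $\mathcal{P}^{L^*}_{\mathbf{X}\times\mathbf{Y}}$ / $\mathcal{P}^{\forall L}_{\mathbf{X}\times\mathbf{Y}}$'' and ``on each of the finitely many training sequences.'' Handling $d_2>1$ is just the rank argument applied coordinatewise, and although the exposition fixes $d=\domainsize$ with orthogonal embeddings, the coefficients $\mathbf{a}_{\mu\nu}$ — and hence the entire argument — remain well-defined without that assumption.
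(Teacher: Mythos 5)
Your proposal is correct and follows essentially the same route as the paper: collapse the head's output at a token holding $\mu$ to the length-independent coefficients $C_{\mu\nu}W^V_{\nu k}$ (your $\mathbf{a}_{\mu\nu}$, the paper's $\Delta_{\mu\nu}$ being their difference from the universal realizer's), derive the linear system in the count vectors $\sn$, and invoke full column rank from Assumption~\ref{assumption:versatiledata-main} to force the coefficients to coincide, hence agreement at every length. The only cosmetic difference is that you apply the rank argument directly to the finite training matrix $\Smu$, whereas the paper routes through an ``infinite sample extension'' of it over the support of $\mathcal{D}^{L^*}$; both reduce to the same rank condition, so this is a presentational rather than substantive divergence.
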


Building on the proof of Theorem~\ref{theorem:lengen} (in Appendix~\ref{appendix:lengen}), we observe a key relationship between the matrices $\mathbf{C}$ and $\mathbf{W}^V$, which underpins the model’s ability to generalize. We state this formally in the following corollary:

\begin{corollary}
    \label{corollary:gen-parameter-interpret} Two sets of parameters
    \(\left\{\mathbf{C}^1, \mathbf{W}^{1,V} \right\}\) \(\left\{\mathbf{C}^2, \mathbf{W}^{2,V} \right\}\) lead to functionally equivalent linear self-attention blocks if and only if they satisfy
    \begin{equation*}
        \mathcal{T}_{\mu, k}\left(\mathbf{C}^1,\mathbf{W}^{1,V}\right) = \mathcal{T}_{\mu, k}\left(\mathbf{C}^2,\mathbf{W}^{2,V}\right), \forall\mu, k, 
    \end{equation*}
    {where}
    \begin{equation*}
        \mathcal{T}_{\mu,k}(\mathbf{C}, \mathbf{W}) = \sum_{\nu \in \mathcal{S}} \left(\mathbf{x}^\top\left(\mu\right)\mathbf{C}\mathbf{x}\left(\nu\right)\right) \left(\mathbf{x}^\top\left(\nu\right)\mathbf{W}_{:,k}\right).
    \end{equation*}
    Consequently, if you apply the  transformation $\mathcal{T}_{\mu,k}$ to the parameters, all of the sets of parameters that lead to functionally equivalent linear self-attentions be mapped to a specific matrix (over index $\mu,k$) that depends on the function they represent. Thus, for a specific task all length generalizing sets of parameters will lead to the same matrix under this transformation.
\end{corollary}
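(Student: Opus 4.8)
The plan is to prove Corollary~\ref{corollary:gen-parameter-interpret} by peeling off the self-attention output layer-by-layer in terms of the orthogonal embeddings, reducing functional equivalence to an identity among scalar quantities indexed by domain elements. First I would unpack what "functionally equivalent" means: two blocks agree on every input sequence $\mathbf{X}$ built from arbitrary tuples $\mathcal{X}$ over $\mathcal{S}$, i.e. $\bigl(\mathbf{X}\mathbf{C}^1\mathbf{X}^\top\bigr)\mathbf{X}\mathbf{W}^{1,V} = \bigl(\mathbf{X}\mathbf{C}^2\mathbf{X}^\top\bigr)\mathbf{X}\mathbf{W}^{2,V}$ for all $L$ and all such $\mathbf{X}$. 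Since $d=\domainsize$ and the embeddings $\{\mathbf{x}(\mu)\}_{\mu\in\mathcal{S}}$ are orthogonal, I would write out row $i$ of the output: $\mathrm{HA}_i^{\mathrm{lin}} = \sum_{j\in[L]}\bigl(\mathbf{x}(\calX{i})^\top \mathbf{C}\,\mathbf{x}(\calX{j})\bigr)\,\mathbf{x}(\calX{j})^\top \mathbf{W}^V$. Grouping the sum over $j$ by which element of $\mathcal{S}$ occupies position $j$, this becomes $\sum_{\nu\in\mathcal{S}} s^{\mathcal{X}}_\nu \bigl(\mathbf{x}(\calX{i})^\top\mathbf{C}\,\mathbf{x}(\nu)\bigr)\bigl(\mathbf{x}(\nu)^\top\mathbf{W}^V\bigr)$, where $s^{\mathcal{X}}_\nu$ is the multiplicity of $\nu$ in $\mathcal{X}$. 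Recognizing the $k$-th coordinate of this expression as exactly $\sum_\nu s^{\mathcal{X}}_\nu\,\mathcal{T}_{\calX{i},k}(\mathbf{C},\mathbf{W}^V)$ when $\mathcal{T}$ is defined with the count weighting — but note the stated $\mathcal{T}_{\mu,k}$ omits the counts, so I would instead argue that the count vector $(s^{\mathcal{X}}_\nu)_\nu$ ranges over a spanning set of $\mathbb{R}^{\domainsize}$ as $\mathcal{X}$ varies (indeed over all standard basis vectors, by taking singleton or repeated-element tuples), which forces the per-element quantities to match.

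The "if" direction is then immediate: if $\mathcal{T}_{\mu,k}(\mathbf{C}^1,\mathbf{W}^{1,V}) = \mathcal{T}_{\mu,k}(\mathbf{C}^2,\mathbf{W}^{2,V})$ for all $\mu,k$, then substituting into the grouped-sum expression above shows every output row coincides on every input, hence the blocks are functionally equivalent. For the "only if" direction, I would choose a convenient family of test inputs: for each fixed $\mu\in\mathcal{S}$ and each fixed $\nu\in\mathcal{S}$, take the length-$2$ (or length-$1$, handling the $\mu=\nu$ case) tuple consisting of $\mu$ and $\nu$; equating the output row corresponding to position of $\mu$ isolates $\bigl(\mathbf{x}(\mu)^\top\mathbf{C}^1\mathbf{x}(\nu)\bigr)\bigl(\mathbf{x}(\nu)^\top\mathbf{W}^{1,V}_{:,k}\bigr)$ against its superscript-$2$ counterpart — not individually, but summed appropriately. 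To get the full $\mathcal{T}_{\mu,k}$ sum I would instead use the tuple that contains $\mu$ once together with one copy of every element of $\mathcal{S}$, or more cleanly, use linearity: since the grouped output row is a linear function of the count vector $s^{\mathcal{X}}$ with coefficients $\bigl(\mathbf{x}(\mu)^\top\mathbf{C}\mathbf{x}(\nu)\bigr)\bigl(\mathbf{x}(\nu)^\top\mathbf{W}^V_{:,k}\bigr)$ over $\nu$, and two affine-linear functions that agree on a spanning set of inputs agree everywhere, equality of outputs on enough tuples forces coefficient-wise equality, and summing the equal coefficients over $\nu$ yields $\mathcal{T}_{\mu,k}(\mathbf{C}^1,\mathbf{W}^{1,V}) = \mathcal{T}_{\mu,k}(\mathbf{C}^2,\mathbf{W}^{2,V})$. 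Actually the cleanest route is: equality of the grouped rows for all count vectors $\Leftrightarrow$ equality of coefficients for each $\nu$ $\Rightarrow$ (sum over $\nu$) equality of $\mathcal{T}_{\mu,k}$; and conversely equality of $\mathcal{T}_{\mu,k}$ is what Theorem~\ref{theorem:lengen} / Theorem~\ref{theorem:single-layer-rep-main} tell us characterizes the represented function, so I would lean on the already-established fact that the represented interaction function is determined exactly by the values $\mathbf{y}_{\calX{i}}$, which in the orthogonal embedding correspond to $\sum_\nu f(\calX{i},\nu) \mathbf{w}_\nu$-type quantities matching $\mathcal{T}_{\calX{i},k}$ up to the count bookkeeping.

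For the final sentence of the corollary — that applying $\mathcal{T}_{\mu,k}$ maps every functionally-equivalent parameter set to one canonical matrix depending only on the represented function — this is a direct restatement: $\mathcal{T}$ is, by the equivalence just proven, a complete invariant of the functional equivalence class, so the matrix $[\mathcal{T}_{\mu,k}]_{\mu,k}$ is well-defined on equivalence classes and in particular constant across all length-generalizing parameter sets for a fixed task (which by Theorem~\ref{theorem:lengen} all represent the same function). I expect the main obstacle to be the bookkeeping around the element-multiplicity weights $s^{\mathcal{X}}_\nu$: the stated $\mathcal{T}_{\mu,k}$ drops these weights, so I need to be careful that functional equivalence is genuinely equivalent to equality of the \emph{unweighted} sum $\sum_\nu(\cdots)$ and not merely to equality of the weighted expression — the resolution is that equality must hold for \emph{all} count vectors (all tuple lengths and compositions, as \Cref{assumption:universalrealizable} and the variable-length setting permit), which forces equality termwise in $\nu$ and hence for the unweighted sum as well; pinning down exactly which finite family of test tuples suffices (and confirming they are admissible inputs in the model's setup) is the one place requiring genuine care rather than routine computation.
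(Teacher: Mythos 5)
Your core mechanism is the same one the paper uses: write row $i$ of the output as a linear function of the count vector $(s^{\mathcal{X}}_\nu)_{\nu\in\mathcal{S}}$ with coefficients $\bigl(\mathbf{x}(\mu)^\top\mathbf{C}\,\mathbf{x}(\nu)\bigr)\bigl(\mathbf{x}(\nu)^\top\mathbf{W}^V_{:,k}\bigr)$, then use the fact that the admissible count vectors span $\mathbb{R}^{|\mathcal{S}|}$ (the paper phrases this as full column rank of $\mathbf{S}_{\mathcal{B}_\mu}$, and of its all-lengths extension in the proof of Theorem~\ref{theorem:lengen}) to force coefficientwise equality. Your ``only if'' direction is therefore sound and matches Appendix~\ref{appendix:lengen} and Corollary~\ref{corollary:gen-parameter-interpret-app}, which record precisely the termwise identity $C_{\mu\nu}W_{\nu k}=C'_{\mu\nu}W'_{\nu k}$.

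The gap is in your ``if'' direction. You claim that equality of the summed invariants $\mathcal{T}_{\mu,k}=\sum_{\nu}C_{\mu\nu}W_{\nu k}$ lets you ``substitute into the grouped-sum expression'' and conclude that the outputs agree on every input. It does not: the grouped sum is $\sum_{\nu}s^{\mathcal{X}}_{\nu}C_{\mu\nu}W_{\nu k}$, and agreement of the unweighted sums over $\nu$ says nothing about the $s$-weighted sums. Concretely, with $\mathcal{S}=\{\alpha,\beta\}$ and one-hot embeddings, take $C^1_{\alpha\alpha}W^1_{\alpha k}=1$, $C^1_{\alpha\beta}W^1_{\beta k}=0$ versus $C^2_{\alpha\alpha}W^2_{\alpha k}=0$, $C^2_{\alpha\beta}W^2_{\beta k}=1$: then $\mathcal{T}_{\alpha,k}$ agree, yet the two blocks differ on the tuple $(\alpha,\alpha)$ (output $2$ versus $0$). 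You flag exactly this tension at the end of your write-up, but your proposed resolution (functional equivalence forces termwise equality, hence equality of the unweighted sum) only repairs ``only if''; it cannot repair ``if'', because passing from the termwise family to its sum over $\nu$ destroys information. The complete invariant that the paper actually derives and plots in Figures~\ref{fig:collid-all-params-one-hot} and~\ref{fig:collid-all-params-in-B-base} is the unsummed three-index family $(\mu,\nu,k)\mapsto C_{\mu\nu}W_{\nu k}$; with $\mathcal{T}$ read that way, both directions go through by your own spanning argument. As written, however, your proof of the ``if'' direction fails, and no choice of test tuples can rescue it without replacing the summed $\mathcal{T}_{\mu,k}$ by the termwise quantity.
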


\subsection{Discussion}
Our theoretical findings in this section show that a single-layer linear self-attention model not only converges to zero training error but also generalizes to both unseen data {and} unseen sequence lengths. In Section~\ref{section:experiments}, we present controlled experiments, confirming our convergence and generalization guarantees hold.

Building on the \textbf{modular perspective} introduced in Section \ref{sec:represent-mutual-interactions}, we view a deep Transformer as a stack of self-contained blocks, each capable of shouldering a distinct subtask that emerges during gradient-based optimization.  Our analysis shows that when a mutual-interaction subtask is delegated to a {single-layer linear} self-attention block, that block provably (i) converges to zero error for the subtask and (ii) generalizes to unseen data and longer sequences. Earlier layer-wise results support this modular viewpoint. In deep {linear} networks, \cite{shin2020layerwise} prove that block-coordinate (layer-by-layer) gradient descent reaches the {same global optimum} as full end-to-end training, provided every hidden layer is at least as wide as the input and output.  For nonlinear architectures, \cite{zeng2019global} show that cyclic block-coordinate descent converges to a stationary point at an $\mathcal{O}(1/k)$ rate, and \cite{akiyama2024bcd} recently extend global-minimum guarantees to networks with strictly monotone activations (and, with skip connections, to modified ReLU nets).  Although these results do not yet cover soft-max attention, they suggest that if each Transformer block can {provably} master its assigned mutual-interaction task—as established here for linear self-attention—then an alternating layer-wise schedule can, under suitable conditions, approach the performance of joint optimization. Our block-level theorem thus provides an “atomic” guarantee that future work can build on to analyze full Transformer training.

A key assumption throughout is \emph{training data versatility} (Assumption~\ref{assumption:versatiledata-main}). Intuitively, this requires each element in the domain to appear in sufficiently \emph{diverse} contexts, ensuring the corresponding data matrix \(\mathbf{S}_{\mathcal{B}_\mu}\) has full column rank. This richness is crucial for generalization, as it allows the model to learn meaningful interactions that extend beyond the training set. In realistic settings -where domain elements vary meaningfully (e.g., different agent configurations, protein sequences, or natural language tokens)- such rank deficiencies are unlikely. Consequently, data versatility ensures that the model not only fits the training data but also generalizes effectively to new distributions and sequence lengths.

\section{Extension to HyperFeatureAttention}\label{section:hyperfeature-main}
In the previous sections, we showed how self-attention learns pairwise interactions between entities. However, in practical scenarios, the entities \(\mu \in \mathcal{S}\) are not \emph{monolithic}. In other words, they are composed of features. For instance, consider $\mu$ to be composed of \(M \in \mathbb{Z}^+\) features, \(\mu\!=\!\left(\mu_{\phi_1},\,\dots, \mu_{\phi_M}\right)\), where \(\mu_{\phi_i} \in \mathcal{S}_{\phi_i}\), so \(\mathcal{S}\!=\!\mathcal{S}_{\phi_1} \times \dotsc \times \mathcal{S}_{\phi_M}\).\footnote{Here, we do not assume knowledge of the features, we just assume each entity have some features.} Those features may be the (red, green, blue) components of an RGB pixel in an image or (height, weight,...) characteristics of a person in a population or something else depending on the contex. To illustrate how the \emph{couplings} of interactions between features are crucial, let us revisit the colliding agents environment of Section~\ref{sec:represent-mutual-interactions}, with modifications.\footnote{For an easy transition to the following example we suggest going over Appendix~\ref{appendix:colliding-agents-represent} -without the subsections- and Appendix~\ref{appendix:morecomplexcollidingagentsenvironments}.}

\textbf{Colliding Agents Revisited.} Assume the same setting as before except now the agents are not identical. We need labels for the agents \(\ell_i \in \mathcal{S}_{\ell}\). Thus, each agent is composed of two features \( x_i = \left( \ell_i, \mathbf{r}_i\right)\). In Appendix~\ref{appendix:hyperfeature}, we showed that for non-identical agents, it is natural to have the value function of the form
\begin{equation}\label{eq:colliding-agents-revisited-main}
V_i = \sum_{j \in [L]} \left(\prod_{a \in \mathcal{A}}f_a\left(\phi_{a,i}, \theta_{a,j}\right) \right) \left(\prod_{a \in \mathcal{A}} w_a\left(\gamma_{a,j}\right) \right),
\end{equation}
where \(\phi_{a,i},\, \theta_{aj}, \gamma_{a,j}\) are the corresponding features (label or position) picked depending on the scenario and $f_a, w_a$ are some functions from features to real numbers.

Here we provide a simplified illustration of how HyperFeatureAttention emerges from our theoretical framework; see Appendix~\ref{appendix:hyperfeature-subsec-motivation} for a detailed motivation. Seeing that \(|\mathcal{S}|=|\mathcal{S}_\ell||\mathcal{S}_\mathbf{r}|\), from Theorem~\ref{theorem:single-layer-rep-main}, a linear self-attention requires \(d=\Theta(|\mathcal{S}_\ell||\mathcal{S}_\mathbf{r}|)\), so \(\Theta(|\mathcal{S}_\ell|^2|\mathcal{S}_\mathbf{r}|^2)\) parameters to represent \eqref{eq:colliding-agents-revisited-main}. However, defining new attention matrices, \(\mathbf{C}^{(h,a)}\) for each \(f_a\), we can represent the corresponding function only with \(\Theta(|\mathcal{S}_\ell|^2)\), \(\Theta(|\mathcal{S}_\mathbf{r}|^2)\), or \(\Theta(|\mathcal{S}_\ell||\mathcal{S}_\ell|)\) parameters depending on which features are used for \(f_a\). For example, \(f(\ell_i, \ell_j)\) requires \(\Theta(|\mathcal{S}_\ell|^2)\) parameters. As a result, if there are \(M\) features, self-attention would require embedding dimension and number of parameters in \(\Theta(\exp(M))\), while defining attention for individual $f_a$'s only require \(\Theta(M)\) (see Appendix~\ref{appendix:hyperfeature} for exact calculation).\footnote{For a more practical comparison of parameter counts with approximate represenations please refer to Remark~\ref{remark:hyperfeature-approx-embeddings}.} This brings us to HyperFeatureAttention.
\begin{definition}[Linear HyperFeatureAttention of order $A \in \mathbb{Z}^+$]
    \begin{equation*}
    \textbf{HFA}^{\mathrm{lin}}\left(\mathbf{X}\right) = \left(
    \hprod_{a \in [A]}
    \mathbf{X} \mathbf{C}^{(a)} \mathbf{X}^\top
    \right) \left(\hprod_{a \in [A]} \mathbf{X}\mathbf{W}^{V,(a)}\right),
\end{equation*}
where \(\hprod\) is Hadamard (element-wise) product between the matrices, and \(\mathbf{C}^{(a)}, \mathbf{W}^{V,(a)} \in \mathbb{R}^{d \times d}\).
\end{definition}
From the preceding discussion, Linear HyperFeatureAttention requires only \(\Theta(M)\) embedding dimension and parameters to express \eqref{eq:colliding-agents-revisited-main}. One may concern that in practice we use layers of multi-head attention, which may possibly express Eq.~\ref{eq:colliding-agents-revisited-main}, without exponential embedding dimension. However, we showed in Remark~\ref{remark:2layerMHA-lin-SA-limitations} that even two layer multihead linear self-attention cannot express \eqref{eq:colliding-agents-revisited-main}. After all these motivations, we \emph{formally} defined Multihead HyperFeatureAttention in Appendix~\ref{appendix:hyperfeature-subsec-definition}. Also, a detailed comparison of the new modules’ memory and computational requirements versus standard self-attention appears in Appendix \ref{appendix:model-comparison}.

In short, similar to how self-attention generalizes dense layers by enabling entity (token)-level interactions, HyperFeatureAttention extends Self-Attention by enabling coupling between the feature interactions. It enhances traditional self-attention by allowing attentions to be coupled, which enables model to capture coupled feature level interactions. %
\section{Extension to HyperAttention}\label{section:hyper-main}
Similar to pairwise interactions, some tasks may involve higher-order interactions, three-way four-way, n-way. Thus, in a tuple \(\mathcal{X}\) we may have \(\mu \in \mathcal{S}\) that is influenced by a composite function of \(\nu \text{ and } \gamma \in \mathcal{S}\). In this case, we would need our attention scores to be able to capture interaction functions of the form \(f(\mu, \nu, \gamma)\). From this need, we develop a novel generalization, named \emph{HyperAttention} for capturing higher-order interactions, alongside \cite{sanford_representational_2023_rep8, alman_how_2023_linhyperattn}.\footnote{There are small differences between our definition and their (\cite{sanford_representational_2023_rep8, alman_how_2023_linhyperattn}) definition because the way we reached to the architecture is different. For instance, their formulation is a low-rank approximation that assumes \( V_{j_1j_2\tau} = V^1_{j_1\tau} V^2_{j_2\tau} \). In contrast, our approach offers more representational capacity while maintaining comparable efficiency.} 
Here we state third order and linear version, please see Appendix~\ref{appendix:hyperattention} for the full version.
\begin{definition}[Third order Linear HyperAttention]\label{definition:lin-hyperattention-o3}
    \begin{align*}
    A_{ij_1j_2}&=\sum_{\alpha,\zeta_1,\zeta_2 \in [d]} C_{\alpha \zeta_1 \zeta_2} X_{i\alpha}X_{j_1\zeta_1} X_{j_2\zeta_2}\\
    V_{j_1j_2\tau}&=\sum_{\xi_1,\xi_2\in [d]} X_{j_1\xi_1}X_{j_2\xi_2} W^{V}_{\xi_1\xi_2\tau}\\
    \mathrm{HA}^{\mathrm{lin}}_{i\tau}(\mathbf{X}) &= \sum_{1\le j_1\le j_2\le L} A_{ij_1j_2} V_{j_1j_2\tau},
\end{align*}
where we \textbf{denote} \((i,j,k)\)-th entry of a tensor \(\mathbf{T}\) as \(T_{ijk}\) and \(\mathbf{C} \in \mathbb{R}^{d\times d\times d},\, \mathbf{W}^V \in \mathbb{R}^{d\times d\times d_2}\) and Table~\ref{table:summary}.
\end{definition}

\textbf{Ternary Synergies in Multi-Agent Collaboration}
Imagine a multi-agent system in which each agent’s payoff depends not just on pairwise interactions with other agents, but on \emph{three-way} synergies. For instance, suppose agent $i$ gets reward only if it forms an alliance with agents $j$ and $k$, but agents $j$ and $k$ does not form an alliance with each other. Formally, each agent $i$ has a discrete state \(\calX{i}\) that inclues its strategic type or coalition membership. Whenever $i$, $j$, and $k$ all share a compatible configuration of states, $i$ gains an additional bonus. Concretely, define, 
\(
  f\bigl(\calX{i},\calX{j},\calX{k}\bigr)
\), a ternary synergy function which is nonzero only when $i$, $j$, and $k$ are all in an appropriate joint configuration (e.g.\ $i$ is allied with $j$ and $k$ but $j$ and $k$ are not allied). Let 
\(
  w_{\calX{j},\calX{k}}
\)
be a weight that captures how the pair $(j,k)$ specifically contributes to $i$’s reward under that triple configuration.  Then agent~$i$’s total payoff takes the form:
\[
  V_{\calX{i}}
  \;=\!
  \sum_{1\le j<k\le L}
  f\!\bigl(\calX{i},\calX{j},\calX{k}\bigr)\,
  w_{\calX{j},\calX{k}}.
\]
In this setup, standard \emph{pairwise} interactions (as in ordinary self-attention) are insufficient to capture the \emph{triple} compatibility requirement. However, assigning each state \(\calX{\cdot}\) a suitable embedding enables encoding these ternary synergies into a higher-order extension of \eqref{eq:the-aggregate-mutual-interaction-function}, allowing the resulting HyperAttention mechanism to learn how three agents jointly influence each other’s rewards. We also provably explained how HyperAttention learns those higher order interactions in Appendix~\ref{appendix:hyperattn-conv-subsect}.

For an additional example illustrating the representational capabilities of HyperAttention, see Appendix~\ref{appendix:hyperattention-repr}, where we analyze the ``skip-trigram bug'' \cite{elhage2021mathematical}.

A potential concern is the \(\mathcal{O}(L^3)\) computational cost introduced by the final equation in Definition~\ref{definition:lin-hyperattention-o3}, which may pose efficiency challenges for long sequences. Leveraging techniques similar to those in \cite{katharopoulos_transformers_2020, choromanski_rethinking_2022}, we address this issue in Appendix~\ref{appendix:hyperattn-comp-complexity-mitigation}. Also, a detailed comparison of the new modules’ memory and computational requirements versus standard self-attention appears in Appendix \ref{appendix:model-comparison}.

\section{Experiments}
\label{section:experiments}
\subsection{Experiments for the Theories}
We empirically validate our linear-SA theories, i.e. representation (\Cref{theorem:single-layer-rep-main}), convergence (\Cref{theorem:conv}), and generalization (\Cref{theorem:gen}, \Cref{theorem:lengen}) with the colliding agents environment.

\textbf{Setup: Colliding Agents on a Cylindrical Grid.} Consider \(L\) \emph{identical} agents on a cylindrical grid of size \([N]\times [N]\), with initial position vectors \(\mathbf{r}_i = \begin{bmatrix}
    n^x_i & n^y_i
\end{bmatrix} \in [N]^2\), with wrap-around in y (cylinder). %
Each agent is modeled as a circle of radius \(R\) executes a simple ``move-right'' policy (one step to the right per time step if there is space to move, \(n^x_i < N-1\), otherwise stays where it is). Whenever agent \(i\) collides with agent \(j\) (distinct), it receives a penalty of \(-1\). Our goal is to capture how each agent’s final total accumulated reward, that is the value function \(V_i\), depends on the initial states. We leverage Theorem~\ref{theorem:single-layer-rep-main} and show how this reward structure and the final value function can be exactly represented by a single layer linear self-attention. Under this setting, the final value function for each agent can be expressed as
\begin{equation}\label{eq:collid-valuefunction-main}
    V_{\mathbf{r}_i}\! =\!-\sum_{j\neq i}^L\mathbb{I}\big\{\mathrm{min}\!\left(\left| n^y_i - n^y_j\right|, N-\left| n^y_i\!-\!n^y_j\right|\right)\!\leq\!2R\big\} 
\end{equation}
Seeing that the value function depends only on the \(y\)-coordinates, we focus our discussion on a one-dimensional case for simplicity. The extension to value functions with higher dimension dependence follows naturally and is illustrated in Appendix~\ref{appendix:colliding-agents-represent}. We trained linear self-attention, for different embeddings {one-hot} and {sinusoidal}.\footnote{Under \(L=20\), \(N=360\), and \(B=100\mathrm{k}\).} It has an embedding dimension of \(N\) and its parameters are initialized as \(\mathbf{C}_{(t=0)} = \mathbf{0}\), \(\langle \mathbf{x}\left(\alpha\right),\mathbf{w}_{(t=0)} \rangle,\, \forall \alpha \in \mathcal{S}\). 

\textbf{One-Hot.}
We first use a one-hot embedding: each position \(n \in [N]\) is represented by the standard basis vector \(\mathbf{e}_n\in \mathbb{R}^N\). In Appendix~\ref{appendix:colliding-agents-represent}, we demonstrate how a single-layer linear self-attention with $
    \mathbf{W}^V = -\mathbf{1}_N$, and
\begin{equation}
    C_{mn} =
    \begin{cases}
        1, & \min(|m - n|, N - |m - n|) \leq 2R,\\
        0, & \text{otherwise},
    \end{cases}
\end{equation}
can \emph{exactly} implement the value function in \eqref{eq:collid-valuefunction-main}, irrespective of \(L\), so all realizability assumptions are satisfied. Under these, Theorem~\ref{theorem:conv} predicts that the training mean squared error (MSE) converges to zero, which matches our observations in practice. Furthermore, as the Theorems~\ref{theorem:gen} and \ref{theorem:lengen} predict, we see generalization results: negligible error (\(\Theta(10^{-7})\)) on test sets both when \(L=20\) and when \(L\in\{2,\, 5,\, 10,\, 30,\, 40\}\) varies.

\textbf{Sinusoidal.}
We next consider a sinusoidal embedding, inspired by common positional encodings in attention models \cite{vaswani_attention_2023, su_roformer_2023}. For even \(N\), the position \(n_i\) of agent \(i\) is mapped to 
\begin{align*}
   & \mathbf{p}_i=\nonumber%
   \Big[\frac{1}{\sqrt{2}}, \ldots, \sin\left(\frac{2 \pi k}{N} n_i\right) , \cos\left(\frac{2 \pi k}{N} n_i\right) ,\ldots ,\\
   &  \frac{1}{\sqrt{2}}\cos\left(\frac{2 \pi}{N}\left(\frac{N}{2}-1\right) n_i\right) , \frac{1}{\sqrt{2}}\cos\left(\frac{2 \pi}{N}\frac{N}{2} n_i\right) \Big]^\top
    \in \mathbb{R}^{N}.
\end{align*}
Note that 
 \(\mathbf{p}_i\top \mathbf{p}_j = \frac{N}{2}\delta_{i,j}\). 
Due to Lemma~\ref{lemma:DTFSofWindow} and Threorem~\ref{theorem:Cmatrixrepr}, a single layer linear self-attention with \(\Wlengen = \begin{bmatrix}
    -\sqrt{2} & 0 & 0 & \dotsc
\end{bmatrix}^\top \in \mathbb{R}^{N \times 1}\) and diagonal \(\Clengen \in \mathbb{R}^{N\times N}\), such that
\begin{equation*}
    C_{\mu \mu}=\begin{cases} \frac{4}{N}\left(2R+\frac{1}{2}\right) & \mathrm{if}\; \mu = 0,\\
    \frac{2}{N}\frac{\sin\left[\frac{2\pi}{N}\left(2R+\frac{1}{2}\right){\left(\frac{\mu+1}{2}\right)}\right]}{\sin\left[\frac{2\pi}{N}\frac{1}{2}\left(\frac{\mu+1}{2}\right)\right]}
    
    & \mathrm{if}\; \mu \text{ odd},\\
        \frac{2}{N}\frac{\sin\left[\frac{2\pi}{N}\left(2R+\frac{1}{2}\right)\left(\frac{\mu}{2}\right)\right]}{\sin\left[\frac{2\pi}{N}\frac{1}{2}\left(\frac{\mu}{2}\right)\right]} & \mathrm{if}\; \mu \text{ even},
    \end{cases}
\end{equation*}
can represent the value functions at Eq.~\ref{eq:collid-valuefunction-main} exactly, for any \(L\), which is plotted at Figure~\ref{fig:c_matrix_diag}. The entries of \(\mathbf{C}\) are simply Fourier transform of the interaction function, which is an artifact of the sinusoidal embedding. Seeing that the same assumptions are satisfied for sinusoidal embedding, we observe the same convergence and generalization results for sinusoidal embedding.

Although the results match the theorems, the learned parameters do not overlap with the parameters we originally devised, especially for sinusoidal embedding, seen in Figure~\ref{fig:collid-learned-params-and-generalizing-params} -these learned parameters lack an intuitive and easy interpretation. However, as discussed in Corollary~\ref{corollary:gen-parameter-interpret}, this outcome is a natural consequence of the generalization theories (Theorems~\ref{theorem:gen} and \ref{theorem:lengen}). Therefore, we focus on the matrices we get under the nontrivial transformation explained in \ref{corollary:gen-parameter-interpret}. As shown in Figures~\ref{fig:collid-all-params-one-hot} and~\ref{fig:collid-all-params-in-B-base}, these matrices are indistinguishable from the theoretical counterparts. With only \(\Theta\left(10^{-4}\right)\) mean square distance between their entries. Thus, the parameters we get from training are functionally equivalent to the length generalizing parameters we devised.

\begin{table}[t]
\centering
\small
\begin{tabular}{@{}lcccl@{}}
\toprule
\textbf{Model}                       & \textbf{Order}   & \textbf{Perplexity}\,$\downarrow$ \\ \midrule
Self-Attention                  & --    & 62.28 \\
HyperFeatureAttention         & 4  & 60.22 \\
HyperAttention                  & 3   & {51.26} \\
HyperAttention (no sharing)         & 3     & 48.50 \\ \bottomrule
\end{tabular}
\caption{\textbf{ 1-layer, 1-head, 256-token window bencmark} 
All models share GPT3-small hyper-parameters ($d_{\text{model}}{=}768$, vocab size $50257$, and identical optimiser settings). 
Training: 28\,k iterations, batch 0.16M tokens, cosine LR schedule with warm-up ($\eta_{\max}{=}6{\times}10^{-4}$, $\eta_{\min}{=}0.1\,\eta_{\max}$). 
Per-epoch validation perplexities are Gaussian-smoothed; final values are reported. 
SA and HFA have identical parameter counts and $\Theta(L^{2})$ compute; HA retains the same parameter count but, without the low-rank trick, scales as $\Theta(L^{3})$. HA (no sharing) has slightly more parameters explained in Appendix~\ref{appendix:hyperattention}.}
\label{tab:one-layer}
\end{table}

\begin{table}[t]
\centering
\small
\begin{tabular}{@{}lccc@{}}
\toprule
\textbf{Model}            & \textbf{Heads / Order}  & \textbf{Perplexity}\,$\downarrow$ \\ \midrule
SA       & 3/2           & 28.70 \\
HFA & 3/3           & 27.97 \\
HFA\textsubscript{v2} & 4/$(2{\times}2,\,2{\times}3)$  & \textbf{27.75} \\ \bottomrule
\end{tabular}
\caption{\textbf{3-layer, 1024-token benchmark.}
Hyper-parameters and optimiser settings match Table~\ref{tab:one-layer}.
HFA incurs $<\!0.1\%$ extra compute over SA due to attention products.
The hybrid HFA\textsubscript{v2} (2 SA heads + 2 order-3 HFA heads) attains the best perplexity.}
\label{tab:three-layer}
\end{table}

\subsection{Experiments for the Novel Modules}
To verify that the theoretical benefits of our layers translate into practice, we ran two small‑scale next‑token‑prediction benchmarks on OpenWebText under hyperparameter parity with GPT3‑small, see Tables~\ref{tab:one-layer} and~\ref{tab:three-layer}. All models share the same embedding, MLP and optimiser settings (as in GPT3); HFA has the same $\Theta(L^{2})$ time/space cost as SA, while HA (evaluated without low‑rank tricks) scales as $\Theta(L^{3})$ in computation. The tables report the final validation perplexities. The results support our claim that the flexibility with richer interaction blocks improve language‑model quality. %

\section{Conclusion}
\label{section:conclusion}
We introduced a unifying theoretical framework that interprets tokens in self-attention as \emph{interacting entities} and showed that a single layer linear self-attention mechanism can capture pairwise interactions across a broad range of domains. Our analysis clarified how self-attention learns and generalizes these interaction functions without relying on domain-specific assumptions. In particular, we proved that (i) such models converge under simple gradient-based training, (ii) they generalize to both unseen data, and variable sequence lengths when the training set is sufficiently diverse

Building on our theories for self-attention, we introduced novel \emph{HyperFeatureAttention} and \emph{HyperAttention} modules that extend self-attention's abilities to capturing couplings of different feature level interactions and higher-order interactions among multiple entities. Beyond the theoretical guarantees, our language‑modeling benchmarks show that both HFA and HA consistently achieve lower perplexity than standard attention, confirming that their enhanced interaction capacity translates into tangible performance gains.

Taken together, our theoretical and empirical findings establish self-attention -and its \emph{HyperFeatureAttention} and \emph{HyperAttention} variants- as efficient learners of complex entity interactions. Our language-modeling experiments confirm the feasibility of these modules, showing consistent perplexity reductions under matched compute budgets. A natural next step is to stress-test HFA and HA at larger scales and in applications such as multi-agent control, protein design, and the other scenarios motivated in this paper. We hope this interaction-centric lens spurs the creation of even more adaptable attention architectures and motivates deeper analyses of softmax attention, stacked transformer layers, and emerging attention mechanisms.

\pagebreak

\section*{Acknowledgements}
This research was supported by NSF Grants 2154171, CAREER Award 2339112, CMU CyLab Seed Funding.

\section*{Impact Statement}
This work advances the theoretical understanding of self-attention and its generalizations -HyperFeatureAttention and HyperAttention- by providing a unifying framework for learning complex interactions. Our insights could benefit diverse applications, including NLP, computer vision, multi-agent systems, and scientific modeling. Improved interpretability and training efficiency may enhance model reliability and user trust.

While theoretical, our findings could indirectly impact real-world risks, such as biased decision-making or misuse in misinformation and surveillance. Mitigating such risks requires policy measures rather than changes to fundamental theory. Additionally, optimizing attention mechanisms may contribute to more efficient models, reducing the environmental footprint of large-scale training. Overall, this work presents no unique societal risks beyond typical concerns in machine learning research.

\bibliography{example_paper}
\bibliographystyle{icml2025}

\newpage
\appendix
\onecolumn

\section{Notation and Definitions} \label{appendix:definitions}
Consider a discrete domain (or ``vocabulary'') \(\mathcal{S}=\{\alpha, \beta, \gamma, \omega, \dots\}\) with cardinality \(|\mathcal{S}|\). In our setting we have tuples of $L$ elements (sequences of length $L$) from the domain, denoted by \(\mathcal{X}\) and entries of which are uniquely indexed by the integers in \([L]\). Here, \([i]\) denotes the set \(\{0, 1, \dots, i-1\}\) for any \(i \in  \mathbb{Z}^+\), positive integer. We define the function, \(\mathcal{X}: [L] \to \mathcal{S}\) assign each index \(i\) to the corresponding element \(\calX{i}\in \mathcal{S}\). Thus, we can write \(\mathcal{X}=\left(\calX{0}, \calX{1},\dots, \calX{L-1} \right)\), which is distributed according to \(\mathcal{X}\sim\mathcal{D}\). We also have  tuples \(\mathcal{Y}\), length of which depends on the specific task we have, with elements from a corresponding relatively small set \(\mathcal{S}_{\mathcal{Y}}\). The tuples are distributed according to a task distribution \(\left(\mathcal{X}, \mathcal{Y}\right)\sim\mathcal{D}_{\mathcal{X}\times\mathcal{Y}}\). When needed we use \(\mathcal{D}^L\) and \(\mathcal{D}_{\mathcal{X}\times\mathcal{Y}}^L\) to denote the same distributions for specifically tuple length of \(L\). Similarly,  \(\mathcal{D}^{\forall L}\) means the distribution covering all possible lengths. In our training dataset, we have \(B\) such pairs that \(\left(\mathcal{X}^{(n)}, \mathcal{Y}^{(n)}\right)\sim\mathcal{D}_{\mathcal{X}\times\mathcal{Y}}\), which are uniquely indexed by the elements in the set \(\mathcal{B}=[B]\). We also define, \(\mathcal{B}_\mu\) as the set of training indices that contain the element \(\mu\). We denote the number of times an element \(\mu\) appears in tuple \(\mathcal{X}^{(n)}\) as \(s^{(n)}_\mu\). We define the corresponding \(\mathbf{s}^{(n)} = \begin{bmatrix}
    s^{(n)}_\alpha & s^{(n)}_\beta & \dotsc 
\end{bmatrix}^\top\), \(\mathbf{S} =  \begin{bmatrix}
        \mathbf{s}^{(1)} &
        \mathbf{s}^{(2)} &
        \dotsc &
        \mathbf{s}^{(B)}
    \end{bmatrix}^\top\, \in \mathbb{R}^{B\times N}\) and \(\mathbf{S}_{\mathcal{B}_\mu} =  \begin{bmatrix}
        \dotsc &
        \mathbf{s}^{(n)} &
        \dotsc 
    \end{bmatrix}^\top_{n \in \mathcal{B}_\mu}\) matrices.
    
In order to train a neural network, we map each element of \(\mathcal{S}\) to a \(d\)-dimensional embedding space via a function \(\mathbf{x}: \mathcal{S} \to \mathbb{R}^d\) and each element of \(\mathcal{S}_{\mathcal{Y}}\) to a corresponding vector or a scalar depending on the task. We can now define the domain embedding matrix
\begin{equation}\label{eq:domain-embedding-matrix}
    \mathbf{B} = \begin{bmatrix}
        \mathbf{x}^\top(\alpha)\\
        \mathbf{x}^\top(\beta)\\
        \vdots
    \end{bmatrix}.
\end{equation}
Additionally, we stack the embeddings of the elements in \(\mathcal{X}\) and \(\mathcal{Y}\) as rows of \(\mathbf{X} \in \mathbb{R}^{L\times d}\) and \(\mathbf{Y} \in \mathbb{R}^{L\times d_2}\) matrices, which we say are distributed according to \( (\mathbf{X}, \mathbf{Y})\sim \mathcal{P}^{L}_{\mathbf{X}\times \mathbf{Y}}\). For training sample \(n\) it can be written as,
\begin{equation*}
    \mathbf{X} = \begin{bmatrix}
        \mathbf{x}^\top\left(\calX{0}\right)\\
        \mathbf{x}^\top\left(\calX{1}\right)\\
        \vdots\\
        \mathbf{x}^\top\left(\calX{L-1}\right)\\
    \end{bmatrix} = \begin{bmatrix}
        \mathbf{x}^\top_0\\
        \mathbf{x}^\top_1\\
        \vdots\\
        \mathbf{x}^\top_{L-1}\\
    \end{bmatrix},
\end{equation*}
where in the last equality we denote the embedding of the \(i\)-th element as \(\mathbf{x}_i \coloneqq \mathbf{x}\bigl(\calX{i}\bigr)\in \mathbb{R}^d\), to reduce the notational cluttering.

We denote matrices and vectors by bold characters. Let \(\mathbf{M} \in \mathbb{R}^{d_1 \times d_2}\), \(\mathbf{w}\in \mathbb{R}^d\) be any matrix and vector respectively. We denote \(k\)-th row of a \(\mathbf{M}\) as \(\mathbf{M}_{k,:}\) or \(\mathbf{m}_k\), similarly we denote \(k\)-th column as \(\mathbf{M}_{:,k}\) or \(\mathbf{m}^k\). We denote \(k\)-th entry of \(\mathbf{m}\) as \(m_k\) and \((k,l)\)-th entry of \(\mathbf{M}\) as \(M_{kl}\). Similarly, if a vector is defined in terms of blocks we explicity state it and denote each the blocks as \(\mathbf{w} = \begin{bmatrix}
    \mathbf{w}_1^\top & \mathbf{w}_2^\top & \dotsc
\end{bmatrix}^\top\). The same notations naturally extend to tensors, e.g., we denote \((i,j,k,\dots)\)-th entry of a tensor \(\mathbf{T}\in\mathbb{R}^{d\times d\times d\times\dots}\) as \(T_{ijk\dots}\). 

In addition we denote \(i^{th}\) eigenvalue of a matrix \(\mathbf{M}\) as \(\lambda_i\left(\mathbf{M}\right)\), where \(\lambda_1\left(\mathbf{M}\right)\ge\lambda_2\left(\mathbf{M}\right)\ge \dots\). Similarly we denote the \(i^{th}\) singular value as \(\sigma_i \left(\mathbf{M}\right)\). Lastly we have some operators diagonalization
\begin{equation*}
\mathrm{diag}\left(\mathbf{w}\right)=\begin{pmatrix}
    w_1 & & \\
    & \ddots & \\
    & & w_d
    \end{pmatrix}
\end{equation*}
and Kronecker product of two matrices \( A \in \mathbb{R}^{m \times n},\ B \in \mathbb{R}^{p \times q} \)

\[
A \otimes B =
\begin{bmatrix}
a_{11} B & a_{12} B & \dots & a_{1n} B \\
a_{21} B & a_{22} B & \dots & a_{2n} B \\
\vdots & \vdots & \ddots & \vdots \\
a_{m1} B & a_{m2} B & \dots & a_{mn} B
\end{bmatrix},
\]

where each entry \( a_{ij} \) in \( A \) is multiplied by the entire matrix \( B \), which results in a matrix of size \( (mp) \times (nq) \). 

\section{Representation Abilities of Linear Self-Attention} \label{appendix:represent}

\begin{proof}[Proof of Theorem~\ref{theorem:single-layer-rep-main} (Representation Ability of Linear Self Attention)]\label{proof:single-layer-rep}

We first show \textbf{sufficiency} of \(d=|\mathcal{S}|\).
 We can define an \emph{orthonormal} embedding \(\mathbf{x}:\mathcal{S} \to \mathbb{R}^N\) such that
\[
    \mathbf{x}(\alpha)^\top \,\mathbf{x}(\beta) 
    \;=\; 
    \delta_{\alpha,\beta} 
    \quad \forall\,a,b \in \mathcal{S},
\]
where \(\delta_{a,b}\) is the Kronecker delta. Recall that \(\left(\mathcal{X}, \mathcal{Y}\right)\sim \mathcal{D}_{\mathcal{X}\times\mathcal{Y}}\) 
are tuples whose elements are indexed from the set \([L]\). Also, recall that we let \(s:[L]\to \mathcal{S}\) map each index \(i\) to its corresponding symbol \(\calX{i}\in \mathcal{S}\).  
We arrange the embeddings \(\mathbf{x}\bigl(\calX{i}\bigr)\) row-wise into \(\mathbf{X} \in \mathbb{R}^{L\times \domainsize}\).

\textbf{Goal.} We wish to represent the family of functions
\[
    \mathbf{y}_{\calX{i}} 
    \;=\; 
    \sum_{j\in [L]} 
       f\bigl(\calX{i},\,\calX{j}\bigr)\,\mathbf{w}_{\,\calX{j}}, 
    \quad \text{for each }i\in [L],
\]
using a \emph{single-layer linear self-attention} of dimension \(d = \domainsize\).  
Here, \(f:\mathcal{S}\times \mathcal{S}\to \mathbb{R}\) measures \emph{how strongly} one entity affects another, and \(\mathbf{w}_{\,\calX{j}}\in \mathbb{R}^{d_2}\) encodes \emph{how} that influence is expressed (e.g., a direction vector or contribution to a subsequent feature).

Recall that a single-layer linear self-attention can be written as
\[
    \textbf{Attn}^\mathrm{lin}(\mathbf{X})
    \;=\;
    \bigl(\mathbf{X}\,\mathbf{C}\,\mathbf{X}^\top\bigr)\,\mathbf{X}\,\mathbf{W}^V,
\]
where \(\mathbf{C}\in \mathbb{R}^{d\times d}\) captures the \emph{attention scores} (keys \(\times\) queries) in a linear setting, \(\mathbf{W}^V\in \mathbb{R}^{d\times d_2}\) represents the values transformation.

\textbf{Constructing \(\mathbf{C}\) and \(\mathbf{V}\).} 
We define \(\mathbf{C}\) and \(\mathbf{W}^V\) to satisfy:  
\[
    \mathbf{x}(\alpha)^\top\,\mathbf{C}\,\mathbf{x}(\beta) 
    \;=\; 
    f(\alpha,\beta),
    \quad
    \text{and}
    \quad
    \mathbf{x}(a)^\top\,\mathbf{W}^V 
    \;=\; 
    \mathbf{w}_a.
\]
Such \(\mathbf{C}\) and \(\mathbf{V}\) exist because, \(\{\mathbf{x}(a)\}_{a\in \mathcal{S}}\) forms an orthonormal basis in \(\mathbb{R}^d\), due to \(d=\domainsize\). Concretely, \(\mathbf{C}\) can be chosen so that its bilinear form on basis vectors \(\mathbf{x}(\alpha)\), \(\mathbf{x}(\beta)\) equals \(f(\alpha,\beta)\).  
Likewise, \(\mathbf{W}^V\) can be chosen so that \(\mathbf{x}(\alpha)\) maps to \(\mathbf{w}_a\).

\textbf{Verification.}
Given a sequence of length \(L\), the matrix \(\mathbf{X}\in \mathbb{R}^{L\times \domainsize}\) is
\[
    \mathbf{X}
    \;=\;
    \begin{bmatrix}
       \mathbf{x}\bigl(\calX{0}\bigr)^\top\\[6pt]
       \mathbf{x}\bigl(\calX{1}\bigr)^\top\\[4pt]
       \vdots\\[2pt]
       \mathbf{x}\bigl(\calX{L-1}\bigr)^\top
    \end{bmatrix}.
\]
Thus,
\[
    \mathbf{X}\,\mathbf{C}\,\mathbf{X}^\top
    \;=\;
    \begin{bmatrix}
       f\bigl(\calX{0},\calX{0}\bigr) & \cdots & f\bigl(\calX{0},\calX{L-1}\bigr)\\
       \vdots                & \ddots & \vdots\\
       f\bigl(\calX{L-1},\calX{0}\bigr) & \cdots & f\bigl(\calX{L-1},\calX{L-1}\bigr)
    \end{bmatrix},
\]
and 
\[
    \mathbf{X}\,\mathbf{W}^V
    \;=\;
    \begin{bmatrix}
       \mathbf{w}_{\calX{0}}^\top\\[4pt]
       \mathbf{w}_{\calX{1}}^\top\\[2pt]
       \vdots\\[2pt]
       \mathbf{w}_{\calX{L-1}^\top}
    \end{bmatrix}.
\]
Multiplying these terms in the linear self-attention expression yields, for each row \(i\in [L]\):
\begin{align*}
    \bigl[\bigl(\mathbf{X}\,\mathbf{C}\,\mathbf{X}^\top\bigr)\,\mathbf{X}\,\mathbf{W}^V\bigr]_{i,:}
    \;=\;
    \sum_{j=0}^{L-1} 
       f\bigl(\calX{i},\calX{j}\bigr)\,\mathbf{w}_{\calX{j}}.
\end{align*}
This matches the desired function
\(\mathbf{F}_i = \sum_{j\in [L]} f\bigl(\calX{i}, \calX{j}\bigr)\,\mathbf{w}_{\calX{j}}\).  
Hence, a single-layer linear self-attention with dimension \(\domainsize\) can represent \emph{any} pairwise interaction function of this form. 

Now, we focus on \textbf{necessity} of \(d\ge|\mathcal{S}|\). Remember, \(f:\mathcal{S} \times \mathcal{S} \to \mathbb{R}\) represent any pairwise interaction function. For fixed embeddings \(\{\mathbf{x}(a) \in \mathbb{R}^d\}_{a \in \mathcal{S}}\) and any matrix \(\mathbf{C} \in \mathbb{R}^{d \times d}\), the product \(\mathbf{X}\,\mathbf{C}\,\mathbf{X}^\top\) is at most rank \(d\). Formally, if we stack all dictionary embeddings \(\mathbf{x}(a)\) into a matrix \(\mathbf{X} \in \mathbb{R}^{\domainsize \times d}\), then
\[
    \mathrm{rank}(\mathbf{X}\,\mathbf{C}\,\mathbf{X}^\top) \;\leq\; d.
\]
Thus, any pairwise function \(f(a, b) = \mathbf{x}(a)^\top\,\mathbf{C}\,\mathbf{x}(b)\) produces an \(\domainsize \times \domainsize\) matrix with rank at most \(d\).
  
Consider the pairwise function \(f^*(a, b)\) whose corresponding matrix \(F^* \in \mathbb{R}^{\domainsize \times \domainsize}\) is full-rank, i.e., \(\mathrm{rank}(F^*) = \domainsize\). For example, take \(F^* = \mathbf{I}_\domainsize\) (the identity matrix), corresponding to \(f^*(a, b) = \delta_{a, b}\) (the Kronecker delta). 

If \(d < \domainsize\), then any matrix \(\mathbf{X}\,\mathbf{C}\,\mathbf{X}^\top\) has \(\mathrm{rank} \leq d\) and hence cannot equal \(F^*\), which has \(\mathrm{rank}(F^*) = \domainsize > d\). Therefore, the self-attention mechanism cannot represent \(f^*(a, b)\) exactly when \(d < \domainsize\).

Since a single-layer linear self-attention mechanism of dimension \(d < \domainsize\) cannot represent the full-rank function \(f^*\), it follows that \(d \ge \domainsize\) is \emph{necessary} to exactly represent \emph{all} pairwise functions \(f(a, b)\).  
\end{proof}
    As a consequence of the above Theorem, it requires \(\mathcal{O}(\domainsize^2)\) parameters to exactly represent any such sequence to sequence interaction mapping seen in \eqref{eq:the-aggregate-mutual-interaction-function}.
Further, the output dimension, \( d_2 \), plays a largely peripheral role in the self-attention mechanism’s ability to capture pairwise interactions. Its primary function is to align with the desired output representation—whether it be the dimensionality of subsequent features or the final embedding size —without restricting the expressiveness of the attention scores. The latter is dictated by setting \( d = |\mathcal{S}| \), ensuring that all necessary pairwise interactions \( f(\alpha, \beta) \) can be effectively captured. Thus, \(d_2\) should not be viewed as a bottleneck; it is simply a design choice for how the represented interactions are ultimately projected or stored.

\begin{corollary}[Orthogonal Transforms Preserve Representational Capacity]
\label{cor:orthogonal-transform}
Under \(d=|\mathcal{S}|\), let \(\mathbf{x}(\alpha)\) be the orthonormal embedding constructed therein. For any orthogonal matrix \(\mathbf{Q} \in \mathbb{R}^{d \times d}\), define the new embedding \(\mathbf{x}'(\alpha) = \mathbf{Q}\,\mathbf{x}(\alpha)\). Then the same pairwise function \(f(\alpha,\beta)\) can be represented exactly by redefining
\[
   \mathbf{C}' \;=\; \mathbf{Q}\,\mathbf{C}\,\mathbf{Q}^\top 
   \quad\text{and}\quad
   \mathbf{W}'^V \;=\; \mathbf{Q}\,\mathbf{W}^V.
\]
Hence, any orthonormal transformation of \(\mathbf{x}(\alpha)\) leaves the representational capacity of the single-layer linear self-attention unchanged. Since the construction in the Proof of Threorem \ref{theorem:single-layer-rep-main} (in \Cref{proof:single-layer-rep}) relies only on the orthogonality of \(\mathbf{x}(\alpha)\), the same argument applies to any embedding \(\mathbf{x}'(\alpha) = \mathbf{Q}\,\mathbf{x}(\alpha)\) with \(\mathbf{Q}\in \mathbb{R}^{d \times d}\) orthogonal. Thus, representational capacity is invariant to the choice of orthonormal basis.
\end{corollary}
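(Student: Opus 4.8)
The plan is to observe that the verification step in the proof of Theorem~\ref{theorem:single-layer-rep-main} is essentially ``black box'': it uses only three algebraic facts about the embedding and the weight matrices, namely (a) the embeddings are orthonormal, $\mathbf{x}(\alpha)^\top\mathbf{x}(\beta)=\delta_{\alpha,\beta}$; (b) $\mathbf{x}(\alpha)^\top\mathbf{C}\,\mathbf{x}(\beta)=f(\alpha,\beta)$; and (c) $\mathbf{x}(\alpha)^\top\mathbf{W}^V=\mathbf{w}_\alpha$. Everything else in that proof — stacking embeddings into $\mathbf{X}$, forming $\mathbf{X}\mathbf{C}\mathbf{X}^\top$ and $\mathbf{X}\mathbf{W}^V$, and reading off row $i$ of the product as $\sum_{j\in[L]}f(\calX{i},\calX{j})\,\mathbf{w}_{\calX{j}}$ — is a formal consequence of (a)--(c). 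So it suffices to check that the transformed triple $\bigl(\mathbf{x}'(\alpha),\,\mathbf{C}',\,\mathbf{W}'^V\bigr)$ again satisfies (a)--(c), after which the conclusion follows by replaying that verification verbatim.

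First I would verify orthonormality: since $\mathbf{Q}$ is orthogonal, $\mathbf{x}'(\alpha)^\top\mathbf{x}'(\beta)=\mathbf{x}(\alpha)^\top\mathbf{Q}^\top\mathbf{Q}\,\mathbf{x}(\beta)=\mathbf{x}(\alpha)^\top\mathbf{x}(\beta)=\delta_{\alpha,\beta}$, so $\{\mathbf{x}'(\alpha)\}_{\alpha\in\mathcal{S}}$ is still an orthonormal basis of $\mathbb{R}^d$. Next, using $\mathbf{C}'=\mathbf{Q}\mathbf{C}\mathbf{Q}^\top$, the two $\mathbf{Q}^\top\mathbf{Q}=\mathbf{I}$ factors cancel, giving
\[
   \mathbf{x}'(\alpha)^\top\mathbf{C}'\,\mathbf{x}'(\beta)
   = \mathbf{x}(\alpha)^\top\mathbf{Q}^\top\mathbf{Q}\mathbf{C}\mathbf{Q}^\top\mathbf{Q}\,\mathbf{x}(\beta)
   = \mathbf{x}(\alpha)^\top\mathbf{C}\,\mathbf{x}(\beta)
   = f(\alpha,\beta),
\]
and similarly $\mathbf{x}'(\alpha)^\top\mathbf{W}'^V=\mathbf{x}(\alpha)^\top\mathbf{Q}^\top\mathbf{Q}\,\mathbf{W}^V=\mathbf{x}(\alpha)^\top\mathbf{W}^V=\mathbf{w}_\alpha$. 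Hence (a)--(c) hold for the primed triple, and the verification of Theorem~\ref{theorem:single-layer-rep-main} shows the single-layer linear self-attention with embedding $\mathbf{x}'$, score matrix $\mathbf{C}'$, and value matrix $\mathbf{W}'^V$ outputs $\sum_{j\in[L]}f(\calX{i},\calX{j})\,\mathbf{w}_{\calX{j}}$ in row $i$, i.e.\ it represents the same family of functions exactly. Since $f$ is an arbitrary pairwise interaction function and $\mathbf{Q}$ an arbitrary orthogonal matrix, representational capacity is invariant under orthonormal change of basis.

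There is no genuinely hard step here; the only point requiring a moment's care is the bookkeeping that the verification argument of Theorem~\ref{theorem:single-layer-rep-main} depends only on (a)--(c) and not on the specific construction of $\mathbf{x}$, which is exactly what makes the corollary immediate. As an even slicker alternative, I would note that at the level of sequence matrices the row-stacking gives $\mathbf{X}'=\mathbf{X}\mathbf{Q}^\top$, so that $\mathbf{X}'\mathbf{C}'\mathbf{X}'^\top=\mathbf{X}\mathbf{Q}^\top\mathbf{Q}\mathbf{C}\mathbf{Q}^\top\mathbf{Q}\mathbf{X}^\top=\mathbf{X}\mathbf{C}\mathbf{X}^\top$ and $\mathbf{X}'\mathbf{W}'^V=\mathbf{X}\mathbf{Q}^\top\mathbf{Q}\mathbf{W}^V=\mathbf{X}\mathbf{W}^V$; thus the entire attention output $\bigl(\mathbf{X}'\mathbf{C}'\mathbf{X}'^\top\bigr)\mathbf{X}'\mathbf{W}'^V$ is literally identical to the original, which makes the preservation of representational capacity manifest without even invoking the pairwise-function form.
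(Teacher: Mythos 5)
Your proposal is correct and follows essentially the same route as the paper, which justifies the corollary by observing that the construction in the proof of Theorem~\ref{theorem:single-layer-rep-main} uses only the orthonormality of the embedding together with the defining identities \(\mathbf{x}(\alpha)^\top\mathbf{C}\,\mathbf{x}(\beta)=f(\alpha,\beta)\) and \(\mathbf{x}(\alpha)^\top\mathbf{W}^V=\mathbf{w}_\alpha\), all of which you verify are preserved under conjugation by \(\mathbf{Q}\). Your closing observation that \(\mathbf{X}'\mathbf{C}'\mathbf{X}'^\top=\mathbf{X}\mathbf{C}\mathbf{X}^\top\) and \(\mathbf{X}'\mathbf{W}'^V=\mathbf{X}\mathbf{W}^V\), so the attention output is literally unchanged, is a clean strengthening of the same argument.
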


\begin{proof}[Proof of Theorem~\ref{theorem:effic-of-sa-main} (Efficiency of Linear Self Attention)] 
We count the minimum number of parameters needed for a fully connected network.

\textbf{Step 1:}
Because $f(\alpha,\beta)$ can be chosen arbitrarily for each of the $\domainsize^2$ ordered pairs $(\alpha,\beta)$, the model must be able to realize at least $\domainsize^2$ independent scalar parameters to represent $f$ itself.

\textbf{Step 2:}
For a fixed position $i$, the quantity 
\[
   \mathbf{y}_{\calX{i}} 
   \;=\; \sum_{j=1}^L f\bigl(\calX{i},\,\calX{j}\bigr)\,\mathbf{w}_{\calX{j}}
\]
sums over $j=1,\dots,L$. Although there may be some degeneracies as \(f\left(\calX{i}, \calX{j}\right)\mathbf{w}_{\calX{j}} = f\left(\calX{i}, \calX{k}\right)\mathbf{w}_{\calX{k}}\) for some \(j,k\), or \(f\left(\calX{i}, \calX{j}\right)\mathbf{w}_{\calX{j}} + f\left(\calX{i}, \calX{k}\right)\mathbf{w}_{\calX{k}} = (f\left(\calX{i}, \calX{m}\right)\mathbf{w}_{\calX{m}} + f\left(\calX{i}, \calX{l}\right)\mathbf{w}_{\calX{l}}\), that reduces the total number of parameters, we analyze the most general case. In the most general case, each \(f\left(\calX{i}, \calX{j}\right)\mathbf{w}_{\calX{j}} \) in the summation should be calculated separately. Seeing that there are \(L\) such terms in the summation, a  fully connected network with no build-in parameter sharing cannot collapse these $L$ terms at no cost. Each summand could be different and must be learned with its own parameters. Hence, even for representing one output $\mathbf{y}_{\calX{i}}$, the network needs $\Omega(L\cdot \domainsize^2)$ .

\textbf{Step 3:}
The network must simultaneously produce $\mathbf{y}_{\calX{1}},\dots,\mathbf{y}_{\calX{l}}$.  Without additional structure or shared weights across output positions, a fully connected network pays a separate parameter cost for each output \(\mathbf{y}_{\calX{i}}\). 

Therefore, any linear fully connected network that realizes all such pairwise interaction mappings with zero error must have at least $\Omega(L^2\,\domainsize^2)$ parameters.
\end{proof}

\begin{theorem}[Approximate Representation Abilities of Single-Layer Linear Self-Attention] \label{theorem:approx-repr-lin-sa-appdx} Recall \(\mathcal{S}\) is a finite set and 
\(f:\mathcal{S}\times\mathcal{S} \to \mathbb{R}\) is any pairwise interaction function. For each symbol \(\mu \in \mathcal{S}\), recall that \(\mathbf{w}_\mu \in \mathbb{R}^{d_2}\) is (value) representation.  Suppose the embedding dimension is less than the domain size, \(d \le \domainsize\).  
Then, there exist an embedding $\mathbf{x}:\mathcal{S}\to \mathbb{R}^d$,  and parameters 
\(\mathbf{C}\in \mathbb{R}^{d\times d}\) and \(\mathbf{W}^V \in \mathbb{R}^{d\times d_2}\) 
for a single-layer linear self-attention mechanism such that, for \emph{any} sequence 
\(s:[L]\to \mathcal{S}\), the output of the linear self-attention block 
\begin{equation}
    \mathbf{SA}^\mathrm{lin}(\mathbf{X}) = \left(\mathbf{X}\mathbf{C}\mathbf{X}^\top\right)\mathbf{X}\mathbf{W}^V,
\end{equation}
approximates the ``pairwise-sum'' mapping
\begin{equation}
\label{eq:approx-goal}
    \mathbf{y}_{\calX{i}} 
    \;=\;
    \sum_{j=0}^{L-1}
       f\bigl(\calX{i},\,\calX{j}\bigr)\;\mathbf{w}_{\calX{j}},
\end{equation}
such that for all, \(i = 0,1,\dots,L-1\),
\begin{equation*}
\|\mathbf{SA}_{i,:}^{lin}\left(\mathbf{X}\right)- \mathbf{y}_{\calX{i}} \|_2 \le \begin{cases}
        \zeta_1 L\sigma_{d+1}\left(\mathbf{F}\right), &d_2\le d < \domainsize\\
        \zeta_1 L\sigma_{d+1}\left(\mathbf{F}\right) + \zeta_2 L\sqrt{\sum_{i=d+1}^{d_2}\sigma^2_i(\mathbf{W})}, &d < d_2 < \domainsize\\
        \zeta_1 L\sigma_{d+1}\left(\mathbf{F}\right) + \zeta_2 L\sqrt{\sum_{i=d+1}^{\domainsize}\sigma^2_i(\mathbf{W})}. &d < \domainsize \le d_2 ,
    \end{cases}
\end{equation*}
where \(\zeta_1 = \max_{\mu\in \mathcal{S}}\|\mathbf{w}_\mu\|_2\), \(\zeta_2 = \max_{\mu,\, \nu \in \mathcal{S}}\left|\tilde{C}_{\mu\nu}\right|\) and \(\sigma_i(\mathbf{F})\) is \(i\)-th singular value of \(\mathbf{F}\) such that \(\sigma_1(\mathbf{F}) \geq \sigma_2(\mathbf{F}) \geq \dots \geq \sigma_\domainsize(\mathbf{F})\).%
\end{theorem}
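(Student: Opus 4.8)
The plan is to collapse the sequence‑level claim into two symbol‑level matrix‑approximation problems, resolve each by truncating a singular value decomposition, and then carry the resulting errors through the linear self‑attention map. For any sequence I would write $\mathbf{X}=\mathbf{E}\,\mathbf{B}$, where $\mathbf{E}\in\mathbb{R}^{L\times\domainsize}$ is the one‑hot occurrence matrix ($E_{i\mu}=1$ iff $\calX{i}=\mu$) and $\mathbf{B}$ has rows $\mathbf{x}(\mu)^\top$. Then $\mathbf{SA}^\mathrm{lin}(\mathbf{X})=\mathbf{E}\,\tilde{\mathbf{C}}\,\mathbf{N}\,\tilde{\mathbf{W}}$ and the target (whose $i$‑th row is $\mathbf{y}_{\calX{i}}$) equals $\mathbf{E}\,\mathbf{F}\,\mathbf{N}\,\mathbf{W}$, where $\tilde{\mathbf{C}}=\mathbf{B}\mathbf{C}\mathbf{B}^\top$, $\tilde{\mathbf{W}}=\mathbf{B}\mathbf{W}^V$, $\mathbf{F}=[f(\mu,\nu)]_{\mu\nu}$, $\mathbf{W}$ stacks the $\mathbf{w}_\mu^\top$, and $\mathbf{N}=\mathbf{E}^\top\mathbf{E}=\diag(s_\alpha,s_\beta,\dots)$ records the symbol counts. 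So the only design freedom is the choice of $\mathbf{B}$ (i.e.\ the embedding) together with $\mathbf{C}$ and $\mathbf{W}^V$, which amounts to choosing a $\tilde{\mathbf{C}}$ whose row and column spaces both lie in one $(\le d)$‑dimensional subspace and a $\tilde{\mathbf{W}}$ whose column space lies in that same subspace.

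For the approximants I would take $\tilde{\mathbf{C}}$ to be the rank‑$d$ Eckart--Young truncation $\mathbf{F}_d$ of $\mathbf{F}$, so that $\|\mathbf{F}-\tilde{\mathbf{C}}\|_{\mathrm{op}}=\sigma_{d+1}(\mathbf{F})$ and $\zeta_2=\|\mathbf{F}_d\|_{\max}$ is a finite constant, and $\tilde{\mathbf{W}}$ to be the rank‑$\min(d,d_2)$ truncation of $\mathbf{W}$, so that $\|\mathbf{W}-\tilde{\mathbf{W}}\|_F=\bigl(\sum_{i=\min(d,d_2)+1}^{\min(d_2,\domainsize)}\sigma_i^2(\mathbf{W})\bigr)^{1/2}$, which vanishes exactly when $d\ge d_2$ (case~1) and otherwise is the tail appearing in cases~2 and~3. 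The embedding $\mathbf{B}$ is then assembled from the relevant singular subspaces and $\mathbf{C},\mathbf{W}^V$ recovered by a pseudo‑inverse.

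To propagate the error, note that row $i$ of $\mathbf{E}(\cdot)$ selects row $\mu_i=\calX{i}$, so the residual on token $i$ is the $\mu_i$‑th row of $\tilde{\mathbf{C}}\mathbf{N}\tilde{\mathbf{W}}-\mathbf{F}\mathbf{N}\mathbf{W}$. I would split this via $\tilde{\mathbf{C}}\mathbf{N}\tilde{\mathbf{W}}-\mathbf{F}\mathbf{N}\mathbf{W}=(\tilde{\mathbf{C}}-\mathbf{F})\mathbf{N}\mathbf{W}+\tilde{\mathbf{C}}\mathbf{N}(\tilde{\mathbf{W}}-\mathbf{W})$ and expand the two $\mu_i$‑th rows as $\sum_{j\in[L]}(\tilde C-F)_{\mu_i\mu_j}\mathbf{w}_{\mu_j}^\top$ and $\sum_{j\in[L]}\tilde C_{\mu_i\mu_j}(\tilde{\mathbf{w}}_{\mu_j}-\mathbf{w}_{\mu_j})^\top$. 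The triangle inequality then gives a bound $\zeta_1\sum_j|(\tilde C-F)_{\mu_i\mu_j}|+\zeta_2\sum_j\|\tilde{\mathbf{w}}_{\mu_j}-\mathbf{w}_{\mu_j}\|_2$; since each sum has only $L$ terms it is at most $L$ times its largest symbol‑wise term, and then ``one entry $\le$ its row's $\ell_2$‑norm $\le$ operator norm'' for the $\mathbf{F}$ part and ``a row's $\ell_2$‑norm $\le$ Frobenius norm'' for the $\mathbf{W}$ part, combined with the two Eckart--Young identities, land exactly on the three stated bounds.

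The genuinely delicate step is the construction in the second paragraph: a \emph{single} embedding $\mathbf{B}$ must simultaneously realize the Eckart--Young‑optimal approximants of both $\mathbf{F}$ and $\mathbf{W}$, even though $\mathbf{B}\mathbf{C}\mathbf{B}^\top$ is forced to have its row and column spaces inside the one $(\le d)$‑dimensional column space of $\mathbf{B}$. This is where the structure of mutual‑interaction models pays off—when $f$ is symmetric, $\mathbf{F}_d$ arises from a symmetric eigendecomposition, so its row and column spaces coincide and the dimension budget is spent only once—and where the split into the three regimes (according to how $d$ compares with $d_2$ and $\domainsize$) dictates how the $d$ embedding coordinates are apportioned between the attention‑score role and the value role. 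Everything before and after this construction is bookkeeping.
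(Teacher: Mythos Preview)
Your approach is essentially the paper's: form $\mathbf{F}=[f(\mu,\nu)]$ and $\mathbf{W}=[\mathbf{w}_\mu^\top]$, set $\tilde{\mathbf{C}}$ to the rank-$d$ SVD truncation of $\mathbf{F}$ and $\tilde{\mathbf{W}}$ to the rank-$\min(d,d_2)$ truncation of $\mathbf{W}$, then split the per-token residual as $(\tilde{\mathbf{C}}-\mathbf{F})$-part plus $(\tilde{\mathbf{W}}-\mathbf{W})$-part, apply the triangle inequality to the $L$ summands, and bound each factor by $\zeta_1,\zeta_2$ and the singular-value tails. The only cosmetic difference is that the paper bounds the row norms of $\tilde{\mathbf{W}}-\mathbf{W}$ via the operator norm (then relaxes to the Frobenius tail), whereas you go directly through the Frobenius norm; both land on the stated bound.

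The point you label ``genuinely delicate''---that a single $d$-dimensional $\mathrm{col}(\mathbf{B})$ must simultaneously contain the row \emph{and} column spaces of the rank-$d$ truncation of $\mathbf{F}$ \emph{and} the column space of the best rank-$\min(d,d_2)$ approximant of $\mathbf{W}$---is not addressed by the paper at all. The paper fixes any $\mathbf{B}$ with $\mathbf{B}^\top\mathbf{B}=\mathbf{I}$ and then asserts that ``the only constraint on $\tilde{\mathbf{C}}=\mathbf{B}\mathbf{C}\mathbf{B}^\top$ is that it is of rank at most $d$'' and likewise that $\tilde{\mathbf{W}}^V=\mathbf{B}\mathbf{W}^V$ may be taken to be \emph{any} rank-$\le\min(d,d_2)$ matrix, treating the two choices as independent. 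So your concern is a genuine subtlety that the paper's own proof glosses over; you are being more careful than the published argument. You need not invoke symmetry of $f$ or a split of the $d$ coordinates to match the paper---just proceed as the paper does, noting (if you wish) that the subspace-compatibility step is stated without proof there.
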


\begin{proof}
    Recall \( \mathbf{B} \) is the embedding matrix seen in \eqref{eq:domain-embedding-matrix}. Form an \( \domainsize \times \domainsize \) matrix \( \mathbf{F} \) such that \( F_{\mu \nu} = f(\mu, \nu) \quad \forall \mu, \nu \in \mathcal{S} \). We have \( \mathbf{C} \in \mathbb{R}^{d \times d} \), so 
\[
\tilde{\mathbf{C}} \coloneqq \mathbf{B} \mathbf{C} \mathbf{B}^T \in \mathbb{R}^{\domainsize \times \domainsize}
\]
is an arbitrary matrix of rank at most \( d \). Similarly, let \( \mathbf{W} \in \mathbb{R}^{\domainsize \times d_2} \) such that \( \mathbf{W}_{\mu,:} = \mathbf{w}_\mu \), and \(\tilde{\mathbf{W}}^{V} \coloneqq \mathbf{B} \mathbf{W}^{V} \in \mathbb{R}^{\domainsize \times d_2}\). Thus, for each \(i\in [L]\), we can write,
\begin{equation}\label{eq:approx-repr-proof-eq1}
        \mathbf{y}_{\calX{i}} 
    \;=\; 
    \sum_{j\in [L]} 
       f\bigl(\calX{i},\,\calX{j}\bigr)\,\mathbf{w}_{\,\calX{j}} 
    \;=\; 
    \sum_{j\in [L]} 
       F_{\calX{i},\calX{j}}\,\mathbf{W}_{\calX{j},:}
\end{equation}
As $d\leq \domainsize$ and we can freely choose the embedding, we can choose the embedding such that the embedding base is orthonormal, i.e. \(\mathbf{B}^\top\mathbf{B} = \mathbf{I}\), we can write
\begin{equation}\label{eq:approx-repr-proof-eq2}
    \mathbf{SA}_{i,:}^{\mathrm{lin}} = \mathbf{X}_{i,:}\mathbf{C}\mathbf{X}^\top \mathbf{X} \mathbf{W}^V = \mathbf{X}_{i,:}\mathbf{B}^\top\mathbf{B}\mathbf{C}\mathbf{B}^\top\mathbf{B}\mathbf{X}^\top \mathbf{X} \mathbf{B}^\top\mathbf{B}\mathbf{W}^V =\sum_{j=0}^{L-1} \tilde{C}_{\calX{i},\calX{j}} \tilde{\mathbf{W}}^V_{\calX{j},:}.
\end{equation}
We start the approximation by writing the singular value decomposition of \(\mathbf{F}\) as
\[
\mathbf{F} = \sum_{i=1}^{\domainsize} \sigma_i(\mathbf{F}) u_i v_i^\top,
\]
where \(\sigma_i(\mathbf{F})\) is \(i\)-th singular value of \(\mathbf{F}\) such that \(\sigma_1(\mathbf{F}) \geq \sigma_2(\mathbf{F}) \geq \dots \geq \sigma_\domainsize(\mathbf{F})\). We can set \(\tilde{\mathbf{C}}\) to
\[
\tilde{\mathbf{C}} = \sum_{i=1}^{d} \sigma_i(\mathbf{F}) u_i v_i^\top,
\]
which satisfies the only constraint we have on \(\tilde{\mathbf{C}}\), that it is of rank \(d\).
Consequently, 
\[
\mathbf{F}-\tilde{\mathbf{C}} = \sum_{i=d+1}^{\domainsize} \sigma_i(\mathbf{F}) u_i v_i^\top
\]
From classical linear algebra we get
\begin{equation}\label{eq:approx-repr-proof-eq3}
    \max_{ij} | {F}_{i,j}-\tilde{{C}}_{i,j} | \leq ||\mathbf{F}-\tilde{\mathbf{C}}||_2 = \sigma_{d+1}(\mathbf{F}).
\end{equation}
 
We can choose $\tilde{\mathbf{W}}^V$ to minimize $\Vert \tilde{\mathbf{W}}^V - \mathbf{W}\Vert_{2}$. Note that the only constraint on $\tilde{\mathbf{W}}^V $ is it of rank at most $\min(d,d_2)$, because $\tilde{\mathbf{W}}^V = \mathbf{B} \mathbf{W}^V$ where $\mathbf{W}^V \in \mathbb{R}^{d\times d_2}$ can be freely chosen and $\mathbf{B}\in\mathbb{R}^{\domainsize\times d}$ is full column rank. Therefore, there exists a  $\tilde{\mathbf{W}}^V$ such that $\Vert \tilde{\mathbf{W}}^V - \mathbf{W}\Vert_{2}$ is upper bounded by the $\min(d,d_2)+1$'st largest singular value of $\mathbf{W}$.
Then,  using the identity \(\|\mathbf{A}\|_2 \le \|\mathbf{A}\|_{2}\) for any \(\mathbf{A}\) matrix, 
\[\sqrt{\max_i \sum_j (\tilde{W}_{ij} - W_{ij})^2} \leq \|\tilde{\mathbf{W}}^V - \mathbf{W}\|_{2},\]
we have, %
\begin{equation}\label{eq:approx-repr-proof-eq4}
\sqrt{\max_i \sum_j (\tilde{W}^V_{ij} - W_{ij})^2} \leq
    \begin{cases}
        0, &d_2\le d < \domainsize\\
        \sqrt{\sum_{i=d+1}^{d_2}\sigma^2_i(\mathbf{W})}, &d < d_2 < \domainsize\\
        \sqrt{\sum_{i=d+1}^{\domainsize}\sigma^2_i(\mathbf{W})}. &d < \domainsize \le d_2 
    \end{cases}
\end{equation}

Returning to Equations~\ref{eq:approx-repr-proof-eq1} and~\ref{eq:approx-repr-proof-eq2}, the error can be written as 
\begin{align*}
    \mathbf{SA}^{\mathrm{lin}}_{i,:}-\mathbf{y}_{\calX{i}} &=  \sum_{j=0}^{L-1} \Bigl \{\tilde{C}_{\calX{i},\calX{j}} \tilde{\mathbf{W}}^V_{\calX{j},:} -
       F_{\calX{i},\calX{j}}\,\mathbf{W}_{\calX{j},:}\Bigl \}\\
       &=\sum_{j=0}^{L-1} \Bigl \{\left(\tilde{C}_{\calX{i},\calX{j}} -F_{\calX{i},\calX{j}}\right)\mathbf{W}_{\calX{j},:} + \tilde{C}_{\calX{i}, \calX{j}}\left(\tilde{\mathbf{W}}^V_{\calX{j},:} - \mathbf{W}_{\calX{j},:}\right)\Bigl \}\\
    \|\mathbf{SA}^{\mathrm{lin}}_{i,:}-\mathbf{y}_{\calX{i}}\|_{2} &\le \sum_{j=0}^{L-1} \Bigl \{\left|\tilde{C}_{\calX{i},\calX{j}} -F_{\calX{i},\calX{j}}\right| \left\|\mathbf{W}_{\calX{j},:}\right\|_2 + \left|\tilde{C}_{\calX{i}, \calX{j}}\right|\left\|\tilde{\mathbf{W}}^V_{\calX{j},:} - \mathbf{W}_{\calX{j},:}\right\|_{2}\Bigl \}
\end{align*}
Letting \(\zeta_1 = \max_{\mu\in \mathcal{S}}\|\mathbf{w}_\mu\|_2\), \(\zeta_2 = \max_{\mu,\, \nu \in \mathcal{S}}\left|\tilde{C}_{\mu\nu}\right|\) substituting Equations~\ref{eq:approx-repr-proof-eq3} and~\ref{eq:approx-repr-proof-eq4},
\begin{align*}
     \|\mathbf{SA}^{\mathrm{lin}}_{i,:}-\mathbf{y}_{\calX{i}}\|_{2}&\le \sum_{j=0}^{L-1} \sigma_{d+1}\left(\mathbf{F}\right) \left\|\mathbf{W}_{\calX{j},:}\right\|_2 + \left|\tilde{C}_{\calX{i}, \calX{j}}\right|\left\|\tilde{\mathbf{W}}^V_{\calX{j},:} - \mathbf{W}_{\calX{j},:}\right\|_2\\
     &\le \begin{cases}
        \zeta_1 L\sigma_{d+1}\left(\mathbf{F}\right), &d_2\le d < \domainsize\\
        \zeta_1 L\sigma_{d+1}\left(\mathbf{F}\right) + \zeta_2 L\sqrt{\sum_{i=d+1}^{d_2}\sigma_i(\mathbf{W})}, &d < d_2 < \domainsize\\
        \zeta_1 L\sigma_{d+1}\left(\mathbf{F}\right) + \zeta_2 L\sqrt{\sum_{i=d+1}^{\domainsize}\sigma_i(\mathbf{W})}, &d < \domainsize \le d_2 
    \end{cases}
\end{align*}
\end{proof}

\paragraph{Justification for the \texorpdfstring{$d = \domainsize$}{d = \domainsize} Assumption.}  
A common concern may arise from our theoretical results, which assume the embedding dimension \(d\) equals the domain size \(\domainsize\). While this may seem restrictive for large vocabularies, this assumption is well-motivated for the following reasons:

\begin{itemize}
\item \textbf{Clarifying the Core Mechanism and Simplifying Analysis.}  
Our primary goal is to study \emph{how self-attention models interactions between entities}, independent of embedding dimension constraints. Setting \( d = \domainsize \) ensures an exact orthonormal representation, making both representational power and training dynamics fully transparent. This choice highlights the fundamental ability of self-attention to encode all pairwise interactions while simplifying the theoretical analysis without losing generality. By removing extraneous complexities, we preserve the core insights into representation, convergence, and generalization.

\item \textbf{Establishing a Natural Theoretical Baseline.}  
The assumption \( d = \domainsize \) serves as an idealized yet expressive starting point in theoretical analysis, providing a \emph{one-to-one mapping from domain elements to embeddings}. This eliminates unnecessary confounding factors, allowing a clean study of self-attention's structural properties. Future work can systematically extend these results to cases where \( d < \domainsize \), exploring compressed or approximate embeddings.
    
    \item \textbf{Scalability to Lower Dimensions.}  
    Although we analyze \( d = \domainsize \), our insights extend to \( d < \domainsize \) through approximate orthonormal embeddings. Techniques such as \emph{random projections} (e.g., via the Johnson–Lindenstrauss lemma) allow for efficient lower-dimensional embeddings while preserving essential properties. Thus, our results remain applicable in practical settings with small approximation errors.
\end{itemize}
In summary, the \(d = \domainsize\) assumption is a deliberate choice to make our analysis transparent and tractable, enabling exact theorems that clarify the \emph{core design principles} of self-attention. While not always practical, the insights gained from this assumption shed light on why self-attention mechanisms are so effective in learning interactions, thereby paving the way for future research on more approximate and scalable extensions.

\paragraph{Overview and Motivation of the Following Examples}
In this appendix, we present several illustrative examples that showcase how \emph{single-layer linear self-attention} can capture important pairwise interactions across diverse domains: multi-agent collision settings, time series forecasting, genotype-phenotype mapping, simple vision tasks, and more. Although there exist many possible configurations, we focus on relatively straightforward, yet representative cases that make it clear how to embed domain-specific features into our theoretical framework.

\subsection{Colliding Agents Environment}
\label{appendix:colliding-agents-represent}
We consider \(L\) \emph{identical} agents on a cylindrical grid of size \([N]\times [N]\), with initial position vector \(\mathbf{r}_i = \begin{bmatrix}
    n^x_i & n^y_i
\end{bmatrix} \in [N]^2\), where \(y\) axis is looped, that is, the distance between \(n^y_i=1\) and \(n^y_i=N-1\) is just \(2\). An analogy for such a grid world is that ants moving on the surface of a water pipe. Each agent is modeled as a circle of radius \(R\) executes a simple ``move-right'' policy (one step to the right per time step if there is space to move, \(n^x_i < N-1\), otherwise stays where it is). We denote the value function for agent~\(i\) as  \(V_i\), which corresponds to the expected return (sum of discounted rewards) from a particular configuration.Our goal is to capture how each agent’s value function \(V_i\) %
depends on the initial states (positions) of other agents: whenever agent \(i\) collides with agent \(j\) (distinct), it receives a penalty of \(-1\). Leveraging Theorem~\ref{theorem:single-layer-rep-main}, we show how this reward structure (and hence the value function if continuing in time) can be exactly represented by a single-layer linear self-attention mechanism.

Under this setting, adding \(-1\) bias to all value functions for simplicity, the value function for each agent can be expressed as 
\begin{equation}\label{eq:value function}
    V_{\mathbf{r}_i} = \sum_{\substack{j \in [L] \\ j \neq i}} \mathbb{I}\{\mathrm{min}\left(\left| n^y_i - n^y_j\right|, N-\left| n^y_i - n^y_j\right|\right) \leq 2R\}\cdot \left(-1\right)
\end{equation}
Seeing that the value function depends only on the \(y\)-coordinates, we focus our discussion on a one-dimensional case for simplicity. The extension to value functions with higher dimension dependence follows naturally and is illustrated after the one-dimensional case discussed here. In addition, we provide two versions of the similar representation construction for different positional embeddings.

\subsubsection{One-Hot Embedding}
\label{appendix:one-hot-embedding}

One of the most straightforward approaches is to embed each agent’s position $n_i \in [N]$ as the standard basis vector $\mathbf{e}_{n_i} \in \mathbb{R}^N$. Concretely, \(n_i\)-th entry of \(\mathbf{e}_{n_i}\) is one and the rest are zeros. This guarantees $\mathbf{e}_{n_i}^\top \mathbf{e}_{n_j} = \delta_{n_i,n_j}$, making the embedding orthonormal.  

\paragraph{Representing the Collision Value Function.}
By Theorem~\ref{theorem:single-layer-rep-main}, there exists a single-layer linear self-attention model (dimension $d=N$) that encodes the collision-based value function $V_i$ via an appropriate choice of $\mathbf{C} \in \mathbb{R}^{N \times N}$ and $\mathbf{W}^V \in \mathbb{R}^{N \times 1}$. To represent the value function in Eq.~\eqref{eq:value function} using a single-layer linear self-attention mechanism, we explicitly construct the attention weight matrix $\mathbf{C}$ and value projection matrix $\mathbf{W}^V$. Define $\mathbf{W}^V \in \mathbb{R}^{N \times 1}$ as a vector of all \(-1\)s:
\begin{equation}\label{eq:wv_one_hot}
    \mathbf{W}^V = -\mathbf{1}_N = \begin{bmatrix}-1 & -1 & \dots & -1\end{bmatrix}^\top.
\end{equation}
This ensures that the output of self-attention sums over the interactions weighted by $\mathbf{C}$. The interaction matrix $\mathbf{C} \in \mathbb{R}^{N \times N}$ is defined to capture the penalty for collisions:
\begin{equation}\label{eq:c_one_hot}
    C_{mn} =
    \begin{cases}
        1, & \text{if } \min(|m - n|, N - |m - n|) \leq 2R,\\
        0, & \text{otherwise.}
    \end{cases}
\end{equation}
Here, $C_{mn} = 1$ whenever the $y$-coordinates $m$ and $n$ are within a radius of $2R$, accounting for cylindrical wrapping. The resulting attention computation effectively sums over all nearby agents within the collision range.

\subsubsection{Sinusoidal Embedding}
We now consider sinusoidal embeddings, inspired by various positional embedding techniques that leverage sinusoidal representations. These include absolute positional embeddings from \cite{vaswani_attention_2023} and the more recent rotational embeddings in \cite{su_roformer_2023}. Specifically, assuming \(N\) is an even number, we embed the position of the \(i\)-th agent as:
\begin{equation}\label{eq:pidef}
    \mathbf{x}\left(\mathbf{r}_i\right) =\mathbf{p}_i = \begin{bmatrix}
    \frac{1}{\sqrt{2}} & \dotsc & \sin\left(\frac{2 \pi k}{N} n_i\right) & \cos\left(\frac{2 \pi k}{N} n_i\right) & \dotsc & \frac{1}{\sqrt{2}}\cos\left(\frac{2 \pi}{N}\left(\frac{N}{2}-1\right) n_i\right) & \frac{1}{\sqrt{2}}\cos\left(\frac{2 \pi}{N}\frac{N}{2} n_i\right)
    \end{bmatrix}^\top \in \mathbb{R}^{N}.
\end{equation}
It is a simple exercise to check that \(\mathbf{p}_i\top \mathbf{p}_j = \frac{N}{2}\delta_{i,j}\)

The \emph{main result} of this section is that: due to Lemma~\ref{lemma:DTFSofWindow} and Threorem~\ref{theorem:Cmatrixrepr}, a single layer linear self-attention \ref{eq:lin-sa} with \(\mathbf{W}^V = \begin{bmatrix}
    -\sqrt{2} & 0 & 0 & \dotsc
\end{bmatrix}^\top \in \mathbb{R}^{N \times 1}\) and diagonal \(\mathbf{C} \in \mathbb{R}^{N\times N}\), such that
\begin{equation*}
    C_{n n}=\begin{cases} \frac{4}{N}\left(2R+\frac{1}{2}\right) & \mathrm{if}\; n = 0,\\
    \frac{2}{N}\frac{\sin\left[\frac{2\pi}{N}\left(2R+\frac{1}{2}\right){\left(\frac{n+1}{2}\right)}\right]}{\sin\left[\frac{2\pi}{N}\frac{1}{2}\left(\frac{n+1}{2}\right)\right]}
    
    & \mathrm{if}\; n \text{ odd},\\
        \frac{2}{N}\frac{\sin\left[\frac{2\pi}{N}\left(2R+\frac{1}{2}\right)\left(\frac{n}{2}\right)\right]}{\sin\left[\frac{2\pi}{N}\frac{1}{2}\left(\frac{n}{2}\right)\right]} & \mathrm{if}\; n \text{ even},
    \end{cases}
\end{equation*}
can represent the value functions at Eq.~\eqref{eq:value function} exactly, \textbf{for any} \(L\), which is plotted at Figure~\ref{fig:c_matrix_diag}.%

\begin{figure*}[t]
  \centering
  \includegraphics[width=\textwidth]{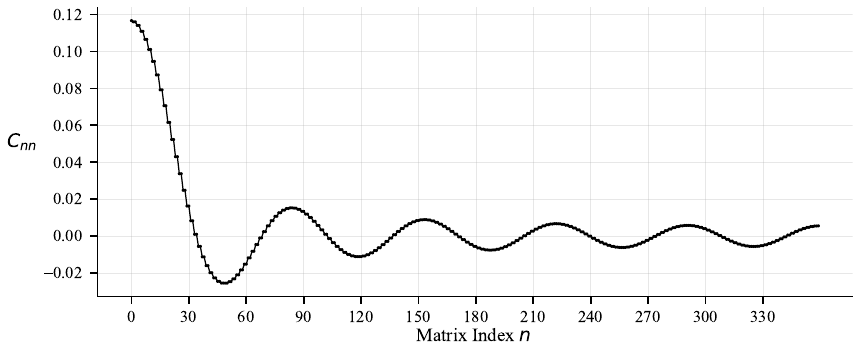}
  \vspace{-0.1in}
  \caption{\small Diagonal entries of the interaction matrix $C$ for $N=360$ agents with radius $R=5$. 
  The pattern emerges from Fourier analysis of collision dynamics in the agent movement model. The matrix indices are from \(0\) to \(N-1\). \label{fig:c_matrix_diag}}
  \vspace{0.1in}
\end{figure*}

\begin{definition}[Discrete-Time Fourier Series (DTFS)] DTFS of a function \(f:[N]\to\mathbb{R}\) is defined as
    \begin{equation*}
    F[k] = \sum_{n=0}^{N-1} f[n] e^{-j \frac{2 \pi}{N} k n}.
\end{equation*}
\end{definition}
\begin{lemma}[Discrete-Time Fourier Series Expansion of Window Function]\label{lemma:DTFSofWindow}
    Let \(f\) be:
\begin{equation*}
   f[n] = \mathbb{I}\left[|n|\leq 2R \right] 
\end{equation*}
Then the DFS, \(F\) is:
\begin{equation*}
   F[k] = \begin{cases}
   \frac{1}{N}\frac{\sin\left[\frac{2\pi}{N}\left(2R+\frac{1}{2}\right)k\right]}{\sin\left[\frac{2\pi}{N}\frac{1}{2}k\right]} &\text{ if }k\neq 0\\
    \frac{2}{N}\left(2R+\frac{1}{2}\right)&\text{ if }k = 0
   \end{cases}
\end{equation*}
\end{lemma}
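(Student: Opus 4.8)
The plan is to compute the Discrete-Time Fourier Series coefficients of the rectangular window $f[n] = \mathbb{I}[|n| \le 2R]$ directly from the definition. The support of $f$ on the index set $[N] = \{0, 1, \dots, N-1\}$, once we account for the periodic (cylindrical) identification, consists of the $4R+1$ points $n \in \{-2R, -2R+1, \dots, 2R\}$ taken modulo $N$. Since $e^{-j\frac{2\pi}{N}kn}$ is $N$-periodic in $n$, we may sum over this symmetric range: $F[k] = \sum_{n=-2R}^{2R} e^{-j\frac{2\pi}{N}kn}$.

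First I would handle the case $k = 0$ separately: every summand is $1$, so $F[0] = 4R+1 = 2\left(2R + \tfrac{1}{2}\right)$. Dividing by $N$ (matching the normalization convention implicit in the stated lemma — the formula as written has an $N$ in the denominator, so I would track the normalization carefully to ensure consistency with how DTFS is used in Theorem~\ref{theorem:Cmatrixrepr}) gives $\frac{2}{N}\left(2R+\tfrac{1}{2}\right)$, which is exactly the claimed value at $k=0$. For $k \ne 0$, I would recognize $\sum_{n=-2R}^{2R} e^{-j\frac{2\pi}{N}kn}$ as a finite geometric series (a Dirichlet-kernel-type sum). Factoring out the symmetric form, $\sum_{n=-M}^{M} z^n = \frac{z^{M+1/2} - z^{-(M+1/2)}}{z^{1/2} - z^{-1/2}}$ with $z = e^{-j\frac{2\pi}{N}k}$ and $M = 2R$, and using $\frac{e^{j\theta} - e^{-j\theta}}{2j} = \sin\theta$, this collapses to $\frac{\sin\left[\frac{2\pi}{N}\left(2R+\frac{1}{2}\right)k\right]}{\sin\left[\frac{2\pi}{N}\frac{1}{2}k\right]}$. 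After the normalization factor $\frac{1}{N}$ this is precisely the claimed expression for $k \ne 0$.

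The main obstacle is bookkeeping rather than conceptual: I must be careful that (i) the window is interpreted periodically on $[N]$ so that negative indices $-2R, \dots, -1$ correspond to $N-2R, \dots, N-1$, (ii) the Dirichlet kernel identity's denominator $\sin\left[\frac{2\pi}{N}\frac{1}{2}k\right]$ is nonzero for all $k \in \{1, \dots, N-1\}$ (true since $\frac{\pi k}{N} \in (0, \pi)$), so no further special cases arise, and (iii) the normalization constant $\frac{1}{N}$ is consistent with the DTFS definition given and with its downstream use in Theorem~\ref{theorem:Cmatrixrepr} and the construction of the diagonal $\mathbf{C}$ matrix. I would also double-check the half-integer exponents: writing the geometric sum starting at $n = -2R$ and ending at $n = 2R$ produces a ratio whose numerator and denominator naturally carry the $\left(2R + \frac{1}{2}\right)$ and $\frac{1}{2}$ factors once symmetrized, which matches the stated form exactly. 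Finally, I would note that $F[k]$ is real and even in $k$, consistent with $f$ being real and even, as a sanity check on the computation.
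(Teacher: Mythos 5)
Your proposal is correct and is precisely the standard geometric-series/Dirichlet-kernel computation that the paper's one-word proof (``Straightforward'') leaves to the reader: summing $e^{-j\frac{2\pi}{N}kn}$ over the symmetric support $n\in\{-2R,\dots,2R\}$ (interpreted modulo $N$) and symmetrizing the ratio yields exactly the stated numerator $\sin\bigl[\frac{2\pi}{N}(2R+\tfrac{1}{2})k\bigr]$ and denominator $\sin\bigl[\frac{2\pi}{N}\tfrac{1}{2}k\bigr]$, with the $k=0$ case giving $4R+1$. You are also right to flag the normalization: the paper's displayed DTFS definition omits the $\frac{1}{N}$, but the lemma's claimed $F[k]$ carries it so as to be consistent with the synthesis convention $x[n]=\sum_k X[k]e^{i\frac{2\pi}{N}kn}$ used in Lemma~\ref{lemma:sinecosineDTFS} and Theorem~\ref{theorem:Cmatrixrepr}, which is the convention your computation correctly adopts.
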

\begin{proof}
     Straightforward.
\end{proof}

\begin{lemma}[Real Valued, i.e. Sinusoidal, Expression of Discrete-Time Fourier Series]\label{lemma:sinecosineDTFS}
The Discrete Fourier Series (DFS) of a periodic discrete function $x[n]$ with period $N$ is given by:
\begin{equation*}
x[n] = \sum_{k=0}^{N-1} X[k] e^{i \frac{2 \pi}{N} k n} 
\end{equation*}
Letting \(a_k = 2 \operatorname{Re}\{X[k]\}\text{ and } b_k = -2 \operatorname{Im}\{X[k]\}\), the same function can be expressed as

\begin{equation*}
    \displaystyle x[n] =\begin{cases}
        \displaystyle\frac{a_0}{2} + \displaystyle\sum_{k=1}^{\frac{N-1}{2}} \left( a_k \cos\left(\frac{2 \pi}{N} k n\right) + b_k \sin\left(\frac{2 \pi}{N} k n\right) \right) &\text{, for odd \(N\)}
        \\
        \displaystyle \frac{a_0}{2} + \frac{a_{N/2}}{2}\cos\left(\frac{2 \pi}{N}\frac{N}{2}n\right) +\displaystyle\sum_{k=1}^{\frac{N}{2}-1} \left( a_k \cos\left(\frac{2 \pi}{N} k n\right) + b_k \sin\left(\frac{2 \pi}{N} k n\right) \right) &\text{, for even \(N\)}
        \end{cases}\label{eq:RealDFS}
\end{equation*} 

\end{lemma}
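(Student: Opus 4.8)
The plan is to exploit the conjugate symmetry of the DFS coefficients of a real sequence and then pair the $k$-th and $(N-k)$-th terms of the synthesis sum. First I would show that, because $x[n]\in\mathbb{R}$, the coefficients satisfy $X[N-k]=\overline{X[k]}$ for every $k$. This follows by writing $x[n]=\overline{x[n]}=\sum_{k=0}^{N-1}\overline{X[k]}\,e^{-i\frac{2\pi}{N}kn}$, re-indexing $k\mapsto N-k$, and using $e^{-i2\pi n}=1$ to obtain $x[n]=\sum_{k=0}^{N-1}\overline{X[N-k]}\,e^{i\frac{2\pi}{N}kn}$; comparing with the given expansion and invoking uniqueness of the DFS coefficients yields the identity. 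In particular $X[0]$ is real, and when $N$ is even $X[N/2]$ is real as well, since $e^{-i\frac{2\pi}{N}\frac{N}{2}n}=(-1)^n$.

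Next I would split $\sum_{k=0}^{N-1}X[k]e^{i\frac{2\pi}{N}kn}$ into the $k=0$ term, the conditional $k=N/2$ term (present only for even $N$), and the remaining indices organized into conjugate pairs $\{k,\,N-k\}$. For each such pair, conjugate symmetry gives
\[
   X[k]e^{i\frac{2\pi}{N}kn}+X[N-k]e^{i\frac{2\pi}{N}(N-k)n}
   = X[k]e^{i\frac{2\pi}{N}kn}+\overline{X[k]e^{i\frac{2\pi}{N}kn}}
   = 2\operatorname{Re}\!\left\{X[k]e^{i\frac{2\pi}{N}kn}\right\}.
\]
Expanding $e^{i\frac{2\pi}{N}kn}=\cos(\frac{2\pi}{N}kn)+i\sin(\frac{2\pi}{N}kn)$ and taking the real part produces exactly $a_k\cos(\frac{2\pi}{N}kn)+b_k\sin(\frac{2\pi}{N}kn)$ with $a_k=2\operatorname{Re}\{X[k]\}$ and $b_k=-2\operatorname{Im}\{X[k]\}$, matching the definitions in the statement.

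It then remains to dispatch the boundary terms and the index bookkeeping. The $k=0$ term contributes $X[0]=\operatorname{Re}\{X[0]\}=\tfrac{a_0}{2}$. For even $N$, the $k=N/2$ term equals $X[N/2]e^{i\pi n}=X[N/2](-1)^n=\tfrac{a_{N/2}}{2}\cos(\frac{2\pi}{N}\frac{N}{2}n)$, using $\sin(\pi n)=0$ and $a_{N/2}=2X[N/2]$. For odd $N$ the indices $1,\dots,N-1$ partition into the $\tfrac{N-1}{2}$ disjoint pairs $\{k,N-k\}$ for $k=1,\dots,\tfrac{N-1}{2}$; for even $N$, after removing $k=N/2$ the remaining indices partition into the $\tfrac{N}{2}-1$ pairs $\{k,N-k\}$ for $k=1,\dots,\tfrac{N}{2}-1$. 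Summing all contributions yields the two displayed formulas.

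The computation is essentially routine; the only points that require care are deriving $X[N-k]=\overline{X[k]}$ cleanly from the synthesis relation and verifying that the index pairing is exhaustive and non-overlapping in both parities — in particular that the self-paired index $k=N/2$ is singled out precisely when $N$ is even. I do not expect a genuine obstacle here.
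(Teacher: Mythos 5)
Your proposal is correct and follows essentially the same route as the paper: both rest on the conjugate symmetry of the coefficients of a real sequence and on pairing the $k$ and $N-k$ terms, with the $k=0$ and (for even $N$) $k=N/2$ terms handled separately. The only cosmetic difference is that you obtain $X[N-k]=\overline{X[k]}$ by conjugating the synthesis sum and invoking uniqueness of the DFS coefficients, whereas the paper extracts the equivalent real/imaginary-part identities by demanding that the imaginary part of the expanded sum vanish; your derivation is, if anything, the cleaner of the two.
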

\begin{proof}
Using Euler's formula, the complex exponential term can be rewritten as:
\begin{equation*}
e^{i \frac{2 \pi}{N} k n} = \cos\left(\frac{2 \pi}{N} k n\right) + i \sin\left(\frac{2 \pi}{N} k n\right)
\end{equation*}
Substitute this expression into the DTFS equation:
\begin{equation*}
x[n] = \sum_{k=0}^{N-1} X[k] \left( \cos\left(\frac{2 \pi}{N} k n\right) + i \sin\left(\frac{2 \pi}{N} k n\right) \right)
\end{equation*}
Separate the real and imaginary parts of the equation:
\begin{align*}
x[n] &= \sum_{k=0}^{N-1} \left( \operatorname{Re}\{X[k]\} \cos\left(\frac{2 \pi}{N} k n\right) - \operatorname{Im}\{X[k]\} \sin\left(\frac{2 \pi}{N} k n\right) \right) \\& + i \sum_{k=0}^{N-1} \left( \operatorname{Im}\{X[k]\} \cos\left(\frac{2 \pi}{N} k n\right) + \operatorname{Re}\{X[k]\} \sin\left(\frac{2 \pi}{N} k n\right) \right)
\end{align*}
$x[n]$ is a real function, so the imaginary part sums to zero. Therefore, for odd \(N\) we have:
\begin{itemize}
    \item $\operatorname{Im}\{X[0]\} = 0$
    \item $\operatorname{Im}\{X[k]\} + \operatorname{Im}\{X[N-k]\}= 0$ because cosine function has period of \(\pi\)
    \item $\operatorname{Re}\{X[k]\} - \operatorname{Re}\{X[N-k]\}= 0$ because sine function has period of \(\pi\)
\end{itemize}
As for the even \(N\) we have an additional requirement:
\begin{itemize}
    \item $\operatorname{Im}\{X[N/2]\} = 0$
\end{itemize}
Thus \(x[n]\) for odd \(N\) is:
\begin{align*} 
    x[n] = \operatorname{Re}\{X[0]\} + \displaystyle\sum_{k=1}^{\frac{N-1}{2}} \left( 2\operatorname{Re}\{X[k]\} \cos\left(\frac{2\pi}{N}kn\right) - 2\operatorname{Im}\{X[k]\} \sin\left(\frac{2\pi}{N}kn\right) \right), 
\end{align*}
As for even \(N\) we have:
\begin{align*}
    x[n] = \operatorname{Re}\{X[0]\} &+ \operatorname{Re}\{X[N/2]\} (-1)^n \\&+ \displaystyle\sum_{k=1}^{\frac{N}{2}-1} \left( 2\operatorname{Re}\{X[k]\} \cos\left(\frac{2\pi}{N}kn\right) - 2\operatorname{Im}\{X[k]\} \sin\left(\frac{2\pi}{N}kn\right) \right),
\end{align*}

Letting \(a_k = 2 \operatorname{Re}\{X[k]\}\text{ and } b_k = -2 \operatorname{Im}\{X[k]\}\), we get the final form as:

\begin{equation*}
    \displaystyle x[n] =\begin{cases}
        \displaystyle\frac{a_0}{2} + \displaystyle\sum_{k=1}^{\frac{N-1}{2}} \left( a_k \cos\left(\frac{2 \pi}{N} k n\right) + b_k \sin\left(\frac{2 \pi}{N} k n\right) \right) &\text{, for odd \(N\)}
        \\
        \displaystyle \frac{a_0}{2} + \frac{a_{N/2}}{2}\cos\left(\frac{2 \pi}{N}\frac{N}{2}n\right) +\displaystyle\sum_{k=1}^{\frac{N}{2}-1} \left( a_k \cos\left(\frac{2 \pi}{N} k n\right) + b_k \sin\left(\frac{2 \pi}{N} k n\right) \right) &\text{, for even \(N\)}
        \end{cases}
\end{equation*} 
\end{proof}

\begin{theorem}[]\label{theorem:Cmatrixrepr} Let us denote DTFS of a function \(f[n]\) as \(F[k]\). Defining  \(a_k = 2 \operatorname{Re}\{F[k]\}\text{ and } b_k = -2 \operatorname{Im}\{F[k]\}\) a attention score matrix \(\mathbf{C}\) of the form
\begin{equation}
\mathbf{C} = \begin{bmatrix}
a_0 & & & & && \\
& a_1 & b_1 & & &&\\
& -b_1 & a_1 & & &&\\
& & & \ddots & & &\\
& & & & a_{N/2 -1} & b_{N/2 -1} &\\
& & & &-b_{N/2 -1} & a_{N/2 -1} &\\
& & & & & &a_{N/2}
\end{bmatrix},\label{eq:Cmatrix}
\end{equation}
can represent any function \(f[n_i-n_j] = \mathbf{p}_i \mathbf{C} \mathbf{p}_j^T\), where \(\mathbf{p}_i\) is given at Eq.~\eqref{eq:pidef}.
\end{theorem}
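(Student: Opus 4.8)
The plan is to expand the bilinear form $\mathbf{p}_i^\top \mathbf{C}\,\mathbf{p}_j$ directly, using the block-diagonal structure of $\mathbf{C}$ in \eqref{eq:Cmatrix}, and then recognize the outcome as the real (sinusoidal) DTFS reconstruction of $f$ supplied by Lemma~\ref{lemma:sinecosineDTFS}. First I would line up the coordinates of $\mathbf{p}_i$ (in the order given in \eqref{eq:pidef}) with the blocks of $\mathbf{C}$: the lone coordinate $\tfrac1{\sqrt2}$ with the scalar $a_0$; for each $1\le k\le N/2-1$ the ordered pair $\big(\sin(\tfrac{2\pi k}{N}n_i),\,\cos(\tfrac{2\pi k}{N}n_i)\big)$ with the $2\times 2$ block $\begin{bmatrix}a_k & b_k\\ -b_k & a_k\end{bmatrix}$; and the final coordinate $\tfrac1{\sqrt2}\cos(\pi n_i)$ with the scalar $a_{N/2}$. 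Since $\mathbf{C}$ is block-diagonal, $\mathbf{p}_i^\top\mathbf{C}\,\mathbf{p}_j$ is the sum of the per-block contributions, which I would evaluate one block at a time.

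For the generic $k$-block, set $u_i=\sin(\tfrac{2\pi k}{N}n_i)$ and $v_i=\cos(\tfrac{2\pi k}{N}n_i)$; a one-line computation gives its contribution $a_k(u_iu_j+v_iv_j)+b_k(u_iv_j-v_iu_j)$. The key identities are the angle-subtraction formulas $u_iu_j+v_iv_j=\cos(\tfrac{2\pi k}{N}m)$ and $u_iv_j-v_iu_j=\sin(\tfrac{2\pi k}{N}m)$, with $m:=n_i-n_j$, so the $k$-block yields $a_k\cos(\tfrac{2\pi k}{N}m)+b_k\sin(\tfrac{2\pi k}{N}m)$. The $k=0$ block contributes $\tfrac1{\sqrt2}\cdot a_0\cdot\tfrac1{\sqrt2}=\tfrac{a_0}{2}$, and the $k=N/2$ block contributes $\tfrac{a_{N/2}}{2}\cos(\pi n_i)\cos(\pi n_j)$, which equals $\tfrac{a_{N/2}}{2}\cos(\pi m)$ because $n_i,n_j\in\mathbb{Z}$ and hence $(-1)^{n_i}(-1)^{n_j}=(-1)^{n_i-n_j}$. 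Summing over all blocks gives $\mathbf{p}_i^\top\mathbf{C}\,\mathbf{p}_j=\tfrac{a_0}{2}+\sum_{k=1}^{N/2-1}\big(a_k\cos(\tfrac{2\pi k}{N}m)+b_k\sin(\tfrac{2\pi k}{N}m)\big)+\tfrac{a_{N/2}}{2}\cos(\pi m)$.

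The last step is to apply Lemma~\ref{lemma:sinecosineDTFS}: with $a_k=2\operatorname{Re}\{F[k]\}$ and $b_k=-2\operatorname{Im}\{F[k]\}$, the expression just obtained is precisely the even-$N$ reconstruction formula for $f$ evaluated at $m$, so $\mathbf{p}_i^\top\mathbf{C}\,\mathbf{p}_j=f[m]=f[n_i-n_j]$ (with $f$ understood $N$-periodically). The odd-$N$ case is the same argument with no $k=N/2$ block and the odd branch of Lemma~\ref{lemma:sinecosineDTFS}. I expect the only real obstacle to be bookkeeping rather than mathematics: getting the ordering of the coordinates of $\mathbf{p}_i$ to match the blocks of $\mathbf{C}$, keeping the $\tfrac1{\sqrt2}$ normalizations on the $k=0$ and $k=N/2$ coordinates straight (these are exactly what turn $a_0$ and $a_{N/2}$ into $\tfrac{a_0}{2}$ and $\tfrac{a_{N/2}}{2}$), and verifying that the sign pattern $\begin{bmatrix}a_k & b_k\\ -b_k & a_k\end{bmatrix}$ is consistent with placing the $\sin$-coordinate before the $\cos$-coordinate so that the cross term emerges as $+b_k\sin(\tfrac{2\pi k}{N}m)$.
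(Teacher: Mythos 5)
Your proposal is correct and follows essentially the same route as the paper: expand the block-diagonal bilinear form, use the angle-subtraction identities to turn each $2\times2$ block into $a_k\cos(\tfrac{2\pi k}{N}m)+b_k\sin(\tfrac{2\pi k}{N}m)$ with $m=n_i-n_j$, account for the $\tfrac{1}{\sqrt2}$ normalizations on the $k=0$ and $k=N/2$ coordinates, and invoke Lemma~\ref{lemma:sinecosineDTFS}. The paper's own proof simply asserts this expansion in one line and cites the lemma, so your write-up is the same argument with the per-block computation made explicit.
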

\begin{proof}
    Setting \(\mathbf{C}\) as in \eqref{eq:Cmatrix},
    \begin{equation*}
        \mathbf{p}_i \mathbf{C} \mathbf{p}_j^T =
        \displaystyle \frac{a_0}{2} + \frac{a_{N/2}}{2}\cos\left(\frac{2 \pi}{N}\frac{N}{2}n\right) +\displaystyle\sum_{k=1}^{\frac{N}{2}-1} \left( a_k \cos\left(\frac{2 \pi}{N} k n\right) + b_k \sin\left(\frac{2 \pi}{N} k n\right) \right).
    \end{equation*}
    That is equal to \(f[n]\) owing to Lemma~\ref{lemma:sinecosineDTFS}, 
\end{proof}
\begin{remark}
    If \(f\) is symmetric, then the corresponding \(\mathbf{C}\) is diagonal.
\end{remark}

\subsubsection{More Complex Colliding Agents Environments} \label{appendix:morecomplexcollidingagentsenvironments}
\paragraph{Value Functions That Depend on Both Coordinates.}
Assume the same basic grid setup described earlier, but now suppose each agent’s value function depends on \emph{both} coordinates. A trivial example is when agents receive huge penalties \(-10\) if they collide even though initially they are far away, i.e, \(\mathrm{dist}(n_i,n_j)\ge R_2\) for some \(R_2 \in \mathbb{R}\) and any \(\mathrm{dist}(\cdot)\) function. They receive small \(-1\) penalties, if initially they are very close, so escaping from the collision is difficult. 
 Thus, the value function becomes 
 \[ V_i = - \sum_{j\neq i} \mathbb{I}\{|n^y_i - n^y_j| \leq 2R\} \cdot \left(1+9\cdot\mathbb{I}\{\mathrm{dist}(n_i,n_j) \leq R_2\}\right), \] 
 where the \(x\)-coordinate condition encodes temporal proximity. This value function, along with similar ones, can be expressed in a general form as follows. 
Consider \(L\) agents on a two-dimensional \([N]\times [N]\) grid, with each agent \(i\) occupying a cell at integer coordinates 
\(
  \mathbf{r}_i \;=\; \begin{bmatrix}n^x_i & n^y_i\end{bmatrix}^\top \;\in\; [N]^2.
\)
Each agent’s value function is then influenced by pairwise interactions of the form
\[
   V_i 
   \;=\;
   \sum_{\substack{j \in [L] \\ j \neq i}} \;
      f\!\bigl(\mathbf{r}_i - \mathbf{r}_j\bigr)\;w_{\mathbf{r}_j}
   \;=\;
   \sum_{\substack{j \in [L] \\ j \neq i}}
      f\!\bigl(n^x_i - n^x_j,\;\,n^y_i - n^y_j\bigr)\;w_{\mathbf{r}_j},
\]
where \(f:[N]^2 \to \mathbb{R}\) measures \emph{how} agent \(j\) (via its relative position) influences agent~\(i\). To represent \(f\) exactly as a linear self-attention kernel (as in Theorem~\ref{theorem:single-layer-rep-main}), we embed \emph{each} 2D position \(\mathbf{r}_i\) into a vector of dimension~\(N^2\), defined by the Kronecker (outer) product
\[
   \mathbf{p}_i 
   \;=\;
   \mathbf{p}_i^x \;\otimes\; \mathbf{p}_i^y 
   \;\;\in\;\;
   \mathbb{R}^{\,N^2}.
\]
Here, \(\mathbf{p}_i^x,\mathbf{p}_i^y \in \mathbb{R}^N\) are the 1D sinusoidal embeddings from Equation~\eqref{eq:pidef}, applied separately to the \(x\)- and \(y\)-coordinates.  By construction, \(\{\mathbf{p}_i\}_{i=1}^L\) spans an orthogonal set in \(\mathbb{R}^{N^2}\), with one vector per distinct grid cell.

\medskip
\noindent
\textbf{From 1D to 2D Discrete-Time Fourier Series.}
Recall from Lemma~\ref{lemma:DTFSofWindow} and Theorem~\ref{theorem:Cmatrixrepr} that any function \(g(n)\) on a 1D grid \([N]\) can be written as \(\mathbf{p}_i^\top\,\mathbf{C}\,\mathbf{p}_j\) for a suitably chosen matrix~\(\mathbf{C}\).  Precisely the same reasoning applies in two dimensions via a 2D Discrete-Time Fourier Series (DTFS).  Specifically, one writes
\[
   f\bigl(n^x_i-n^x_j,\;n^y_i-n^y_j\bigr)
   \;=\;
   \sum_{k_x=0}^{N-1}
   \sum_{k_y=0}^{N-1}
      F\bigl[k_x,k_y\bigr]\;
      e^{\,i\,\tfrac{2\pi}{N}\,\bigl(k_x\,(n^x_i-n^x_j)+k_y\,(n^y_i-n^y_j)\bigr)},
\]
and then rewrites each exponential in terms of sines/cosines matching the tensor-product embedding~\(\mathbf{p}_i^x\otimes \mathbf{p}_i^y\).  Consequently, there exists a block-structured \(\mathbf{C} \in \mathbb{R}^{N^2\times N^2}\) such that 
\[
   \mathbf{p}_i^\top\,\mathbf{C}\,\mathbf{p}_j
   \;=\;
   f\bigl(\mathbf{r}_i - \mathbf{r}_j\bigr).
\]
Hence, by choosing \(\mathbf{W}^V\) so that \(\mathbf{p}_i \mapsto w_{\mathbf{r}_i}\), a \emph{single-layer linear self-attention} (dimension \(d=N^2\)) recovers exactly the mapping
\[
   V_i
   \;=\;
   \sum_{j\neq i}\;
      f\bigl(\mathbf{r}_i - \mathbf{r}_j\bigr)\;\;w_{\mathbf{r}_j}.
\]
All the 1D collision arguments from earlier carry over: once the domain is embedded into \(\mathbb{R}^{N^2}\), Theorem~\ref{theorem:single-layer-rep-main} implies that single-layer linear self-attention can represent any pairwise function \(f(\mathbf{r}_i-\mathbf{r}_j)\).  Of course, this comes at the cost of embedding dimension \(N^2\), so the resulting kernel \(\mathbf{C}\) has size \(N^2\times N^2\), i.e.\ an \(\mathcal{O}(N^4)\) parameter count for a fully general 2D interaction.

\paragraph{Non-Identical Agents.}
One can likewise handle agents with \emph{different} behavior or policies.  
For example, suppose half of the agents always move right until they reach the boundary, while the others always move up.  Label these two behaviors (or policies) via a discrete set \(\mathcal{S}_q = \{\mathrm{R},\,\mathrm{U}\}\), and let each agent \(i\) carry an extra label \(q_i\in \mathcal{S}_q\).  
Then the value function is
\begin{equation}\label{eq:noninteractingagentsvalue}
    V_{\mathbf{r}_i,q_i} 
   \;=\;
   \sum_{\substack{j \in [L]\\ j\neq i}}
      f\!\Bigl(\,\mathbf{r}_i - \mathbf{r}_j,\;q_i,\;q_j\Bigr)\,w_{\mathbf{r}_j,q_j}.
\end{equation}

Following steps similar to the value functions that depend on both coordinates example, %
we now view the domain of each agent as 
\[
   \mathcal{S} \;=\;\mathcal{S}_q \;\times\;\mathcal{S}_x \;\times\;\mathcal{S}_y,
\]
where \(\mathcal{S}_x\) and \(\mathcal{S}_y\) are each \([N]\).  Its cardinality is \(\lvert \mathcal{S}\rvert = 2\,N^2\), in this simple example.  
We then embed each agent’s label and position as a vector in \(\mathbb{R}^{2N^2}\).  
For instance, if \(q_i = \mathrm{R}\) we might set
\[
   \textbf{\(q\)}_i 
   \;=\;
   \begin{bmatrix}1 \\[2pt] 0\end{bmatrix}, 
   \quad
   \mathbf{p}_i^x,\;\mathbf{p}_i^y \;\in\;\mathbb{R}^N,
   \quad
   \mathbf{z}_i
   \;=\;
   \mathbf{q}_i \;\otimes\;\mathbf{p}_i^x \;\otimes\;\mathbf{p}_i^y
   \;\in\;\mathbb{R}^{2N^2}.
\]
Likewise if \(q_i = \mathrm{U}\), we flip those two bits in \(\mathbf{q}_i\).  
By expanding the definition of \(f\!\bigl(\mathbf{r}_i-\mathbf{r}_j,q_i,q_j\bigr)\) via a suitable DTFS, one again obtains a matrix \(\mathbf{C}\) of size \((2N^2)\times(2N^2)\) that captures all pairwise interactions.  
Thus a single-layer linear self-attention with \(d=2N^2\) dimensions suffices to represent Eq.~\eqref{eq:noninteractingagentsvalue}. The parameter count grows to \(\mathcal{O}(4\,N^4)\) in the fully general case, but the construction exactly parallels the identical-agent scenario.

\subsection{Genotype--Phenotype Mapping Task}
\label{appendix:genotype-phenotype-represent}

Each \emph{allele} (or gene variant) in the domain \(\mathcal{S}\) is labeled with a unique integer and embedded using a one-hot vector. We consider a DNA sequence of length \(L\) in which every position corresponds to a unique allele. In other words, each allele appears at most once in the sequence (no duplicates). Our experiments randomly sample \emph{activation relations}: an allele \(\mu \in \mathcal{S}\) is \emph{activated} if another allele \(\nu\) exists somewhere in the sequence. Symbolically, we might store these relations in a dictionary of the form 
\[
   \mu : [\nu_1, \nu_2, ...],
\]
meaning ``allele~\(i\) is activated by the set of alleles-\(\nu_k\)''  
If \(i : [\,]\) (an empty list), then allele~\(i\) is \emph{always} active, while alleles not present in the dictionary behave like redundant or inactive genetic material.

\textbf{Constructing \(\mathbf{C}\) and \(\mathbf{W}^V\).}
To model these activation patterns via a single-layer linear self-attention, we build \(\mathbf{C}\in \mathbb{R}^{d\times d}\) and \(\mathbf{W}^V\in \mathbb{R}^{d\times 1}\) as follows (assuming each allele is one-hot embedded into \(\mathbb{R}^d\), with \(d=|\mathcal{S}|\)):

\begin{itemize}
\item For every dictionary entry of the form \(i:[j_1, \dots, j_m]\), set 
\[
   \mathbf{C}_{i,\,j_k} \;=\; 1/m
   \quad \text{for each } k=1,\dots,m,
\]
and set all other entries in row \(i\) to zero.

\item For entries \(i:[\,]\) (i.e., allele~\(i\) is always active), assign each entry in row~\(i\) to \(\tfrac{1}{L}\). This ensures allele~\(i\) gains a constant contribution from the entire sequence.

\item Set every entry of \(\mathbf{W}^V\) to 1 (i.e., \(\mathbf{W}^V\in \mathbb{R}^{d\times 1}\) is a vector of all ones). 
\end{itemize}
An example dictionary may be \{1:[3], 2:[], 4:[2]\}. Seeing that we have one hot encoding
\begin{equation*}
    C = \begin{bmatrix}
        0 && 0 && 0 && 0 && 0\\
        0 && 0 && 0 && 1 && 0\\
        1/L && 1/L && 1/L && 1/L && 1/L\\
        0 && 0 && 0 && 0 && 0\\
        0 && 0 && 1 && 0 && 0
    \end{bmatrix}
\end{equation*}

This is an example task for which a single layer linear self-attention cannot length generalize. However, a simple dense layer with ReLU activation addition after single layer linear self-attention, trivially enables it to generalize any length

\subsection{Vision Task}
\label{appendix:vision-represent}

In this example, each \emph{token} or \emph{entity} corresponds to a single pixel in an image. Suppose there are \(d\) possible pixel \emph{positions} in the image, each with a positional embedding \(\mathbf{p}_i \in \mathbb{R}^d\). We also have a binary indicator \(b_i \in \{0,1\}\) denoting whether pixel \(i\) is black (\(b_i=1\)) or white (\(b_i=0\)). We embed each pixel \emph{jointly} as
\[
    \mathbf{x}_i 
    \;=\;
    \bigl[b_i,\;\mathbf{p}_i\bigr] 
    \;\;\in\;\; \mathbb{R}^{\,1 + d}.
\]
Thus, the first coordinate captures color, and the remaining coordinates capture position. We want to detect whether a specific \emph{pattern of black pixels} occurs around the position \(\mathbf{p}_i\). Formally, we want to detect if pixels in a set of relative offsets
\[
   \Delta \;=\; \{\Delta_1, \,\Delta_2, \dots\}
\]
are all black around position~\(i\). For example, \(\Delta\) might define a small pattern (e.g., a \(3\times 3\) cross), such that for each \(\Delta_k\in\Delta\), the pixel at \(\mathbf{p}_i + \Delta_k\) must be black for us to declare ``shape present at \(i\).''

\textbf{Block-Structured \texorpdfstring{\(\mathbf{C}\)}{C} Matrix.}
Consider a single-layer linear self-attention of dimension \(d+1\). Let 
\(\mathbf{C} \in \mathbb{R}^{(d+1)\times (d+1)}\) be block-structured, meaning:
\[
    \mathbf{C}
    \;=\;
    \begin{bmatrix}
       \mathbf{0}_{1\times 1} & \mathbf{0}_{1\times d}\\
       \mathbf{0}_{d\times 1} & \widetilde{\mathbf{C}}_{d\times d}
    \end{bmatrix},
\]
where \(\mathbf{0}\) denotes a zero block (ensuring that the binary coordinate \(b_i\) does not directly alter \emph{which} positions are relevant). If we want to detect a shape around \(\mathbf{p}_i\) by checking offsets \(\Delta = \{\Delta_1, \ldots, \Delta_m\}\subseteq \mathbb{R}^d\), then for each row \(i\) (representing \(\mathbf{p}_i\) in the \(\widetilde{\mathbf{C}}\) submatrix), we set
\[
    \widetilde{\mathbf{C}}_{i,j}
    \;=\;
    \begin{cases}
        1, & \text{if }\mathbf{p}_j \text{ is in }\mathbf{p}_i + \Delta,\\
        0, & \text{otherwise}.
    \end{cases}
\]
This ensures \(\mathbf{p}_i\) attends to exactly those pixel positions \(\mathbf{p}_j\) that lie within the shape region around~\(i\).

\textbf{Capturing the Binary Color.}
To incorporate the notion that only black pixels (\(b_j=1\)) contribute, we define \(\mathbf{W}^V\in \mathbb{R}^{(d+1)\times 1}\) so that its entries corresponding to the \emph{binary part} of \(\mathbf{x}_j\) are nonzero, while its entries corresponding to the \emph{positional part} are zero. More formally,
\[
   \mathbf{W}^V
   \;=\;
   \begin{bmatrix}
     1 \\[3pt]
     \mathbf{0}_d
   \end{bmatrix}
   \;\in\;\mathbb{R}^{\,1+d}.
\]
Hence, when we multiply \(\mathbf{x}_j\) by \(\mathbf{W}^V\), the outcome is simply \(b_j\). In other words, if \(b_j=1\), the pixel contributes; if \(b_j=0\), it does not.

\textbf{Overall Operation.}
Putting it together, our single-layer linear self-attention
\[
    \bigl(\mathbf{X}\,\mathbf{C}\,\mathbf{X}^\top\bigr)
    \,\mathbf{X}\,\mathbf{W}^V
\]
behaves as follows:
(i) \(\widetilde{\mathbf{C}}\) identifies the relevant offsets \(\mathbf{p}_j\) around each \(\mathbf{p}_i\), i.e., it checks which pixels could participate in a pattern at~\(i\).  
(ii) \(\mathbf{W}^V\) converts the embedding \(\bigl[b_j,\;\mathbf{p}_j\bigr]\) into \(b_j\), effectively a blackness indicator.  
(iii) Summing across the image yields, for each position \(i\), the count of black pixels at the appropriate offsets \(\Delta\).

If this count matches the size of \(\Delta\), we conclude that the shape is present around pixel~\(i\). Thus, the linear self-attention mechanism can recognize patterns of arbitrary size and structure \emph{without changing the kernel size} or other architectural hyperparameters.

\paragraph{CNN Versus Transformer: Theoretical Insight.}
A single-layer Convolutional Neural Network (CNN) with a fixed \(3\times 3\) kernel cannot detect patterns larger than \(3\times 3\) within that same layer. Extending the receptive field would require either deeper networks or larger kernels. However, we do not know even the task of a layer in a large and deep neural network. Therefore, generally we do not know the required kernel for the task in advance, so we just cross validate different over different trials.
This design constraint makes CNNs less flexible when the optimal pattern scale is not known a priori. 
By contrast, Transformers can {attend} to any subset of positions in the input —whether nearby or distant— in a single layer. In our example, the shape offsets \(\Delta\) might be large or irregular, yet the same \(\widetilde{\mathbf{C}}\) submatrix can be adapted to capture those long-range relationships. Consequently, Transformers provide a more \emph{versatile} framework for shape detection (which is simply learning how different parts of image interact with each other) or other vision tasks where the required pattern scale or geometry may vary significantly from one scenario to another. 

\subsection{Time Series Prediction}\label{appendix:timeseries-represent}

Consider a univariate or multivariate time series 
\(\{\mathbf{m}[t]\}_{t=1}^{L+1}\), where each \(\mathbf{m}[t]\in \mathbb{R}^{d_2}\) is the observed value at time \(t\). We assume \(\mathbf{m}[t]\) depends on a set of specific past delays \(D=\{t_1, t_2, \dots\}\subset \mathbb{N}\), along with scalar multipliers \(\{a_k\}_{k\in D}\) and a linear transform \(\mathbf{A}\in \mathbb{R}^{d_2\times d_2}\) capturing how past values affect the current state: %
\[
    \mathbf{m}[t] 
    \;=\; 
    \sum_{k \in D} 
       a_{k}\,\mathbf{A}\,\mathbf{m}[t-k].
\]
For instance, if \(D=\{2, 5\}\), then \(\mathbf{m}[t] = 3\,\mathbf{A}\,\mathbf{m}[t-2] + 7\,\mathbf{A}\,\mathbf{m}[t-5]\). For example, if \(D=\{2,5\}\), we get \(\mathbf{m}[t] = 3\,\mathbf{A}\,\mathbf{m}[t-2] + 7\,\mathbf{A}\,\mathbf{m}[t-5]\).

To embed each time step \(t\) as an entity, define
\[
    \mathbf{x}_t \;=\; \bigl[\mathbf{m}[t],\,\mathbf{p}[t]\bigr] \;\in\; \mathbb{R}^{d + d_2},
\]
where \(\mathbf{m}[t]\in \mathbb{R}^{d_2}\) is the observed state, and \(\mathbf{p}[t]\in \mathbb{R}^{d}\) is a positional embedding (e.g., one-hot or continuous encoding). An indicator-based formulation ensures the attention mechanism recovers delays in~\(D\):
\[
    \mathbf{m}[L+1] 
    \;=\; 
    \sum_{j \in [L]} 
      \Bigl(
        \sum_{k \in D} 
          a_{k}\,\mathbb{I}\bigl\{(L+1) - j = k\bigr\}
      \Bigr) 
      \mathbf{A}\,\mathbf{m}[j].
\]
Our goal is to show how a \emph{single-layer linear self-attention} can represent this dependency structure exactly, following Theorem~\ref{theorem:single-layer-rep-main}.

We treat each time step \(t\) as a distinct entity in the sequence. Its embedding \(\mathbf{x}_t \in \mathbb{R}^{d + d_2}\) combines:
\[
    \mathbf{x}_t
    \;=\;
    \bigl[\,\mathbf{m}[t],\;\mathbf{p}[t]\bigr],
\]
where:
\begin{itemize}
\item \(\mathbf{m}[t]\in \mathbb{R}^{d_2}\) is the observed state at time \(t\) (univariate or multivariate).
\item \(\mathbf{p}[t]\in \mathbb{R}^{d}\) is a \emph{positional embedding} encoding the index \(t\). This could be a one-hot vector of length~\(L\), or  a sinusoidal embedding.
\end{itemize}

Define a block-structured matrix 
\(\mathbf{C}\in \mathbb{R}^{(d + d_2)\times (d + d_2)}\) to separate the \(\mathbf{m}[t]\) coordinates from the positional coordinates \(\mathbf{p}[t]\). We can denote:
\[
  \mathbf{C}
  \;=\;
  \begin{bmatrix}
    \mathbf{0}_{\,d_2\times d_2} & \mathbf{0}_{\,d_2\times d}\\
    \mathbf{0}_{\,d\times d_2}   & \widetilde{\mathbf{C}}_{\,d\times d}
  \end{bmatrix},
\]
so that the \emph{positional} part \(\widetilde{\mathbf{C}}\) encodes which time delays matter, while the \(\mathbf{m}[t]\) portion does not directly determine \emph{which} indices to attend to.
Concretely, let \(\widetilde{\mathbf{C}}_{u,v}=1\) if position \(v\) is a valid activator for position \(u\) under one of the delays in \(D\), and 0 otherwise. Equivalently, we can define
\[
   \widetilde{\mathbf{C}}_{u,v}
   \;=\;
   \sum_{k\in D}
   a_k\mathbf{I}\bigl[
       u - v = k
   \bigr],
\]
if we index the positional embedding such that \(u,v\in \{1,\dots,L\}\). This ensures that time step \(u\) attends to time step \(v\) if \(v\) is exactly one of the valid delays \(k\) behind \(u\).  

Next, we define 
\(\mathbf{W}^V \in \mathbb{R}^{(d + d_2)\times d_2}\) to extract the actual state \(\mathbf{m}[t]\) from each token:
\[
    \mathbf{W}^V
    \;=\;
    \begin{bmatrix}
      \mathbf{A} \\[3pt]
      \mathbf{0}_{\,d\times d_2}
    \end{bmatrix},
\]
where \(\mathbf{A}\in \mathbb{R}^{d_2\times d_2}\) is precisely the transformation from the autoregressive model. This construction means that when we multiply \(\mathbf{x}_t\) by \(\mathbf{W}^V\), we obtain \(\mathbf{A}\,\mathbf{m}[t]\), and the positional coordinates are ignored in this step (since their block is zero).

Putting it all together, a single-layer linear self-attention computes
\[
    \Bigl(\mathbf{X}\,\mathbf{C}\,\mathbf{X}^\top\Bigr)\,\mathbf{X}\,\mathbf{W}^V.
\]
For the row corresponding to the final time step \(L+1\), the multiplication by \(\mathbf{C}\) picks out those time steps \(j\) such that \((L+1)-j \in D\). Then multiplying by \(\mathbf{W}^V\) retrieves \(\mathbf{A}\,\mathbf{m}[j]\). Summing the contributions yields precisely
\[
    \mathbf{m}[L+1] 
    \;=\;
    \sum_{k\in D} 
      a_k\;\mathbf{A}\;\mathbf{m}[\,L+1-k\,],
\]
mirroring the autoregressive formula.

\section{Proof of Theorem~\ref{theorem:conv}: Convergence to Zero Training Error}
\label{appendix:convergence-linselfattn}
Recall that $\mathbf{B} \in \mathbb{R}^{\domainsize\times d}$ is the embedding base matrix whose rows are embeddings of each element in the domain.
We start with the following simple observation. 
\begin{lemma}\label{lemma:orthogonal_basis}
    If the domain embeddings are orthonormal, that is , \(\langle x(\alpha), x(\beta)\rangle = \delta_{a,b},\; \forall\, \alpha,\ \beta\in\mathcal{S}\), then $\mathbf{X}^\top \mathbf{X}$ is diagonal in the embedding basis, and \(\mathbf{B}\XnT \Xn \mathbf{B}^\top  = \mathrm{diag}(\mathbf{s}^{(n)})\), where \(s^{(n)}_\mu\) is the number of times the element \(\mu\) appears in the \(n\)-th sample.%
\end{lemma}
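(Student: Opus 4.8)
The plan is a direct entrywise computation that uses nothing beyond the orthonormality hypothesis. First I would write $\Xn$ row by row: by construction its $i$-th row is $\mathbf{x}\bigl(\calXn{i}\bigr)^\top$, so
\[
   \XnT\Xn \;=\; \sum_{i\in[L]} \mathbf{x}\bigl(\calXn{i}\bigr)\,\mathbf{x}\bigl(\calXn{i}\bigr)^\top ,
\]
a sum of $L$ rank-one outer products, one per position in the $n$-th sequence.

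Next, recalling that the rows of $\mathbf{B}$ are the embeddings $\mathbf{x}^\top(\mu)$ indexed by $\mu\in\mathcal{S}$, I would read off the $(\mu,\nu)$ entry of the conjugated matrix:
\[
   \bigl[\mathbf{B}\,\XnT\Xn\,\mathbf{B}^\top\bigr]_{\mu\nu}
   \;=\; \sum_{i\in[L]} \bigl\langle \mathbf{x}(\mu),\,\mathbf{x}(\calXn{i})\bigr\rangle\,\bigl\langle \mathbf{x}(\calXn{i}),\,\mathbf{x}(\nu)\bigr\rangle
   \;=\; \sum_{i\in[L]} \delta_{\mu,\calXn{i}}\,\delta_{\calXn{i},\nu},
\]
where the last equality is exactly the orthonormality assumption $\langle \mathbf{x}(\alpha),\mathbf{x}(\beta)\rangle=\delta_{\alpha\beta}$. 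For $\mu\neq\nu$ every summand vanishes, while for $\mu=\nu$ the sum counts the positions $i$ with $\calXn{i}=\mu$, which is precisely $\entrysn{\mu}$ by definition. Hence $\mathbf{B}\,\XnT\Xn\,\mathbf{B}^\top=\diag{\sn}$.

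Finally, for the statement that $\XnT\Xn$ (equivalently $\mathbf{X}^\top\mathbf{X}$ for a generic sample) is diagonal in the embedding basis: since $d=\domainsize$ here, orthonormality of $\{\mathbf{x}(\mu)\}_{\mu\in\mathcal{S}}$ makes $\mathbf{B}$ an orthogonal matrix, $\mathbf{B}\mathbf{B}^\top=\mathbf{B}^\top\mathbf{B}=\mathbf{I}$, so $\mathbf{B}\,\XnT\Xn\,\mathbf{B}^\top$ is by definition the representation of $\XnT\Xn$ in the basis $\{\mathbf{x}(\mu)\}$, and the computation above shows it is diagonal. I do not expect any real obstacle: the only care needed is to keep the two index sets straight — $[L]$ ranges over sequence positions whereas $\mathcal{S}$ indexes the basis — and to note, as in Corollary~\ref{cor:orthogonal-transform}, that the argument depends only on orthonormality and hence is invariant to the particular orthonormal embedding chosen.
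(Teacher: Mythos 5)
Your proof is correct and is exactly the argument the paper intends: the lemma is presented there as a ``simple observation'' with no written proof, and your decomposition of $\XnT\Xn$ into rank-one outer products followed by the orthonormality-induced Kronecker deltas and the occurrence count is the canonical verification. No gaps.
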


Since we set \( d = \domainsize \), we can freely adopt an orthonormal domain embedding, ensuring that  
\[
\langle\mathbf{B}_{i,:}, \mathbf{B}_{j,:}\rangle = \delta_{i,j}.
\]
For the remainder of this section, we conduct all calculations in the domain embedding basis. To formalize this, we define the following orthonormal transformations:
\begin{align*}
    \mathbf{X}^{(n)\,\mathrm{one-hot}} &= \mathbf{X}^{(n)}\mathbf{B}^\top,\\
    \mathbf{C}^{\mathrm{one-hot}} &= \mathbf{B}\mathbf{C}\mathbf{B}^\top,\\
    \mathbf{w}^{\mathrm{one-hot}} &= \mathbf{B}\mathbf{w}.
\end{align*}
For notational simplicity, we omit the \(\mathrm{one-hot}\) superscripts in the \textbf{rest of this section}. This will not cause any confusion as  we are always referring to the one-hot transformed versions. That is, whenever we write \(\mathbf{X}^{(n)}, \mathbf{C}, \mathbf{w}\), they actually represent \(\mathbf{X}^{(n)\,\mathrm{one-hot}}, \mathbf{C}^{\mathrm{one-hot}}, \mathbf{w}^{\mathrm{one-hot}}\). Also, seeing that \(\mathbf{B}\) is orthonormal matrix, when we establish the convergence of the one-hot parameter representations, it directly implies the convergence of the original parameters, which can be recovered by applying the inverse transformation in the \(\mathbf{B}\) basis. What we do is simply change our perspective to how we look at coordinate system. Lastly, in this section, we denote \(\mathbf{e}_\mu \in \mathbb{R}^\domainsize\) as unique one-hot encoded vector for all \(\mu \in \mathcal{S}\), i.e. the base vector. 

We will firstly derive the gradients of the loss function with respect to the parameters than we will state a lemma that will be useful for the proof of Theorem~\ref{theorem:single-layer-rep-main}.

\paragraph{Gradients of the $\Lmse$ with Respect to \texorpdfstring{$\mathbf{C}$}{C} and \texorpdfstring{$\mathbf{w}$}{w}.} For convenience, denote \(\mathbf{W}^V \in \mathbb{R}^{d\times 1}\) as \(\mathbf{w} \in \mathbb{R}^{d}\). It is easy to verify the following equation:
\[
    \frac{\partial \Lmse}{\partial \mathbf{C}}
    \;=\;
    \frac{1}{B}
    \sum_{n=1}^B
      \bigl(\mathbf{D}^{(n)}\bigr)^\top
      \,\frac{\partial}{\partial \mathbf{C}}
        \Saxn,
    \quad
    \frac{\partial \Lmse}{\partial \mathbf{w}}
    \;=\;
    \frac{1}{B}
    \sum_{n=1}^B
      \bigl(\mathbf{D}^{(n)}\bigr)^\top
      \,\frac{\partial}{\partial \mathbf{w}}
        \Saxn.
\]

Since \(d_2=1\) the linear self-attention in Eq.\ref{eq:lin-sa} can be written as
\[
    \Saxn
    \;=\;
    \Bigl(\mathbf{X}\,\mathbf{C}\,\mathbf{X}^\top\Bigr)\,\mathbf{X}\,\mathbf{w},
\]
where \(\mathbf{w} \in \mathbb{R}^d\). We have
\[
    \frac{\partial \Saxn}{\partial C_{\mu\nu}}
    \;=\;
    \mathbf{X}_{:\mu}\,
    \Bigl[\mathbf{X}^\top\mathbf{X}\,\mathbf{w}\Bigr]_{\nu}.
\]
Hence,
\[
    \frac{\partial \Lmse}{\partial C_{\mu\nu}}
    \;=\;
    \frac{2}{B}
    \sum_{n=1}^B
      \Bigl[\mathbf{X}^{(n)\,\top}\,\mathbf{X}^{(n)}\,\mathbf{w}\Bigr]_{\nu}
      \;\Bigl(\mathbf{X}^{(n)}_{:\mu}\Bigr)^\top
      \mathbf{D}^{(n)}.
\]
Similarly,
\[
    \frac{\partial \Saxn}{\partial w_{\alpha}}
    \;=\;
    \Bigl(\mathbf{X}\,\mathbf{C}\,\mathbf{X}^\top\Bigr)\,\mathbf{X}_{:\alpha},
    \quad
\]\[
    \quad
    \frac{\partial \Lmse}{\partial w_{\alpha}}
    \;=\;
    \frac{2}{B}
    \sum_{n=1}^B
      \Bigl(\mathbf{X}^{(n)\,\top}\Bigr)_{\alpha:}
      \Bigl(\mathbf{X}^{(n)}\,\mathbf{C^\top}\,\mathbf{X}^{(n)\,\top}\Bigr)\,
      \mathbf{D}^{(n)}.
\]
We can write the same gradient equations as
\begin{align*}
        \frac{\p \Lmse}{\p \mathbf{C}} = \frac{2}{B} \sum_n \XnT \Dn \mathbf{w}^\top \XnT \Xn,\\
        \frac{\p \Lmse}{\p \mathbf{w}} = \frac{2}{B} \sum_n \XnT \Xn \mathbf{C}^\top \XnT \Dn.
\end{align*}

\begin{lemma} \label{lemma:wi_great}
    If we choose initial parameters as $\mathbf{C}(0)=\mathbf{0}$ and \(w_\alpha(0) \ge b >0\), then $w_\alpha(t) \ge b > 0,\,\forall \alpha \text{ and }\forall t\ge0$.
\end{lemma}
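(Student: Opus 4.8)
The plan is to run the whole argument in the one-hot (embedding) basis used in this section, and to isolate, for each symbol $\alpha\in\mathcal{S}$, a conserved ``balancedness'' quantity linking the coordinate $w_\alpha$ of $\mathbf{w}$ to the corresponding column $\mathbf{C}_{:,\alpha}$ of $\mathbf{C}$, in the spirit of the invariants known for deep linear networks. Since the change of basis induced by $\mathbf{B}$ is orthogonal (as $\mathbf{B}^\top\mathbf{B}=\mathbf{I}$ and $d=\domainsize$), gradient flow is preserved, and the initialization $\mathbf{C}(0)=\mathbf{0}$, $w_\alpha(0)\ge b$ is unchanged; it therefore suffices to prove the claim in the one-hot coordinates.

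First I would specialize the gradient-flow equations $\dot{\mathbf{C}}=-\tfrac{\partial\Lmse}{\partial\mathbf{C}}$ and $\dot{\mathbf{w}}=-\tfrac{\partial\Lmse}{\partial\mathbf{w}}$ using the gradient formulas derived above together with Lemma~\ref{lemma:orthogonal_basis}, which gives $\mathbf{X}^{(n)\top}\mathbf{X}^{(n)}=\mathrm{diag}(\mathbf{s}^{(n)})$ in this basis. Writing $\mathbf{u}^{(n)}:=\mathbf{X}^{(n)\top}\mathbf{D}^{(n)}\in\mathbb{R}^{d}$ and $\mathbf{v}_\alpha:=\tfrac{2}{B}\sum_{n}s^{(n)}_\alpha\,\mathbf{u}^{(n)}$, a short column-wise computation shows the per-coordinate dynamics decouple as
\[
\dot{\mathbf{C}}_{:,\alpha}\;=\;-\,w_\alpha\,\mathbf{v}_\alpha,\qquad \dot{w}_\alpha\;=\;-\,\bigl\langle\mathbf{C}_{:,\alpha},\,\mathbf{v}_\alpha\bigr\rangle,\qquad \alpha\in\mathcal{S},
\]
so that the vector $\mathbf{C}_{:,\alpha}(t)\in\mathbb{R}^{d}$ and the scalar $w_\alpha(t)$ are driven by one and the same time-varying vector $\mathbf{v}_\alpha(t)$.

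Then I would differentiate $E_\alpha(t):=w_\alpha(t)^2-\|\mathbf{C}_{:,\alpha}(t)\|_2^2$: from the two displayed identities, $\dot{E}_\alpha=2w_\alpha\dot{w}_\alpha-2\langle\mathbf{C}_{:,\alpha},\dot{\mathbf{C}}_{:,\alpha}\rangle=-2w_\alpha\langle\mathbf{C}_{:,\alpha},\mathbf{v}_\alpha\rangle+2w_\alpha\langle\mathbf{C}_{:,\alpha},\mathbf{v}_\alpha\rangle=0$, so $E_\alpha$ is constant along the flow. Since $\mathbf{C}(0)=\mathbf{0}$ gives $\mathbf{C}_{:,\alpha}(0)=\mathbf{0}$, we get $E_\alpha(t)\equiv w_\alpha(0)^2$, hence
\[
w_\alpha(t)^2\;=\;w_\alpha(0)^2+\|\mathbf{C}_{:,\alpha}(t)\|_2^2\;\ge\;w_\alpha(0)^2\;\ge\;b^2\qquad\text{for all }t\ge 0.
\]
Because $t\mapsto w_\alpha(t)$ is continuous with $w_\alpha(0)\ge b>0$ and $|w_\alpha(t)|\ge b>0$ prevents $w_\alpha$ from ever vanishing, we conclude $w_\alpha(t)\ge b$ for all $t\ge0$ and all $\alpha$; undoing the orthogonal change of basis yields the statement for the original parameters.

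The main obstacle is essentially bookkeeping: carrying out the column-wise reduction of the matrix gradients correctly so that $\mathbf{C}_{:,\alpha}$ and $w_\alpha$ are genuinely coupled through a common vector $\mathbf{v}_\alpha$ — once that is in place, the invariant $E_\alpha$ and the intermediate-value argument are immediate. A minor technical point, handled exactly as in the surrounding convergence proof, is that the argument runs on whatever interval the gradient flow is defined; the monotonicity $\Lmse(t)\le\Lmse(0)$ together with the invariant keeps the trajectory from escaping to infinity, so no finite-time blow-up interferes.
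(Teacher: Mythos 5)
Your proposal is correct and follows essentially the same route as the paper: both establish the conserved balancedness quantity $w_\alpha(t)^2-\|\mathbf{C}_{:,\alpha}(t)\|_2^2$ along the gradient flow in the one-hot basis (the paper derives it via trace identities with a diagonal test matrix $\mathbf{\Lambda}=\mathrm{diag}(\mathbf{e}_\alpha)$, you via the equivalent column-wise decoupled dynamics), and both conclude with the same continuity/intermediate-value argument. Your per-coordinate computation of $\dot{\mathbf{C}}_{:,\alpha}=-w_\alpha\mathbf{v}_\alpha$ and $\dot{w}_\alpha=-\langle\mathbf{C}_{:,\alpha},\mathbf{v}_\alpha\rangle$ checks out against the paper's gradient formulas, so there is no gap.
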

\newcommand{\ddC}{\frac{d\mathbf{C}}{dt}}
\newcommand{\ddw}{\frac{d\mathbf{w}}{dt}}
\begin{proof}
    Firstly, we will show that \(w_\alpha(t)^2\ge w_\alpha(0)^2,\,\forall t \text{ and }\forall i\), than we will prove the statement in the lemma.%
     We can copy the previous gradient derivations and gradient flow equations
    \begin{align*}
        \ddC = -\eta\frac{\p \Lmse}{\p \mathbf{C}} = -\eta\frac{2}{B} \sum_n \XnT \Dn \mathbf{w}^\top \XnT \Xn,\\
        \ddw = -\eta\frac{\p \Lmse}{\p \mathbf{w}} = -\eta \frac{2}{B} \sum_n \XnT \Xn \mathbf{C}^\top \XnT \Dn.
    \end{align*}
    Let \(\mathbf{\Lambda}\) be a matrix that is diagonal in the embedding base \(\mathbf{B}\). However, we again abuse the notation. We do not rewrite the \(\mathrm{one-hot}\) in \(\mathbf{\mathbf{\Lambda}}^{\mathrm{one-hot}} = \mathbf{B \mathbf{C}}\mathbf{B}^\top\) and denote it just as \(\mathbf{\Lambda}\) in the rest of the proof. We can now write
    \begin{align*}
        \mathbf{C}^\top\ddC \mathbf{\Lambda}&=  -\eta\frac{2}{B} \sum_n \mathbf{C}^\top\XnT \Dn \mathbf{w}^\top \XnT \Xn \mathbf{\Lambda},\\
        \mathrm{Tr}\Bigl\{\mathbf{C}\ddC \mathbf{\Lambda} \Bigr\}&=  -\eta\frac{2}{B} \sum_n \mathrm{Tr}\Bigl \{\mathbf{C}^\top\XnT \Dn \mathbf{w}^\top \XnT \Xn \mathbf{\Lambda} \Bigr\},\\
        \mathbf{\Lambda}\ddw \mathbf{w}^\top&= -\eta \frac{2}{B}  \sum_n \mathbf{\Lambda} \XnT \Xn \mathbf{C}^\top \XnT \Dn \mathbf{w}^\top,\\
        \mathrm{Tr}\Bigl\{\mathbf{\Lambda}\ddw  \mathbf{w}^\top\Bigr\}&=  -\eta\frac{2}{B} \sum_n \mathrm{Tr}\Bigl \{\mathbf{C}^\top\XnT \Dn \mathbf{w}^\top \XnT \Xn \mathbf{\Lambda} \Bigr\}.
    \end{align*}
    Thus, we have
    \begin{align*}
        \mathrm{Tr}\Bigl\{\mathbf{C}^\top\ddC \mathbf{\Lambda} \Bigr\}&=\mathrm{Tr}\Bigl\{\mathbf{\Lambda}\ddw  \mathbf{w}^\top\Bigr\},\\
        \mathrm{Tr}\Bigl\{\ddC \mathbf{\Lambda}\mathbf{C}^\top \Bigr\}&=\mathrm{Tr}\Bigl\{\mathbf{w}^\top\mathbf{\Lambda}\ddw  \Bigr\}.\numberthis  \label{eq:conveq1}
    \end{align*}
    Seeing that \(\mathbf{\Lambda}\) is diagonal \(\mathbf{\Lambda}^\top = \mathbf{\Lambda}\). Thus, from Equation~\ref{eq:conveq1} we can get
    \begin{equation*}
        \frac{d}{dt}\left(\mathrm{Tr}\bigl\{ \mathbf{C} \mathbf{\Lambda} \mathbf{C}^\top\bigr\}-\mathrm{Tr}\bigl\{ \mathbf{w}^\top \mathbf{\Lambda} \mathbf{w}\bigr\}\right) = 0
    \end{equation*}
    \begin{equation*}
        \mathrm{Tr}\bigl\{ \mathbf{C}(t) \mathbf{\Lambda} \mathbf{C}(t)^\top\bigr\}-\mathrm{Tr}\bigl\{ \mathbf{w}^\top(t)\mathbf{\Lambda} \mathbf{w}(t)\bigr\} = \mathrm{Tr}\bigl\{ \mathbf{C}(0) \mathbf{\Lambda} \mathbf{C}(0)^\top\bigr\}-\mathrm{Tr}\bigl\{ \mathbf{w}^\top(0)\mathbf{\Lambda} \mathbf{w}(0)\bigr\}
    \end{equation*}
    Letting \(\mathbf{\Lambda}=\mathrm{diag}(\mathbf{e}_\alpha)\), where \(\mathbf{e}_\alpha\) is the unit basis vector corresponding to \(\alpha\), we reach to
    \begin{equation*}
        w_\alpha^2(t) = w^2_\alpha(0) + \Vert \mathbf{C}_{:,i}(t) \Vert^2_2 -\Vert \mathbf{C}_{:,\alpha}(0) \Vert^2_2 = w^2_\alpha(0) + \Vert \mathbf{C}_{:,\alpha}(t) \Vert^2_2,
    \end{equation*}
    where the last equality follows because \(\mathbf{C}(0) = \mathbf{0}\). As a result we reach to
    \begin{equation} \label{eq:lemmaeq111}
        w_\alpha^2(t) \ge w^2_\alpha(0) \ge b^2
    \end{equation}

    Seeing that \(\frac{d w_\alpha}{dt}\) is finite $\forall t$, \(w_\alpha(t)\) is continuous. As a result if \(w_\alpha(0) \ge b >0\), then \(w_\alpha(t) \ge b,\, \forall t\) which can be proven by contradiction. Assume \(\exists t^*>0\) such that \(w_\alpha(t^*) \le b\). By Equation~\ref{eq:lemmaeq111}, \(w_\alpha(t^*) \le -b <0\). By intermediate value theorem \(\exists \tau \in \left(0, t^*\right)\) such that \(w_\alpha(\tau)=0\), so \(w_\alpha^2(\tau)=0<w_\alpha^2(0) \ge b^2\), which contradicts with \eqref{eq:lemmaeq111}
\end{proof}

\begin{proof}[Proof of Theorem~\ref{theorem:conv} (Convergence to Zero Training Error)]

\textbf{Gradient Flow for the Residuals and the Loss.}
We define the residual (error) on the $n$-th example 
\[
    \mathbf{D}^{(n)}
    \;=\;
    \Saxn
    \;-\;
    \mathbf{y}^{(n)}.
\]
Let $t$ denote the (continuous) gradient-descent time, with 
\[
   \frac{d\mathbf{C}}{dt}
   \;=\;
   -\,\eta\,\frac{\partial L}{\partial \mathbf{C}},
   \quad
   \frac{d\mathbf{w}}{dt}
   \;=\;
   -\,\eta\,\frac{\partial L}{\partial \mathbf{w}}.
\]

Consider the time derivative of the residual
\[
    \frac{d\mathbf{D}^{(m)}}{dt}
    \;=\;
    \frac{\partial \Saxm}{\partial \mathbf{C}}
      \,\frac{d\mathbf{C}}{dt}
    \;+\;
    \frac{\partial \Saxm}{\partial \mathbf{w}}
      \,\frac{d\mathbf{w}}{dt}.
\]
Expanding each term, we substitute
\[
    \frac{\partial \Saxm}{\partial C_{\mu\nu}}
    \;=\;
    \mathbf{X}^{(m)}_{:\mu}\,
    \Bigl[\mathbf{X}^{(m)\,\top}\mathbf{X}^{(m)}\,\mathbf{w}\Bigr]_{\nu} \text{ and }
    \frac{\partial \Saxm}{\partial w_{\alpha}}
    \;=\;
    \Bigl(\mathbf{X}^{(m)}\,\mathbf{C}\,\mathbf{X}^{(m)\,\top}\Bigr)\,\mathbf{X}^{(m)}_{:\alpha}.
\]
As for the gradient updates we subsitute
\[
   \frac{dC_{\mu\nu}}{dt}
   \;=\;
   -\,\frac{2\,\eta}{B}
     \sum_{n=1}^B
     \Bigl[\mathbf{X}^{(n)\,\top}\mathbf{X}^{(n)}\mathbf{w}\Bigr]_{\nu}
     \,\Bigl(\mathbf{X}^{(n)}_{:\mu}\Bigr)^\top
     \mathbf{D}^{(n)}
\text{ and }
   \frac{dw_\alpha}{dt}
   \;=\;
   -\,\frac{2\,\eta}{B}
     \sum_{n=1}^B
     \Bigl(\mathbf{X}^{(n)\,\top}\Bigr)_{\alpha:}
     \Bigl(\mathbf{X}^{(n)}\,\mathbf{C}\,\mathbf{X}^{(n)\,\top}\Bigr)\,
     \mathbf{D}^{(n)},
\]
Thus, we arrive to
\[
\begin{aligned}
  \frac{d\mathbf{D}^{(m)}}{dt}
  &= \sum_{\mu,\nu}
     \mathbf{X}^{(m)}_{:\mu}
     \Bigl[\mathbf{X}^{(m)\,\top}\mathbf{X}^{(m)}\mathbf{w}\Bigr]_{\nu}
     \;\Bigl(-\,\frac{2\,\eta}{B}\Bigr)
     \sum_{n=1}^B
     \Bigl[\mathbf{X}^{(n)\,\top}\mathbf{X}^{(n)}\mathbf{w}\Bigr]_{\nu}
     \,\Bigl(\mathbf{X}^{(n)}_{:\mu}\Bigr)^\top
     \mathbf{D}^{(n)} \\[6pt]
  &\quad
     + \sum_{\alpha}
       \Bigl(\mathbf{X}^{(m)}\,\mathbf{C}\,\mathbf{X}^{(m)\,\top}\Bigr)\,\mathbf{X}^{(m)}_{:\alpha}
       \;\Bigl(-\,\frac{2\,\eta}{B}\Bigr)
       \sum_{n=1}^B
       \Bigl(\mathbf{X}^{(n)\,\top}\Bigr)_{\alpha:}
       \Bigl(\mathbf{X}^{(n)}\,\mathbf{C}\,\mathbf{X}^{(n)\,\top}\Bigr)
       \mathbf{D}^{(n)}.
\end{aligned}
\]
Rearranging terms,
\begin{multline*}
  \frac{d\mathbf{D}^{(m)}}{dt}
  = -\frac{2\eta}{B}\sum_{n=1}^B\Bigl[\left(\mathbf{w}^\top \mathbf{X}^{(m)\,\top}\mathbf{X}^{(m)} \mathbf{X}^{(n)\,\top}\mathbf{X}^{(n)}\mathbf{w}\right)\mathbf{X}^{(m)} \mathbf{X}^{(n)\top}  \\
  +  \mathbf{X}^{(m)}\,\mathbf{C}\,\mathbf{X}^{(m)\,\top}\,\mathbf{X}^{(m)} \mathbf{X}^{(n)\,\top} \mathbf{X}^{(n)}\,\mathbf{C}\,\mathbf{X}^{(n)\,\top}\Bigr] \mathbf{D}^{(n)}
\end{multline*}
Seeing that the term in the parenthesis is a scalar, we can write the same equation in terms for kronocker product for future convenience, which leads to
\begin{multline}\label{eq:dDm/dt}
    \frac{d\mathbf{D}^{(m)}}{dt}
    \;=\;
    -\,\frac{2\,\eta}{B}
    \sum_{n=1}^B
      \Bigl[\left(\mathbf{w}^\top \mathbf{X}^{(m)\,\top}\mathbf{X}^{(m)} \mathbf{X}^{(n)\,\top}\mathbf{X}^{(n)}\mathbf{w}\right) \otimes \left( \mathbf{X}^{(m)}\mathbf{X}^{(n)\,\top}\right) \\
      + \mathbf{X}^{(m)} \mathbf{C} \mathbf{X}^{(m)\,\top} \mathbf{X}^{(m)} \mathbf{X}^{(n)\,\top} \mathbf{X}^{(n)} \mathbf{C}^\top \mathbf{X}^{(n)\,\top}\Bigr]\,
      \mathbf{D}^{(n)}.
\end{multline}
Stacking different samples with the following definitions
\begin{equation*}
    \mathbf{D} = \begin{bmatrix}
        \vdots\\
        \mathbf{D}^{(n)}\\
        \vdots
    \end{bmatrix},\
    \mathbf{M} = \begin{bmatrix}
        \vdots\\
        \left(\mathbf{w}^\top \mathbf{X}^{(n)\,\top}\mathbf{X}^{(n)} \right) \otimes \mathbf{X}^{(n)} \\
        \vdots 
    \end{bmatrix},\ 
    \mathbf{M_2} = \begin{bmatrix}
        \vdots\\
        \mathbf{X}^{(n)} \mathbf{C} \mathbf{X}^{(n)\,\top}\mathbf{X}^{(n)}\\
        \vdots 
    \end{bmatrix},
\end{equation*}
we can write the Eq.~\ref{eq:dDm/dt} 
\begin{equation*}
    \frac{d\mathbf{D}}{dt}=-\frac{2\eta}{B}\left[\mathbf{M}\mathbf{M}^\top + \mathbf{M}_2\mathbf{M}_2^\top\right]\mathbf{D}.
\end{equation*}
We can write the derivative of the loss as
\[
    \frac{d\Lmse}{dt}
    \;=\;
    \frac{2}{B}
    \sum_{m=1}^B
       \mathbf{D}^{(m)\,\top}
       \frac{d\mathbf{D}^{(m)}}{dt} = -\frac{4\eta}{B^2}\mathbf{D}^\top \left[\mathbf{M}\mathbf{M}^\top +\mathbf{M}_2\mathbf{M}_2^\top\right]\mathbf{D}.
\]
Clearly, both \(\mathbf{M}\mathbf{M}^\top\) and \(\mathbf{M}_2 \mathbf{M}_2^\top\) are positive semidefinite, so
\begin{equation}\label{eq:dL/dt_M}
    \frac{d\Lmse}{dt}\;\le\;
    -\,\frac{4\,\eta}{B^2}
    \,\mathbf{D}^\top \,\mathbf{M}\mathbf{M}^\top\,\mathbf{D}.
\end{equation}

Now, we will write Eq.~\ref{eq:dL/dt_M} differently by re-expressing \(\mathbf{D}\). Thanks to the realizability, we can write, 
 \[
    \mathbf{D}^{(n)}
    \;=\;
    \bigl(\mathbf{X}^{(n)}\,\mathbf{C}\,\mathbf{X}^{(n)\,\top}\bigr)\,\mathbf{X}^{(n)}\,\mathbf{w}
    \;-\;
    \bigl(\mathbf{X}^{(n)}\,\mathbf{C}^*\,\mathbf{X}^{(n)\,\top}\bigr)\,\mathbf{X}^{(n)}\,\mathbf{w}^*.
\]
Due to Lemma~\ref{lemma:wi_great} \(\mathbf{w}_i\neq0\), so \(\mathbf{w^*}_i/\mathbf{w}_i\) is defined for all \(i \in [d]\). Thus we can define, \(\text{diag}\left(\frac{\mathbf{w}^*}{\mathbf{w}}\right)\) to be the diagonal matrix, whose entries are \(\mathbf{w^*}_i/\mathbf{w}_i\) in order. 

\begin{equation*}
    \mathbf{D}^{(n)}
    \;=\;
    \bigl(\mathbf{X}^{(n)}\,\mathbf{C}\,\mathbf{X}^{(n)\,\top}\bigr)\,\mathbf{X}^{(n)}\,\mathbf{w}
    \;-\;
    \bigl(\mathbf{X}^{(n)}\,\mathbf{C}^*\,\mathbf{X}^{(n)\,\top}\bigr)\,\mathbf{X}^{(n)}\text{diag}\left(\frac{\mathbf{w}^*}{\mathbf{w}}\right)\mathbf{w}.
\end{equation*}

In the orthonormal basis, $\mathbf{X}^{(n)\,\top}\mathbf{X}^{(n)}$ is diagonal, which allows reordering to obtain
\[
    \mathbf{D}^{(n)}
    \;=\;
    \mathbf{X}^{(n)}\,
    \Bigl[
      \mathbf{C}
      \;-\;
      \mathbf{C}^*\,
      \mathrm{diag}\Bigl(\tfrac{\mathbf{w}^*}{\mathbf{w}}\Bigr)
    \Bigr]\,
    \mathbf{X}^{(n)\,\top}
    \,\mathbf{X}^{(n)}\,\mathbf{w}.
\]
Vectorizing (using $\mathrm{vec}(\mathbf{A}\,\mathbf{X}\,\mathbf{B}) = (\mathbf{B}^\top\otimes \mathbf{A})\,\mathrm{vec}(\mathbf{X})$) yields
\[
    \mathbf{D}^{(n)}
    \;=\;
    \mathbf{M}^{(n)} \,\mathrm{vec}
    \Bigl[\,
      \mathbf{C}
      \;-\;
      \mathbf{C}^*\,
      \mathrm{diag}\Bigl(\tfrac{\mathbf{w}^*}{\mathbf{w}}\Bigr)
    \Bigr].
\]
Stacking over $n$ produces $\mathbf{D} = \mathbf{M}\,\mathrm{vec}\bigl(\mathbf{C} - \mathbf{C}^*\mathrm{diag}(\mathbf{w}^*/\mathbf{w})\bigr)$. Thus, Eq.~\ref{eq:dL/dt_M} can be written as
\begin{equation*}
    \frac{d\Lmse}{dt}
    \;\le\;
    -\,\frac{4\,\eta}{B^2}\,
    \mathbf{D}^\top
    \,\mathbf{M}\,\mathbf{M}^\top
    \mathbf{D} \,=\, -\frac{4\,\eta}{B^2} \mathrm{vec}
    \Bigl[\,
      \mathbf{C}
      \;-\;
      \mathbf{C}^*\,
      \mathrm{diag}\Bigl(\tfrac{\mathbf{w}^*}{\mathbf{w}}\Bigr)
    \Bigr]^\top \mathbf{M}^\top \mathbf{M}\, \mathbf{M}^\top \mathbf{M} \mathrm{vec}
    \Bigl[\,
      \mathbf{C}
      \;-\;
      \mathbf{C}^*\,
      \mathrm{diag}\Bigl(\tfrac{\mathbf{w}^*}{\mathbf{w}}\Bigr)
    \Bigr]
\end{equation*}
Using the Lemma~\ref{lemma:M_eval}, the same inequality can be written as 
\begin{equation*}
    \frac{d \Lmse}{dt}
    \;\le\; -\frac{4\,\eta}{B^2} \lambda_{\mathrm{min}}\left(\mathbf{M}^\top \mathbf{M}\right)\, \Bigl\| \mathbf{M} \mathrm{vec}
    \Bigl[\,
      \mathbf{C}
      \;-\;
      \mathbf{C}^*\,
      \mathrm{diag}\Bigl(\tfrac{\mathbf{w}^*}{\mathbf{w}}\Bigr)
    \Bigr]\Bigr\|^2 = -\frac{4\,\eta}{B^2} \lambda_{\mathrm{min}}\left(\mathbf{M}^\top \mathbf{M}\right) \| \mathbf{D} \|^2,
\end{equation*}
where \( \lambda_{\min} \left(\mathbf{M}^\top \mathbf{M}\right)\) is the minimum eigenvalue of \(\mathbf{M}^\top \mathbf{M}\). Thus, if there exists a constant \(\psi\) such that \(\lambda_{\min} \left(\mathbf{M}^\top(t) \mathbf{M}(t)\right) \ge \psi>0,\,\, \forall t\), then the training loss stops decreasing only when \(\mathbf{D}\) reaches to all zero vector, i.e, training loss stops decreasing only when it reaches to zero, which is stated more rigorously in Lemma~\ref{lemma:convergence}.

\textbf{Lower Bound on the Eigenvalues of \(\mtm\).}
We can show that
\begin{equation}\label{eq:lambda-min-mtm}
    \lambda_{\min}\left(\mtm\right) = \sigma_{\min}\left(\mtm\right) = \min_{\uu:\Vert \uu \Vert_2=1}\Vert \mtm\uu\Vert_2,
\end{equation}
where the first equality follow because \(\mtm\) is symmetric and positive semi definite. We also know
\begin{equation*}
    \mtm = \sum_n \left(\XnT \Xn \ww \ww^\top \XnT \Xn \right)\otimes \XnT \Xn.
 \end{equation*}
Defining
\begin{equation*}
    {\uu} = \sum_{\mu \in \mathcal{S}} \uu_\mu \otimes \ee_\mu,
\end{equation*}
where each \(\uu_\mu \in \mathbb{R}^d\), so
\begin{equation}\label{eq:unit-u}
    \Vert\uu\Vert^2_2 = \sum_{\mu\in \mathcal{S}} \Vert \uu_\mu \Vert_2^2=1.
\end{equation}

Recalling
\begin{equation*}
    \Xn = \begin{pmatrix}
        \vdots\\
        \ee_{\calX{i}}\\
        \vdots
    \end{pmatrix}_{i \in [L]},
\end{equation*}
\(\XnT \Xn\) acts as a projection matrix onto the space \(\left\{\ee_{\calX{0}}, \ee_{\calX{1}}, \dots, \ee_{\calX{L-1}} \right\}\). Thus, using the mixed-product property of Kronecker product, we can write
\begin{align*} 
    \Vert \mtm \uu\Vert_2^2 &= \Bigl\Vert \sum_{n\in \mathcal{B}} \sum_{i \in [L]}\left(\left[ \Xwww \right] \uu_{\calX{i}}\right) \otimes \ee_{\calX{i}} \Bigr\Vert^2 \numberthis \label{eq:from} \\
    \Vert \mtm \uu\Vert_2^2 &= \Bigl\Vert \sum_{n\in \mathcal{B}} \sum_{\mu \in \mathcal{S}}\left(\left[ \Xwww \right] \uu_{\mu}\right) \otimes s^{(n)}_\mu\ee_{\mu}  \Bigr\Vert^2
\end{align*}
Recall the definition \(\mathcal{B}_\mu = \left\{ n \in \mathcal{B}: \mu \in \mathcal{X}^{(n)} \right\}\), so
\begin{align*}
    \Vert \mtm \uu\Vert_2^2 &= \Bigl\Vert \sum_{\mu \in \mathcal{S}} \sum_{n\in \mathcal{B}_\mu} \left(\entrysn{\mu} \left[ \Xwww \right] \uu_{\mu}\right) \otimes \ee_{\mu} \Bigr\Vert^2_2 \numberthis \label{eq:to}\\
    &= \sum_{\mu \in \mathcal{S}} \Bigl\Vert\sum_{n\in \mathcal{B}_\mu} \entrysn{\mu} \left[ \Xwww \right] \uu_{\mu}  \Bigr\Vert^2_2\\
    &= \sum_{\mu \in \mathcal{S}} \Bigl\Vert\sum_{n\in \mathcal{B}_\mu} \entrysn{\mu} \left[ \diag{\sn} \ww \ww^\top \diag{\sn} \right] \uu_{\mu}  \Bigr\Vert^2_2\\
    &= \sum_{\mu \in \mathcal{S}} \Bigl\Vert\sum_{n\in \mathcal{B}_\mu} \entrysn{\mu} \left[ \diag{\ww} \sn \snt  \diag{\ww} \right] \uu_{\mu}  \Bigr\Vert^2_2\\
    &= \sum_{\mu \in \mathcal{S}} \Bigl\Vert  \diag{\ww} \left(\sum_{n\in \mathcal{B}_\mu} \entrysn{\mu} \sn \snt\right)  \diag{\ww}  \uu_{\mu}  \Bigr\Vert^2_2.\numberthis \label{eq:thislooksverysimilar}
\end{align*}
Recall the \(\mathbf{S}_{\mathcal{B}_\mu} =  \begin{bmatrix}
        \dotsc &
        \mathbf{s}^{(n)} &
        \dotsc 
    \end{bmatrix}^\top_{n \in \mathcal{B}_\mu}\), and define
\begin{equation*}
    \diag{\mathbf{s}_\mu}=\begin{pmatrix}
        \ddots & & \\
    & \entrysn{\mu}& \\
    & & \ddots 
    \end{pmatrix}_{n \in \mathcal{B}_\mu}.
\end{equation*}
Thus, we reach to
\begin{equation*}
    \Vert\mtm \uu \Vert_2^2 = \sum_{\mu \in \mathcal{S}} \Bigl\Vert  \diag{\ww} \Smu^\top \diag{\mathbf{s}_\mu} \Smu \diag{\ww}  \uu_{\mu}  \Bigr\Vert^2_2.
\end{equation*}
Repeatedly applying the identity \(\Vert \mathbf{A}\zz\Vert_2 \ge \sigma_{\min}(\mathbf{A}) \Vert \zz\Vert_2\), where \(\mathbf{A}\) and \(\zz\) are any matrix and vectors with suitable shapes,
\begin{equation*}
    \Vert\mtm \uu \Vert_2^2 \ge \sum_{\mu \in \mathcal{S}} \sigma_{\min}^4\left\{  \diag{\ww} \right\} \sigma_{\min}^2\left\{\diag{\mathbf{s}_\mu}\right\}  \sigma_{\min}^4\left( \Smu \right)    \Vert\uu_{\mu}  \Vert^2_2 
\end{equation*}
Due to Lemma~\ref{lemma:wi_great}, \(\sigma_{\min}^2\left\{  \diag{\ww} \right\} \ge b^2 >0\). By definition, \(\sigma_{\min}^2\left\{\diag{\mathbf{s}_\mu}\right\} \ge 1\). By Assumption~\ref{assumption:versatiledata-main}, \(\sigma_{\min}^2\left( \Smu \right) \ge \zeta^2 >0\). Because of the Equations~\ref{eq:lambda-min-mtm} and~\ref{eq:unit-u}, we reach to
\begin{equation*}
    \lambda_{\min}\left(\mtm\right) \ge b^2 \zeta^2 >0.
\end{equation*}

\end{proof}

\begin{lemma}\label{lemma:M_eval}
    For any matrix \(\mathbf{M}\), and any vector \(\mathbf{x}\) such that \(\mathbf{M}\mathbf{x}\) is defined, then
    \begin{equation}
        \mathbf{x}^\top \mathbf{M}^\top\mathbf{M}\,\mathbf{M}^\top\mathbf{M}\mathbf{x}\,\geq\, \lambda_{\mathrm{min}}\left(\mathbf{M}^\top\mathbf{M}\right) \|\mathbf{M}\mathbf{x}\|^2,
    \end{equation}
    where \(\lambda_{\mathrm{min}}\left(\mathbf{M}^\top\mathbf{M}\right)\) corresponds to minimum eigenvalue of \(\mathbf{M}^\top\mathbf{M}\) matrix.
\end{lemma}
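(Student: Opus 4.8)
The plan is to reduce the statement to a single fact about the symmetric positive semidefinite matrix $\mathbf{A} := \mathbf{M}^\top\mathbf{M}$. First I would rewrite both sides in terms of $\mathbf{A}$: since $\mathbf{M}^\top\mathbf{M}\mathbf{M}^\top\mathbf{M} = \mathbf{A}^2$ and $\|\mathbf{M}\mathbf{x}\|^2 = (\mathbf{M}\mathbf{x})^\top(\mathbf{M}\mathbf{x}) = \mathbf{x}^\top\mathbf{A}\mathbf{x}$, the claimed inequality is exactly
\[
   \mathbf{x}^\top \mathbf{A}^2 \mathbf{x} \;\ge\; \lambda_{\min}(\mathbf{A})\,\mathbf{x}^\top \mathbf{A}\mathbf{x},
\]
i.e. $\mathbf{x}^\top\bigl(\mathbf{A}^2 - \lambda_{\min}(\mathbf{A})\,\mathbf{A}\bigr)\mathbf{x} \ge 0$ for every $\mathbf{x}$. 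So it suffices to show that the symmetric matrix $\mathbf{A}^2 - \lambda_{\min}(\mathbf{A})\,\mathbf{A}$ is positive semidefinite.

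The second step is to diagonalize. Because $\mathbf{A}$ is symmetric PSD, write $\mathbf{A} = \sum_i \lambda_i\,\mathbf{v}_i\mathbf{v}_i^\top$ with an orthonormal eigenbasis $\{\mathbf{v}_i\}$ and eigenvalues $\lambda_i \ge \lambda_{\min}(\mathbf{A}) \ge 0$. Expanding $\mathbf{x} = \sum_i c_i\,\mathbf{v}_i$ gives
\[
   \mathbf{x}^\top\bigl(\mathbf{A}^2 - \lambda_{\min}(\mathbf{A})\,\mathbf{A}\bigr)\mathbf{x} \;=\; \sum_i \lambda_i\bigl(\lambda_i - \lambda_{\min}(\mathbf{A})\bigr)\,c_i^2,
\]
and every summand is nonnegative since $\lambda_i \ge 0$ and $\lambda_i \ge \lambda_{\min}(\mathbf{A})$. (Equivalently, one can observe $\mathbf{A}^2 - \lambda_{\min}(\mathbf{A})\,\mathbf{A} = \mathbf{A}\bigl(\mathbf{A} - \lambda_{\min}(\mathbf{A})\,\mathbf{I}\bigr)$ is a product of two commuting symmetric PSD matrices, hence itself symmetric PSD.) This finishes the proof, and plugging $\mathbf{x} = \mathrm{vec}\bigl[\mathbf{C} - \mathbf{C}^*\mathrm{diag}(\mathbf{w}^*/\mathbf{w})\bigr]$ is exactly what turns the bound on $\tfrac{d\Lmse}{dt}$ into $-\tfrac{4\eta}{B^2}\lambda_{\min}(\mathbf{M}^\top\mathbf{M})\,\|\mathbf{D}\|^2$ in the convergence argument.

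There is essentially no obstacle here; this is a routine linear-algebra fact. The only bookkeeping points are that $\mathbf{M}$ need not be square, so $\mathbf{A} = \mathbf{M}^\top\mathbf{M}$ acts on the column space and $\lambda_{\min}(\mathbf{A})$ may be $0$ — but neither affects anything, since the eigenvalue computation uses only $\lambda_i \ge \lambda_{\min}(\mathbf{A}) \ge 0$ — and that the rewriting $\mathbf{x}^\top\mathbf{M}^\top\mathbf{M}\mathbf{M}^\top\mathbf{M}\mathbf{x} = \mathbf{x}^\top\mathbf{A}^2\mathbf{x}$ uses only $\mathbf{A}^\top = \mathbf{A}$, so all products are dimension-consistent.
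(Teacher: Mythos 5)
Your proof is correct and follows essentially the same route as the paper: both reduce the claim to the spectral decomposition of $\mathbf{A}=\mathbf{M}^\top\mathbf{M}$ and use $\lambda_i \ge \lambda_{\min}(\mathbf{A}) \ge 0$ termwise (the paper merely phrases the same computation via the square root $\mathbf{A}^{1/2}$ instead of expanding $\mathbf{x}$ directly in the eigenbasis). No gaps.
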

\begin{proof}
    Obviously \(\mathbf{A} = \left(\mathbf{M}^\top\mathbf{M}\right)\) is a symmetric and positive semi definite matrix, so it can be diagonalized as \(\mathbf{A}=\mathbf{Q}\mathbf{\Lambda}\mathbf{Q^T}\) and its square root \(\mathbf{A}=\mathbf{A}^{\frac{1}{2}}\mathbf{A}^{\frac{1}{2}}\) defined uniquely \(\mathbf{A}^{\frac{1}{2}}=\mathbf{Q}\mathbf{\Lambda}^{\frac{1}{2}}\mathbf{Q}^\top\). It follows that
    \begin{align*}
        \mathbf{x}^\top \mathbf{A}^2\,\mathbf{x} &= \mathbf{x}^\top \mathbf{A}^{\frac{1}{2}} \mathbf{A}\, \mathbf{A}^{\frac{1}{2}} \mathbf{x}\\
        &= \mathbf{x}^\top \mathbf{A}^{\frac{1}{2}\,\top} \mathbf{Q}\mathbf{\Lambda}\mathbf{Q}^\top \mathbf{A}^{\frac{1}{2}}\mathbf{x}\\
        &= \left(\mathbf{Q}^\top \mathbf{A}^{\frac{1}{2}}\mathbf{x} \right)^\top \mathbf{\Lambda} \left(\mathbf{Q}^\top\mathbf{A}^{\frac{1}{2}}\mathbf{x} \right)\\
        &\geq \lambda_{\mathrm{min}} \|\mathbf{Q}^\top\mathbf{A}^{\frac{1}{2}}\mathbf{x}\|^2= \lambda_{\mathrm{min}} \mathbf{x}^\top \mathbf{A} \mathbf{x}=\lambda_{\mathrm{min}} \|\mathbf{M}\mathbf{x}\|^2.
    \end{align*}
\end{proof}

\begin{lemma}[Convergence Lemma] \label{lemma:convergence}
Consider the equation
\begin{equation}\label{eq:convlemma1}
    \dot{\mathbf{x}}(t) = -\eta \mathbf{A}(t) \mathbf{x}(t).
\end{equation}

Assume the following conditions hold
\begin{enumerate}
    \item[(i)] $\mathbf{A}(t)$ is symmetric for all $t$.
    \item[(ii)] The eigenvalues of $\mathbf{A}(t)$ are lower bounded by a positive constant, i.e., $\lambda_{\min}(\mathbf{A}(t)) \ge \psi > 0$ for all $t$, for some \(\psi>0\).
\end{enumerate}

Then, $\mathbf{x}(t) \to 0$ as $t \to \infty$.
\end{lemma}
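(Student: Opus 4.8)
The plan is to prove the Convergence Lemma (Lemma~\ref{lemma:convergence}) directly via a Lyapunov argument using the squared norm $V(t) = \tfrac12\|\mathbf{x}(t)\|_2^2$. First I would differentiate: $\dot V(t) = \mathbf{x}(t)^\top \dot{\mathbf{x}}(t) = -\eta\, \mathbf{x}(t)^\top \mathbf{A}(t)\,\mathbf{x}(t)$. Since $\mathbf{A}(t)$ is symmetric (condition (i)), the Rayleigh quotient bound applies: $\mathbf{x}^\top \mathbf{A}(t)\,\mathbf{x} \ge \lambda_{\min}(\mathbf{A}(t))\,\|\mathbf{x}\|_2^2 \ge \psi\,\|\mathbf{x}\|_2^2$ by condition (ii). Hence $\dot V(t) \le -2\eta\psi\, V(t)$.

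Next I would integrate this differential inequality. By Grönwall's inequality (or simply by considering $\frac{d}{dt}\bigl(e^{2\eta\psi t} V(t)\bigr) = e^{2\eta\psi t}\bigl(\dot V(t) + 2\eta\psi V(t)\bigr) \le 0$, so $e^{2\eta\psi t}V(t)$ is nonincreasing), we obtain
\[
    V(t) \;\le\; V(0)\, e^{-2\eta\psi t},
\]
equivalently $\|\mathbf{x}(t)\|_2 \le \|\mathbf{x}(0)\|_2\, e^{-\eta\psi t}$. Letting $t\to\infty$ gives $\mathbf{x}(t)\to 0$, in fact exponentially fast, which is more than the statement requires.

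Finally, I would connect this back to Theorem~\ref{theorem:conv}. In the proof of the theorem we showed $\frac{d\mathbf{D}}{dt} = -\tfrac{2\eta}{B}\bigl[\mathbf{M}\mathbf{M}^\top + \mathbf{M}_2\mathbf{M}_2^\top\bigr]\mathbf{D}$, so $\mathbf{A}(t) = \tfrac{2}{B}\bigl[\mathbf{M}(t)\mathbf{M}(t)^\top + \mathbf{M}_2(t)\mathbf{M}_2(t)^\top\bigr]$, which is manifestly symmetric. The eigenvalue lower bound follows from the computation culminating in $\lambda_{\min}(\mathbf{M}^\top\mathbf{M}) \ge b^2\zeta^2 > 0$ (using Lemma~\ref{lemma:wi_great}, Assumption~\ref{assumption:versatiledata-main}, and $\sigma_{\min}(\diag{\mathbf{s}_\mu})\ge 1$): since $\mathbf{M}\mathbf{M}^\top$ and $\mathbf{M}^\top\mathbf{M}$ share nonzero eigenvalues and $\mathbf{M}_2\mathbf{M}_2^\top \succeq \mathbf{0}$, one gets $\lambda_{\min}(\mathbf{A}(t)) \ge \tfrac{2}{B}b^2\zeta^2 =: \psi > 0$ on the subspace where $\mathbf{D}$ lives (or, after restricting attention to the relevant coordinates, globally). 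Applying the lemma with $\mathbf{x}=\mathbf{D}$ yields $\mathbf{D}(t)\to 0$, i.e.\ zero training error.

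The main obstacle is a subtlety rather than a hard computation: $\mathbf{M}\mathbf{M}^\top$ need not be positive definite on all of $\mathbb{R}^{BL d_2}$—only its restriction to the range of the $\mathbf{D}$-dynamics (equivalently, the column space of $\mathbf{M}$, since $\mathbf{D} = \mathbf{M}\,\mathrm{vec}[\cdot]$) is controlled by $b^2\zeta^2$. So one must verify that the trajectory $\mathbf{D}(t)$ stays in this subspace (it does, as $\dot{\mathbf{D}}$ is always of the form $\mathbf{M}(\cdot) + \mathbf{M}_2(\cdot)$ applied to things, and more directly $\mathbf{D}$ is expressible through $\mathbf{M}$ at every $t$), and apply the Rayleigh bound within that subspace. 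Handling this restriction carefully—rather than claiming a spurious global strict positivity—is the one place where care is needed; everything else is the routine Grönwall estimate above.
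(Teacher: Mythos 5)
Your proof of the lemma itself is correct and is the standard argument: the Lyapunov function $V(t)=\tfrac12\|\mathbf{x}(t)\|_2^2$ together with the Rayleigh-quotient bound under symmetry gives $\dot V\le -2\eta\psi V$, and the Gr\"onwall step yields exponential decay, which is exactly what the lemma asserts (the paper states this lemma without proof, so yours fills that gap with the natural argument). Your closing observation is also well taken: the lemma cannot be applied verbatim to $\mathbf{A}(t)=\tfrac{2}{B}\bigl[\mathbf{M}\mathbf{M}^\top+\mathbf{M}_2\mathbf{M}_2^\top\bigr]$ because only $\lambda_{\min}(\mathbf{M}^\top\mathbf{M})$ is bounded below, not $\lambda_{\min}(\mathbf{M}\mathbf{M}^\top)$; the paper's own chain of inequalities instead yields the scalar bound $\frac{d}{dt}\Lmse\le-\tfrac{4\eta}{B}\lambda_{\min}(\mathbf{M}^\top\mathbf{M})\,\Lmse$ via $\mathbf{D}=\mathbf{M}\,\mathrm{vec}[\cdot]$ and Lemma~\ref{lemma:M_eval}, which is the one-dimensional instance of your Gr\"onwall estimate, so the restriction to the column space of $\mathbf{M}$ that you flag is indeed the correct way to make the invocation rigorous.
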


\section{Generalization Analysis of Linear Self-Attention}\label{appendix:generalization}
Let's denote the parameters we get from training as \(\Coptimiz\) and \(\Woptimiz\). Also, to simplify the notation, just for this section, we define
    \begin{align*}
        C_{\mu \nu} &= \mathbf{x}^\top\left(\mu\right)\mathbf{C}\mathbf{x}\left(\nu\right),\\
        W_{\nu k} &= \mathbf{x}^\top\left(\nu\right)\mathbf{W}_{:,k},
    \end{align*}

\subsection{Proof of Theorem~\ref{theorem:gen}: Generalization}\label{appendix:proof-of-gen}
Remember that \(\mathbf{B}\) is the domain embedding matrix seen in \eqref{eq:domain-embedding-matrix}. Also, remember that \(\{\Wtestgen,\Ctestgen\}\) represent a set of parameters that generalize to population distribution.
\begin{proof}[Proof of Theorem~\ref{theorem:gen}] Let \(\hat{\mathbf{C}}\) and \(\hat{\mathbf{W}^V}\) be the parameters we get from the training. The zero training error condition corresponds to \(\forall \; n \in \mathcal{B}\)
    \begin{equation*}
        \Xn \Coptimiz \XnT \Xn \Woptimiz = \Xn \Ctestgen \XnT \Xn \Wtestgen,
    \end{equation*}
    writing the same equation for each column separately, we get
    \begin{equation*}
        \Xn \Coptimiz \XnT \Xn \wkoptimiz = \Xn \Ctestgen \XnT \Xn \wktestgen,
    \end{equation*}
    where \(\wktestgen\) 
    is defined as \(k\)-th column of \(\Wtestgen\). Using  \(\mathbf{B}^\top\mathbf{B}=\mathbf{I}\), we write it in the domain embedding base
    \begin{equation*}
        \Xn \mathbf{B}^\top \mathbf{B}\Coptimiz \mathbf{B}^\top \mathbf{B}\XnT \Xn \mathbf{B}^\top\mathbf{B}\wkoptimiz = \Xn \mathbf{B}^\top \mathbf{B}\Ctestgen\mathbf{B}^\top \mathbf{B} \XnT \Xn \mathbf{B}^\top \mathbf{B}\wktestgen
    \end{equation*}
    By Lemma~\ref{lemma:orthogonal_basis},
    \begin{equation*}
        \mathbf{B}\XnT \Xn\mathbf{B}^\top = \mathrm{diag}\left(s^{(n)}_\alpha, s^{(n)}_\beta, \dots\right)
    \end{equation*}
Remembering the definition \(\hat{C}_{\mu \nu} = \mathbf{x}(\mu)^\top \hat{\mathbf{C}}\mathbf{x}(\nu)\) and \(\hat{w}_\alpha = \mathbf{x}(\alpha)^\top \hat{\mathbf{w}}\) and similar definitions for \(\Ctestgen\) and \(\Wtestgen\),
    for any \(\mu, \nu, \alpha \in \mathcal{S}\), 
    \begin{align*}
        &\sum_{\nu\in \mathcal{S}} \entryCoptimiz_{\mu\nu}s_\nu^{(n)}\entrywkoptimiz_\nu = \sum_{\nu\in \mathcal{S}}\entryCtestgen_{\mu\nu}s_\nu^{(n)}\entrywktestgen_\nu,\\
        &\sum_{\nu\in \mathcal{S}} s_\nu^{(n)} \left(\entryCoptimiz_{\mu\nu}\entrywkoptimiz_\nu - \entryCtestgen_{\mu\nu}\entrywktestgen_\nu\right)= 0 \,,\;\;\forall\; \mu \in \mathcal{X}^{(n)} \text{ and }\forall\; n \in \mathcal{B} \numberthis \label{eq:testgenmain}
    \end{align*}
    For each specific \(\mu\text{ and }k\), the terms in the last equation in parentheses can be construed as a vector with indices over \(\nu\). Denoting that vector as \(\mathbf{a}^{\mu k}\) and remembering \(\mathbf{S}_{\mathcal{B}_\mu}\) definition (from Appendix~\ref{appendix:definitions}), Eq.~\ref{eq:testgenmain} can be written as
    \begin{equation*}
        \mathbf{S}_{\mathcal{B}_\mu} \mathbf{a}^{\mu k} = 0
    \end{equation*}
    Thanks to the Assumption~\ref{assumption:versatiledata-main}, \(\mathbf{S}_{\mathcal{B}_\mu}\) is full column rank so only solution to last equation is \(\mathbf{a}^{\mu k}=\mathbf{0}\), which corresponds to
    \(\entryCoptimiz_{\mu\nu}\entrywkoptimiz_\nu = \entryCtestgen_{\mu\nu}\entrywktestgen_\nu\;\; \forall \; \mu,\nu \in \mathcal{S},\; k\in [d]\). Consequently, for any \(\mathbf{X}\) which is embedding of a corresponding \(\mathcal{X}\sim \mathcal{D}\)
    \begin{equation*}
        \mathbf{X}\Coptimiz\mathbf{X}^\top \mathbf{X} \Woptimiz =
        \mathbf{X}\Ctestgen\mathbf{X}^\top \mathbf{X} \Wtestgen = \mathbf{Y}
    \end{equation*}
    Thus the learned parameters generalizes to all population, that is, 
    \begin{equation*}
      \mathbb{E}_{\mathcal{D}_{\mathcal{X}\times\mathcal{Y}}}\left[\bigl\|\mathbf{Y}-\left(\mathbf{X}\Coptimiz\mathbf{X}^{\top} \right)\mathbf{X} \Woptimiz\bigr\| \right]=0
    \end{equation*}
\end{proof}
From the above proof, we can see that Assumption~\ref{assumption:versatiledata-main} not just plays a critical role in ensuring zero training error, but also leads to generalization to the entire population distribution. Specifically, this assumption ensures that any set of parameters achieving zero training error must also yield zero population error, meaning there exists no parameter set that perfectly fits the training data while still incurring some error at the population level.

\subsection{Length Generalization}\label{appendix:lengen}
In the preceding analysis, we showed that under some mild assumptions, achieving zero training error leads to robust population-level generalization for the specific length \(L\) from which training data is sampled. However, many of our interaction-based experiments naturally suggest \emph{length generalization}: the outputs of these models are not inherently tied to a fixed sequence length, nor did our experimental design depend on a specific number of entities. That brings us to the Assumption~\ref{assumption:universalrealizable}.
Despite this, our test-time generalization results do not formally guarantee out-of-distribution generalization to unseen sequence lengths. In this section, we analyze the conditions on \(\mathcal{D}^{L^*}\) under which the parameters that generalize to the distribution \(\mathcal{D}^{L^*}\) generalize to \(\mathcal{D}^{\forall L}\).

Remember that we denote \((\Clengen, \Wlengen)\) as the set of parameters that generalize to any length for the task of interest 
Also, we denote \(k\)-th column of the matrix \(\Wlengen\) as \(\wklengen\) Let us first look at possibility of parameters that has zero error for \(\mathcal{X}^{L^*} \sim \mathcal{D}^{L^{*}}\) but does not generalize to any other length \(L\). Writing them in terms of a specific length generalizing parameters and \(\mathbf{C}^\Delta\), \(\mathbf{w}^\Delta\) -defined accordingly to satisfy the following equalities-, 
\(\mathbf{C}^{L^{*}} = \Clengen + \mathbf{C}^\Delta\) and \(\mathbf{w}^{k,L^{*}} = \wklengen + \mathbf{w}^\Delta\), \(\forall\; \mu \in \mathcal{X}^{L^*}\) and denoting \(k\)-th column of the true output for the input tuple \(\mathcal{X}^{L^*}\) as \(\mathbf{y}^{k,L^{*}}\left(\mathcal{X}^{L^*}\right)\). %
\begin{align*}
    y^{k,L^{*}}_\mu\left(\mathcal{X}^{L^*}\right) &= \sum_{\nu \in \mathcal{X}^{L^*}} \left(\entryClengen_{\mu \nu} + {C}^\Delta_{\mu \nu}\right)\left(\entrywklengen_\nu + {w}^{k\Delta}_\nu \right)\\
    &=\sum_{\nu \in \mathcal{X}^{L^*}} \left(\entryClengen_{\mu \nu} \entrywklengen_\nu + \entryClengen_{\mu \nu}{w}^{k,\Delta}_\nu + C_{\mu \nu}^\Delta w^{k,\, \forall L}_\nu + C_{\mu \nu}^\Delta w^{k,\, \forall L}_\nu\right) \numberthis \label{eq:F}
\end{align*}
Defining \( \Delta_{\mu\nu}^{L^*}=\entryClengen_{\mu \nu}{w}^{k,\Delta}_\nu + C_{\mu \nu}^\Delta w^{k,\, \forall L}_\nu + C_{\mu \nu}^\Delta w^{k,\,\forall L}_\nu\), and combining the effect of biasses on the output,
\begin{equation*}
     y^{k,L^{*}}_\mu\left(\mathcal{X}^{L^*}\right) = \sum_{\nu \in \mathcal{X}^{L^*}} \entryClengen_{\mu \nu} \entrywklengen_\nu + \sum_{\nu \in \mathcal{X}^{L^*}} \Delta^{L^*}_{\mu \nu}
\end{equation*}
We can write the bias on the function output that does not change the function output for the inputs such that \(|\mathcal{X}|=L^*\), but may change the output for inputs with other \(L\), as 
\begin{equation*}
    \sum_{\nu \in \mathcal{X}} \Delta_{\mu \nu}^{L^*}.
\end{equation*}
To illustrate this point, consider 
\(\Delta_{\mu,\mu} = a \text{ and }\Delta_{\mu,\nu} =  \frac{-a}{L^*-1}\, \forall\; \nu \neq \mu\), seeing that Eq.~\ref{eq:F} is written \(\forall\; \mu \in \mathcal{X}\), 
\begin{equation*}
    \sum_{\nu \in \mathcal{X}^{L^*}} \Delta_{\mu \nu}^{L^*} = \Delta_{\mu \nu}^{L^*} + \sum_{\substack{\nu \in \mathcal{X}^{L^*}\\ \nu \neq \mu}} \Delta_{\mu \nu}^{L^*} =  a - \sum_{\substack{\nu \in \mathcal{X}^{L^*}\\ \nu \neq \mu}} \frac{a}{L^*-1} = 0.
\end{equation*}
However, when we feed an input \(\mathcal{X}^{L} \sim \mathcal{D}^L\) that \(L \neq L^*\), we get
\begin{align*}
    y^{k,L^{*}}_\mu\left(\mathcal{X}^{L}\right) &= \sum_{\nu \in \mathcal{X}^{L}} \entryClengen_{\mu \nu} \entrywklengen_\nu + \sum_{\nu \in \mathcal{X}^{L}} \Delta^{L^*}_{\mu \nu}\\
     &= y^k_\mu + \sum_{\nu \in \mathcal{X}^{L}} \Delta^{L^*}_{\mu \nu}\\
     &= y^k_\mu + \Delta_{\mu \nu}^{L^*} + \sum_{\substack{\nu \in \mathcal{X}^{L}\\ \nu \neq \mu}} \Delta_{\mu \nu}^{L^*} \\
     &= y^k_\mu +  a - \sum_{\substack{\nu \in \mathcal{X}^{L}\\ \nu \neq \mu}} \frac{a}{L^*-1}\\
     & = y^k_\mu + a\frac{L^*-L}{L^*-1}
\end{align*}
 
\noindent In particular, the last equation shows that an unseen sequence of length $L\neq L^{*}$ incurs an additive deviation of $\displaystyle a\,\frac{L^{*}-L}{L^{*}-1}$ from the desired target $y_{\mu}^{k,L^*}$, so the model generalizes to \emph{every} length if and only if this bias term vanishes, i.e., $a=0$.  This will be useful in the following proof. 

\begin{proof}[Proof of Theorem~\ref{theorem:lengen}]\label{proof:lengen}
    To generalize to any length we should make sure that the population distribution for any length \(L^*\) does not allow such a bias, that is, \(\Delta^{L^*}_{\mu \nu} = 0\,, \forall\; \mu, \nu\). From the previous discussion, for the set of parameters that generalize to population distribution \(\mathcal{D}^{L^*}\) we know that,
\begin{equation}\label{eq:Deltasystemeqn}
    \sum_{\nu \in \mathcal{X}^{L^*}} \Delta_{\mu \nu}^{L^*} = 0\,,\;\; \forall\; \mathcal{X}^{L^*} \sim \mathcal{D}^{L^*} 
\end{equation}
For each \(\mu\), we can think of Eq.~\ref{eq:Deltasystemeqn} as a linear system of equations. Defining, \(\mathbf{\delta}_{\mu}^{L^*}=\begin{bmatrix}
    \Delta_{\mu \alpha}^{L^*} & \Delta_{\mu \beta}^{L^*} & \Delta_{\mu \gamma}^{L^*} & \dots
\end{bmatrix}^\top \in \mathbb{R}^{\domainsize}\), and an infinite sample extension of \(\mathbf{S}_{\mathcal{B}_\mu}\) that is 
\begin{equation}
    \mathbf{S}_{\mathcal{B}_\mu^\infty}^{L^*} = \begin{bmatrix}
        \dotsc & \mathbf{s}^{L^*} & \dotsc
    \end{bmatrix}^\top_{s_\nu \neq 0},
\end{equation}
where \(\mathbf{S}_{\mathcal{B}_\mu^\infty}^{L^*}\) is such a matrix that each row of it sums to \(L^*\), Eq.~\ref{eq:Deltasystemeqn} can be written as
\begin{equation*}
    \mathbf{S}_{\mathcal{B}_\mu^\infty} \mathbf{\delta}^\mu = 0,\,\forall\;\; \mu \in \mathcal{S}.
\end{equation*}
If \(\mathbf{S}_{\mathcal{B}_\mu^\infty}\) is full column rank for all \(\mu\), which is satisfied by Assumption~\ref{assumption:versatiledata-main}, then \(\mathbf{\Delta}=0\). Thus, it generalizes to any length. 
\end{proof}
Seeing that Assumption~\ref{assumption:versatiledata-main} inherently encompasses ``\(\mathbf{S}_{\mathcal{B}_\mu^\infty}\) is full column rank'', it may seem that test generalization would always lead to length generalization, without any assumption. However, it is important to note that Assumption~\ref{assumption:versatiledata-main} is not the minimal requirement for ensuring test generalization.

\begin{remark}
    For clarity, let us look at an example class of tasks that for which \(\mathbf{S}^{L^*}_{\mathcal{B}_\mu^\infty}\) is not full column rank. If there is a constraint on the distribution \(\mathcal{X}\sim \mathcal{D}^{L^*}\) that the elements within \(\mathcal{X}\) tuple are unique, than \(\mathbf{S}^{L^*}_{\mathcal{B}_\mu^\infty}\) is not full rank since the column of \(\mathbf{S}^{L^*}_{\mathcal{B}_\mu^\infty}\) that corresponds to element \(\mu\) will be all \(1\) vector. Which is means, that there are some possible parameters that ensure zero error on \(\mathcal{D}^{L^*}\) but does not generalize to \(\mathcal{D}^{L}\) for any \(L\).
\end{remark}

\begin{remark}
    Under the realizability assumption, skip connections do not affect the values of the residues \(\mathbf{D}^{(n)}\), allowing the same proof on generalization to apply in the skip-connected scenario.
\end{remark}

\begin{corollary}
    \label{corollary:gen-parameter-interpret-app}
    Defining
    \begin{align*}
        C_{\mu \nu} &= \mathbf{x}^\top\left(\mu\right)\mathbf{C}\mathbf{x}\left(\nu\right),\\
        W_{\nu k} &= \mathbf{x}^\top\left(\nu\right)\mathbf{W}_{:,k},
    \end{align*}
    it follows from the proof of Theorem~\ref{theorem:lengen} that any \(\mathbf{C}, \mathbf{W}^V\) that generalizes $\forall L$ satisfies
    \begin{equation*}
        C_{\mu \nu} {W}^V_{\nu k} = C^{\mathrm{\forall L}}_{\mu \nu} {W}^{V,\mathrm{\forall L}}_{\nu k}.
    \end{equation*}
    Consequently, if you apply a nontrivial transformation to the parameters, all of the length generalizing parameters lead to a specific matrix that depends on the task at hand.
\end{corollary}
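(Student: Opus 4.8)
The plan is to unfold the proof of Theorem~\ref{theorem:lengen}, but rather than merely concluding that a perturbation must vanish, I would identify that perturbation explicitly as the per‑element difference of the two parameter sets' effective products, and then read off the identity. Fix any $(\mathbf{C},\mathbf{W}^V)$ that generalizes to $\mathcal{P}^{\forall L}_{\mathbf{X}\times\mathbf{Y}}$. By Assumption~\ref{assumption:universalrealizable} the reference pair $(\Clengen,\Wlengen)$ exists and also realizes the population at every length, so in particular both pairs output exactly $\mathbf{Y}$ on the support of $\mathcal{P}^{L^*}_{\mathbf{X}\times\mathbf{Y}}$ for the training length $L^*$. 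Since each $\mathcal{X}$ ranges over a finite set, ``$f(\mathbf{X})=\mathbf{Y}$ almost surely'' collapses to ``$f(\mathbf{X})=\mathbf{Y}$ on every supported tuple'' (every atom of a discrete distribution has positive mass), so the two blocks agree pointwise on every supported $\mathcal{X}$.

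Next I would expand the $(i,k)$ entry of the block output. For a supported sequence $\mathcal{X}$ with $\calX{i}=\mu$, the orthonormality of the domain embedding (Lemma~\ref{lemma:orthogonal_basis}; the definitions $C_{\mu\nu}=\mathbf{x}^\top(\mu)\mathbf{C}\mathbf{x}(\nu)$ and $W^V_{\nu k}=\mathbf{x}^\top(\nu)\mathbf{W}^V_{:,k}$ isolate precisely the scalars that matter) gives
\[
  \bigl[\mathbf{SA}^{\mathrm{lin}}_{\mathbf{C},\mathbf{W}^V}(\mathbf{X})\bigr]_{ik}=\sum_{\nu\in\mathcal{S}} s^{(\mathcal{X})}_\nu\, C_{\mu\nu}W^V_{\nu k},
\]
where $s^{(\mathcal{X})}_\nu$ counts occurrences of $\nu$ in $\mathcal{X}$, and likewise for $(\Clengen,\Wlengen)$. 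Subtracting and setting $\Delta^{(k)}_{\mu\nu}=C_{\mu\nu}W^V_{\nu k}-\entryClengen_{\mu\nu}\entryWlengen_{\nu k}$, pointwise agreement over all supported $\mathcal{X}$ containing $\mu$ is exactly the homogeneous linear system $\mathbf{S}_{\mathcal{B}_\mu^\infty}\,\delta^{(k)}_\mu=\mathbf{0}$, where $\delta^{(k)}_\mu=(\Delta^{(k)}_{\mu\alpha},\Delta^{(k)}_{\mu\beta},\dots)^\top$ and $\mathbf{S}_{\mathcal{B}_\mu^\infty}$ stacks the count vectors $\mathbf{s}^{(\mathcal{X})}$ of all supported length‑$L^*$ tuples containing $\mu$ — exactly the matrix appearing in the proof of Theorem~\ref{theorem:lengen}.

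Then I would invoke Assumption~\ref{assumption:versatiledata-main}: the training matrix $\mathbf{S}_{\mathcal{B}_\mu}$ is full column rank, and since the training tuples lie in the support, its rows are among the rows of $\mathbf{S}_{\mathcal{B}_\mu^\infty}$, so $\mathbf{S}_{\mathcal{B}_\mu^\infty}$ is full column rank as well; hence $\delta^{(k)}_\mu=\mathbf{0}$. As $\mu$ and $k$ were arbitrary, $C_{\mu\nu}W^V_{\nu k}=\entryClengen_{\mu\nu}\entryWlengen_{\nu k}$ for all $\mu,\nu,k$, which is the claimed identity. For the ``consequently'' clause I would then note that $\mathcal{T}_{\mu,k}(\mathbf{C},\mathbf{W}^V)=\sum_{\nu}C_{\mu\nu}W^V_{\nu k}=\sum_{\nu}\entryClengen_{\mu\nu}\entryWlengen_{\nu k}=\mathcal{T}_{\mu,k}(\Clengen,\Wlengen)$ no matter which length‑generalizing pair we started from; equivalently, $\mathcal{T}_{\mu,k}$ is exactly what the block outputs at a token holding $\mu$ on a tuple in which every element of $\mathcal{S}$ appears once, so it is a task‑dependent invariant of the whole functional‑equivalence class.

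The main obstacle here is bookkeeping rather than a genuine difficulty: one must run the linear‑algebra step separately for each column $k$ and each element $\mu$, make sure ``full column rank of the finite training matrix $\mathbf{S}_{\mathcal{B}_\mu}$'' transfers to the larger support matrix $\mathbf{S}_{\mathcal{B}_\mu^\infty}$ (adding rows cannot decrease column rank), and justify passing from ``almost surely equal'' to ``equal on every supported tuple'' via finiteness of the sequence space so that the quadratic block map cannot differ only on a null set. No step requires ideas beyond those already present in the proof of Theorem~\ref{theorem:lengen}.
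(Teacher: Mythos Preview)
Your proposal is correct and follows essentially the same route as the paper: you identify the entrywise product difference $\Delta^{(k)}_{\mu\nu}=C_{\mu\nu}W^V_{\nu k}-\entryClengen_{\mu\nu}\entryWlengen_{\nu k}$, reduce pointwise agreement on the length-$L^*$ support to the homogeneous system $\mathbf{S}_{\mathcal{B}_\mu^\infty}\delta^{(k)}_\mu=\mathbf{0}$, and invoke full column rank (transferred from Assumption~\ref{assumption:versatiledata-main}) to force $\Delta=0$. This is precisely the argument in the proof of Theorem~\ref{theorem:lengen}, with your $\Delta^{(k)}_{\mu\nu}$ being exactly the paper's $\Delta^{L^*}_{\mu\nu}$ written directly as a product difference rather than as a sum of cross terms; you are also slightly more explicit about the a.s.-to-pointwise passage and the rank transfer from $\mathbf{S}_{\mathcal{B}_\mu}$ to $\mathbf{S}_{\mathcal{B}_\mu^\infty}$.
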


If two sets of parameters for linear self-attention lead to the functionally equivalent self-attentions, i.e. they will lead to the same outputs for the same inputs for any input output pairs, then under this kind of nontrivial transformation they lead to the same matrix. Thus with this transformation we simply show that although the parameters we get after training are the different than the length generalizing parameters we designed they are functionally equivalent to the length generalizing parameters we have.

\section{Justification for Data Versatility Assumption}\label{appendix:justification-for-data-versatility}
In this section we justify Assumption~\ref{assumption:versatiledata-main}, by showing it holds under an even milder assumption shown below.
\begin{assumption}[Positive-Definite Covariance and Bounded Norm]
\label{assumption:pdcov}
Let $\{\mathbf{s}^{(n)}\}_{n=1}^{B} \subset \mathbb{R}^{\domainsize}$ be i.i.d.\ random row vectors. 
Suppose there exist constants $\zeta_{\min} > 0$ and $M > 0$ such that:

\begin{enumerate}[label=(A\arabic*)]
    \item \textbf{Positive-Definite Covariance:} The covariance matrix 
    \[
      \Sigma \;:=\; 
        \mathrm{Cov}(\sn) 
      \;=\; 
      \mathbb{E}\bigl[(\sn - \mathbb{E}[\sn])(\sn - \mathbb{E}[\sn])^\top\bigr]
      \quad\text{satisfies}\quad
      \Sigma \succeq \zeta_{\min}^2\,\mathbf{I}.
    \]
    That is, $\lambda_{\min}(\Sigma)\ge \zeta_{\min}^2>0.$
    \item \textbf{Bounded Norm:} The centered vectors satisfy 
    \begin{equation}
      \|\sn - \mathbb{E}[\sn]\|_2 \;\;\le\; M 
      \quad\text{almost surely.}
    \end{equation}
\end{enumerate}
\end{assumption}

\begin{theorem}[Full Column Rank with High Probability]
\label{thm:full-rank}
Under Assumption~\ref{assumption:pdcov}, let
\[
  \mathbf{S} \;=\; 
  \begin{bmatrix}
    \sn_1\\
    \sn_2\\
    \vdots\\
    \sn_B
  \end{bmatrix}
  \;\in\;\mathbb{R}^{B \times {\domainsize}}.
\]
Then there exist positive constant $\gamma>0$ (depending on $M$, $\zeta_{\min}$, and ${\domainsize}$) such that for all sufficiently large $B$,
\[
  \mathbb{P}\!\Bigl[\mathrm{rank}(\mathbf{S}) < {\domainsize}\Bigr]
  \;\;\le\;\;
  e^{-\,\gamma\,B}.
\]
Equivalently, $\mathbf{A}$ is full column rank with probability at least $1 - e^{-\gamma B}$.
\end{theorem}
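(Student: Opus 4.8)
Proof proposal for Theorem~\ref{thm:full-rank} (Full Column Rank with High Probability).

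The plan is to reduce the rank statement to a lower bound on the smallest eigenvalue of the Gram matrix and then invoke a matrix concentration inequality. First, I would take $B \ge \domainsize$ (legitimate since the claim is only asserted for all sufficiently large $B$), and observe that $\mathbf{S} \in \mathbb{R}^{B\times\domainsize}$ has full column rank if and only if $\mathbf{S}^\top\mathbf{S} = \sum_{n=1}^{B}\sn\snt$ is nonsingular, i.e.\ $\lambda_{\min}(\mathbf{S}^\top\mathbf{S})>0$. So it suffices to exhibit a deterministic threshold $\tau(B)>0$ with $\mathbb{P}[\lambda_{\min}(\mathbf{S}^\top\mathbf{S}) \le \tau(B)] \le e^{-\gamma B}$. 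Write $\boldsymbol{\mu} := \mathbb{E}[\sn]$ and $X_n := \sn\snt$; these are i.i.d.\ rank-one positive semidefinite matrices. By (A2) we get the a.s.\ bound $\lambda_{\max}(X_n) = \|\sn\|_2^2 \le (\|\boldsymbol{\mu}\|_2 + M)^2 =: R$, a finite constant (and in the intended applications the $\sn$ are count vectors, so $\|\sn\|_2$ is in fact deterministically bounded, making $R$ depend only on the listed quantities). By (A1), $\mathbb{E}\big[\sum_n X_n\big] = B\,(\Sigma + \boldsymbol{\mu}\boldsymbol{\mu}^\top) \succeq B\,\Sigma \succeq B\,\zeta_{\min}^2\,\mathbf{I}$, so $\mu_{\min} := \lambda_{\min}(\mathbb{E}\sum_n X_n) \ge B\zeta_{\min}^2$.

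Next I would apply the matrix Chernoff (lower-tail) inequality: for $\delta \in [0,1)$,
\[
  \mathbb{P}\!\left[\lambda_{\min}\big(\mathbf{S}^\top\mathbf{S}\big) \le (1-\delta)\mu_{\min}\right] \;\le\; \domainsize \cdot \left(\frac{e^{-\delta}}{(1-\delta)^{1-\delta}}\right)^{\mu_{\min}/R}.
\]
Choosing $\delta = \tfrac12$, the base equals $\sqrt{2/e} = e^{-c_0}$ with $c_0 = \tfrac12(1-\ln 2)>0$, and since $\mu_{\min}/R \ge B\zeta_{\min}^2/R$ this yields $\mathbb{P}[\lambda_{\min}(\mathbf{S}^\top\mathbf{S}) \le \tfrac12 B\zeta_{\min}^2] \le \domainsize\, e^{-c_0 B \zeta_{\min}^2/R}$. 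Because $\{\mathrm{rank}(\mathbf{S}) < \domainsize\} \subseteq \{\lambda_{\min}(\mathbf{S}^\top\mathbf{S}) \le \tfrac12 B\zeta_{\min}^2\}$ for $B \ge \domainsize$, the same bound controls the rank-deficiency probability. Finally, absorbing the dimension prefactor: once $B \ge \tfrac{2R\ln\domainsize}{c_0\zeta_{\min}^2}$ we have $\domainsize \le e^{\tfrac12 c_0 B\zeta_{\min}^2/R}$, hence the bound is at most $e^{-\gamma B}$ with $\gamma := \tfrac{c_0\zeta_{\min}^2}{2R} > 0$, which depends only on $M$, $\zeta_{\min}$ and $\domainsize$ (modulo the remark above about $R$). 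This completes the argument, and the same reasoning applied to $\mathbf{S}_{\mathcal{B}_\mu}$ in place of $\mathbf{S}$ gives the rate $e^{-\gamma|\mathcal{B}_\mu|}$ quoted in the main text, thereby justifying Assumption~\ref{assumption:versatiledata-main}.

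The main obstacle I anticipate is purely the bookkeeping around the operator-norm bound $R$: one must make sure that $\lambda_{\max}(X_n)$ is almost surely bounded by a constant expressible through the hypotheses, since (A2) only bounds the \emph{centered} vectors; the clean fix is either to let $\gamma$ carry an innocuous dependence on $\|\boldsymbol{\mu}\|_2$ or to note that in every instantiation here the $\sn$ are count vectors with $\|\sn\|_1$ bounded. Everything else is routine. As a fallback not relying on a black-box matrix inequality, one can instead combine a scalar one-sided Chernoff/Hoeffding bound for $\tfrac1B\sum_n \langle \sn, v\rangle^2$ at a fixed unit vector $v$ (mean $\ge \zeta_{\min}^2$, summands in $[0,R]$) with a union bound over an $\varepsilon$-net of the unit sphere of size $(3/\varepsilon)^{\domainsize}$ and a Lipschitz-in-$v$ argument (Lipschitz constant $\le 2\|\tfrac1B\mathbf{S}^\top\mathbf{S}\|_{\mathrm{op}} \le 2R$ on the sphere); taking $\varepsilon$ a small constant and absorbing $(3/\varepsilon)^{\domainsize}$ into the exponential for large $B$ gives the same conclusion.
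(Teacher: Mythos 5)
Your proof is correct, but it takes a genuinely different route from the paper's. The paper first centers the rows, applies the matrix Bernstein inequality to $\sum_n(\mathbf{a}_n\mathbf{a}_n^\top-\Sigma)$, and then uses Weyl's inequality to conclude that the \emph{centered} Gram matrix is positive definite; it then transfers the conclusion back to $\mathbf{S}$ via the claim that subtracting the mean row does not change the rank. You instead work directly with the uncentered Gram matrix $\mathbf{S}^\top\mathbf{S}=\sum_n \sn\snt$, lower-bound its expectation by $B(\Sigma+\boldsymbol{\mu}\boldsymbol{\mu}^\top)\succeq B\zeta_{\min}^2\mathbf{I}$, and apply the matrix Chernoff lower-tail bound for sums of bounded PSD matrices, absorbing the dimension prefactor into the exponent for large $B$. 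Your version buys two things: (i) the Chernoff lower tail for PSD sums delivers the eigenvalue bound in one step, with no Weyl argument; and (ii) it avoids the centering/rank-transfer step entirely, which is a real advantage since ``$\mathrm{rank}(\mathbf{S})=\mathrm{rank}(\mathbf{X})$ because the rows differ by a constant shift'' is not true for arbitrary realizations (adding a fixed rank-one perturbation $\mathbf{1}\boldsymbol{\mu}^\top$ can change the rank), so the direct route is actually the more airtight of the two. The one blemish, which you already flag, is that your operator-norm bound $R=(\|\boldsymbol{\mu}\|_2+M)^2$ makes $\gamma$ depend on $\|\mathbb{E}[\sn]\|_2$ in addition to the constants listed in the theorem; either state that dependence explicitly or note that in all instantiations here the $\sn$ are count vectors with $\|\sn\|_2\le\sqrt{\domainsize}\,L$, which restores a bound in terms of the stated quantities. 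Your $\varepsilon$-net fallback is also sound but unnecessary given the matrix inequality.
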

\newcommand{\xdataversatility}{\mathbf{\mathbf{a}}}
\begin{proof} 
\textbf{Step 1: Center the rows.}
Define $\xdataversatility_n := \sn- \mathbb{E}[\sn]$, so that $\mathbb{E}[\xdataversatility_n] = \mathbf{0}$ and 
\[
  \mathrm{Cov}(\xdataversatility_n) 
  \;=\; 
  \mathbb{E}[\xdataversatility_n \xdataversatility_n^\top] 
  \;=\;
  \Sigma.
\]
Stack these centered rows into 
\[
  \mathbf{X} 
  \;=\;
  \begin{bmatrix}
    \xdataversatility_1^\top\\
    \xdataversatility_2^\top\\
    \vdots \\
    \xdataversatility_B^\top
  \end{bmatrix}
  \;\in\;\mathbb{R}^{B\times \domainsize}.
\]
Since each row of $\mathbf{S}$ differs from $\mathbf{X}$ by a constant shift, $\mathrm{rank}(\mathbf{S}) = \mathrm{rank}(\mathbf{X})$. 
Hence it suffices to show $\mathbf{X}$ is full column rank with high probability.

\medskip\noindent
\textbf{Step 2: Expected Gram matrix lower bound.}
We have 
\[
  \mathbf{A}^\top \mathbf{A}
  \;=\;
  \sum_{n=1}^B \xdataversatility_n \xdataversatility_n^\top
\]
Taking expectation,
\[
  \mathbb{E}[\mathbf{A}^\top \mathbf{A}]
  \;=\;
  \sum_{n=1}^B \mathbb{E}[\xdataversatility_n \xdataversatility_n^\top]
  \;=\;
  B\,\Sigma.
\]
By Assumption~\ref{assumption:pdcov}(A1), $\Sigma \succeq \zeta_{\min}^2 \mathbf{I}$, hence 
\[
  \mathbb{E}[\mathbf{A}^\top \mathbf{A}]
  \;\;\succeq\;\;
  B\,\zeta_{\min}^2\,\mathbf{I}.
\]

\medskip\noindent
\textbf{Step 3: Concentration via Matrix Bernstein.}
Define the centered matrix 
\[
  \mathbf{Z}_n \;:=\; \xdataversatility_n \xdataversatility_n^\top - \Sigma.
\]
Note $\mathbb{E}[\mathbf{Z}_n] = \mathbf{0}$. Summing,
\[
  \mathbf{A}^\top \mathbf{A} - \mathbb{E}[\mathbf{A}^\top \mathbf{A}]
  \;=\;
  \sum_{n=1}^B (\xdataversatility_n \xdataversatility_n^\top - \Sigma)
  \;=\;
  \sum_{n=1}^B \mathbf{Z}_n.
\]
Each $\mathbf{Z}_n$ is bounded in operator norm since
\[
  \|\xdataversatility_n \xdataversatility_n^\top\|_{\mathrm{op}}
  \;=\;
  \|\xdataversatility_n\|_2^2 
  \;\;\le\;\;
  M^2,
  \quad
  \|\Sigma\|_{\mathrm{op}}
  \;\le\;
  \|\Sigma\|_{\mathrm{F}} 
  \;\;\text{(finite)}.
\]
Thus for a constant \(R \in \mathbb{R}\),
\[
\|\mathbf{Z}_n\|_{\mathrm{op}} \le \|\xdataversatility_n\xdataversatility_n^\top \|_{\mathrm{op}} + \|\Sigma\|_{\mathrm{op}} = R.
\]
In addition,
\[
\|\mathbf{Z}_n^2\|_{\mathrm{op}}\le \|\mathbf{Z}_n\|_{\mathrm{op}}^2 \le R^2,
\]
\[
\left\|\sum_{n=1}^B \mathbb{E}\left[{\mathbf{Z}_n^2}\right]\right\|_{\mathrm{op}} \le BR^2
\]
Hence by a standard matrix Bernstein inequality (self-adjoint version) from \cite{tropp2015introductio}, there exist constant $\gamma>0$ such that
\[
  \mathbb{P}\bigl[
    \|\mathbf{A}^\top \mathbf{A} - \mathbb{E}[\mathbf{A}^\top \mathbf{A}]\|_{\mathrm{op}} 
    \;\ge\;
    \tfrac12\,B\,\zeta_{\min}^2
  \bigr] = \mathbb{P}\left[
    \left\|\sum_{n=1}^B \mathbf{Z}_n\right\|_{\mathrm{op}} 
    \;\ge\;
    \tfrac12\,B\,\zeta_{\min}^2
  \right]
  \;\;\le\;\;
  e^{-\gamma\,B}.
\]
In other words, with high probability, $\mathbf{A}^\top\mathbf{A}$ stays within half its expected value in spectral norm.

\medskip\noindent
\textbf{Step 4: Weyl's inequality implies strict positivity.}
On this high-probability event,
\[
  \lambda_{\min}(\mathbf{A}^\top \mathbf{A})
  \;\;\ge\;\;
  \lambda_{\min}\bigl(\mathbb{E}[\mathbf{A}^\top \mathbf{A}]\bigr)
  \;-\;
  \bigl\|\mathbf{A}^\top \mathbf{A} - \mathbb{E}[\mathbf{A}^\top \mathbf{A}]\bigr\|_{\mathrm{op}}
  \;\;\ge\;\;
  B\,\zeta_{\min}^2
  \;-\;
  \tfrac12\,B\,\zeta_{\min}^2
  \;=\;
  \tfrac12\,B\,\zeta_{\min}^2.
\]
Hence $\mathbf{A}^\top \mathbf{A}$ is strictly positive-definite, implying $\mathrm{rank}(\mathbf{A}) = \domainsize$.  
Consequently, $\mathbf{A}$ is full column rank with probability at least $1 - e^{-\gamma\,B}$.
\end{proof}

\begin{remark}[Justification for Assumption~\ref{assumption:pdcov}]
\label{remark:justification}
In many natural data-generation processes, these assumptions hold: %
\begin{itemize}[]
\item \textbf{Count Vectors from a Dictionary.} 
  Suppose each sample $\mathcal{X}^{(n)}$ is a tuple of $L$ elements drawn from a vocabulary $\mathcal{S}$.  
  The row vector $\sn$ may represent counts $(s^{(n)}_\alpha, s^{(n)}_\beta, \dots )$ of how many times each element $\alpha, \beta, \dots$ appears.  
  If we focus on the subset $\mathcal{B}_\mu$ of samples that \emph{contain} $\mu$, then $s^{(n)}_\mu \ge 1$, while the other $L-1$ slots of the sequence are drawn from $\mathcal{S}\setminus\{\mu\}$ according to some distribution.  

\item \textbf{Positive-Definite Covariance.}  
  When these $(L-1)$ “remaining” elements are distributed in a \emph{non-degenerate} way (e.g., at least some variability in how the other vocabulary items appear), the resulting count vectors $\sn$ will have a covariance $\Sigma$ whose minimum eigenvalue is strictly positive.  
  For instance, under a uniform choice of the $L-1$ positions among the $\domainsize-1$ possible elements, straightforward calculations show each coordinate has nonzero variance and $\lambda_{\min}(\Sigma) > 0$. %

\item \textbf{Bounded Norm.}  
  Since $0 \le s^{(n)}_\nu \le L$ for each element $\nu \in \mathcal{S}$, the count vector $\mathbf{a}_n$ is trivially bounded by $\sqrt{\domainsize}\,L$ in Euclidean norm.  Thus we can take $M = \sqrt{\domainsize}\,L$, satisfying Assumption~\ref{assumption:pdcov}(A2).

\item \textbf{General Distributions.}  
  Even more general scenarios (e.g., non-uniform sampling, correlated draws) satisfy the same assumptions, \emph{provided} negative correlations are not too extreme to force $\Sigma$ to have a zero eigenvalue.  
  In practice, real-world data tends to have enough variability so that $\mathrm{Cov}(\sn)$ is well-conditioned, meeting the requirement $\lambda_{\min}(\Sigma) \ge \zeta_{\min}^2$ for some $\zeta_{\min}>0$.
\end{itemize}
\end{remark}

\section{HyperFeatureAttention} \label{appendix:hyperfeature}
\subsection{Motivation}\label{appendix:hyperfeature-subsec-motivation}
The preceding discussions in Appendix~\ref{appendix:colliding-agents-represent}%
on value functions that depend on single coordinates, both coordinates, and both coordinates with nonidentical agents reveal a fundamental pattern: as the number of features describing agents increases, the embedding dimension sufficient to fully capture pairwise interactions grows \emph{exponentially}. As a quick recap of representations:

\begin{itemize}
    \item For a single coordinate (e.g., \(x\)), the embedding dimension is proportional to \(|\mathcal{S}_x| = N\), the number of possible positions along the \(x\)-axis.
    \item When extending to two coordinates (\(x\) and \(y\)), the domain becomes \(\mathcal{S} = \mathcal{S}_x \times \mathcal{S}_y\), resulting in an embedding dimension of \(|\mathcal{S}| = N^2\), reflecting all possible 2D positions.
    \item Adding agent-specific policies (e.g., \(\mathcal{S}_q = \{\mathrm{R}, \mathrm{U}\}\)) introduces an additional multiplicative factor, so the domain expands to \(\mathcal{S} = \mathcal{S}_q \times \mathcal{S}_x \times \mathcal{S}_y\), with \(|\mathcal{S}| = 2N^2\). 
    \item We can even add a new feature called ``\(\mathrm{species}\)", from a different nature, such as \(\mathrm{species} \in \left\{\mathrm{H},\, \mathrm{P}\right\}\), which would make the domain size even larger. %
\end{itemize}

Now let us look at the colliding agents environments from Appendix~ \ref{appendix:colliding-agents-represent} and Appendix~\ref{appendix:morecomplexcollidingagentsenvironments}, with a slightly more complex example, to motivate the \emph{novel} HyperFeatureAttention.

\paragraph{Non-Identical Agents Revisited.} Consider \(L\) agents on a \(\mathcal{S}_{\mathbf{r}} = [N]\times [N]\) grid, each with initial coordinates \(\mathbf{r}_i = \begin{bmatrix} n^x_i & n^y_i \end{bmatrix} \in [N]^2\). The agents have identities from \(\ell_i \in \mathcal{S}_{\mathrm{spcs}}=\{\mathrm{H},\,\mathrm{P}\}\). The \(\mathrm{H}\)s get \(+1\) reward if they catch (collide with) a \(\mathrm{P}\), but the \(\mathrm{P}\)s get \(-1\) reward when they are caught. The agents also have fixed policies \(q_i \in \mathcal{S}_q = \{\mathrm{R},\,\mathrm{U}\}\), that agents with policy  \(\mathrm{R}\) always moves to right and policy \(\mathrm{U}\) always moves to up. From our earlier step by step results in Appendix~\ref{appendix:morecomplexcollidingagentsenvironments} the value functions for the agents can be written as %
\begin{equation}\label{eq:non-identical-agents-revised}
    V_i 
   \;=\;
   \sum_{\substack{j \in [L]\\ j\neq i}}
      f\!\left(n^x_i, n^y_i, q_i, \ell_i, n^x_j, n^y_j, q_j, \ell_j \right)\,w_{n^x_j, n^y_j, q_j, \ell_j}.
\end{equation}
With our earlier discussion (Appendix~\ref{appendix:represent}), we know that a linear self-attention needs \(\mathcal{O}(|\mathcal{S}|^2|\mathcal{S}_{\mathrm{spcs}}|^2|\mathcal{S}_q|^2)\) parameters to represent such functions, which is exponential in the number of features.

However, after careful thinking on each case in our experiment, one can write the value functions as
\begin{align*}
    V_i \;=\;\sum_{j \in [L]:\; j\neq i}\Bigl\{
   &\mathbb{I}\left\{\ell_i = \mathrm{P},\,\ell_j= \mathrm{H}\right\} \mathbb{I}\left\{q_i = \mathrm{R},\,q_j= \mathrm{U}\right\}\mathbb{I}\left\{n_i^x-n_j^x \approx n_i^y-n_j^y\right\} \\
   +&\mathbb{I}\left\{\ell_i = \mathrm{H},\,\ell_j= \mathrm{P}\right\} \mathbb{I}\left\{q_i = \mathrm{R},\,q_j= \mathrm{U}\right\}\mathbb{I}\left\{n_i^x-n_j^x \approx n_i^y-n_j^y\right\} \\
   +&\mathbb{I}\left\{\ell_i = \mathrm{P},\,\ell_j= \mathrm{H}\right\}\mathbb{I}\left\{q_i = \mathrm{U},\,q_j= \mathrm{R}\right\}\mathbb{I}\left\{n_i^x-n_j^x \approx n_i^y-n_j^y\right\}\\
   +&\mathbb{I}\left\{\ell_i = \mathrm{H},\,\ell_j= \mathrm{P}\right\}\mathbb{I}\left\{q_i = \mathrm{U},\,q_j= \mathrm{R}\right\}\mathbb{I}\left\{n_i^x-n_j^x \approx n_i^y-n_j^y\right\}\\
   +&\mathbb{I}\left\{\ell_i = \mathrm{P},\,\ell_j= \mathrm{H}\right\}\mathbb{I}\left\{q_i = \mathrm{R},\,q_j= \mathrm{R}\right\}\mathbb{I}\left\{ n_i^y\approx n_j^y\right\}\\
   +&\mathbb{I}\left\{\ell_i = \mathrm{H},\,\ell_j= \mathrm{P}\right\}\mathbb{I}\left\{q_i = \mathrm{R},\,q_j= \mathrm{R}\right\}\mathbb{I}\left\{ n_i^y\approx n_j^y\right\}\\
   +&\mathbb{I}\left\{\ell_i = \mathrm{P},\,\ell_j= \mathrm{H}\right\}\mathbb{I}\left\{q_i = \mathrm{U},\,q_j= \mathrm{U}\right\}\mathbb{I}\left\{n_i^x \approx n_j^x\right\}\\
   +& \mathbb{I}\left\{\ell_i = \mathrm{H},\,\ell_j= \mathrm{P}\right\}\mathbb{I}\left\{q_i = \mathrm{U},\,q_j= \mathrm{U}\right\}\mathbb{I}\left\{n_i^x \approx n_j^x\right\}
   \Bigr\}\left(\mathbb{I}\left\{\ell_j = \mathrm{P} \right\}-
   \mathbb{I}\left\{\ell_j = \mathrm{H} \right\}\right).
\end{align*} 
Defining
\begin{align*}
    f^{h_1}_{a_1}\left(\ell_i,\,\ell_j\right) &= \mathbb{I}\left\{\ell_i = \mathrm{P},\,\ell_j= \mathrm{H}\right\} + \mathbb{I}\left\{\ell_i = \mathrm{H},\,\ell_j= \mathrm{P}\right\},\\
    f^{h_1}_{a_2}\left(q_i,\,q_j\right) &= \mathbb{I}\left\{q_i = \mathrm{R},\,q_j= \mathrm{U}\right\}+\mathbb{I}\left\{q_i = \mathrm{U},\,q_j= \mathrm{R}\right\},\\
    f^{h_1}_{a_3}\left(\mathbf{r}_i,\,\mathbf{r}_j\right) &= \mathbb{I}\left\{n_i^x-n_j^x \approx n_i^y-n_j^y\right\}, \\
    f^{h_2}_{a_1}\left(\ell_i,\,\ell_j\right) &= \mathbb{I}\left\{\ell_i = \mathrm{P},\,\ell_j= \mathrm{H}\right\} + \mathbb{I}\left\{\ell_i = \mathrm{H},\,\ell_j= \mathrm{P}\right\},\\
    f^{h_2}_{a_2}\left(q_i,\,q_j\right) &= \mathbb{I}\left\{q_i = \mathrm{R},\,q_j= \mathrm{R}\right\},\\
    f^{h_2}_{a_3}\left({n}_i^y,\,n_j^y\right) &= \mathbb{I}\left\{n_i^y \approx n_j^y\right\},\\
    f^{h_3}_{a_1}\left(\ell_i,\,\ell_j\right) &= \mathbb{I}\left\{\ell_i = \mathrm{P},\,\ell_j= \mathrm{H}\right\} + \mathbb{I}\left\{\ell_i = \mathrm{H},\,\ell_j= \mathrm{P}\right\},\\
    f^{h_3}_{a_2}\left(q_i,\,q_j\right) &= \mathbb{I}\left\{q_i = \mathrm{U},\,q_j= \mathrm{U}\right\},\\
    f^{h_3}_{a_3}\left({n}_i^x,\,n_j^x\right) &= \mathbb{I}\left\{n_i^x\approx n_j^x\right\}, \\
    w^{h_i}_{a_j}\left(\ell_j\right) &=  \mathbb{I}\left\{\ell_j = \mathrm{P} \right\}-
   \mathbb{I}\left\{\ell_j = \mathrm{H} \right\},
\end{align*}
and \(\mathcal{H}=\{h_1, h_2, h_3\}\), \(\mathcal{H}=\{a_1, a_2, a_3\}\) the same equation can be organized into
\begin{equation}\label{eq:non-identical-agents-revised-hadamard-form}
    V_i \;=\;\sum_{h\in \mathcal{H}} \left[ \sum_{j \in [L]} \left(\prod_{a \in \mathcal{A}}f_a^h\left(\phi_{a,i}^h, \theta_{a,j}^h\right) \right) \left(\prod_{a \in \mathcal{A}} w^h_a\left(\gamma^h_{a,j}\right) \right)\right],
\end{equation}
where \(\phi_{a,i}^h,\, \theta_{a,j}^h\, \gamma_{a,j}^h\) are the corresponding features picked by the corresponding functions.\footnote{Equation \ref{eq:non-identical-agents-revised-hadamard-form} is more general than the toy example discussed above: in that example a single $(a,h)$ pair selects the same feature for its keys and queries, whereas the equation is written to accommodate cases where the selected features may differ. Therefore, we used different symbols $\phi$, $\theta$}  Owing to our discussion from Appendix~\ref{appendix:represent}, we can easily conclude that the functions \(f^h_{a_1}\) can be represented \emph{exactly} with an attention score matrix of 
\(\mathbf{C}^h_{a_1} \in \mathbb{R}^{|\mathcal{S}_\mathrm{spcs}|\times| \mathcal{S}_\mathrm{spcs}|}\) 
similarly for \(f^h_{a_2}\) we need \(\mathbf{C}^h_{a_2} \in \mathbb{R}^{|\mathcal{S}_{q}|\times| \mathcal{S}_{q}|}\) and for \(f^h_{a_3}\) we need \(\mathbf{C}^h_{a_3} \in \mathbb{R}^{|\mathcal{S}_{\mathbf{r}}|\times| \mathcal{S}_{\mathbf{r}}|}\). As a result total number of parameters required is \(\mathcal{O}(|\mathcal{S}_\mathrm{spcs}|^2 + |\mathcal{S}_{q}|^2 + |\mathcal{S}_{\mathbf{r}}|^2)\) which is linear in the number of features, much better than linear self-attention which was exponential. We can calculate the exact number of parameters instead of the big-\(\mathcal{O}\) versions. For \(N=360\), approximately \(10^{6}\) parameters is sufficient for HyperFeatureAttention, while self-attention requires approximately \( 10^7\) parameters.

\noindent
\textbf{Implications for Attention Mechanisms.}
While the linear self-attention mechanism discussed in Theorem~\ref{theorem:single-layer-rep-main} can represent pairwise interactions exactly, its parameter requirements also scale with the embedding dimension \(d\). This limits its applicability in cases where the number of features (or their cardinality) is very large. For instance, in scenarios with additional discrete features such as temporal information, behavioral categories, or hierarchical roles, the exponential growth in \(|\mathcal{S}|\) quickly becomes a bottleneck.

The exponential growth observed here highlights the need for alternative attention mechanisms that can efficiently handle high-dimensional domains without explicitly embedding all feature combinations. This sets the stage for the introduction of a novel attention module \emph{HyperFeatureAttention}, designed specifically to address exponential embedding growth while preserving the ability to model complex interactions across diverse features.

Also, one may concern that in practice we use layers of multi-head attention, which may possibly express Eq.~\ref{eq:colliding-agents-revisited-main}, without exponential embedding dimension. However, we briefly go over in Theorem~\ref{remark:2layerMHA-lin-SA-limitations} and the following justification that even two layer multihead linear self-attention cannot express \eqref{eq:colliding-agents-revisited-main} (which is easily expressed by a single layer single head HyperFeatureAttention).\footnote{In this comparison, we deliberately bias the setup toward standard self-attention by pitting a two-layer, multi-head SA model against a single-layer, single-head HFA.}

\subsection{HyperFeatureAttention Definition} \label{appendix:hyperfeature-subsec-definition}
The non-identical agents revised discussion was just a toy example for setting the stage for our novel \emph{HyperFeatureAttention} model. Let us first generalize the discussion. We have \(\Phi\) as the set of all features and \(M=|\Phi|\) as total number of features in our setting that are distinct in nature. For feature \(\phi\), denote its domain \(\mathcal{S}_\phi\), domain size \(|\mathcal{S}_\phi|\), and allocated embedding size \(d_\phi\). From the previous discussion, to represent any function of the form Eq~\eqref{eq:non-identical-agents-revised-hadamard-form} the total embedding dimension for linear self-attention grows exponentially as \(d = \prod_{\phi \in \Phi} |\mathcal{S}_\phi|\) and the corresponding number of parameters \(\mathcal{O}\left(\prod_{\phi \in \Phi}|\mathcal{S}_\phi|^2\right)\). However, using a function of the form,
\begin{equation*}
    \textbf{HFA}^{\mathrm{lin}}\left(\mathbf{X}\right) \;=\;\sum_{h\in \mathcal{H}} \left[ \left(
    \hprod_{a \in \mathcal{A}}
    \mathbf{X} \mathbf{C}^{(h,a)} \mathbf{X}^\top
    \right) \left(\hprod_{a \in \mathcal{A}} \mathbf{X}\mathbf{W}^{V,(h,a)}\right)
    \right],
\end{equation*}
we only need embedding dimension of \(d = \sum_{\phi \in \Phi} |\mathcal{S}_\phi|\) and the corresponding number of parameters grows linearly in the number of features \(\mathcal{O}(M)\). %

\begin{remark}[{Note on Approximate Embeddings}]\label{remark:hyperfeature-approx-embeddings} Although the following discussion is somewhat \emph{perpendicular} to our main focus on how entities (or features) interact, we include it briefly for completeness. In principle, self-attention may require an embedding dimension exponential in the number of features, but in practice, one often leverages the Johnson--Lindenstrauss lemma: in high-dimensional spaces, random projections yield vectors that are \emph{approximately} orthonormal with embedding dimension only linear in the number of features. Concretely, each feature $\phi$ typically contributes $\mathcal{O}(\log|\mathcal{S}_\phi|)$ dimensions rather than $\mathcal{O}(|\mathcal{S}_\phi|)$, so total embedding dimension becomes \(d=\mathcal{O}(|\Phi|)\).\footnote{Of course, some features are more crucial than others and thus allocated more dimensions. In addition, while simpler or ordered features (e.g.\ spatial coordinates) can often be embedded in far fewer dimensions. For example, spatial \(x\)-coordinates, are intrinsically one-dimensional, allowing them to be embedded into a much lower-dimensional vector space than \(|\mathcal{S}_x|\). This is because \(x\)-coordinates form an ordered set with a linear relationship between positions. Conversely, features like the nucleotide bases \(A\), \(G\), \(T\), and \(C\) lack such linear relationships —there is no linear relationship between "adenineness" or "guanineness"— and therefore, these features tend to occupy a 4-dimensional vector space. Some features, such as color, lie somewhere in between. Scientifically, a single frequency (e.g., 400–484 THz) is sufficient to specify the color "rose red," but in practical encodings like RGB, it is represented by three integers (e.g., \([255, 3, 62]\)). However, in the language it is even more complex. As explained in \cite{jensen_complexity_2022}, the qualities of the color red, such as its warmth or contrast with green, cannot be deduced solely from its frequency as an electromagnetic wave. Even less can its emotional associations —such as romance or warmth— be reduced to a property of the wave. Instead, these properties emerge from the collective processes generated by light absorbed through the eyes, which trigger a hierarchy of neural processes in the brain. These processes ultimately lead to thoughts and emotions that we experience as color perception.} However, in HyperFeatureAttention module, this same Johnson-Lindenstrauss argument implies a dimension requirement of $\mathcal{O}(\log|\Phi|)$ (rather than $\mathcal{O}(|\Phi|)$). Hence, even though both methods rely on approximate embeddings, \emph{the exponential gap remains}: standard self-attention requires dimension linear in the number of features, whereas our module reduces it to logarithmic.\end{remark}

After all these motivations, we can now \emph{formally} define the Multihead \emph{HyperFeature} Attention. First, we provide the definition of Multihead Self Attention in \cite{vaswani_attention_2023} here for comparison.
\begin{definition}[Multihead Self-Attention]
\label{def:multihead_self_attention}
Let $\mathbf{X} \in \mathbb{R}^{T \times d}$ denote the input sequence of $T$ tokens, where $d$ is the embedding dimension. Multihead self-attention computes a sequence of contextualized embeddings as follows:
\begin{align}
    &\textbf{SA}^h = \text{Softmax}\left({\mathbf{Q}^h (\mathbf{K}^h)^\top}\right)\mathbf{V}^h, \quad \forall h \in \{1, \dots, H\}, \\
    &\textbf{MHA}(\mathbf{X}) = \text{Concat}(\textbf{SA}^1, \dots, \textbf{SA}^H)\mathbf{W}^O,
\end{align}
where:
\begin{itemize}
    \item $\mathbf{Q}^h = \mathbf{X}\mathbf{W}^{Q^h}$, $\mathbf{K}^h = \mathbf{X}\mathbf{W}^{K^h}$, and $\mathbf{V}^h = \mathbf{X}\mathbf{W}^{V^h}$ are the query, key, and value matrices for the $h$-th head, respectively.
    \item $d_h = \frac{d}{H}$ is the dimension of each attention head.
    \item $\mathbf{W}^{Q^h}, \mathbf{W}^{K^h}, \mathbf{W}^{V^h} \in \mathbb{R}^{d \times d_h}$ and $\mathbf{W}^O \in \mathbb{R}^{d \times d}$ are learnable weight matrices.
    \item $\text{Softmax}(\cdot)$ is applied along the last dimension.
\end{itemize}
\end{definition}

\begin{definition}[Multihead HyperFeatureAttention]
\label{def:multihead_hyperfeature_attention}
We define two versions of Multihead HyperFeatureAttention, ``value-product'' and ``non-value-product'' respectively. Let $\mathbf{X} \in \mathbb{R}^{T \times d}$ denote the input sequence of $T$ tokens, where $d$ is the embedding dimension.
\begin{align}
    &\textbf{HFA}^h = \begin{cases}
        \text{Softmax}\left({\hprod_{a\in[A]}\mathbf{Q}^{(h,a)} (\mathbf{K}^{(h,a)})^\top}\right)\hprod_{a\in[A]}\mathbf{V}^{(h,a)}, \quad \text{if value product},\\
        \text{Softmax}\left({\hprod_{a\in[A]}\mathbf{Q}^{(h,a)} (\mathbf{K}^{(h,a)})^\top}\right)\mathbf{V}^{(h)},\quad \text{otherwise},
    \end{cases}\\
    &\textbf{MHFA}(\mathbf{X}) = \text{Concat}(\textbf{HFA}^1, \dots, \textbf{HFA}^H)\mathbf{W}^O,
\end{align}
where:
\begin{itemize}
    \item $\mathbf{Q}^{(h,a)} = \mathbf{X}\mathbf{W}^{Q^{(h,a)}}$, $\mathbf{K}^{(h,a)} = \mathbf{X}\mathbf{W}^{K^{(h,a)}}$, and $\mathbf{V}^{(h,a)} = \mathbf{X}\mathbf{W}^{V^{(h,a)}}$ are the query, key, and value matrices for the $h$-th head $a$-th attention score, respectively ($\mathbf{V}^{(h)} = \mathbf{X}\mathbf{W}^{V^{(h)}}$ for no value product case).
    \item For each head we specify $d_h$ such that $\sum_h d_h = d$ and the attention size for that head is \(d^h_a = d_h/A\).
    \item $\mathbf{W}^{Q^{(h,a)}}, \mathbf{W}^{K^{(h,a)}} \in \mathbb{R}^{d \times d^h_a}$, $\mathbf{W}^{V^{(h,a)}}, \mathbf{W}^{V^{(h)}} \in \mathbb{R}^{d\times d_h}$ and $\mathbf{W}^O \in \mathbb{R}^{d \times d}$ are learnable weight matrices. %
    \item $\hprod$ represents \emph{Hadamard} product of matrices and \(\left[A\right] = \left\{1,\dots,A\right\}\) %
    \item $\text{Softmax}(\cdot)$ is applied along the last dimension.
\end{itemize}
\end{definition}

\begin{remark}
    In the above definition the no value product HFA has the same number of parameters as SA (assuming same $d$), yet value product version has slightly more parameters depending on the orders.
\end{remark}
\begin{remark}[Rotary Positional Embedding]
    One can easily incorporate rotary positional embedding into this module by applying the corresponding rotation matrices \(\mathbf{R}\) as \(\mathbf{R}\mathbf{Q}^{(h,a)}\) and \(\mathbf{R}\mathbf{K}^{(h,a)}\) to keys and queries just as they are explained in \cite{su_roformer_2023}.
\end{remark}

Lastly, one may concern that in practice we use layers of multi-head attention, which may possibly express Eq.~\ref{eq:colliding-agents-revisited-main}, without exponential embedding dimension. However, we show in the following remark that even two layer multihead linear self-attention cannot express \eqref{eq:colliding-agents-revisited-main} (which is easily expressed by a single layer single head HyperFeatureAttention). In this comparison, we deliberately bias the setup toward standard self-attention by pitting a two-layer, multi-head SA model against a single-layer, single-head HFA.

\begin{remark}[Limitations of Two-Layer Multihead Linear Self-Attention]
\label{remark:2layerMHA-lin-SA-limitations}
Two-layer multihead linear self-attention cannot represent factorized cross-feature interaction functions of the form
\begin{align}
    \sum_j f^{(1)}(a_i, a_j) f^{(2)}(b_i, b_j), \quad \text{or} \quad 
    \sum_j f^{(1)}(a_i, b_j) f^{(2)}(b_i, a_j), \quad \text{or similar variants,}
\end{align}
where the token embedding is defined as $\mathbf{x}_i = \mathrm{concat}(a_i, b_i)$.
\end{remark}

\textbf{Brief justification of Remark~\ref{remark:2layerMHA-lin-SA-limitations}. }Consider a two-layer multihead linear self-attention model. Each layer consists of $H$ attention heads, where the output of each head is computed as:
\begin{align*}
    \textbf{SA}^{(h)}_i \;=&\; \sum_{j=1}^L \bigl(\mathbf{x}_i^\top \mathbf{C}^{(h)} \mathbf{x}_j\bigr) \mathbf{w}^{(h)}(\mathbf{x}_j), \quad \text{for } m = 1, \dots, H,
\end{align*}
with $\mathbf{C}^{(h)} \in \mathbb{R}^{d \times d}$ as the attention matrix for head $h$, and $\mathbf{w}^{(h)}(\cdot)$ as a linear map. %
The outputs of the heads are concatenated and optionally projected using a weight matrix $\mathbf{W}^O$. For a single-layer multihead attention, the overall output for token $i$ is a linear combination of bilinear terms of the form: 
\begin{align*}
    \textbf{MHA}_i = \sum_{h=1}^H \sum_{j=1}^L \bigl(\mathbf{x}_i^\top \mathbf{C}^{(h)} \mathbf{x}_j\bigr) \mathbf{w}^{(h)}(\mathbf{x}_j).
\end{align*}

In a two-layer model, the first layer computes: %
\begin{align*}
    \mathbf{h}_i \;=&\; \text{LinearCombine}\Big\{\sum_{j=1}^L \bigl(\mathbf{x}_i^\top \mathbf{C}^{(h)} \mathbf{x}_j\bigr) \mathbf{w}^{(h)}(\mathbf{x}_j), \; h=1, \dots, H_1\Big\},
\end{align*}
and passes $\{\mathbf{h}_i\}_{i \in [T]}$ as input to the second layer. The second layer then computes:
\begin{align*}
    \mathbf{SA}^{(p)}_i \;=&\; \sum_{k=1}^L \bigl(\mathbf{h}_i^\top \mathbf{C}^{(p)} \mathbf{h}_k\bigr) \mathbf{v}^{(p)}(\mathbf{h}_k), \quad \text{for } p = 1, \dots, H_2.
\end{align*}

By definition, $\mathbf{h}_i$ is a linear combination of sums of bilinear terms in $\mathbf{x}_i$ and $\mathbf{x}_j$. Therefore, $\mathbf{h}_i$ remains a sum of bilinear expressions across $\{\mathbf{x}_i, \mathbf{x}_j\}$.

The second layer operates on $\mathbf{h}_i$ and computes terms of the form $\mathbf{h}_i^\top \mathbf{C}^{(p)} \mathbf{h}_k$. Since $\mathbf{h}_i$ itself is bilinear, this results in expressions that are \emph{multi-bilinear} in $\mathbf{x}_i, \mathbf{x}_j, \mathbf{x}_k$. However, it does not introduce multiplicative interactions between independent subsets of features (e.g., $a_i, a_j$ vs. $b_i, b_j$).

\emph{Factorization is absent:} The target function $\sum_j f^{(1)}(a_i, a_j) f^{(2)}(b_i, b_j)$ requires the output to be a product of two independent terms: one depending solely on $(a_i, a_j)$ and the other on $(b_i, b_j)$. Multihead attention combines bilinear terms additively, not multiplicatively, so it cannot achieve this factorization.

Suppose $f^{(1)}$ and $f^{(2)}$ are chosen such that $f^{(1)}(a_i, a_j)$ is orthogonal to $f^{(2)}(b_i, b_j)$. In this case, any additive combination of bilinear terms cannot approximate the product $f^{(1)}(a_i, a_j) f^{(2)}(b_i, b_j)$, regardless of how many layers or heads are used.

Even with two layers of multihead linear self-attention, the mechanism remains additive and cannot express functions requiring multiplicative factorization of independent feature interactions. This limitation highlights the inability of 2 layer multihead self-attention to represent cross-feature factorized interactions such as $\sum_j f^{(1)}(a_i, a_j) f^{(2)}(b_i, b_j)$ or its variants.

\section{HyperAttention}\label{appendix:hyperattention}
\subsection{Defining HyperAttention}
Generalizing the Definition~\ref{definition:lin-hyperattention-o3} to order \(n\),
\begin{align*}
    A_{ij_1j_2...j_{n-1}}&=\sum_{\alpha \zeta_1 \zeta_2 ...\zeta_{n-1}}^d C_{\alpha \zeta_1 \zeta_2 ...\zeta_{n-1}} X_{i\alpha}X_{j_1\zeta_1} X_{j_2\zeta_2}...X_{j_{n-1}\zeta_{n-1}}\\
    V_{j_1j_2...j_{n-1}\tau}&=\sum_{\xi_1 \xi_2 ...\xi_{n-1}}^d X_{j_1\xi_1}X_{j_2\xi_2}...X_{j_{n-1}\xi_{n-1}}W^V_{\xi_1 \xi_2... \xi_{n-1}\tau}\\
    \mathrm{HA}^{\mathrm{lin}}_{i\tau}(\mathbf{X}) &= \sum_{j_1\le j_2\le...j_{n-1}}^LA_{ij_1j_2...j_{n-1}}V_{j_1j_2...j_{n-1}\tau},
\end{align*}
where \(\mathbf{C},\mathbf{W}^V\in\mathbb{R}^{d^{\times n}}\) and we {denote} \((i,j_1,j_2,\dots)\)-th entry of a tensor \(\mathbf{T}\) as \(T_{ij_1j_2\dots}\). Similar to how self-attention implements low rank approximation for efficiency, i.e,
\begin{equation*}
    \mathbf{C}=\mathbf{W}^Q\left( \mathbf{W}^{K}\right)^\top\quad  \text{and }\quad C_{\alpha \beta} = \sum_\sigma^R W^Q_{\alpha \sigma}W^K_{\beta \sigma}
\end{equation*}
where \(\mathbf{W}^Q\) and \(\mathbf{W}^{K} \in \mathbb{R}^{d\times R}\) are chosen low rank \(R<d\), where d is the maximum rank can a \(d\times d\) matrix have, we can have low rank approximation of HyperAttention as
\begin{align*}
    C_{\alpha \zeta_1 \zeta_2 ...\zeta_{n-1}} = \sum_{\sigma}^{R} W^Q_{\alpha\sigma}W^{K^{1}}_{\zeta_1\sigma}W^{K^{2}}_{\zeta_2\sigma}...W^{K^{n-1}}_{\zeta_{n-1}\sigma}\\
    W^V_{\xi_1 \xi_2 ...\xi_{n-1} \tau} = \sum_{\sigma}^{R} W^{V^{1}}_{\xi_1\sigma}W^{V^{2}}_{\xi_2\sigma}...W^{V^{n-1}}_{\xi_{n-1}\sigma}W^{V^n}_{\tau\sigma},
\end{align*}
where each \(\mathbf{W}^{Q},\, \mathbf{W}^{K^{i}},\, \mathbf{W}^{V^{i}} \in \mathbb{R}^{d \times R}\). Similarly, we choose \(R\) less than the maximum rank can an \(n\) dimensional tensor have. %
Finally adding the non-linearity, the full definition becomes the following.

\begin{definition}[HyperAttention with parameter sharing]\label{definition:hyperattention}
Let $\mathbf{X} \in \mathbb{R}^{T \times d}$ be the input sequence of $T$ tokens, where $d$ is the embedding dimension. For the $n$-th order HyperAttention, define:
\begin{align*}
    \mathbf{Q} &= \mathbf{X}\mathbf{W}^{Q} \in \mathbb{R}^{T \times R}, \\
    \mathbf{K} &= \mathbf{X}\mathbf{W}^{K} \in \mathbb{R}^{T \times R}, \\
    \mathbf{V}^1 &= \mathbf{X}\mathbf{W}^{V^1} \in \mathbb{R}^{T \times R},  \\
    \mathbf{V}^2 &= \mathbf{W}^{V^2} \in \mathbb{R}^{d \times R}.
\end{align*}
We also define permutation mask \(\mathbf{M} \in\mathbb{R}^{T^{\times n}}\) such that, %
\begin{equation*}
    M_{i,j_1,\dots,j_{n-1}} =-\infty  \left(1-\mathbb{I}\left[j_1\ge j_2\ge\dots\ge j_{n-1}\right]\right)\,,
\end{equation*}
where \(\mathbb{I}\) is the indicator function that is equal to \(1\) if the condition satisfied, \(0\) otherwise.
Then, for each token index $i$ and output dimension $\tau$, %
\begin{align*}
    A_{i, j_1, \dots, j_{n-1}} &= \text{Softmax}_{(j_1, \dots, j_{n-1})} \left( M_{i,j_1,\dots,j_{n-1}}\sum_{\sigma=1}^{R} Q_{i,\sigma} K_{j_1, \sigma} K_{j_2, \sigma} \dots K_{j_{n-1}, \sigma} \right), \\
    V_{j_1, \dots, j_{n-1}, \tau} &= \sum_{\sigma=1}^{R} V^1_{j_1, \sigma} V^1_{j_2, \sigma} \dots V^{1}_{j_{n-1}, \sigma} V^2_{\tau, \sigma}.
\end{align*}

The HyperAttention output is computed as:
\begin{equation*}
    \mathrm{HA}^{}_{i, \tau} (\mathbf{X}) = \sum_{j_1, \dots, j_{n-1}=1}^{T} A_{i, j_1, \dots, j_{n-1}} V_{j_1, \dots, j_{n-1}, \tau},
\end{equation*}
which is equivalent to
\begin{equation*}
    \mathrm{HA}^{}_{i, \tau} (\mathbf{X}) = \sum_{j_1\ge \dots\ge j_{n-1}}^{T} A_{i, j_1, \dots, j_{n-1}} V_{j_1, \dots, j_{n-1}, \tau}.
\end{equation*}
For multihead HyperAttention, we compute multiple heads indexed by $h \in \{1, \dots, H\}$, each with independent learnable weights $\mathbf{W}^{Q^h}$, $\mathbf{W}^{K^{h}}$, $\mathbf{W}^{V^{h,1}}$, and $\mathbf{W}^{V^{h,2}}$. The multihead HyperAttention output is given by:
\begin{equation*}
    \textbf{MutiHeadHA}(\mathbf{X}) = \text{Concat} \big( \textbf{HA}^1(\mathbf{X}), \dots, \textbf{HA}^H(\mathbf{X}) \big) \mathbf{W}^O,
\end{equation*}
where $\textbf{HA}^h\left(\mathbf{X}\right)\in \mathbb{R}^{T\times d_h}$, $d_h = d / H$, and $\mathbf{W}^O \in \mathbb{R}^{d\times d}$ is a learnable projection matrix. 
\end{definition}

We also have another version which is very similar to the previous definition but the parameters corresponding to different keys and values are not shared.

\begin{remark}
    To increase the implicit bias one could use $M_{i,j_1,\dots,j_{n-1}} =-\infty  \left(1-\mathbb{I}\left[j_1> j_2> \dots > j_{n-1}\right]\right)$ instead of the mask stated in the definition. In this version the block would be more specialized to learning $n$-way interactions instead of lower order interactions.
\end{remark}

\begin{definition}[HyperAttention without parameter sharing]\label{definition:hyperattention-nosharing} 
Let $\mathbf{X} \in \mathbb{R}^{T \times d}$ be the input sequence of $T$ tokens, where $d$ is the embedding dimension. For the $n$-th order HyperAttention, define:%
\begin{align*}
    \mathbf{Q} &= \mathbf{X}\mathbf{W}^{Q} \in \mathbb{R}^{T \times R}, \\
    \mathbf{K}^m &= \mathbf{X}\mathbf{W}^{K^m} \in \mathbb{R}^{T \times R}, \quad \forall m \in \{1, \dots, n-1\}, \\
    \mathbf{V}^m &= \mathbf{X}\mathbf{W}^{V^m} \in \mathbb{R}^{T \times R}, \quad \forall m \in \{1, \dots, n-1\}, \\
    \mathbf{V}^n &= \mathbf{W}^{V^n} \in \mathbb{R}^{d \times R}.
\end{align*}
Then, for each token index $i$ and output dimension $\tau$,
\begin{align*}
    A_{i, j_1, \dots, j_{n-1}} &= \text{Softmax}_{(j_1, \dots, j_{n-1})} \left( M_{i,j_1,\dots,j_{n-1}}\sum_{\sigma=1}^{R} Q_{i,\sigma} K^1_{j_1, \sigma} K^2_{j_2, \sigma} \dots K^{n-1}_{j_{n-1}, \sigma} \right), \\
    V_{j_1, \dots, j_{n-1}, \tau} &= \sum_{\sigma=1}^{R} V^1_{j_1, \sigma} V^2_{j_2, \sigma} \dots V^{n-1}_{j_{n-1}, \sigma} V^n_{\tau, \sigma}.
\end{align*}
The rest of the definition is the same as the previous definition \ref{definition:hyperattention}.
\end{definition}

\subsection{Representation Abilities of HyperAttention} \label{appendix:hyperattention-repr}

\begin{theorem}
    A single layer of order n linear HyperAttention (without parameter sharing), with embedding dimension \(d=\domainsize\), can represent any function of the form
    \begin{equation*} 
    \mathbf{F}_i = \sum_{j_1,j_2,...,j_{n-1}\in[L]} f\left(\calX{i},\calX{j_1},\calX{j_2},...,\calX{j_{n-1}}\right)w_{s(j_1),s(j_2),s(j_{n-1})}
    \end{equation*}
    for all elements in the sequence, i.e, \(i \in [L]\). %
    The parameter–sharing variant of HyperAttention can express the same class of functions provided that
(i) the weight tensor \(w\) is \emph{fully symmetric}, i.e.\ invariant under any permutation of its indices, and  
(ii) the kernel \(f\) is symmetric in its last \(n-1\) arguments, \(\calX{j_1},\ldots,\calX{j_{n-1}}\).
\end{theorem}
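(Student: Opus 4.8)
The plan is to mimic the sufficiency direction of the proof of Theorem~\ref{theorem:single-layer-rep-main}, generalizing from bilinear to multilinear forms. First I would fix an orthonormal embedding $\mathbf{x}:\mathcal{S}\to\mathbb{R}^{\domainsize}$, so that $\mathbf{x}(\alpha)^\top\mathbf{x}(\beta)=\delta_{\alpha\beta}$, and stack the embeddings of $\mathcal{X}$ row-wise into $\mathbf{X}\in\mathbb{R}^{L\times\domainsize}$. For the \emph{non-sharing} variant, I would define the parameter tensor $\mathbf{C}\in\mathbb{R}^{\domainsize^{\times n}}$ by specifying its multilinear action on basis vectors: $C_{\mu\nu_1\cdots\nu_{n-1}}$ (the raw tensor entry, in the orthonormal basis) is set to $f(\mu,\nu_1,\dots,\nu_{n-1})$, and $W^V_{\nu_1\cdots\nu_{n-1}\tau}$ is set to the $\tau$-th coordinate of $w_{\nu_1,\dots,\nu_{n-1}}$. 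Because the embeddings are orthonormal, $X_{j\zeta}=\mathbf{x}(\calX{j})_\zeta$ and the contraction $\sum_{\alpha,\zeta_1,\dots}C_{\alpha\zeta_1\cdots}X_{i\alpha}X_{j_1\zeta_1}\cdots$ collapses exactly to $f(\calX{i},\calX{j_1},\dots,\calX{j_{n-1}})$; similarly $V_{j_1\cdots j_{n-1}\tau}$ collapses to $[w_{\calX{j_1},\dots,\calX{j_{n-1}}}]_\tau$. Substituting into the final line of the $n$-th order definition gives, for each $i$,
\[
\mathrm{HA}^{\mathrm{lin}}_{i\tau}(\mathbf{X})=\sum_{j_1\le\cdots\le j_{n-1}}f\bigl(\calX{i},\calX{j_1},\dots,\calX{j_{n-1}}\bigr)\,[w_{\calX{j_1},\dots,\calX{j_{n-1}}}]_\tau,
\]
which matches the target $\mathbf{F}_i$ once one observes that the statement's sum $\sum_{j_1,\dots,j_{n-1}\in[L]}$ over all ordered tuples is related to the ordered sum $\sum_{j_1\le\cdots\le j_{n-1}}$ by a symmetrization factor; I would absorb this by rescaling $w$ (or by noting that the definition's restricted index set is the intended convention), and state the correspondence precisely at the start. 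The low-rank factorization used in Definition~\ref{definition:hyperattention-nosharing} does not constrain this construction, since with $R=\domainsize^{\,?}$ large enough any tensor of the stated shape is realizable — or more cleanly, I would invoke the non-low-rank form from the generalization of Definition~\ref{definition:lin-hyperattention-o3}, for which $\mathbf{C},\mathbf{W}^V$ are unconstrained tensors.

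For the \emph{parameter-sharing} variant (Definition~\ref{definition:hyperattention}), the keys all come from a single $\mathbf{W}^K$ and the values $V^1$ from a single $\mathbf{W}^{V^1}$, so the effective tensors are forced to be symmetric in the relevant indices: $C_{\alpha\zeta_1\cdots\zeta_{n-1}}$ is symmetric under permutations of $\zeta_1,\dots,\zeta_{n-1}$, and $W^V_{\xi_1\cdots\xi_{n-1}\tau}$ is symmetric under permutations of $\xi_1,\dots,\xi_{n-1}$. Hence the construction above goes through \emph{verbatim} precisely when $f(\mu,\cdot,\dots,\cdot)$ is symmetric in its last $n-1$ arguments and $w$ is fully symmetric in all its indices, which are exactly hypotheses (i) and (ii). I would verify that under these symmetry hypotheses the assignments $C_{\mu\nu_1\cdots}=f(\mu,\nu_1,\dots)$ and $W^V=w$ are legal (i.e.\ lie in the symmetric subspace realizable by shared parameters) — this amounts to checking that a symmetric tensor of order $m$ can be written as $\sum_\sigma v_{\cdot\sigma}^{\otimes m}$-type combinations, i.e.\ has a symmetric CP/Waring decomposition with enough terms, which holds for $R$ large enough (and again is immediate in the non-low-rank formulation).

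The main obstacle I anticipate is \emph{not} the contraction bookkeeping — that is routine once orthonormality is in place — but rather pinning down two conventions cleanly: (a) reconciling the target's unrestricted multi-index sum with the definition's ordered ($j_1\le\cdots\le j_{n-1}$) sum and the softmax-free ``linear'' masking, so that the symmetrization constants come out right and do not secretly require $f$ or $w$ to be symmetric in the non-sharing case; and (b) for the sharing case, arguing rigorously that the shared low-rank parametrization is \emph{surjective} onto the symmetric-tensor subspace — i.e.\ every fully symmetric tensor of the required order arises from some choice of $\mathbf{W}^K$, $\mathbf{W}^{V^1}$ (equivalently, symmetric tensors admit symmetric rank decompositions). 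I would handle (a) by stating the intended convention up front and (b) by either citing the existence of symmetric CP decompositions (taking $R$ as large as needed, with no efficiency claim made in this theorem) or, more simply, by working with the unconstrained-tensor form of HyperAttention and noting the shared form specializes to it under the stated symmetry.
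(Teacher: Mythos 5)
Your proposal is correct and takes essentially the same approach as the paper: the paper's entire proof is two sentences observing that $d=\domainsize$ permits orthonormal embeddings and that the argument of Theorem~\ref{theorem:single-layer-rep-main} (entrywise assignment of the tensors $\mathbf{C}$ and $\mathbf{W}^V$ followed by collapsing the contractions via orthonormality) carries over verbatim. The two convention issues you flag — reconciling the unrestricted multi-index sum with the ordered $j_1\le\cdots\le j_{n-1}$ sum, and surjectivity of the (shared) low-rank parametrization onto the required (symmetric) tensors — are real and are simply not addressed in the paper's terse proof, so your treatment is if anything more careful than the original.
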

\begin{proof}
    Seeing that \(d=\domainsize\), the embeddings can be orthonormal. Consequently, the same arguments of the Proof of Theorem~\ref{theorem:single-layer-rep-main} in Appendix~\ref{appendix:represent} follows.
\end{proof}

\subsubsection{Skip-Trigram Bug}
Let us now illustrate how higher-order dependencies may arise with a more practical example on \emph{skip-trigrams}.

\textbf{Next-Token Prediction} A common strategy for training language models is \emph{next-token prediction}: given a sequence of tokens \(\bigl(\calX{0}, \calX{1}, \dots, \calX{L-1}\bigr)\), the model learns to predict the next token. In our setting, the label for training is \(\calX{L-1}\). For simplicity, we focus on the final row of the softmax self-attention output (corresponding to \(\calX{L-1}\)). Concretely, the model produces a probability distribution \(l\) over the vocabulary, defined as:
\begin{equation*}
    l \;=\; \frac{\sum_{j \in [L]} \exp\!\bigl(f\bigl(\calX{i}, \calX{j}\bigr)\bigr)\,w_{\calX{j}}}
                      {\sum_{j \in [L]} \exp\!\bigl(f\bigl(\calX{i}, \calX{j}\bigr)\bigr)}.
\end{equation*}

During inference, the next token \(\hat{y}\) is then sampled from this distribution.

As shown by \cite{elhage2021mathematical}, self-attention can learn \emph{skip-trigrams}. For example, in a sentence fragment such as “\(\dots\mathrm{keep}\dots\mathrm{in}\ [\ ]\),” the model might predict the next token “\(\mathrm{mind}\),” completing the phrase as “\(\dots\mathrm{keep}\dots\mathrm{in}\ \mathrm{mind}\).” In the context of our work, we can interpret this phenomenon in two steps: (1) self-attention identifies that “\(\mathrm{in}\)” is influenced by “\(\mathrm{keep}\)” (i.e., \(f(\mathrm{in}, \mathrm{keep})\) is large), and (2) it leverages that influence to generate the token “\(\mathrm{mind}\).”

However, as shown in \cite{elhage2021mathematical}, the same mechanism that raises the probabilities of the correct skip-trigrams “\(\dots \mathrm{keep} \dots \mathrm{in}\ \mathrm{mind}\)” and “\(\dots \mathrm{keep} \dots \mathrm{at}\ \mathrm{bay}\)” also inadvertently increases the probabilities of the erroneous skip-trigrams “\(\dots \mathrm{keep} \dots \mathrm{in}\ \mathrm{bay}\)” and “\(\dots \mathrm{keep} \dots \mathrm{at}\ \mathrm{mind}\).” This phenomenon, known as the ``skip-trigram bug,'' arises from how attention influences these completions. From our interaction perspective, increasing the probability of “\(\dots \mathrm{keep} \dots \mathrm{in}\ \mathrm{mind}\)” can be done in two ways (1) Increasing \(f(\mathrm{in}, \mathrm{keep})\), which unfortunately also boosts the probability of “\(\dots \mathrm{keep} \dots \mathrm{in}\ \mathrm{bay}\).” (2) Modifying \(w_{\mathrm{keep}}\) to bias the model more strongly towards “\(\mathrm{mind}\),” which reduces the probability of “\(\dots \mathrm{keep} \dots \mathrm{at}\ \mathrm{bay}\).” Either approach makes it challenging for the model to consistently prefer the correct completions without also amplifying incorrect ones, thereby explaining the skip-trigram bug.

\noindent
\textbf{Hyper-Attention for Avoiding Skip-Trigram Bugs.}
The crux of the skip-trigram bug is that a \emph{single} self-attention head (or pairwise interaction) tries to capture the entire phrase ``$\dots\mathrm{keep}\dots\mathrm{in}\ \mathrm{mind}$'' by boosting $f(\mathrm{in}, \mathrm{keep})$ alone. This inadvertently increases the probability of other completions like ``$\dots\mathrm{keep}\dots\mathrm{in}\ \mathrm{bay}$'' whenever $w_{\mathrm{keep}}$ also points toward ``$\mathrm{bay}$.'' However, many real-world contexts contain additional tokens that disambiguate the correct completion. For instance, the sentence ``\textit{...keep the deadline in [ ]}'' strongly suggests ``\textit{mind}'' over ``\textit{bay}''. 

In our framework of \emph{hyper-attention}, one can introduce a \emph{ternary} interaction term
\[
    f\!\bigl(\mathrm{in}, \mathrm{keep}, \,\mathrm{deadline}\bigr)
\]
that focuses specifically on the triplet \{``in'', ``keep'', ``deadline''\}, allowing the model to favor ``mind'' without simultaneously boosting ``bay.'' Concretely, if we let
\[
    f\bigl(\calX{i},\calX{j},\calX{k}\bigr)\quad\text{and}\quad w_{\calX{j},\calX{k}}
\]
govern three-way effects (rather than just pairwise $f(\mathrm{in},\mathrm{keep})$), the probability of “\(\mathrm{mind}\)” can be increased via a \emph{higher-order} interaction $f(\mathrm{in}, \mathrm{keep}, \mathrm{the\_deadline})$ specifically tailored to that context. In doing so, we need not raise \emph{all} completions of “$\dots\mathrm{keep}\dots\mathrm{in}\,[\ ]$,” and thus avoid inadvertently increasing “\(\dots\mathrm{keep}\dots\mathrm{in}\ \mathrm{bay}\).”

Hence, by modeling \emph{triplet} or \emph{higher-order} interactions, hyper-attention more flexibly captures context-specific phrases like ``\textit{keep the deadline in mind},'' while suppressing incorrect ones like ``\textit{keep the deadline in bay},'' mitigating the skip-trigram bug highlighted.

\subsection{Efficient Strategy to Mitigate \(\mathcal{O}(L^n)\) Computation Complexity}\label{appendix:hyperattn-comp-complexity-mitigation}
For simplicity, we focus on third order HyperAttention but the same arguments generalize to any order. Let's first look at linear version with recalling the definition of third order linear HyperAttention (Definition~\ref{definition:lin-hyperattention-o3}) in the tensor decomposition format of Definition~\ref{definition:hyperattention}. 
\begin{align*}
    A_{i, j_1, j_2} &= \sum_{\sigma=1}^{R} Q_{i,\sigma} K^1_{j_1, \sigma} K^2_{j_2, \sigma}, \\
    V_{j_1, j_2, \tau} &= \sum_{\sigma=1}^{R} V^1_{j_1, \sigma} V^2_{j_2, \sigma} V^3_{\tau, \sigma},\\
    \mathrm{HA}^{\mathrm{lin}}_{i\tau}(\mathbf{X}) &= \sum_{j_1j_2}^L A_{ij_1j_2} V_{j_1j_2\tau}.
\end{align*}
Here, each equation has \(\mathcal{O}(L^3R)\) computational complexity, so the total calculation has \(\mathcal{O}(L^3R)\) complexity.
We can write the same expression as
\begin{equation}\label{eq:orderthreeappendix}
    \mathrm{HA}^{\mathrm{lin}}_{i\tau}(\mathbf{X}) = \sum_{j_1j_2}^L \Bigl\{\sum_{\sigma=1}^{R} Q_{i,\sigma} K^1_{j_1, \sigma} K^2_{j_2, \sigma}\Bigr\} \Bigl\{\sum_{\tilde\sigma=1}^{R} V^1_{j_1, \tilde\sigma} V^2_{j_2, \tilde\sigma} V^3_{\tau, \tilde\sigma}\Bigr\}.
\end{equation}
Changing the order of summations (take the summations over \(j\)s first), %
\begin{equation}\label{eq:change_sum_order}
    \mathrm{HA}^{\mathrm{lin}}_{i\tau}(\mathbf{X}) = 
    \sum_{\tilde\sigma=1}^{R}
    \sum_{\tilde\sigma=1}^{R} Q_{i,\sigma}
    \Bigl\{\sum_{j_1}^L K^1_{j_1, \sigma} V^1_{j_1, \tilde\sigma}\Bigr\}
    \Bigl\{\sum_{j_2}^L K^2_{j_2, \sigma}V^2_{j_2, \tilde\sigma}\Bigr\}
      V^3_{\tau, \tilde\sigma},
\end{equation}
its computational complexity can is reduced to \(\mathcal{O}(LR^2)\). Generally \(R\ll L\) because \(R\) simply corresponds to attention head dimension. Thus, computational complexity reduces significantly.

As for the softmax or general nonlinear version, we use techniques similar to those in \cite{katharopoulos_transformers_2020, choromanski_rethinking_2022}. For general nonlinear case Eq.\ref{eq:orderthreeappendix} can be written as,
\begin{equation*}
    \mathrm{HA}^{\mathrm{lin}}_{i\tau}(\mathbf{X}) =\frac{\sum_{j_1j_2}^L \mathrm{sim}\Bigl( Q_{i,:}, K^1_{j_1,:}, K^2_{j_2,:}\Bigr) \Bigl\{\sum_{\sigma=1}^{R} V^1_{j_1, \sigma} V^2_{j_2, \sigma} V^3_{\tau, \sigma}\Bigr\}}{\sum_{j_1j_2}^L \mathrm{sim}\Bigl( Q_{i,:}, K^1_{j_1,:}, K^2_{j_2,:}\Bigr)},
\end{equation*}
where \(\mathrm{sim}(.)\) is classical non-linearities (for softmax it is exponentiation of the attention scores). This can be approximated with a function of the form %
\begin{equation*}
    \widehat{\mathrm{HA}}^{\mathrm{lin}}_{i\tau}(\mathbf{X}) =\frac{\sum_{j_1j_2}^L \Bigl(\sum_{\sigma=1}^{R_2} \phi(Q_{i:})_\sigma \phi(K^1_{j_1:})_\sigma \phi(K^2_{j_2:})_\sigma\Bigr) \Bigl\{\sum_{\sigma=1}^{R} V^1_{j_1, \sigma} V^2_{j_2, \sigma} V^3_{\tau, \sigma}\Bigr\}}{\sum_{j_1j_2}^L \Bigl(\sum_{\sigma=1}^{R_2} \phi(Q_{i:})_\sigma \phi(K^1_{j_1:})_\sigma \phi(K^2_{j_2:})_\sigma\Bigr)},
\end{equation*}
where \(\phi:\mathbb{R}^R\to\mathbb{R}^{R_2}\) is a nonlinear transformation chosen accordingly to the \(\mathrm{sim}\) and \(R_2 \in \mathbb{Z}^+ \) is \(\mathcal{O}(R)\).
 \cite{alman_how_2023_linhyperattn} show that if entries of the input matrices \(\mathbf{Q}, \mathbf{K}\) are less than \(o(\sqrt[3]{\log L})\) than for \(\epsilon = 1/\mathrm{poly}(L)\) 
\begin{equation*}
    \max_{i,\tau}\left|{\mathrm{HA}}^{\mathrm{lin}}_{i\tau}-\widehat{\mathrm{HA}}^{\mathrm{lin}}_{i\tau}\right| \le \epsilon
\end{equation*}
Consequently, the same summation order change trick \eqref{eq:change_sum_order} applies for nonlinear HyperAttentions, too.

\subsection{HyperAttention Learning} \label{appendix:hyperattn-conv-subsect}

For simplicity we prove the convergence of order three linear HyperAttention. However, after this proof, extension to any order is trivial. Recall Definition~\ref{definition:lin-hyperattention-o3}, which is copied here. %
\begin{definition}[Third order Linear HyperAttention]
    \begin{align*}
    A_{ij_1j_2}&=\sum_{\alpha \nu_1 \nu_2}^d C_{\alpha \zeta_1 \zeta_2} X_{i\alpha}X_{j_1\zeta_1} X_{j_2\zeta_2}\\
    V_{j_1j_2\tau}&=\sum_{\xi_1 \xi_2}^d X_{j_1\xi_1}X_{j_2\xi_2} W^{V}_{\xi_1\xi_2\tau}\\
    \mathrm{HA}^{\mathrm{lin}}_{i\tau}(\mathbf{X}) &= \sum_{j_1\leq j_2}^L A_{ij_1j_2} V_{j_1j_2\tau},
\end{align*}
where we \textbf{denote} \((i,j,k)\)-th entry of a tensor \(\mathbf{T}\) as \(T_{ijk}\) and \(\mathbf{C} \in \mathbb{R}^{d\times d\times d},\, \mathbf{W}^V \in \mathbb{R}^{d\times d\times d_2}\).
\end{definition}

 Seeing that our main aim is {understanding} the attention scores, the core mechanism defining self-attention, we choose \(d_2=1\) to simplify the convergence analysis. Thus, \(\mathbf{W}^V\) is two dimensional tensor which we denote as \(\mathbf{w}\) in this subsection.
 
 \begin{assumption}[Weak Realizability]\label{assumption:weakrealizable-hyperattn}
    The task is realizable, i.e, there exist \(\mathbf{C}^*\) and \(\mathbf{w}^*\) that perfectly fits the training data. 
\end{assumption}

\begin{theorem}[Convergence of HyperAttention to Zero Training Error]\label{theorem:conv-hyperattn} Let the dimensions \(d = |\mathcal{S}|\) and \(d_2=1\). Also, let the initial parameters \(\mathbf{C}(t) = \mathbf{0}\), \({w}^2_{\alpha \beta}(0) \ge b>0,\, \forall i\). Then, under the assumptions~\ref{assumption:versatiledata-hyperattn} and \ref{assumption:weakrealizable-hyperattn}, gradient flow on \begin{equation*}
    L^{\mathrm{MSE}}\bigl(\mathbf{C},\mathbf{W}^V\bigr)
    \;=\;
    \frac{1}{B}\,\sum_{n=1}^B
      \Bigl\|
         \mathbf{HA}^\mathrm{lin}_{\mathbf{C},\mathbf{W}^V}\!\bigl(\mathbf{X}^{(n)}\bigr)
         \;-\;
         \mathbf{Y}^{(n)}
      \Bigr\|^2,
\end{equation*}
converges to zero training error.
\end{theorem}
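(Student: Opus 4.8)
## Proof Plan for Theorem~\ref{theorem:conv-hyperattn}

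\textbf{Overall strategy.} The plan is to mirror the proof of Theorem~\ref{theorem:conv} (convergence of linear self-attention) step by step, since third-order linear HyperAttention has exactly the same structural anatomy: a trilinear "attention score" tensor contracted against a bilinear "value" tensor, with the output linear in $\mathbf{C}$ and quadratic in the single value tensor $\mathbf{w}$. As before, I would first move to the orthonormal (one-hot) embedding basis using $d=\domainsize$, so that $\mathbf{B}\XnT\Xn\mathbf{B}^\top = \diag{\sn}$ and all contractions become diagonal in the embedding coordinates. Throughout I work in that basis and drop the superscript, exactly as in Appendix~\ref{appendix:convergence-linselfattn}.

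\textbf{Key steps.} First I would compute the gradients $\partial L^{\mathrm{MSE}}/\partial \mathbf{C}$ and $\partial L^{\mathrm{MSE}}/\partial \mathbf{w}$; because the output is linear in $\mathbf{C}$ and the three-way contraction $A_{ij_1j_2}$ is symmetric in $(j_1,j_2)$ when restricted appropriately, these take the same trilinear form as in the pairwise case. Second, I would establish the HyperAttention analogue of Lemma~\ref{lemma:wi_great}: choosing $\mathbf{C}(0)=\mathbf{0}$ and $w^2_{\alpha\beta}(0)\ge b>0$, I would form the conserved quantity $\mathrm{Tr}\{\mathbf{C}\bullet\mathbf{\Lambda}\bullet\mathbf{C}\} - \langle \mathbf{w},\mathbf{\Lambda}\bullet\mathbf{w}\rangle$ along the flow (where $\mathbf{\Lambda}$ runs over coordinate-projection tensors), deduce $w^2_{\alpha\beta}(t)\ge w^2_{\alpha\beta}(0)\ge b$, and then use continuity plus the intermediate value theorem to rule out a sign flip, so $w_{\alpha\beta}(t)$ stays bounded away from zero. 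Third, I would write the residual $\mathbf{D}^{(n)} = \mathbf{HA}^{\mathrm{lin}}(\Xn) - \mathbf{y}^{(n)}$, use weak realizability to substitute $\mathbf{y}^{(n)} = \mathbf{HA}^{\mathrm{lin}}_{\mathbf{C}^*,\mathbf{w}^*}(\Xn)$, and — using the diagonality of $\Xn^\top\Xn$ in the one-hot basis and the positivity of the entries of $\mathbf{w}$ — rewrite $\mathbf{D}^{(n)}$ as $\mathbf{M}^{(n)}\,\mathrm{vec}[\mathbf{C} - \mathbf{C}^* \bullet (\text{diagonal rescaling by } \mathbf{w}^*/\mathbf{w})]$ for an appropriate "feature matrix" $\mathbf{M}^{(n)}$. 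Fourth, I would differentiate $L^{\mathrm{MSE}}$ along the flow, obtain $\dot L \le -\frac{4\eta}{B^2}\lambda_{\min}(\mathbf{M}^\top\mathbf{M})\,\|\mathbf{D}\|^2$ via Lemma~\ref{lemma:M_eval}, and then lower-bound $\lambda_{\min}(\mathbf{M}^\top\mathbf{M})$ by repeating the Kronecker-expansion argument around Eq.~\ref{eq:from}--\ref{eq:thislooksverysimilar}: decompose any unit vector by source element $\mu$, collect the terms over $\mathcal{B}_\mu$, and extract $\diag{\ww}$, $\diag{\mathbf{s}_\mu}$ and $\mathbf{S}_{\mathcal{B}_\mu}$ factors, using the bound $b$ from the modified Lemma~\ref{lemma:wi_great} and Assumption~\ref{assumption:versatiledata-hyperattn}. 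Finally, Lemma~\ref{lemma:convergence} gives $\mathbf{D}(t)\to 0$.

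\textbf{Main obstacle.} The delicate point is the third/fourth step: in the pairwise case, the reduction of $\mathbf{D}^{(n)}$ to a clean $\mathbf{M}^{(n)}\mathrm{vec}[\cdot]$ form relied on $\Xn^\top\Xn$ being diagonal, which let $\mathbf{C}^* \mathbf{X}^\top\mathbf{X}\mathbf{w}^*$ be rewritten with the "ratio" diagonal matrix $\diag{\mathbf{w}^*/\mathbf{w}}$. For a third-order tensor $\mathbf{w}$ (two-index in this $d_2=1$ reduction), the value side contributes a \emph{product} of two embedding factors, so the "rescaling" that absorbs $\mathbf{w}^*$ into $\mathbf{w}$ becomes a rescaling of a two-index object, and one must check that, because the relevant $\Xn^\top\Xn$ contractions are still diagonal, this rescaling is again diagonal and well-defined (this is exactly why positivity of $\mathbf{w}$ — hence the HyperAttention analogue of Lemma~\ref{lemma:wi_great} — is needed). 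I also need to handle the $j_1\le j_2$ ordering constraint in the definition carefully: either symmetrize $\mathbf{C}$ and $\mathbf{w}$ without loss of generality (absorbing the "upper-triangular" restriction into a symmetric kernel), or carry the ordered sum through the Kronecker bookkeeping. Once the residual is in the factored form and $\mathbf{w}$ is bounded below, the eigenvalue lower bound and the convergence conclusion follow by the same machinery as Theorem~\ref{theorem:conv}, with $\mathbf{S}_{\mathcal{B}_\mu}$ appearing to a higher power but still full column rank by assumption.
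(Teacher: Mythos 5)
Your proposal is correct and follows essentially the same route as the paper: one-hot basis, a two-index analogue of Lemma~\ref{lemma:wi_great} via the conserved quantity with projection tensors $\mathbf{\Lambda}^a,\mathbf{\Lambda}^b$, the residual factorization $\mathbf{D}=\mathbf{M}\,\mathrm{vec}\bigl[\mathbf{C}-\mathbf{C}^*\,\mathrm{diag}(\mathbf{w}^*/\mathbf{w})\bigr]$ enabled by the diagonality of the ordered pair contraction, and the eigenvalue lower bound feeding Lemma~\ref{lemma:M_eval} and Lemma~\ref{lemma:convergence}. The only imprecision is in the last step: the versatility matrix that must be full column rank is not $\Smu$ ``to a higher power'' but the matrix $\mathbf{Z}_{\mathcal{B}_\mu}$ whose rows are $\mathrm{vec}^\top\bigl(\mathbf{\Gamma}^{(n)}\bigr)$, where $\Gamma^{(n)}_{\nu\sigma}=\sum_{k}\sum_{j\le k}\delta_{\calXn{j},\nu}\delta_{\calXn{k},\sigma}$ records ordered pair counts — exactly what Assumption~\ref{assumption:versatiledata-hyperattn} requires, so the argument goes through as you outline.
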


Before proving the theorem we will derive the gradients and state some lemmas that are going to be useful in the proof. 

 \paragraph{Gradients with Respect to \texorpdfstring{$\mathbf{C}$}{C} and \texorpdfstring{$\mathbf{w}$}{w}.} 
\begin{equation*}
    \frac{\partial \hypermse}{\partial \Cthree} = \frac{2}{B} \sum_{n=1}^B \left(\HAXn -\mathbf{y}^{(n)}\right)^\top \frac{\partial \HAXn}{\partial \Cthree}
\end{equation*}

\begin{equation*}
    \frac{\p \hai}{\p \Cthree} = \sum_\gamma \sum_\sigma\sum_k \sum_{j\leq k} X^{(n)}_{i\mu} X^{(n)}_{j\nu}X^{(n)}_{j\gamma}X^{(n)}_{k\sigma} X^{(n)}_{k\theta} w_{\gamma \theta} 
\end{equation*}
\begin{equation*}
    \frac{\p \hypermse}{\p \Cthree} = \frac{2}{B}\sum_n \sum_{\gamma \theta }w_{\gamma \theta}\left( \sum_k \sum_{j\leq k} X^{(n)}_{j \nu} X^{(n)}_{j \gamma} X^{(n)}_{k \sigma}X^{(n)}_{j \theta}\right)  \left(\mathbf{X}^{(n)}\right)^\top_{\mu,:} \Dn
\end{equation*}
Similarly we can find the gradient with respect to \(\mathbf{w}\).
\begin{equation*}
    \frac{\p \hypermse}{\p \wthree} = \frac{2}{B} \sum_{n=1}^B \left(\HAXn - \mathbf{y}^{(n)}\right)^\top \frac{\p \HAXn}{\p \wthree}
\end{equation*}
\begin{equation*}
    \frac{\p \HAXn}{\p \wthree} = \sum_k \sum_{j \leq k} \left( X_{i\mu}X_{j\nu}X_{k\sigma} \Cthree\right) X_{j\gamma} X_{k\theta}
\end{equation*}
\begin{equation*}
    \frac{\p \hypermse}{\p \wthree} = \frac{2}{B} \sum_n \sum_{\mu \nu \sigma}\Cthree  \left(\gammm\right)  \left(\XnT\right)_{\mu,:} \Dn
\end{equation*}

In this subsection, we will change our perspective and prove the things in the one hot encoding basis, similarly to what we did in Section~\ref{appendix:convergence-linselfattn}. That is, we define
\begin{align*}
    {X}_{i\mu}^{(n)\,\mathrm{one-hot}} &= \sum_k {X}_{ik}^{(n)}{B}_{\mu k},\\
    {C}_{\mu \nu \sigma}^{\mathrm{one-hot}} &= \sum_{ijk}{B}_{\mu i}{B}_{\nu j}{B}_{\sigma k}{C}_{ijk},\\
    {w}_{\gamma \theta}^{\mathrm{one-hot}} &= \sum_{ij}B_{\gamma i}{B}_{\theta j}{w}_{ij},
\end{align*}
and use the \(\mathrm{one-hot}\) encoded versions. However, again, we abuse the notation and omit the subscript in this subsection, e.g. we write \(C\) but we mean the \(\mathrm{one-hot}\) version \(C^{\mathrm{one-hot}}\), in the rest of this subsection. Lastly, in this section, we denote \(\mathbf{e}_\mu \in \mathbb{R}^\domainsize\) as unique one-hot encoded vector for all \(\mu \in \mathcal{S}\), i.e. the base vector. 

 Now we state some lemmas which are anologous to Lemmas~\ref{lemma:orthogonal_basis} and~\ref{lemma:wi_great}.
\begin{lemma}\label{lemma:diagonality-hyperattn}
    In the embedding base, \(\gammm\) is diagonal in the sense that it can be written as
    \begin{equation*}
        \gammm = \Gn \delta_{\nu \gamma}\delta_{\sigma \theta},
    \end{equation*}
    where 
    \begin{equation*}
        \Gn = \sum_k \sum_{j\leq k} \delta_{\calXn{j},\nu}\delta_{\calXn{k},\sigma},
    \end{equation*}
    and \(\delta\) is kronocker delta function, that is, 
    \begin{equation*}
        \delta_{\nu\gamma} =\begin{cases}1,&\text{ if }\nu = \gamma \\
        0, &\text{ if }\nu \neq \gamma
    \end{cases}
    \end{equation*}
\end{lemma}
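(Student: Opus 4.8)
The plan is to evaluate, in the one-hot (embedding) basis used throughout this subsection, the quartic sum
\[
\sum_{k}\sum_{j\le k} X^{(n)}_{j\nu}\,X^{(n)}_{j\gamma}\,X^{(n)}_{k\sigma}\,X^{(n)}_{k\theta}
\]
that arises when differentiating $\mathbf{HA}^{\mathrm{lin}}$ with respect to $\mathbf{C}$ (this is the tensor $\gammm$ of the lemma, matching the first displayed line of the gradient $\partial\hypermse/\partial\Cthree$). The single ingredient needed is the entrywise refinement of Lemma~\ref{lemma:orthogonal_basis}: once we pass to the one-hot basis, every row of $\mathbf{X}^{(n)}$ is a standard basis vector, so $X^{(n)}_{j\mu}=\delta_{\calXn{j},\mu}$ for each token position $j$ and each domain element $\mu$. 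After substituting this into all four factors, the sum becomes a product of Kronecker deltas and the claim collapses to index bookkeeping.

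Concretely, first I would substitute $X^{(n)}_{j\mu}=\delta_{\calXn{j},\mu}$ and group the factors by the token position they reference. The two factors tied to position $j$ give $\delta_{\calXn{j},\nu}\,\delta_{\calXn{j},\gamma}$, which I would collapse via the idempotence identity $\delta_{a,p}\,\delta_{a,q}=\delta_{a,p}\,\delta_{p,q}$ (a nonzero product forces $p=q$) into $\delta_{\calXn{j},\nu}\,\delta_{\nu\gamma}$; likewise the two factors tied to position $k$ give $\delta_{\calXn{k},\sigma}\,\delta_{\sigma\theta}$. Since $\delta_{\nu\gamma}$ and $\delta_{\sigma\theta}$ do not involve the summation indices $j,k$, I would pull them outside the double sum $\sum_k\sum_{j\le k}$, leaving exactly $\sum_k\sum_{j\le k}\delta_{\calXn{j},\nu}\,\delta_{\calXn{k},\sigma}$, which is the definition of $\Gn$. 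This gives $\gammm=\Gn\,\delta_{\nu\gamma}\,\delta_{\sigma\theta}$, the asserted factorized (``diagonal'') form.

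There is no genuine obstacle here — the lemma is a one-line computation once the one-hot substitution is in place. The only point requiring mild care is tracking which embedding indices sit on token position $j$ versus position $k$, which is dictated by the definitions of $A_{ij_1j_2}$ and $V_{j_1j_2\tau}$ in Definition~\ref{definition:lin-hyperattention-o3} (the score index $\nu$ and the value index $\gamma$ both attach to $j_1$, while $\sigma$ and the value index $\theta$ both attach to $j_2$); this is exactly what produces the pairings $\nu\leftrightarrow\gamma$ and $\sigma\leftrightarrow\theta$ rather than any other matching. Structurally, this lemma is the HyperAttention counterpart of Lemma~\ref{lemma:orthogonal_basis}: it is the fact that will let the proof of Theorem~\ref{theorem:conv-hyperattn} reduce the gradient-flow residual dynamics to per-element blocks and then invoke the data-versatility assumption, mirroring the linear self-attention convergence argument of Appendix~\ref{appendix:convergence-linselfattn}.
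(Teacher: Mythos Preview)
Your proposal is correct and follows essentially the same approach as the paper: substitute $X^{(n)}_{j\mu}=\delta_{\calXn{j},\mu}$ in the one-hot basis, then use the Kronecker-delta identity $\delta_{a,p}\,\delta_{a,q}=\delta_{a,p}\,\delta_{p,q}$ on each token-position pair to factor out $\delta_{\nu\gamma}\delta_{\sigma\theta}$ from the double sum. The paper's proof is exactly this three-line computation, invoking the same identity at the last step.
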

\begin{proof}
Seeing that we are in the embedding base,
\begin{align*}
        \gammm &= \sum_k \sum_{j<k} \left[\ee_{\calXn{k}} \right]_\sigma \left[\ee_{\calXn{k}} \right]_\theta  \left[\ee_{\calXn{j}} \right]_\nu \left[\ee_{\calXn{j}} \right]_\gamma\\
        &= \sum_k \sum_{j \leq k} \delta_{\calXn{j}, \nu}\delta_{\calXn{j}, \gamma} \delta_{\calXn{k}, \sigma}\delta_{\calXn{k}, \theta}\\
        &= \left(\sum_k \sum_{j \leq k} \delta_{\calXn{j}, \nu}\delta_{\calXn{k}, \sigma}\right) \delta_{\nu \gamma}\delta_{\sigma \theta}
\end{align*}
the last equality follows from the identity \(\delta_{ij}\delta_{ik} =\delta_{ij}\delta_{jk}\).
\end{proof}

\newcommand{\entryC}{C_{\mu \nu \sigma}}
\newcommand{\entryw}{w_{\gamma \theta}}
\newcommand{\ddentryC}{\frac{dC_{\mu \nu \sigma}}{dt}}
\newcommand{\ddentryw}{\frac{dw_{\gamma \theta}}{dt}}

\begin{lemma} \label{lemma:wi_great-hyperattn}
    If we choose initial parameters as $\mathbf{C}(0)=\mathbf{0}$ and \(w_{\alpha\beta}(0) \ge b >0\), then $w_{\alpha \beta}(t) \ge b > 0,\,\forall \alpha,\beta \text{ and }\forall t\ge0$.
\end{lemma}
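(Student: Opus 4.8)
# Proof Proposal for Lemma~\ref{lemma:wi_great-hyperattn}

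The plan is to mimic the structure of the proof of Lemma~\ref{lemma:wi_great} (the linear self-attention case) as closely as possible, since the algebraic shape of the gradient flow equations for HyperAttention is essentially the same, only with the value matrix $\mathbf{w}$ now being a two-index object $w_{\gamma\theta}$ instead of a vector. First I would write down the gradient-flow equations in the one-hot basis, using the gradient formulas derived just above the statement:
\[
\ddentryC = -\eta\,\frac{\p \hypermse}{\p \Cthree}, \qquad \ddentryw = -\eta\,\frac{\p \hypermse}{\p \wthree},
\]
and then, exactly as in the linear case, I would contract the $\mathbf{C}$-equation with $\mathbf{C}$ and a diagonal auxiliary tensor and the $\mathbf{w}$-equation with $\mathbf{w}$ and the same auxiliary object, take traces, and observe that the two resulting scalar expressions coincide. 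This yields a conservation law: there is a quantity, quadratic in $\mathbf{C}$ minus a quantity quadratic in $\mathbf{w}$ (each weighted by an arbitrary diagonal tensor), whose time derivative vanishes.

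The key step is choosing the auxiliary diagonal tensor to isolate a single coordinate $w_{\alpha\beta}$. Concretely, I would pick the weighting that supports only the index pair $(\alpha,\beta)$ on the $\mathbf{w}$ side and, correspondingly, picks out the "slice" $C_{\cdot\,\alpha\,\beta}$ on the $\mathbf{C}$ side (using Lemma~\ref{lemma:diagonality-hyperattn}, which tells us that the relevant contraction $\gammm$ is diagonal in $(\nu,\gamma)$ and $(\sigma,\theta)$, so the conservation law decouples coordinate-wise just as $\mathbf{X}^{\top}\mathbf{X}$ did in the order-two case). Evaluating the conserved quantity at time $t$ and at time $0$, and using $\mathbf{C}(0)=\mathbf{0}$, I would obtain
\[
w_{\alpha\beta}^2(t) \;=\; w_{\alpha\beta}^2(0) \;+\; \bigl\|\text{(appropriate slice of }\mathbf{C}(t)\text{)}\bigr\|_2^2 \;\ge\; w_{\alpha\beta}^2(0) \;\ge\; b^2,
\]
the analogue of Eq.~\eqref{eq:lemmaeq111}.

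The final step is the same sign/continuity argument used at the end of the proof of Lemma~\ref{lemma:wi_great}: $w_{\alpha\beta}(t)$ is continuous in $t$ (the right-hand side of the gradient flow is finite for all $t$), it starts at a value $\ge b>0$, and it can never equal $0$ because $w_{\alpha\beta}^2(t)\ge b^2>0$; hence by the intermediate value theorem it can never become $\le -b$ either, so it stays $\ge b$ for all $t\ge 0$. The main obstacle I anticipate is purely bookkeeping: verifying that the trace-contraction trick goes through cleanly with the extra value index, i.e. that the cross terms coupling $\ddentryC$ and $\ddentryw$ match up as a single scalar after summing over all of $\mu,\nu,\sigma,\gamma,\theta$ and over samples $n$. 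This requires care with the diagonality statement of Lemma~\ref{lemma:diagonality-hyperattn} and the symmetry in the last two indices of the attention tensor, but no genuinely new idea beyond what the order-two proof already supplies.
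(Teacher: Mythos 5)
Your proposal follows essentially the same route as the paper's proof: contracting the gradient-flow equations for $\mathbf{C}$ and $\mathbf{w}$ against auxiliary diagonal matrices (the paper uses $\mathbf{\Lambda}^a=\diag{\mathbf{e}_\alpha}$, $\mathbf{\Lambda}^b=\diag{\mathbf{e}_\beta}$), invoking Lemma~\ref{lemma:diagonality-hyperattn} to decouple coordinates, deriving the conservation law $w_{\alpha\beta}^2(t)=w_{\alpha\beta}^2(0)+\Vert \mathbf{C}_{:,\alpha,\beta}(t)\Vert^2$, and closing with the same continuity and intermediate-value argument. The proposal is correct and matches the paper's argument.
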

\begin{proof}
    Firstly, we will show that \(w_\alpha(t)^2\ge w_\alpha(0)^2,\,\forall t \text{ and }\forall i\), than the statement in the lemma will follow similar to the proof of Lemma~\ref{lemma:wi_great}. Using the previous gradient derivations and Lemma~\ref{lemma:diagonality-hyperattn},
    \begin{align*}
        \ddentryC &= -\eta\frac{\p \Lmse}{\p \entryC} = -\eta\frac{2}{B} \sum_n \sum_{\gamma \theta }w_{\gamma \theta}\left( \sum_k \sum_{j \leq k} X^{(n)}_{j \nu} X^{(n)}_{j \gamma} X^{(n)}_{k \sigma}X^{(n)}_{j \theta}\right)  \left(\mathbf{X}^{(n)}\right)^\top_{\mu,:} \Dn,\\
        &=-\eta\frac{2}{B} \sum_n \sum_{\gamma \theta }w_{\gamma \theta} X^{(n)}_{k \sigma}X^{(n)}_{j \theta}\Gamma_{\nu \sigma} \delta_{\nu \gamma}\delta_{\sigma \theta}  \left(\mathbf{X}^{(n)}\right)^\top_{\mu,:} \Dn,\\
        \ddentryw &= -\eta\frac{\p \Lmse}{\p \entryw} = -\eta \frac{2}{B} \sum_n \sum_{\mu \nu \sigma} \Cthree  \left(\gammm\right)  \left(\XnT\right)_{\mu,:} \Dn,\\
        &= -\eta \frac{2}{B} \sum_n \sum_{\mu \nu \sigma} \Cthree  \Gamma_{\nu \sigma} \delta_{\nu \gamma}\delta_{\sigma \theta}  \left(\XnT\right)_{\mu,:} \Dn.\\
    \end{align*}

    Let \(\mathbf{\Lambda}^a\) and \(\mathbf{\Lambda}^b\) be matrices that are diagonal in the embedding base \(\mathbf{B}\). However, we again abuse the notation. We do not rewrite the \(\mathrm{one-hot}\) in \(\mathbf{\mathbf{\Lambda}}^{a,b\,\mathrm{one-hot}} = \mathbf{B \mathbf{C}}\mathbf{B}^\top\) and denote it just as \(\mathbf{\Lambda}^{a,b}\) in the rest of the proof. We can now write
    \begin{align*}
        \sum_{\mu \nu \sigma}\entryC \ddentryC \Lambda_{\nu \nu}^a\Lambda_{\sigma \sigma}^b &= -\eta\frac{2}{B} \sum_n \sum_{\mu \nu \sigma} \entryC \Lambda^a_{\nu\nu}\Lambda^b_{\sigma\sigma} \sum_{\gamma \theta}\entryw \Gn \delta_{\nu \gamma}\delta_{\sigma \theta} \left(\XnT\right)_{\mu,:}\Dn,\\
        &=  -\eta\frac{2}{B} \sum_n \sum_{\mu \nu \sigma} \entryC w_{\nu \theta}\Lambda^a_{\nu\nu}\Lambda^b_{\sigma\sigma} \Gn \left(\XnT\right)_{\mu,:}\Dn,\\
        \sum_{\gamma \theta} \entryw \ddentryw \Lambda_{\nu \nu}^a\Lambda_{\sigma \sigma}^b&=  -\eta\frac{2}{B} \sum_n \sum_{\gamma \theta} \Lambda^a_{\gamma\gamma}\Lambda^b_{\theta\theta} \entryw \sum_{\mu \nu \sigma}  \entryC  \Gn \delta_{\nu \gamma}\delta_{\sigma \theta} \left(\XnT\right)_{\mu,:}\Dn,\\
        &=  -\eta\frac{2}{B} \sum_n \sum_{\mu \nu \sigma} \entryC w_{\nu \theta}\Lambda^a_{\nu\nu}\Lambda^b_{\sigma\sigma} \Gn \left(\XnT\right)_{\mu,:}\Dn.
    \end{align*}
Consequently we have
\begin{align*}
    \sum_{\gamma \theta} \entryw \ddentryw \Lambda_{\nu \nu}^a\Lambda_{\sigma \sigma}^b&=\sum_{\mu \nu \sigma}\entryC \ddentryC \Lambda_{\nu \nu}^a\Lambda_{\sigma \sigma}^b, \\
    \frac{d}{dt}\sum_{\gamma \theta} \entryw^2 \Lambda_{\nu \nu}^a\Lambda_{\sigma \sigma}^b&=\frac{d}{dt}\sum_{\mu \nu \sigma}\entryC^2 \Lambda_{\nu \nu}^a\Lambda_{\sigma \sigma}^b, \\
    \sum_{\gamma \theta} \entryw^2(t) \Lambda_{\nu \nu}^a\Lambda_{\sigma \sigma}^b&=\sum_{\gamma \theta} \entryw^2(0) \Lambda_{\nu \nu}^a\Lambda_{\sigma \sigma}^b +\sum_{\mu \nu \sigma}\entryC^2(t) \Lambda_{\nu \nu}^a\Lambda_{\sigma \sigma}^b -\sum_{\mu \nu \sigma}\entryC^2(0) \Lambda_{\nu \nu}^a\Lambda_{\sigma \sigma}^b. \\
\end{align*}
Letting \(\mathbf{\Lambda}^a = \diag{ \mathbf{e}_\alpha}\) and \(\mathbf{\Lambda}^b = \diag{\mathbf{e}_\beta}\),
\begin{equation*}
    w_{\alpha\beta}^2(t) = w_{\alpha\beta}^2(0) + \Vert \mathbf{C}_{:,\alpha,\beta}(t) \Vert ^2 - \Vert \mathbf{C}_{:,\alpha,\beta}(0) \Vert ^2 = w_{\alpha\beta}^2(0) + \Vert \mathbf{C}_{:,\alpha,\beta}(t) \Vert ^2
\end{equation*}
where the last equality follows because \(\mathbf{C}(0) = \mathbf{0}\). As a result we reach to
    \begin{equation} \label{eq:lemmaeq111-hyper}
        w_{\alpha\beta}^2(t) \ge w^2_{\alpha\beta}(0) \ge b^2
    \end{equation}
Seeing that \(\frac{d w_{\alpha\beta}}{dt}\) is finite $\forall t$, \(w_{\alpha\beta}(t)\) is continuous. As a result if \(w_\alpha(0) \ge b >0\), then \(w_{\alpha\beta}(t) \ge b,\, \forall t\) which can be proven by contradiction. Assume \(\exists t^*>0\) such that \(w_{\alpha\beta}(t^*) \le b\). By Equation~\ref{eq:lemmaeq111-hyper}, \(w_{\alpha\beta}(t^*) \le -b <0\). By intermediate value theorem \(\exists \tau \in \left(0, t^*\right)\) such that \(w_{\alpha\beta}(\tau)=0\), so \(w_{\alpha\beta}^2(\tau)=0<w_{\alpha\beta}^2(0) \ge b^2\), which contradicts with \eqref{eq:lemmaeq111-hyper}
\end{proof}

\begin{proof}[Proof of Theorem~\ref{theorem:conv-hyperattn}]
Seeing that \(d_2=1\) we denote two dimensional reduction of the three dimensional tensor \(\mathbf{W}^V\) as \(\mathbf{w}\). Thus the HyperAttention formula becomes,
\begin{equation*}
    \mathrm{HA}_{i}^{\mathrm{lin}}\left(\mathbf{X}\right) = \sum_{j_1 \leq  j_2}^L \left(\sum_{\alpha \zeta_1 \zeta_2}^d C_{\alpha \zeta_1 \zeta_2}X_{i\alpha}X_{j_1\zeta_1}X_{j_2\zeta_2}\right) \left(\sum_{\xi_1\xi_2}^d X_{j_1 \xi_1} X_{j_2 \xi_2} w_{\xi_1 \xi_2}\right)
\end{equation*}

\textbf{Gradient Flow for the Residuals and the Loss}
    \begin{equation*}
        \frac{d\mathbf{C}}{dt} = -\eta \frac{\p \hypermse}{\p \mathbf{C}}, \quad \frac{d\mathbf{w}}{dt} = -\eta \frac{\p \hypermse}{\p \mathbf{w}}
    \end{equation*}
Following similar steps as in the proof of Theorem~\ref{theorem:conv},
\begin{align*}
    \frac{d D^{(m)}_i}{dt} = &-\eta \sum_{\mu \nu \sigma \gamma \theta}X^{(m)}_{i\mu}\sum_{k}\sum_{j \leq k}X^{(m)}_{j\nu}X^{(m)}_{k\sigma}  X^{(m)}_{j\gamma} X^{(m)}_{k\theta} \wthree \frac{\p \hypermse}{\p \Cthree}\\ &-\eta \sum_{\nu \mu \sigma \gamma \theta} X^{(m)}_{i\mu} \sum_{k}\sum_{j\leq k}X^{(m)}_{j\nu}X^{(m)}_{k\sigma} \Cthree  X^{(m)}_{j\gamma} X^{(m)}_{k\theta} \frac{\p \hypermse}{\p \wthree}
\end{align*}
Substituting the gradients
\begin{multline*}
    \frac{d D^{(m)}_i}{dt} = - \frac{2\eta}{B}\!\! \sum_{\mu \nu \sigma \gamma' \theta'} \Bigl\{X^{(m)}_{i\mu}w_{\gamma'\theta'}\left(\sum_{k}\sum_{j\leq k}X^{(m)}_{j\nu}X^{(m)}_{k\sigma}  X^{(m)}_{j\gamma'} X^{(m)}_{k\theta'}\right) \\ \times  \sum_n \sum_{\gamma \theta }w_{\gamma \theta}\left( \sum_k \sum_{j \leq k} X^{(n)}_{j \nu} X^{(n)}_{j \gamma} X^{(n)}_{k \sigma}X^{(n)}_{j \theta}\right)  \left(\mathbf{X}^{(n)}\right)^\top_{\mu,:} \Dn\Bigr\}\\ 
    - \frac{2\eta}{B} \!\! \sum_{\mu' \nu' \sigma' \gamma \theta} \Bigl\{ X^{(m)}_{i\mu'} C_{\mu' \nu' \sigma'}\!\left(\sum_{k}\sum_{j \leq k}X^{(m)}_{j\nu'}X^{(m)}_{k\sigma'}   X^{(m)}_{j\gamma} X^{(m)}_{k\theta}\right)\! \\ \times \sum_n \!\sum_{\mu \nu \sigma}\Cthree  \left(\gammm\right)  \!\left(\XnT\right)_{\mu,:} \! \Dn \Bigr \}
\end{multline*}
By Lemma~\ref{lemma:diagonality-hyperattn},
\begin{align*}
    \frac{d D^{(m)}_i}{dt} = &-\eta \frac{2}{B} \sum_{\mu \nu \sigma \gamma' \theta'}X^{(m)}_{i\mu}w_{\gamma'\theta'}{\Gamma}^{(m)}_{\nu \sigma} \delta_{\nu \gamma'}\delta_{\sigma \theta'}  \sum_n \sum_{\gamma \theta }w_{\gamma \theta}{\Gamma}^{(n)}_{\nu \sigma} \delta_{\nu \gamma}\delta_{\sigma \theta}  \left(\mathbf{X}^{(n)}\right)^\top_{\mu,:} \Dn\\ 
    &-\eta \frac{2}{B} \sum_{\mu' \nu' \sigma' \gamma \theta} X^{(m)}_{i\mu'} C_{\mu' \nu' \sigma'}\Gamma^{(m)}_{\nu' \sigma'} \delta_{\nu' \gamma}\delta_{\sigma' \theta} \sum_n \sum_{\mu \nu \sigma}\Cthree  \Gamma^{(n)}_{\nu \sigma} \delta_{\nu \gamma}\delta_{\sigma \theta}  \left(\XnT\right)_{\mu,:} \Dn\\
    &=-\eta \frac{2}{B} \sum_{\mu \nu \sigma}X^{(m)}_{i\mu}w_{\nu\sigma}\Gamma^{(m)}_{\nu \sigma}   \sum_n w_{\nu \sigma}\Gamma^{(n)}_{\nu \sigma}  \left(\mathbf{X}^{(n)}\right)^\top_{\mu,:} \Dn\\ 
    &-\eta \frac{2}{B} \sum_{\mu' \gamma \theta} X^{(m)}_{i\mu'} C_{\mu' \gamma \theta}\Gamma^{(m)}_{\gamma \theta}  \sum_n \sum_{\mu }C_{\mu \gamma \theta} \Gamma^{(n)}_{\gamma \theta}   \left(\XnT\right)_{\mu,:} \Dn
\end{align*}
Now let us define,
\begin{equation*}
    M^{(n)}_{i\mu \nu\sigma} = X^{(n)}_{i\mu}w_{\nu\sigma}\Gamma^{(n)}_{\nu \sigma}
\end{equation*}
Also define matrixize \(M^{(n)}_{i\mu \nu\sigma}\) as \(\mathbf{M}\) such that first dimension of \(\mathbf{M}\) is vectorization of \(n\) and \(i\) the second dimension is vectorization of \(\mu, \nu,\sigma\), so \(\mathbf{M}\in\mathbb{R}^{BL\times d^3}\). There exists a similar matrix \(\mathbf{M}_2\) such that
\begin{equation*}
    \frac{d\mathbf{D}}{dt} = -\eta\frac{2}{B}\left[\mathbf{M}\mathbf{M}^\top +\mathbf{M}_2\mathbf{M}_2^\top \right]
\end{equation*}
Similar to what we in the proof of Theorem~\ref{theorem:conv}, it follows that
\begin{equation}\label{eq:upperboundloss-hyperattn}
    \frac{d\Lmse}{dt} \le -\frac{4\eta}{B^2} \mathbf{D}^\top \mathbf{M} \mathbf{M}^\top \mathbf{D}
\end{equation}
Now, we will write Eq.~\ref{eq:upperboundloss-hyperattn} differently, reexpressing \(\mathbf{D}\). Thanks to Assumption~\ref{assumption:weakrealizable-hyperattn}, i.e. the realizability, we can write
\begin{equation*}
    D^{(n)}_i = \sum_{\mu \nu \sigma \gamma \theta} X^{(n)}_{i\mu}X^{(n)}_{j\nu}X^{(n)}_{k\sigma} \entryC X^{(n)}_{j\gamma}X^{(n)}_{k \sigma}w_{\gamma \sigma} -\sum_{\mu \nu \sigma \gamma \theta} X^{(n)}_{i\mu}X^{(n)}_{j\nu}X^{(n)}_{k\sigma} \entryC^* X^{(n)}_{j\gamma}X^{(n)}_{k \sigma}\entryw^*.
\end{equation*}
By Lemma~\ref{lemma:diagonality-hyperattn},
\begin{align*}
    D^{(n)}_i = \sum_{\mu \nu \sigma} X_{i \mu} \left(\entryC \Gamma^{(n)}_{\nu \sigma} w_{\nu \sigma}-\entryC^* \Gamma^{(n)}_{\nu \sigma} w_{\nu \sigma}^*\right)
\end{align*}
By Lemma~\ref{lemma:wi_great-hyperattn}, \(1/w_{\nu \sigma}\) is defined, so we can write
\begin{align*}
    D^{(n)}_i = \sum_{\mu \nu \sigma} X_{i \mu} \Gamma^{(n)}_{\nu \sigma} w_{\nu \sigma}\left(\entryC -\entryC^*  \frac{w_{\nu \sigma}^*}{w_{\nu \sigma}}\right) = \sum_{\mu \nu \sigma} M^{(n)}_{i\mu\nu\sigma}\left(\entryC -\entryC^*  \frac{w_{\nu \sigma}^*}{w_{\nu \sigma}}\right).
\end{align*}
Now again we can vectorize along \(n, i\) and vectorize along \(\mu, \nu,\sigma\) which leads to
\begin{equation*}
    \mathbf{D} =\mathbf{M} \mathrm{vec}\left(\mathbf{C}-\mathbf{C}^*\frac{\mathbf{w}^*}{\mathbf{w}^*}\right).
\end{equation*}
Following the same steps as in the proof of Theorem~\ref{theorem:conv}, 
\begin{equation*}
    \frac{d\Lmse}{dt}
    \;\le\;
    -\,\frac{4\,\eta}{B^2}\,
    \mathbf{D}^\top
    \,\mathbf{M}\,\mathbf{M}^\top
    \mathbf{D} \,=\, -\frac{4\,\eta}{B^2} \mathrm{vec}
    \Bigl[\,
      \mathbf{C}
      \;-\;
      \mathbf{C}^*\,
      \mathrm{diag}\Bigl(\tfrac{\mathbf{w}^*}{\mathbf{w}}\Bigr)
    \Bigr]^\top \mathbf{M}^\top \mathbf{M}\, \mathbf{M}^\top \mathbf{M} \mathrm{vec}
    \Bigl[\,
      \mathbf{C}
      \;-\;
      \mathbf{C}^*\,
      \mathrm{diag}\Bigl(\tfrac{\mathbf{w}^*}{\mathbf{w}}\Bigr)
    \Bigr]
\end{equation*}
Using the Lemma~\ref{lemma:M_eval}, the same inequality can be written as 
\begin{equation*}
    \frac{d \Lmse}{dt}
    \;\le\; -\frac{4\,\eta}{B^2} \lambda_{\mathrm{min}}\left(\mathbf{M}^\top \mathbf{M}\right)\, \Bigl\| \mathbf{M} \mathrm{vec}
    \Bigl[\,
      \mathbf{C}
      \;-\;
      \mathbf{C}^*\,
      \mathrm{diag}\Bigl(\tfrac{\mathbf{w}^*}{\mathbf{w}}\Bigr)
    \Bigr]\Bigr\|^2 = -\frac{4\,\eta}{B^2} \lambda_{\mathrm{min}}\left(\mathbf{M}^\top \mathbf{M}\right) \| \mathbf{D} \|^2,
\end{equation*}
where \( \lambda_{\min} \left(\mathbf{M}^\top \mathbf{M}\right)\) is the minimum eigenvalue of \(\mathbf{M}^\top \mathbf{M}\). Thus, if there exists a constant \(\psi\) such that \(\lambda_{\min} \left(\mathbf{M}^\top(t) \mathbf{M}(t)\right) \ge \psi>0,\,\, \forall t\), then the training loss stops decreasing only when \(\mathbf{D}\) reaches to all zero vector, i.e, training loss stops decreasing only when it reaches to zero, which is stated more rigorously in Lemma~\ref{lemma:convergence}.

\textbf{Lower Bound on the Eigenvalues of \(\mtm\).}
\begin{equation}\label{eq:lambda-min-mtm-hyper}
    \lambda_{min}\left(\mtm\right) = \sigma_{\min}\left(\mtm\right) = \min_{\uu:\Vert \uu \Vert_2=1}\Vert \mtm\uu\Vert_2,
\end{equation}
where the first equality follow because \(\mtm\) is symmetric and positive semi definite and \(\uu \in \mathbb{R}^{d^3}\). We also know
\begin{equation*}
    \left[\mtm\right]_{\mu \nu \sigma \mu' \nu' \sigma'} = \sum_n \sum_{i \in [L]} X^{(n)}_{i \mu} \Gamma^{n}_{\nu \sigma}w_{\nu \sigma} X^{(n)}_{i \mu'}\Gamma^{n}_{\nu' \sigma'}w_{\nu' \sigma'}.
 \end{equation*}
 For ease of notation we can tensorize the things back to the \(\mu, \nu, \sigma\). Thus,
 \begin{equation*}
     \Vert \uu\Vert = \sum_{\mu \nu \sigma} u_{\mu \nu \sigma}^2=1
 \end{equation*}
It follows that
\begin{equation}\label{eq:requationinconvergence11}
\left[\mtm \uu \right]_{\mu \nu \sigma} = \sum_{n \in \mathcal{B}} \sum_{i \in [L]} \sum_{\mu'\nu'\sigma'} X_{i\mu}^{(n)} \Gn w_{\mu \nu} X_{i\mu'}^{(n)} \Gamma^{(n)}_{\nu' \sigma'} w_{\nu' \sigma'} u_{\mu' \nu' \sigma'}
\end{equation}
Remembering
\begin{equation*}
    \sum_{i\in[L]} X_{i\mu}X_{i\mu'} = \sum_{i\in[L] }\delta_{\calX{i},\mu}\delta_{\calX{i},\mu'},
\end{equation*}
Eq.~\ref{eq:requationinconvergence11} becomes
\begin{align*}
    \left[\mtm \uu \right]_{\mu \nu \sigma} &= \sum_{n \in \mathcal{B}} \Gamma^{(n)}_{\nu \sigma } w_{\nu \sigma} \sum_{\mu' \nu' \sigma'} \left(\sum_{i\in [L]}\delta_{\calX{i},\mu}\delta_{\calX{i},\mu'}\right) \Gamma^{(n)}_{\nu' \sigma'} w_{\nu' \sigma'} u_{\mu' \nu' \sigma'} \\ 
    &= \sum_{n \in \mathcal{B}} \sum_{i \in [L]} \sum_{\nu' \sigma'} \Gamma^{(n)}_{\nu \sigma } w_{\nu \sigma}  \Gamma^{(n)}_{\nu' \sigma'} w_{\nu' \sigma'}u_{\calX{i}^n \nu' \sigma'} \\ 
\end{align*}
Recalling the definition \(\mathcal{B}_\mu = \left\{ n \in \mathcal{B}: \mu \in \mathcal{X}^{(n)} \right\}\), we do the same trick we did when we were getting Eq.~\ref{eq:to} form Eq.~\ref{eq:from}, so we get
\begin{equation*}
    \left[\mtm \uu \right]_{\mu \nu \sigma} = \sum_{\mu \in \mathcal{S}} \sum_{n \in \mathcal{B}_\mu} \left(s^{(n)}_\mu \sum_{\nu' \sigma'}\Gamma^{(n)}_{\nu \sigma } w_{\nu \sigma}  \Gamma^{(n)}_{\nu' \sigma'} w_{\nu' \sigma'}\right) u_{\mu \nu' \sigma'}
\end{equation*}
Vectorizing along \(\nu,\sigma\) and \(\nu', \sigma'\) we reach to
\begin{equation}
    \mtm \uu = \sum_{\mu \in \mathcal{S}} \diag{\ww} \left(\sum_{n \in \mathcal{B}_\mu} s^{(n)}_\mu \mathrm{vec}\left(\mathbf{\Gamma}^{(n)}\right) \mathrm{vec}^\top \left(\mathbf{\Gamma}^{(n)}\right) \right) \uu_{\mu,:} \label{eq:thislooksverysimilar-hyperattn}.
\end{equation}
Notice the similarity between Eq.~\ref{eq:thislooksverysimilar-hyperattn} and~\ref{eq:thislooksverysimilar}. Thus, defining
\begin{equation*}
    \mathbf{Z}_{\mathcal{B}_\mu} = \begin{pmatrix}
         \vdots \\
         \mathrm{vec}^\top \left(\Gamma^{(n)}\right)\\
         \vdots
     \end{pmatrix}_{n \in \mathcal{B}_\mu},
\end{equation*}
we follow the same steps seen after Eq.~\ref{eq:thislooksverysimilar} and reach to
\begin{equation*}
    \lambda_{\min}\left(\mtm\right) \ge b^2 \zeta^2,
\end{equation*}
where we used the bound \(w_{\nu\sigma}\ge b\) and the Assumption~\ref{assumption:versatiledata-hyperattn} -\(\sigma^2_{\min}\left(\mathbf{Z}_{\mathcal{B}_\mu}\right)\ge \zeta^2\).
\end{proof}

\begin{assumption}[Training Data Versatility] \label{assumption:versatiledata-hyperattn}
    For all \(\mu \in \mathcal{S}\), 
    
    \begin{equation*}
    \mathbf{Z}_{\mathcal{B}_\mu} = \begin{pmatrix}
         \vdots \\
         \mathrm{vec}^\top \left(\Gamma^{(n)}\right)\\
         \vdots
     \end{pmatrix}_{n \in \mathcal{B}_\mu},
\end{equation*}
is full column rank.
\end{assumption}

\section{Comparison Between the Attention Models}\label{appendix:model-comparison}
\begin{table*}\label{table:summary}
\centering
\caption{Comparison Between Attention Models}
\begin{tabular}{ccc} 
\hline
\textbf{Model}                                                            & \begin{tabular}[c]{@{}c@{}}\textbf{Computational}\\\textbf{Complexity}\end{tabular} & \textbf{Captures}                                                                         \\
\hline
Self Attention                                                            & $\Theta\left(L^2\right)$                                                            & Mutual interactions                                                                       \\
\hline
HyperFeatureAttention                                                     & $\Theta\left(L^2\right)$                                                            & Couplings of mutual interactions                                                          \\
\hline
\begin{tabular}[c]{@{}c@{}}HyperAttention\\of order n\end{tabular}        & $\Theta\left(L^n\right)$                                                            & n-way interactions                                                                        \\
\hline
Linear Self Attention                                                     & $\Theta\left(L\right)$                                                              & Mutual interactions (approximate)                                                         \\
\hline
\begin{tabular}[c]{@{}c@{}}Linear \\HyperFeatureAttention\end{tabular}    & $\Theta\left(L\right)$                                                              & \begin{tabular}[c]{@{}c@{}}Couplings of mutual \\interactions (approximate)\end{tabular}  \\
\hline
\begin{tabular}[c]{@{}c@{}}Linear HyperAttention\\of order n\end{tabular} & $\Theta\left(L\right)$                                                            & n-way interactions (approximate)\\
\hline
\end{tabular}
\end{table*}

We have introduced two novel models (HyperFeatureAttention in Appendix~\ref{appendix:hyperfeature}, HyperAttention in Appendix~\ref{appendix:hyperattention}) and mentioned some approximations to reduce computational complexity (in Appendix~\ref{appendix:hyperattn-comp-complexity-mitigation}). In this section, letting embedding dimension to be $d$, sequence length to be $L$, we compare those models in terms of number of parameters, computational complexity, and abilities in Table~\ref{table:summary}, too.

Starting with self-attention, it captures mutual interactions between entities.\footnote{Refer to Theorems~\ref{theorem:single-layer-rep-main},~\ref{theorem:conv},~\ref{theorem:gen}, and~\ref{theorem:lengen}} If it has multiple heads, it can capture summation over mutual interactions between features of the entities. It has ${\Theta}(d^2)$ parameters, and its computational complexity is ${\Theta}(L^2)$. In order to reduce the computational complexity to ${\Theta}(L)$, people came up with approximations called ``Linear Attention" \cite{katharopoulos_transformers_2020, wang_linformer_2020}. However, despite the name, the method is generally used to approximate softmax self-attention.

HyperFeatureAttention captures couplings of mutual interactions between features.\footnote{Refer to Section~\ref{section:hyperfeature-main} and Appendix~\ref{appendix:hyperfeature}} If it has multiple heads, it can capture summation over couplings of mutual interactions between features. Same as self-attention, HyperFeatureAttention has ${\Theta}(d^2)$ parameters, and its computational complexity is ${\Theta}(L^2)$. The same Linear Attention approximations can be applied to HyperFeatureAttention, reducing its computational complexity to ${\Theta}(L)$. Seeing that the main goal of the paper is not this approximation for HyperFeatureAttention, we did not show it explicitly.

As for HyperAttention of order $n$, it captures up to $n^{th}$ order interactions.\footnote{Refer to Section~\ref{section:hyper-main} and Appendix~\ref{appendix:hyperattention}} If it has multiple heads, it can capture summation over up to $n^{th}$ order interactions between features of the tokens. It has ${\Theta}(d^2)$ parameters, and its computational complexity is ${\Theta}(L^n)$. Using similar Linear Attention approximations, in Appendix~\ref{appendix:hyperattn-comp-complexity-mitigation}, we explained how to reduce computational complexity of HyperAttention to ${\Theta}(L)$.

One might contend that standard self-attention can, in principle, capture these complex interactions “in surprising ways.”  The key difference is that our modules achieve comparable expressiveness with far fewer parameters—and therefore lower memory and compute overhead. While we do not advocate replacing conventional self-attention with HyperAttention or HyperFeatureAttention, we propose these mechanisms as complementary enhancements. In a typical multi-head architecture, certain heads may employ standard self-attention while others utilize HyperFeatureAttention (of varying orders) or HyperAttention to capture richer, higher-order interactions. Depending on the computational constraints, the HyperAttentions may leverage the linear approximations described in Appendix~\ref{appendix:hyperattn-comp-complexity-mitigation}. 

\section{Figures}

\begin{figure}[h]
    \centering
    \includegraphics[width=\textwidth]{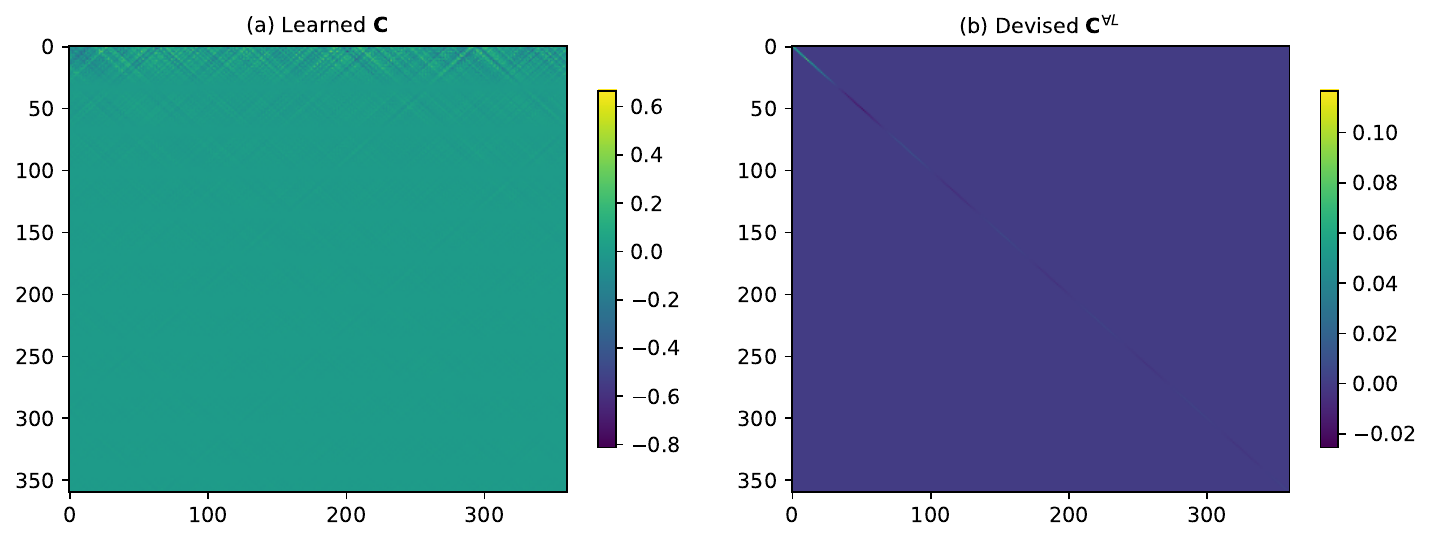}
    \caption{Comparison of learned vs. devised parameters for sinusoidal embedding: (a) Devised matrix $\mathbf{C}^{\forall L}$ showing the original devised structure, (b) Learned matrix $\mathbf{C}$ demonstrating the emergent but non-interpretable patterns. While visually distinct, both parameterizations lead to equivalent model behavior through different mathematical organizations.}
    \label{fig:collid-learned-params-and-generalizing-params}
\end{figure}

\begin{figure}[t]
    \centering
    \includegraphics[width=\textwidth]{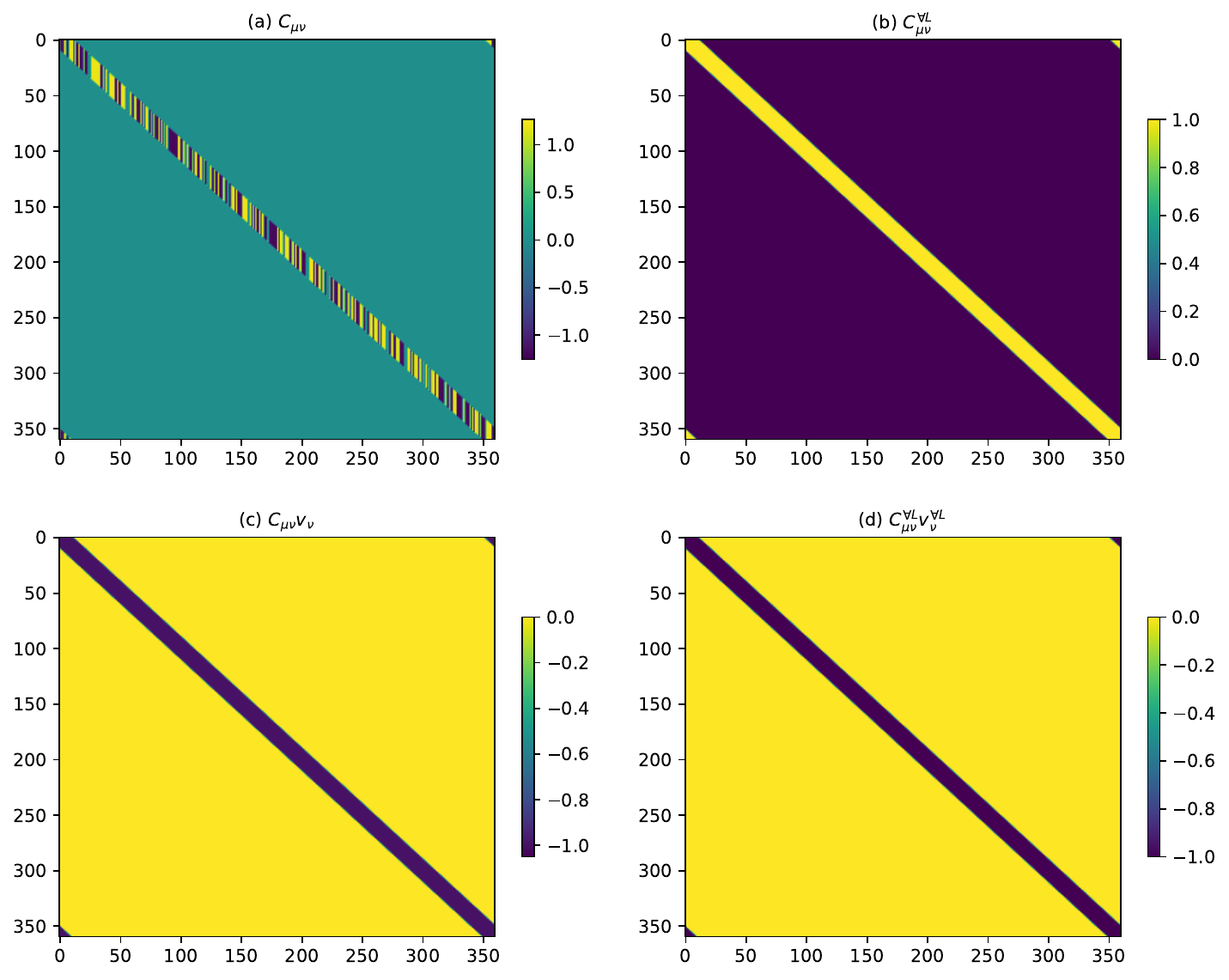}
    \caption{Equivalence of learned parameters with one-hot embedding in domain embedding base (here the parameters are already in the domain embedding base so we did not transfer them): (a) ${C}_{\mu \nu}$ learned parameters in domain embedding base, (b) $\entryClengen$ devised parameters in domain embedding base, (c) $C_{\mu \nu} W_{\nu0}$ the interesting matrix in domain embedding base, (d) Transformed $\entryClengen_{\mu \nu} W^{\forall L}_{\nu 0}$ using original parameters. The mean squared difference between (c) and (d) is \(\mathcal{O}(10^{-5})\), demonstrating functional equivalence despite different parameter organizations.
    \label{fig:collid-all-params-one-hot}}
\end{figure}

\begin{figure}[h]
    \centering
    \includegraphics[width=\textwidth]{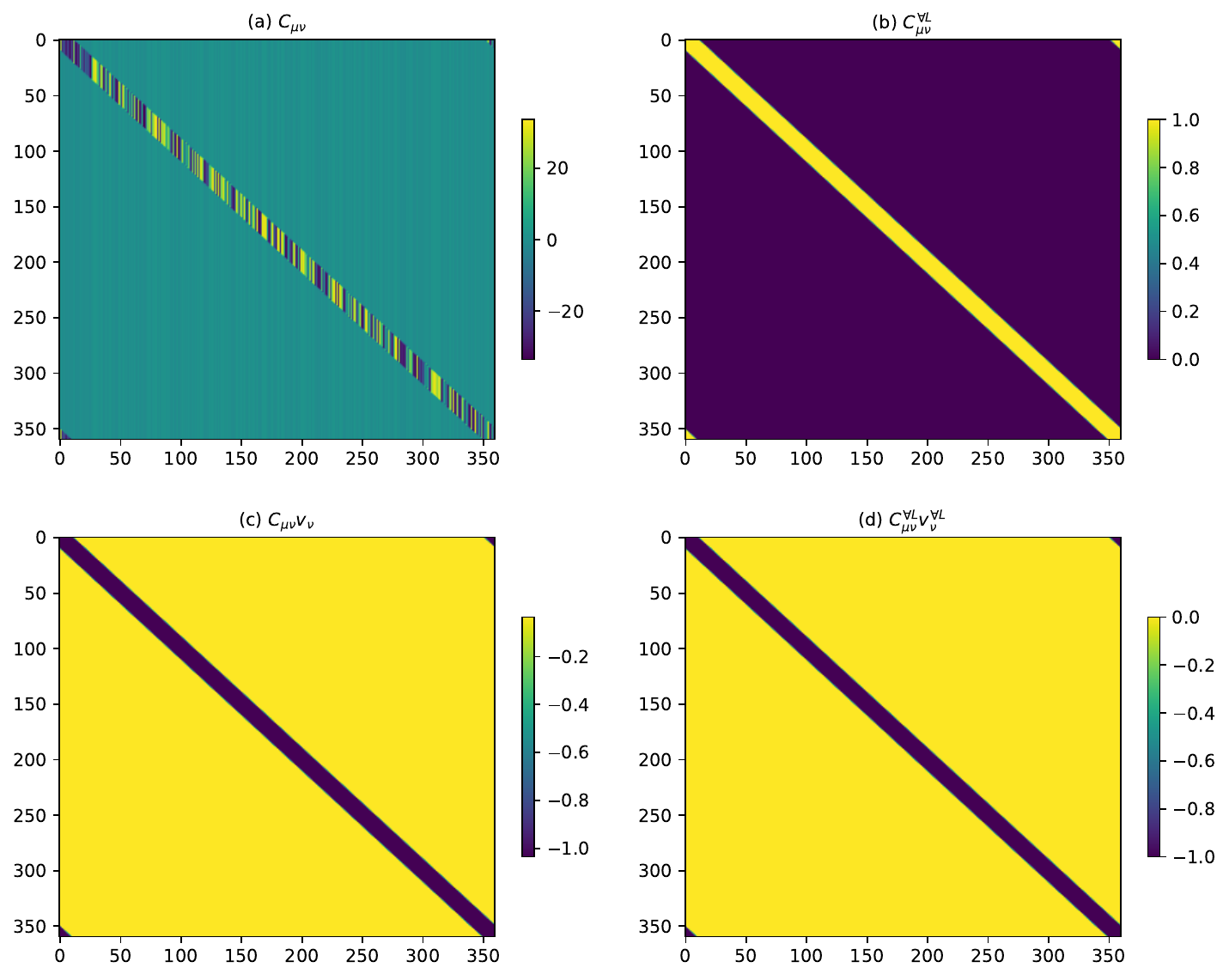}
    \caption{Equivalence of learned parameters with sinusoidal embedding in domain embedding base: (a) ${C}_{\mu \nu}$ learned parameters in domain embedding base, (b) $\entryClengen$ devised parameters in domain embedding base, (c) $C_{\mu \nu} W_{\nu0}$ the interesting matrix in domain embedding base, (d) Transformed $\entryClengen_{\mu \nu} W^{\forall L}_{\nu 0}$ using original parameters. The main squared difference between (c) and (d) is \(\mathcal{O}(10^{-5})\), demonstrating functional equivalence despite different parameter organizations. Additionally as a side note, comparing (b) with Fig.~\ref{fig:collid-learned-params-and-generalizing-params} (b), we observe the advantage of sinusoidal embeddings in terms of their parameter efficiency within the \(\mathbf{C}=\mathbf{W}^Q\mathbf{W}^{K\,\top}\) matrix, particularly when relative positions are more important than absolute positions.}
    \label{fig:collid-all-params-in-B-base}
\end{figure}

\end{document}